\definecolor{Gray}{gray}{0.9}
\definecolor{LightCyan}{rgb}{0.88,1,1}
\declaretheorem[name=Theorem]{Thm}
\declaretheorem[within=section,name=Lemma]{Lem}
\declaretheorem[sibling=Lem,name=Definition]{Def}
\declaretheorem[sibling=Lem,name=Proposition]{Prop}
\declaretheorem[sibling=Lem,name=Remark]{Rem}
\declaretheorem[sibling=Lem,name=Example]{Ex}
\declaretheorem[sibling=Lem,name=Corollary]{Cor}
\newcommand{\obj}[1]{\texttt{\footnotesize{#1}}}
\renewcommand{\vec}[1]{\mathbf{#1}}
\newcommand{\argmin}{\operatornamewithlimits{argmin}}
\newcommand{\Var}{\operatorname{Var}}
\newcommand{\Bias}{\operatorname{Bias}}
\newcommand{\PBias}{\operatorname{PBias}}
\newcommand{\DBias}{\operatorname{DBias}}
\newcommand{\Ent}{\operatorname{H}}
\newcommand{\Err}{\operatorname{Err}}
\newcommand{\logit}{\operatorname{logit}}
\newcommand{\Pure}{\operatorname{Pure}}
\newcommand{\card}{\#}
\newcommand{\Bin}{\operatorname{Bin}}
\newcommand{\calA}{\mathcal{A}}
\newcommand{\calD}{\mathcal{D}}
\newcommand{\calF}{\mathcal{F}}
\newcommand{\calH}{\mathcal{H}}
\newcommand{\calL}{\mathcal{L}}
\newcommand{\calM}{\mathcal{M}}
\newcommand{\calN}{\mathcal{N}}
\newcommand{\calP}{\mathcal{P}}
\newcommand{\calQ}{\mathcal{Q}}
\newcommand{\calS}{\mathcal{S}}
\newcommand{\calT}{\mathcal{T}}
\newcommand{\calU}{\mathcal{U}}
\newcommand{\calX}{\mathcal{X}}
\newcommand{\calY}{\mathcal{Y}}
\newcommand{\calZ}{\mathcal{Z}}
\newcommand{\BB}{\ensuremath{\mathbb{B}}}
\newcommand{\EE}{\ensuremath{\mathbb{E}}}
\newcommand{\NN}{\ensuremath{\mathbb{N}}}
\newcommand{\RR}{\ensuremath{\mathbb{R}}}
\newcommand{\bs}{\boldsymbol{s}}
\newcommand{\bw}{\boldsymbol{w}}
\newcommand{\bx}{\boldsymbol{x}}
\newcommand{\by}{\boldsymbol{y}}
\newcommand{\bY}{\boldsymbol{Y}}
\providecommand*{\diff}%
        {\@ifnextchar^{\DIfF}{\DIfF^{}}}
\def\DIfF^#1{%
        \mathop{\mathrm{\mathstrut d}}%
                \nolimits^{#1}\gobblespace
}
\def\gobblespace{%
        \futurelet\diffarg\opspace}
\def\opspace{%
        \let\DiffSpace\!%
        \ifx\diffarg(%
                \let\DiffSpace\relax
        \else
                \ifx\diffarg\[%
                        \let\DiffSpace\relax
                \else
                        \ifx\diffarg\{%
                                \let\DiffSpace\relax
                        \fi\fi\fi\DiffSpace}
\newcommand{\Distr}{\operatorname{Distr}}
\newcommand{\docstr}[1]{\footnotesize{\hspace{0.5em}\emph{#1}}\\}
\numberwithin{equation}{section}
\numberwithin{figure}{section}
\theoremstyle{plain}
  \theoremstyle{definition}
  \theoremstyle{definition}
  \theoremstyle{plain}
  \theoremstyle{plain}
  \theoremstyle{remark}
  \theoremstyle{plain}
  \providecommand{\corollaryname}{Corollary}
  \providecommand{\definitionname}{Definition}
  \providecommand{\examplename}{Example}
  \providecommand{\lemmaname}{Lemma}
  \providecommand{\remarkname}{Remark}
  \providecommand{\theoremname}{Theorem}
  \providecommand{\propositionname}{Proposition}
\begin{document}

\title{Probabilistic supervised learning}

\author[1]{
Frithjof Gressmann
\thanks{\url{frithjof.gressmann.16@ucl.ac.uk}}
}

\author[1]{
Franz J.~Kir\'{a}ly
\thanks{\url{f.kiraly@ucl.ac.uk}}
}

\author[2]{
Bilal Mateen
\thanks{\url{b.mateen@warwick.ac.uk}}
}

\author[3]{Harald Oberhauser
\thanks{\url{oberhauser@maths.ox.ac.uk}}
}

\affil[1]{
Department of Statistical Science,
University College London,\newline
Gower Street,
London WC1E 6BT, United Kingdom
}

\affil[2]{Warwick Medical School,
University of Warwick,\newline
Coventry CV4 7AL, United Kingdom
}

\affil[3]{Mathematical Institute,
University of Oxford,\newline
Andrew Wiles Building,
Oxford OX2 6GG, United Kingdom
}

\thispagestyle{empty}
\maketitle


\begin{abstract}
Predictive modelling and supervised learning are central to modern data science.
With predictions from an ever-expanding number of supervised black-box strategies - e.g., kernel methods, random forests, deep learning aka neural networks - being employed as a basis for decision making processes, it is crucial to understand the statistical uncertainty associated with these predictions.\\

As a general means to approach the issue, we present an overarching framework for black-box prediction strategies that not only predict the target but also their own predictions' uncertainty. Moreover, the framework allows for fair assessment and comparison of disparate prediction strategies.
For this, we formally consider strategies capable of predicting full distributions from feature variables (rather than just a class or number), so-called probabilistic supervised learning strategies.\\

Our work draws from prior work including Bayesian statistics, information theory, and modern supervised machine learning, and in a novel synthesis leads to (a) new theoretical insights such as a probabilistic bias-variance decomposition and an entropic formulation of prediction, as well as to (b) new algorithms and meta-algorithms, such as composite prediction strategies, probabilistic boosting and bagging, and a probabilistic predictive independence test.\\

Our black-box formulation also leads (c) to a new modular interface view on probabilistic supervised learning and a modelling workflow API design, which we have implemented in the newly released skpro machine learning toolbox, extending the familiar modelling interface and meta-modelling functionality of sklearn. The skpro package provides interfaces for construction, composition, and tuning of probabilistic supervised learning strategies, together with orchestration features for validation and comparison of any such strategy - be it frequentist, Bayesian, or other.
\end{abstract}

\newpage
{\small
\tableofcontents{}
}
\newpage

\section{Introduction}

In many application scenarios one wishes to make predictions for an outcome which is inherently uncertain, to an extent where even perfect prior knowledge does not allow to make a point prediction - for example when predicting an event risk for an individual data point, e.g., a cancer patient's survival/death or a debitor's default. In other scenarios, a point prediction is sufficient, however knowing the predictions uncertainty is crucial. And generally, knowing the expected variation in observations is preferable about not knowing.

While the advantage of predictions with uncertainty certificates or, equivalently, probabilistic predictions, is probably universally acknowledged, the question of a suitable framework for model building and validation remains open. For instance, it needs to be able to fairly compare a frequentist supervised learning model with a Bayesian model applied to the task of predicting with an uncertainty guarantee.

Here, often the claim is heard that Bayesian type learning is incomparable with the ``classical'' supervised learning paradigm which is interpretationally more frequentist, or that in fact one is superior say on the grounds that one paradigm naturally predicts probability densities and can take into account a-priori ``beliefs'', or that the other paradigm is independent of model-specific assumptions and more ``natural'' due to absence of arbitrary prior beliefs.

Though we would argue that many of these points are moot, since from a scientific perspective, any method should be judged by how it solves the task it is supposed to solve - namely, predicting the outcome and its uncertainty - and by nothing else. Assuming, of course, that the task is indeed prediction (and not parameter estimation or inference, say).

In terms of the set-up, what is required of such a unified framework for the probabilistic prediction task, is a recognition of pertinent ideas in both domains (Bayesian and frequentist), while reconciling them within the setting where an accurate prediction with uncertainty guarantees is of primary interest. Of particular relevance are the following aspects:
\begin{itemize}
\item[(i)] Predicting a probability distribution, as often in the Bayesian framework and considerably less frequent in the frequentist one, is desirable as it is part of a preferable task. For example, the smarter predictor for the outcome of a fair game of dice would predict a uniform probability for each outcome, rather than a fixed outcome at random - since forcing to make the predictor making a point estimate is akin to forcing it to ``forget'' crucial information, namely knowledge of the result generating mechanism.
\item[(ii)] Avoiding model-specific assumptions, especially in comparison and assessment of models, as more frequent in the frequentist framework and less frequent in the Bayesian one (e.g., BIC and WAIC being model class dependent), is preferable above making such assumptions. Since validation of models by a procedure that is possibly specific to the type of model or model class considered is in constant danger of leading to circular argumentation.
\item[(iii)] Finally, the question of whether having a Bayesian prior in prediction is preferable or not (hence whether a Bayesian or frequentist type model is preferable) is not directly related to the task of predicting itself. In particular, as anything else, it is not relevant in the context of solving the prediction task, unless indirectly through the practical merit in prediction itself, which means accuracy of prediction, scientific parsimony/interpretability, reproducibility, and deployability, for instance.
\end{itemize}

For a synthesis of the above, consider the stylized thought experiment outlined in Figure~\ref{figure:faircomparison}.
We thus argue that adoption of a black-box view together with an external validation workflow is the most pertinent choice in the given task-centric setting.

\begin{figure}
\label{figure:faircomparison}
\centering
\includegraphics[width = 1.0\textwidth]{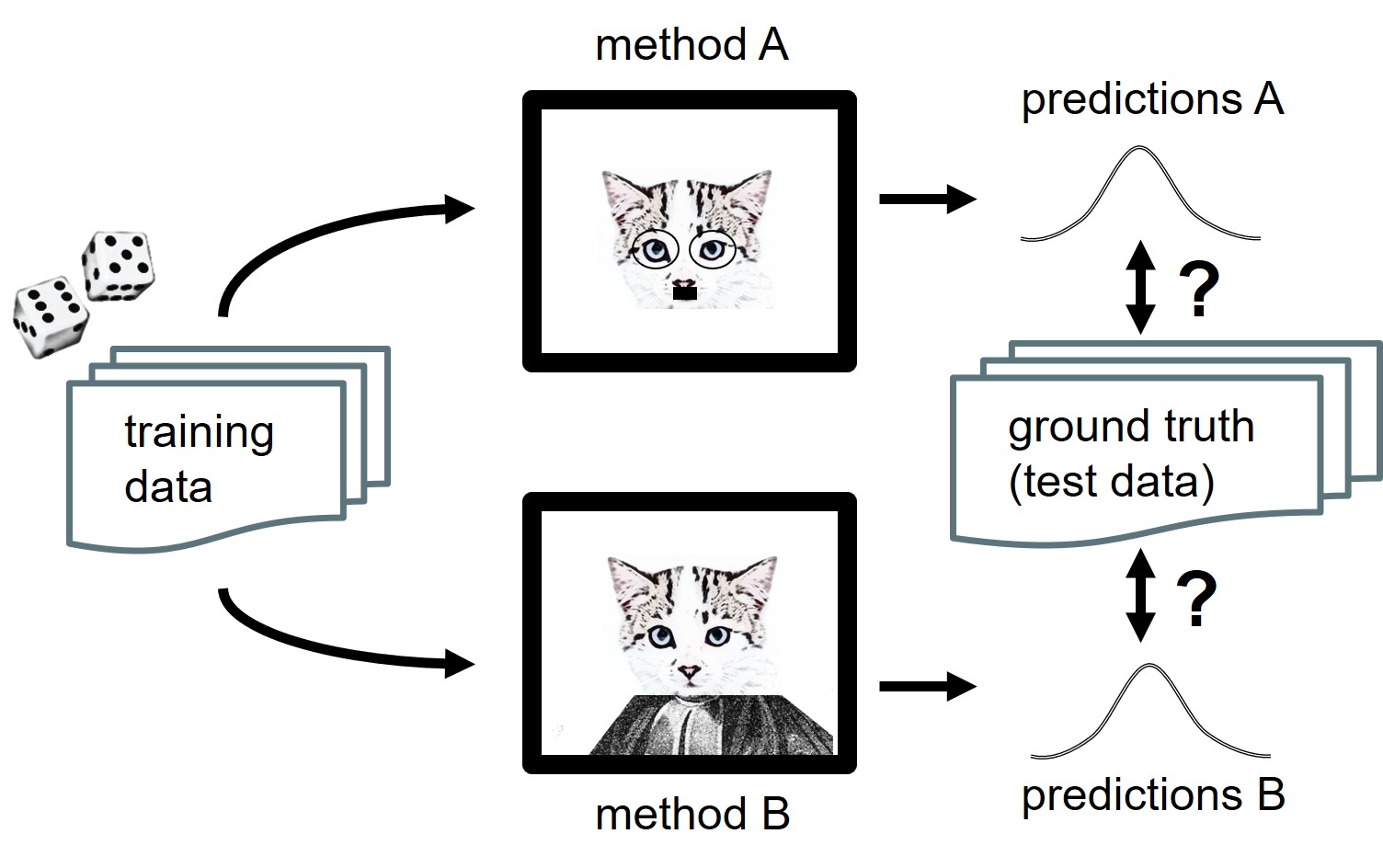}
\caption{Thought experiment illustrating the domain-agnostic black-box rationale.\newline
Suppose we query predictions from two different black-box prediction strategies, method A - symbolized by the frequentist cat in a black box - and method B - symbolized by the Bayesian cat in a black box. Both have access to the same training data, and are tasked with producing probabilistic predictions for the same test cases. Said predictions are scrutinized and evaluated against subsequent ground truth observations. The key point to observe is that for a fair comparison with regards to performance on this prediction task, knowledge of the specific workings inside the black box (e.g., does the cat believe in so-called ``priors'' and if yes which types of, or rather in the epistemiological validity of rejecting a null hypothesis; or, whether the cat is still alive) is not only unnecessary, but in fact irrelevant to external validity of either method, as it does not pertain to the task which is probabilistic prediction (and not model inference). Phrased more precisely, the means by which the predictions were obtained have no bearing on whether the predictions are accurate, hence the means by which the predictions were obtained should play no role whatsoever in a fair assessment of whether the predictions were accurate, nor in a fair comparison of the two methods with respect to their predictions' accuracy in the given task. The same argumentation applies for prediction strategies in any other cat-egory, e.g., Fisherian, deep learning aka neural networks, or any data specific, situational heuristics.}
\end{figure}

\newpage
Therefore, if one accepts the above arguments, a probabilistic supervised learning framework will:
\begin{itemize}
\item[1.] solve the task of predicting probability distributions,
\item[2.] allow model-agnostic validation and comparison for ``Bayesian'' and ``frequentist'' predictive models alike, and
\item[3.] be easily implemented in a modelling (e.g., software) toolbox that unifies both types of methodology.
\end{itemize}

In the following, we present such a framework that is inspired by motifs from both statistical paradigms and the common machine learning motif of methods as task-specific black-boxes to be studied from the ``outside''.

Our framework borrows substantially from prior art and is not new in terms of many of its components, but mainly in terms of its composition which we believe to be novel not only in an interpretative manner, but also in a way that allows easy collation say in form of a workflow toolbox for black-box learning and meta-learning with modular interface to both frequentist and Bayesian learning algorithms.

We would like to stress that unlike some of previous literature, we do not attempt to ``unify'' frequentist or Bayesian statistics in general:
our framework only applies in a predictive/supervised context where many concepts from both worlds can be shown (below) to coincide or overlap.

How a joint point of view may be achieved in general, say for parameter estimation or unsupervised learning, or whether it is even possible or desirable, is not entirely clear,
but also outside this manuscript's remit, to stress this again.

\subsection{Main ideas}
\label{sec:mainideas}
Central to our approach is conceptually separating (1) the task that is to be solved (prediction, probabilistic) from (2) the way it is actually solved (Bayesian or frequentist) from (3) philosophical considerations (what probabilities ``really mean'', or which empirically unvalidable beliefs are preferable). We think this is not only a good idea but necessary mental hygiene implied by the scientific method which separates (1) experiment, ``testing whether the task is solved'', from (2) predictions to be tested, ``method X will solve the task'', and (3) meta-empirical appraisal, ``I like method X because of the ideas it is based on''.\\

(1) The task to be solved, predicting a probability distribution from features, given a batch of examples to learn the model, naturally comes with the necessity of specifying a quantitative measure of whether and how well it has been solved - which is crucial to separate from the solution to avoid circular reasoning, say as in confirming beliefs that one already had to begin with (as in fully Bayesian Bayes factor validation). On a similar note, to enable assessment of any solution to the task it is necessary to avoid using an assessment method that applies only for certain types of models (such as BIC and WAIC for Bayesian models, or goodness-of-fit tests for specific frequentist models).
Rephrased, we argue for what is called an ``external'' validation as the process of checking a prediction should be independent from whether the model is frequentist or Bayesian, and from scientifically unvalidable beliefs (e.g., through a belief model). More concretely, among the popularly used Bayesian predictive validation criteria listed in the review paper of~\citet{vehtari2012Bayesian}, the only ones meeting these conditions are the $\calM$-open ones, in the terminology of the review, including the often-used Bayesian cross-validation via log-loss on the sample of posterior predictive distributions. The latter generalizes straightforwardly to performing out-of-sample (= independent test set) validation with a log-likelihood loss for any black-box method predicting a probability distribution (Bayesian or not); a special case is the log-loss commonly used in a frequentist classification scenario where class probabilities are predicted.\\

(2) Regarding the strategies solving the task, we adapt a pure black-box view where any algorithm that can predict probability distributions based on observation of example cases is admissible - similar to the (frequentist or Bayesian) supervised learning framework where non-parametric or entirely algorithmically specified models are also allowed; note how this includes both frequentist and Bayesian type models. For example, the former may include methods yielding a predictive variance such as frequentist formulations of Gaussian processes or feature-dependent mixture models; the latter may include returning the prior or posterior predictive distribution as the prediction output of the black-box. The black-box predictor may have so-called tuning parameters, which in this setting is defined as any modelling choice that is not made based on the data (or equivalently, something that the user may need to externally set in a software framework). In particular, the prior(s) of a fully Bayesian model are considered to be black-box tuning parameters, together with the hyper-parameters in Bayesian terminology. While that does not agree with the usual Bayesian terminology or some of its philosophical interpretations, from a scientific or framework design perspective it appears practically reasonable, because a choice that is made independently of the data should be clearly exposed as a choice that is independent of the data - independently of what that choice is, or how it is philosophically interpreted.\\

(3) The philosophical interpretation(s) and the belief of which one is most desirable we would leave with the reader, while we consider to see ourselves inspired by scientific empiricism in the Baconian sense which postulates validation-by-experiment and/or falsification-by-experiment. In particular, there should be a common (and, by statistical empiricism,) quantitative ``experiment'' which can tell us, given data, whether we should believe for all practical purposes that a reliable prediction has been made, yes/no/maybe and to which degree, independent of the method making the prediction (Bayesian, frequentist, or otherwise). With George Box, we acknowledge that ``all models are wrong'' (but some are useful, and some others are plain stupid), while any statistical model - or modelling process - that is ``not even wrong'' in the sense of Wolfgang Pauli (impossibility or absence of validation/falsification-by-experiment) should be rejected as pseudo-science.
In particular, we do not claim to propose the one-and-only framework following these principles without the need to embrace a particular modelling philosophy, nor do we claim to propose the one-and-only framework that ``finally unifies'' the ``opposing schools of thought'' of ``Bayesianism'' and ``frequentism'' - we only provide a (not ``the'') framework which supports reproducible, impartial scientific argumentation and empirical validation along the lines of~\citet{gelman2017beyond}, using currently available tools and concepts, supported by a clean and transparent consensus workflow design, for the specific task of probabilistic supervised learning.

\subsection{Technical contributions}

Our main contribution is providing a comprehensive framework for supervised learning in a probabilistic setting.
More precisely, it consists of:

\begin{itemize}
\item[(i)] An explicit mathematical formulation of the statistical set-up for probabilistic supervised learning, in Sections~\ref{sec:propreset} to~\ref{sec:probloss}.
While drawn mostly from motifs existing in literature, this seems to be the first instance in which the full picture is presented, in a model-independent black-box setting that allows for external validation and comparison of any predictive modelling strategy of the probabilistic type.

\item[(ii)] An identification of classical, or ``point predictive'' supervised learning under different mean losses,
with specific probabilistic model classes depending on the choice of classical mean loss, in Sections~\ref{sec:classprob} and~\ref{sec:classprob-badidea}.
While reminiscent of the least-squares-is-max-likelihood observation in linear regression, this seems to be almost fully novel (and surprisingly so).

\item[(iii)] A suggested practical workflow for comparing and assessing any type of predictive models in a probabilistic setting, in Sections~\ref{sec:modelvalidation.estim} to~\ref{Sec:classicbase}. While somewhat similar to Bayesian cross-validation, the workflow is agnostic of model type. It is further supported by quantification strategies
    which are shown to have frequentist and Bayesian interpretations at the same time, and a the definition of the concept of ``uninformed baseline'' which in
    the probabilistic setting is shown to be density estimation.\\
    We would like to re-iterate our point that we are not claiming the suggested workflow to be the only one that potentially makes sense, our aim is rather to provide one that is workable for the probabilistic black-box setting (which has not been addressed comprehensively in literature so far).\\
    However, through our set-up we are able to identify a number of problems and stylized failures of wide-spread workflow motifs, including frequentist permutation baselines (Section~\ref{Sec:badbaseline}), as well as AIC/BIC based model selection which may lead to probabilistic over-fitting (Section~\ref{sec:BIC}).

\item[(iv)] A number of theoretical results which expand upon motifs in classical supervised learning and establish links between learning theory for probabilistic prediction and information theory, including:

\begin{itemize}
\item[(iv.a)] a probabilistic bias-variance trade-off (Section~\ref{Sec:biasvariance});
\item[(iv.b)] an information theoretic formulation of probabilistic prediction (Section~\ref{sec:entropy}), including Theorem~\ref{Thm:entropy} which shows equivalence of statistical independence and probabilistic predictability;
\item[(iv.c)] a definition of probabilistic residuals through measure theoretic pullback, compatible with the expected generalization loss (Section~\ref{Sec:trafores}).
\end{itemize}

\item[(v)] Motivated by the above theoretical insights, a number of meta-learning approaches for probabilistic supervised learning, most notably:
\begin{itemize}
\item[(v.a)] semi-quantitative diagnostic plots for probabilistic models (Section~\ref{sec:propres});
\item[(v.b)] loss reduction techniques for probabilistic models (Section~\ref{sec:baggeraging}), by the bias-variance trade-off (iv.a) shown to contain, as special cases, classical bagging and Bayesian model averaging;
\item[(v.c)] boosting  of probabilistic models (Section~\ref{sec:boost}), enabled through the concept of residuals (iv.c) and their optimization;
\item[(v.d)] predictive type hypothesis tests for independence of paired multi-variate observations including two-sample tests, building on (iv.b) which yields a novel identification of probabilistic model validation as entropy estimation (Section~\ref{Sec:hypotest}).
\end{itemize}

\item[(vi)] A literature-based review of some influential probabilistic modelling paradigms from which probabilistic supervised learning strategies may be obtained (Section~\ref{sec:problearners}). Interestingly, most popular off-shelf algorithms (including Bayesian modelling and Gaussian Processes) appear to need at least minor modifications or adaptations before being applicable to the task of probabilistic prediction.

\item[(vii)] A workflow API design compatible with contemporary API design principles for machine learning toolboxes, suitable for a unified implementation of all the above (Section~\ref{sec:API}). The design is implemented in the newly released skpro toolbox for python, extending the widely used sklearn interface to probabilistic supervised learning, and also provides workflow orchestration functionality.

\item[(viii)] The framework, as implemented in the skpro package, is tested in a series of model validation and benchmarking experiments, which also shed some light on useful composite strategies built from point prediction primitives (Section~\ref{sec:experiments}).

\end{itemize}

We would like to stress that there are aspects of our proposed framework which have already been explicitly remarked or implicitly observed in prior literature.
In addition to the general appraisal in the subsequent Section~\ref{sec:priorart}, we discuss such occurrences in the context of the novel suggestions whenever we are aware of a relevant and specific connection, including a reference to the relevant source.
Ideas and results for which no source is cited are to our knowledge novel.

\subsection{Relation to prior art}
\label{sec:priorart}
We briefly discuss where relevant ideas have previously appeared in literature, and in which form.

We would like to give special credit the excellent review papers of~\citet{arlot2010survey} and~\citet{vehtari2012Bayesian} which, to the best our knowledge,
cover extensively the state-of-art in modern statistical model selection and model validation, and to which much of our familiarity with relevant literature
and existing techniques is due.

\subsubsection{Model assessment of probabilistic models via predictive likelihood}
\label{sec:priorart.assessment}
Predictive likelihood based model assessment has appeared
in four major branches of literature:

(a) In {\bf Bayesian literature}, the seminal paper of~\citet{geisser1979predictive} may be seen to contain, in parts of the initial discussion, important ideas
on how to jointly treat and compare probabilistic predictions arising from Bayesian and non-Bayesian models, including the use of the predictive likelihood
on a statistically independent (test) sample, although it pre-dates the advent of machine learning and hence supervised prediction.
The earlier publication by~\citet{roberts1965probabilistic} may be seen as a more qualitative precursor of such ideas.
Generally, studies where a predictive log-likelihood is estimated on held-out data seems to be rare in the Bayesian branch, in the cases known to us this is for
the purpose of variable selection in a classification setting~\cite{draper2000case,fearn2002bayesian}.
There seem to be no more recent instances of this idea.
None of the publications cited in Section~5.1.3, subsection ``k-fold-CV'', of the review~\cite{vehtari2012Bayesian} use
out-of-sample log-loss, instead opting to discuss a setting closer to frequentist model validation. However, it should be noted that in the same review (which is similar to our Section~\ref{sec:modelvalidation.estim}) the authors discuss how this may apply to the probabilistic setting, making it the primarily creditable source for this idea, despite the superficial appearance of only reviewing prior occurrences of the idea.
The rest of Bayesian literature emphasizes methodology which is mostly in-sample and bias-corrected.

\citet{gelman2014understanding} compare the log-loss to such popular Bayesian model selection criteria, similar to our discussion in Section~\ref{sec:BIC},
and mention the expected/empirical log-loss, claiming that in machine learning literature it ``often'' is called the ``mean log predictive density'',
though we were unable to find evidence for wider use except by the authors, though this may simply be due to such use taking place in actual applications of machine learning
rather than in scientific literature proper.

(b) {\bf Probabilistic forecasting} is a branch of statistics that has become quite popular with econometricians, environmental scientists and applied researchers interested in related topics~\cite{gneiting2014probabilistic}. The ``{\bf prequential approach}'' of~\citet{dawid1984present} is probably the origin of this branch of literature, mostly concerned
with (temporal) forecasting rather than (static) supervised prediction, and an empirical log-likelihood loss is among
possible criteria for assessing density predictions~\cite{gneiting2007strictly}. More recently~\citet{musio2013local} and subsequently~\cite{dawid2014theory, dawid2015bayesian} use arbitrary strictly proper scoring rules for Bayesian model selection.

Also found in this branch of research is an idea
known as ``calibration''~\cite{gneiting2007probabilistic} of which our suggestion of probabilistic residuals in Section~\ref{sec:propres} is reminiscent. Though, while much of our work draws from the above, it should be noted that much of the above-mentioned theory does not directly transfer to the probabilistic supervised prediction setting, as the latter introduces dependence between observed value and predicted distribution through the features.

(c) In the {\bf machine learning} literature, the log-loss (= log-likelihood out-of-sample) is used to measure predictive accuracy of probabilistic classification,
mostly of two-class type, see~\cite{rosasco2004loss,zhang2004statistical,masnadi2009design} for instances that specifically discuss the log-loss from an evaluation perspective.
In machine learning, it usually appears as one possible choice amongst many, with certain desirable smoothness properties.
It also frequently appears in software implementations and benchmarking toolboxes, the most prominent ones being scikit-learn~\cite{pedregosa2011scikit}
(\texttt{metrics.log\_loss} in version 0.19.1).
More generally, logistic or logarithmic losses are also ubiquitous in a somewhat more implicit manner in any machine learning or statistical literature concerned with logistic or logit type regression or classification models.

(d) The {\bf conditional density estimation} setting~\cite{rosenblatt1969conditional,silverman1986density} is mathematically and interpretationally quite close to ``probabilistic supervised learning'' as we present it. Modern literature on this topic is usually found in the machine learning domain and evaluates the goodness of the density estimate by out-of-sample or cross-validation estimates of the generalization log-loss~\cite{gray2003rapid,sugiyama2010conditional, holmes2012fast}.
In fact,~\citet{holmes2012fast} already note quite precisely how conditional density estimation may be interpreted in a density predictive sense (compare Section~\ref{sec:propreset.probsupl}), though without further establishing links to Bayesian or classical frequentist methodology.
It is worth noting that much of the conditional density estimation literature seems to assume some smoothness\footnote{usually through implicit or explicit juxtaposition of the word ``kernel'' with ``density estimation''}
in the feature or independent variables and exhibits a focus towards Nadaraya-Watson type estimators.
Conditional density estimation and evaluation via log-loss is implemented in more recent versions of Weka~\cite{hall2009weka}
(namely as \texttt{ConditionalDensityEstimator} and\\
\texttt{InformationTheoreticEvaluationMetric} since version 3.7).

\subsubsection{Connections of predictive model assessment to information theory}

The relationship between predictive model assessment and information theory has been studied previously and operationalised for use in model assessment. For example, one branch of the literature~\cite{robert1996intrinsic,bernardo2003intrinsic,peruggia2003model} has explored information theoretic distances in a fully parametric context, between (for example: generative) distributions which are not observed, as opposed to relating model performance expectations or asymptotics to such quantities in a non-parametric or black-box context.

In the context of the model-agnostic black-box approach which we are considering:\\
From a theoretical perspective, the connections between information theoretic quantities and supervised prediction are well understood for the probabilistic classification~\cite{reid2011information} and the temporal (hence sample-correlated) forecasting setting~\cite{dawid2007geometry}, though the general probabilistic supervised learning setting appears to be unstudied.

From a practical perspective, information theoretic quantities are often used in model evaluation without deeper theoretical consideration, e.g., as the ``cross-entropy loss'' in supervised classification.

\subsubsection{Bayesian/frequentist synthesis}

Reconciliation of the Bayesian and frequentist paradigms has been the topic of many (more or less influential) papers with wildly different ideas on how
to resolve the perceived ideological divide completely. The most prominent suggestion of this type is perhaps the ``Bayes-non-Bayes compromise'' spearheaded by~\citet{good1992bayes}.
As we do not attempt to carry out an overall synthesis and are only concerned with predictive model validation and comparison, we continue to discuss
only literature pertaining specifically to such a context.

The most relevant reference is possibly the work by~\citet{guyon2010model} which provides an overview which we believe to be very valuable, highlighting the crucial issues also discussed in our work and the need for a joint view in the predictive setting.

Though the suggestions in the later parts seem more similar to a least common multiple rather than a greatest common denominator\footnote{which, in our opinion, is not the ``ideal'' to strive for either}, that is, the authors
seem to attempt to make the black-box approach as all-encompassing as possible to accommodate for as many model selection and model validation strategies as possible.
The crucial issue of comparing frequentist to Bayesian models remains largely unaddressed - as opposed to a general argumentative direction discussing how
popular methodological approaches in both worlds are in essence the same while they are also different.

For the context of model comparison which in Sections~\ref{Sec:modelvalidation.tuning} and~\ref{Sec:hypotest} we relate to frequentist hypothesis testing and Bayes factor inference, it has been noted
by~\citet{berger1997unified} that in a relatively general setting these two may be shown to take an (almost) mathematically equivalent form,
yielding a jointly frequentist and Bayesian justification for the proposed types of assessment quantifiers.

On a more general note, we definitely agree with~\citet{efron2005bayesians} on the scientific premise that frequentist and Bayesian ideas need to be combined to address
the challenges of the 21st century (though not necessarily in the form of the empirical Bayes approach, and expressly without claiming any ``philosophical superiority''
to either approach~\cite{efron1986isn}, see also paragraph~3 of Section~\ref{sec:mainideas} above).

\subsubsection{Toolbox API designs for supervised learning and probabilistic learning.}
Major machine learning toolboxes such as Weka~\cite{holmes1994weka,hall2009weka}, scikit-learn~\cite{pedregosa2011scikit,sklearn_api}, caret~\cite{kuhn2008caret,caret2016} and mlr~\cite{bischl2011mlr,mlr2016} share, as central building principles of their API, the idea of object-oriented encapsulation of models or prediction strategies, and sometimes aspects of the validation process, with consistent/unified and modular interface. Data scientific workflow API design in itself is a relatively novel area of research which does not seem to be explicitly pursued in its own right, a notable and in our opinion seminal exception being the mentioned work of~\citet{sklearn_api}. This seeming lack of recognition in the literature appears to stand in crass discrepancy to its enormous practical importance, especially noting that from a scientific perspective\footnote{and, from an esthetic or inventive perspective, finding the right API is no less of a challenging task than finding the right statistical model or mathematical proof, necessitating a similar combination of systematic thinking, creative insight, critical appraisal, and, of course, hard work.\\
Probably for the same reason, i.e., lack of recognition in literature, we are unable to point towards specific scientific API design literature as a precursor and inspiration of our API design suggestions beyond the above, simply since API design is documented usually only implicitly in toolbox manuals, or as a footnote in papers focusing on toolbox use rather than API design. Hence we wish to give credit to these API designs by the above mention of concomitant scientific publications despite API design not necessarily being discussed there, in particular since our work may be also seen as an attempt in ensuring the guiding API principles common to all the mentioned toolboxes, as made precise by~\citet{sklearn_api}, for frequentist and Bayesian methods alike.}, a solid workflow and modelling API is a necessary practical precondition for fair quantitative assessment and comparison of any type of data analytics methodology\footnote{as long as it is carried out on a computer, and not, say, on a punchcard machine or within someone's imagination}, be it frequentist, Bayesian or otherwise.

The above, as modelling toolboxes for classical supervised learning, usually do not implement interfaces for probabilistic prediction (except classification or variance predictions); however, they provide method encapsulation and workflow modularization necessary for model-agnostic model building and comparison.

On the other hand, there are a number of abstract model-building environments for probabilistic modelling such as WinBUGS~\cite{lunn2000winbugs}, Stan~\cite{carpenter2016stan}, Church~\cite{wingate2011lightweight}, Anglican~\cite{wood2014new}, or Edward~\cite{tran2016edward}, and more generally a wealth of distinct approaches often subsumed more recently under the term ``probabilistic programming''~\cite{gordon2014probabilistic, carpenter2016stan}. These have in common that they usually implement a generic interface for Bayesian algorithms of a specific family (usually but not always: Markov Chain Monte Carlo type), thus are neither model nor domain agnostic. Also, by algorithm and interface design, they are usually not geared towards prediction, nor do they implement model validation and model comparison functionality\footnote{... or they implement ``broken'' model comparison functionality such as posterior predictive checks, see Section~\ref{sec:classprob-badidea}.} - however their design usually includes a high-level interface for fitting and sampling from some structured and hierarchical probability distribution models which contemporary supervised learning toolboxes are lacking.

In summary, there is to date no published modelling toolbox API design which is at the same time supervised/predictive, probabilistic, and domain-agnostic\footnote{With the exception of skpro, of course, which we consider linked to this publication.}, while there are some which are supervised and domain-agnostic but non-probabilistic, and some that are probabilistic but neither supervised nor domain-agnostic.

\subsection*{Acknowledgments}
We thank Andrew Duncan for pointing out the relevance of characteristic and universal kernels to the discussion of convolution (losses) in Section~\ref{sec:mixed.conv}.\\
We thank Ioannis Kosmidis for the main idea leading to Section~\ref{sec:mixed.split}.\\
We thank Yvo Pokern for very useful suggestions regarding notation (``law of'', conditioning, function and distribution spaces).\\
We thank Yiting Su (\begin{CJK*}{UTF8}{gbsn}苏怡婷\end{CJK*}) for reading the manuscript and pointing out mathematical typos, especially in Section~\ref{sec:adaptors.mixed}.\\
We thank Catalina Vallejos for pointing out to us the work of~\citet{geisser1979predictive}.\\

Parts of this manuscript contain probabilistic parallels to deterministic prediction specific material presented in the manuscript~\cite{burkart2017predictive} on which FK worked simultaneously. Parallels include choice of notation, proof strategies and central concepts such as the best uninformed prediction functional. Most parallel material was re-stated, since crucial concepts and phenomena in the deterministic case, e.g., elicitation and point prediction variance, are replaced by different phenomena in the probabilistic case, e.g., properness and information theoretic divergence. Some of these parallels intersect with a third, unpublished manuscript, which is the same that is referred to in the acknowledgments of~\cite{burkart2017predictive} as well.

FK acknowledges support by the Alan Turing Institute, under EPSRC grant EP/N510129/1.

\subsection*{Authors' Contributions}

FK conceived the project and is responsible for main approaches, ideas, proofs, manuscript structure, writing, and content, unless stated otherwise below.\\

BM contributed some of the main ideas in Section~\ref{sec:propres}, particularly the idea to use cdf evaluates as probabilistic residuals.\\

HO conceived many ideas and statements concerned with linking probabilistic prediction and loss evaluation with kernel discrepancy metrics, most notably in Section~\ref{sec:mixed.kernel} and the connection between predictive independence testing and kernel MMD based testing, in Section~\ref{Sec:hypotest}. HO pointed out a number of helpful areas of prior research, most notably the lecture notes of~\citet{singh2012infotheory}, the work of~\citet{reid2011information}, and the work of~\citet{lopez2016revisiting}. HO also generally contributed to the creative process through discussions in the mid-to-late stages of the project.\\

The suggested API for probabilistic learning and its implementation, as well as the respective experiments, are based on FG's MSc thesis, submitted August 2017 at University College London, and written under the supervision of FK. FK provided the central ideas for the high-level API design, FG contributed ideas for its practical realisation, including the representation of the predicted distributions (sec. \ref{sec:api-requirements}); the final API design was conceived in joint collaboration of FG and FK. The respective manuscript sections~\ref{sec:API} and~\ref{sec:experiments} are re-worked parts of FG's thesis manuscript, jointly done by FG and FK. The skpro python package has been designed and written by FG, with partial guidance by FK. Experiments were conducted by FG, with partial guidance by FK. FG made suggestions for the improvement of composite prediction strategies (bounding/capping and adaptors) based on the experimental results.\\

Final copy-editing and proof-reading was done by all authors.

\newpage
\section{The probabilistic prediction setting}
\label{sec:propreset}

This section introduces the mathematical setting for probabilistic supervised prediction. As mentioned above in Section~\ref{sec:priorart},
some of the central ideas and concepts introduced may be found in prior art.

\subsection{Notation and Conventions}
The natural logarithm (with basis $\mbox{e}=\exp (1)$ ) will be denoted $\log$. The non-negative reals will be denoted $\RR^+$, the natural numbers $\NN$, both sets contain $0$ by convention. \\
The cardinality of a set $S$ is denoted by $\card S$. Unless stated otherwise, all sets are assumed measurable, and endowed with a canonical measure, compatible with sub-setting\footnote{Intuitively, this is meant to imply that ``all integrals and expectations in the manuscript are well-defined''. We acknowledge that care needs to be taken for making statements theoretically precise in cases other than $\RR^n$ endowed with the Lebesgue measure and derived discrete product spaces, but we opt to avoid introduction of a full measure-theoretic framework for the purpose of readability and conceptual clarity.}.\\

For sets $\calX,\calY$, we will denote the set of functions mapping from $\calX$ to $\calY$ by $[\calX\rightarrow \calY]$. We will impose no further conditions of regularity on elements of $[\calX\rightarrow \calY]$ unless stated explicitly in addition.\\

For any measurable space $\calY$, we define $\Distr (\calY)$ to be the space of distributions on $\calY$ in the following sense: $\Distr (\calY)$ is the convex subset of probability measures, within the space of bounded, signed measures on $\calY$, which is Banach space under the total variation norm.\\
Under the further assumption on a $p\in \Distr (\calY)$ that is discrete or absolutely continuous, we will (by abuse of notation) identify $p$ with the corresponding function in $[\calY\rightarrow \RR^+]$, which is a probability mass function or probability density function in such a case.\\

For a random variable $X:\Omega \rightarrow \calX$, we say that $X$ takes values in $\calX$. Measurability of $X$ (and composites) will always be implicitly assumed. Instead of saying $``X\in [\Omega\rightarrow \calX]'',$ we will say ``$X$ takes values in $\calX$'', or abbreviatingly, ``$X$ t.v.i.~$\calX$''. This will be done without specifying the probability space $\Omega$, which will implicitly be assumed to exist and to be fixed (as usual, to the finest such space for all appearing variables).\\

For a random variable $X$ t.v.i.$\calX$, we will denote by $\calL(X)$ the canonically induced\footnote{For $X:\Omega \rightarrow \calX$, this is canonical in the measure theoretic sense, i.e., $\calL(X)$ canonically induced as the push-forward of the $\Omega$-measure to $\calX$.} distribution of $X$ in $\Distr(\calX)$, the ``law of $X$''. In the frequent statistical setting where $X$ is discrete or absolutely continuous, $\calL (X)$ may be identified with the probability mass function or probability density function of $X$, an element of $[\calX\rightarrow \RR^+],$ with which we identify $\calL (X)$ in such a case.\\

Considering the above, it is worth keeping in mind that ``probabilistic supervised learning'' will be concerned with elements of $[\calX \rightarrow \Distr (\calY)]$, i.e. functions mapping elements of $\calX$ to distributions over $\calY$, that will be interpreted as probabilistic prediction functionals. By the identification of $\Distr (\calY)$ with elements of $[\calY\rightarrow \RR^+]$ in the discrete or absolutely continuous case, such a prediction functional may be identified with an element $f$ of the set $[\calX \rightarrow [\calY \rightarrow \RR^+]],$ which is a set of (prediction) functions having (distribution) functions as an output. In particular, for $x\in\calX$, the output $f(x)$ of such a function is a function itself, an element of $[\calY \rightarrow \RR^+]$, hence for an additional $y\in\calY$, we may consider the output of $f(x)$ when given $y$ as input, an element or $\RR^+$ which we will denote by $[f(x)](y)$ or $f(x)(y)$, a notation consistent with the usual notation in typed lambda calculus or any other standard theories of types (such as Church's), or the iterated use of brackets in programming languages such as python or R.\\

On a somewhat less technical and more {\bf intuitive note}, we advise the reader familiar with supervised learning to keep in mind that the label predicted by a probabilistic supervised learning method is a distribution, hence (if a pmf or pdf exists) also a function, therefore a (trained) probabilistic supervised learning method is a function that predicts a function as a label instead of a single label. The slightly unusual notation $f(x)(y)$ will occur in situations where we evaluate the prediction rule $f$ at a test feature point $x$ which yields a distribution $f(x)$, and then this resulting distribution at an element of the label range $y$ which yields a probability or a density $f(x)(y)$; the occurrence of such terms and objects is a consequence of this practically relevant situation by which the slight mathematical overhead above is entailed.

\subsection{Statistical setting}
\label{sec:propreset.setting}
We will consider supervised learning with its standard data generating process: there is a generative random variable $(X,Y)$ t.v.i.~$\calX\times \calY$. For $z=(x,y)\in \calX\times \calY$, the projection $x$ onto $\calX$ is called features or independent variables (or predictors, a term we will not use to avoid confusion with prediction functionals below); the projection $y$ onto $\calY$ is called label or dependent variable or target variable. $z$ itself is called a feature-label pair.

Usually, but not necessarily, one has $\calX \subseteq \RR^n$ and $\calY\subseteq \RR$; for a binary classification task, one considers $\calY = \{-1,1\}$, and for a regression task, one considers $\calY = \RR$.

All data are assumed to be i.i.d.\footnote{by the i.i.d.~assumption, this setting hence does not include time series forecasting, spatial prediction, transfer learning, etc. In particular, the main CAVEAT of the i.i.d.~setting applies, which is limited generalizability outside the probabilistic sampling remit of $X$. In the probabilistic case this in particular also includes the uncertainty prediction itself. That is, outside the sampling remit of $X$, it is not only point predictions that may be wildly inaccurate, but also any estimate of uncertainty coming from a probabilistic method.},
 following the generative sampling distribution of $(X,Y)$. Note that $X$ and $Y$ are \emph{not} assumed to be independent of each other (otherwise labels would be independent of hence completely unrelated to the features).

Unless stated otherwise, it will be assumed that the conditional distribution $Y|X$ is \emph{either} discrete or absolutely continuous, that, always the same of the two, for all values which $X$ may take.
We will discuss the case of mixed distributions separately in designated sub-sections, as it will require additional care.

\subsection{Classical supervised learning}
\label{sec:classical-learning}

In classical supervised learning, the estimation task is to ``learn'' a prediction functional of the type $\varpi:\calX\rightarrow \calY$. This prediction functional is estimated (``learnt'') on training data $\calD = \{ (X_1,Y_1),\dots, (X_N,Y_N)\}$ where the $(X_i,Y_i)\sim (X,Y)$ are i.i.d.; the estimating process is modelled as a function-valued random variable $f$, t.v.i.~$[\calX\rightarrow \calY]$, which may depend on $\calD$ but is assumed to be independent of anything else.

The estimator $f$ is called a \emph{learning strategy}, in contrast to a \emph{prediction functional} such as $\varpi$. The dependence of $f$ on $\calD$ needs not be deterministic, but is in generally interpreted to be algorithmically given in practical instances.

A good learning strategy $f$ is supposed to yield a prediction functional which, on average (over $\calD$) is a good approximation of a putative relation of the type $Y\approx \varpi(X)$. Quantitatively, the expected prediction error should be small, on new data $(X^*,Y^*)\sim (X,Y)$ sampled from the generative distribution (independent of anything else), as quantified by a loss function $L:\calY \times \calY \rightarrow \RR^+$, comparing predictions and true values. The best prediction functional minimizes the expected generalization loss,
$$\varepsilon_L(f)= \mathbb{E}\left[L(f(X^*),Y^*)\right],$$
where expectations are taken over $f,\calD,$ and $(X^*,Y^*)$. The possibly most popular and most frequently used loss in this classical setting is the squared loss, $L_{sq}(\widehat{y},y_*) = \left(\widehat{y}-y_*\right)^2$.
More generally, a convex loss function is chosen, this means: $y\mapsto L(y,y_*)$ is convex, and the value of $L(\widehat{y},y_*)$ is minimized (usually to zero and possibly with $y_*$ considered fixed) for constant arguments $\widehat{y},y_*$ if and only if $\widehat{y} = y_*$, i.e., whenever the prediction is perfect.

(note: the setting above does not include, for the moment, losses which are not mean losses)

\subsection{Probabilistic supervised learning}
\label{sec:propreset.probsupl}
Probabilistic supervised learning accounts for the fact that even the most perfect supervised prediction strategy is unable to predict well if the conditional random variable $Y|X$ has a complicated (e.g., bimodal) or merely non-standard (e.g., non-normal) distribution. In such a case (which is many real world applications the empirical standard case), it is preferable to predict the conditional random variable $Y|X$ itself as opposed to a point estimate.

Hence we consider the estimation task of learning a prediction functional $\varpi: \calX \rightarrow \Distr (\calY)$, where $\Distr (\calY)$ is the set of distributions supported on the set of possible outcomes $\calY$. In the frequently considered probabilistic binary classification setting where $\calY = \{-1,1\}$, there is a one-to-one-correspondence $\Distr(\calY)\cong [0,1]$, where the right hand side number in $[0,1]$ is the predicted probability of the outcome $1$, or one minus the probability of the outcome $-1$. In a regression setting, one has $\Distr(\calY) = \Distr (\RR),$ i.e., we are ``predicting'' a distribution on the real numbers.
In either cases, it is important to note that the output of the prediction functional $\varpi$ is \emph{not} a random variable, but an explicit distribution.

As in the classical setting, a prediction functional $\varpi$ such as above will be estimated/learnt from training data $\calD = \{ (X_1,Y_1),\dots, (X_N,Y_N)\}$ where the $(X_i,Y_i)\sim (X,Y)$ are i.i.d.; thus, the estimation process may be modelled by a function-valued random variable $f$, t.v.i.~ $[\calX\rightarrow \Distr(\calY)]$, which may depend on $\calD$ and will be assumed to be independent of anything else.

As in the classical setting, we distinguish predictions functionals such as $\varpi$ from prediction strategies such as $f$, whose dependence on $\calD$ may be mediated through an abstract algorithm, and which formally are not functions but in fact function valued random variables.

Intuitively, the best prediction functional that can be possibly obtained from a probabilistic supervised learning strategy is the ``oracle'' $\varpi_{Y|X}: x \mapsto \calL(Y|X=x)$, i.e., the functional which for $x$ predicts the distribution of the random variable $Y$ conditioned on $X$ being $x$ (read $\calL(Y|X=x)$ as ``the law of $Y|X=x$'')

In order to quantify what makes a prediction functional that is distinct from $\varpi_{Y|X}$ good in approximating it, we will consider loss functions $L: \Distr (\calY)\times \calY\rightarrow \RR$ with the following desired ``minimality property'': the best prediction functional $\varpi_{Y|X}$ should minimize the expected generalization loss, i.e.,
$$\varepsilon_L(f) := \EE\left[L(f(X),Y)\right],$$
where the total expectation over $(X,Y,f)$ is taken. Note that while expectations are taken over $f$ as well, they should not and cannot be taken over the output of $f$ interpreted as a random variable, since the output is not a random quantity, but an explicit representation of the predictive distribution, which is crucial in assessing the prediction against ground truth observations.

A-priori, it is not entirely clear whether a loss function with the minimality property even exists (it does, as shown in Section~\ref{sec:probloss}), or why expectations should not be taken over the output random variables (prevents existence of such a loss function, see Section~\ref{sec:classprob-badidea}), these will be the first results in the subsequent sections which show that the so-defined setting of probabilistic supervised learning is (intuitively, mathematically, and empirically) reasonable.

\subsection{Probabilistic loss functionals}
\label{sec:probloss}

In this section, we introduce losses for the probabilistic supervised learning setting presented above. We formalize desirable properties such as properness, which states that the perfect prediction minimizes the expected loss. We highlight a selection that are proper in this sense. Definitions and exposition partly follow concepts and terminology in~\citet{gneiting2007strictly}.

\begin{Def}
A probabilistic loss functional is a function $L:\calP\times \calY\rightarrow \RR\cup\{\infty\}$,
where $\calP\subseteq \Distr (\calY)$ is a convex set of probability distributions\footnote{The requirement of $\calP$ being convex is equivalent to expectations $\EE(P)$ for random variables $P$ t.v.i.~$\calP$ being well-defined, or, phrased differently, arbitrary mixtures of measures contained in the set $\calP$ are still contained in $\calP$. From a practical viewpoint, it also implies that all expectations over elements of $\Distr(\calY)$ that are considered later in the manuscript exist and are well-defined.\\
Well-definedness of the convexity stipulation on $\calP$ is implied since $\Distr(\calY)$ is also a convex space. That is, for any Bochner measurable random variable $P$ t.v.i.~$\calP\subseteq\Distr(\calY),$ the expectation $\EE[P]$ is well-defined as an element of $\Distr (\calY)$, as a Bochner integral. This is because $\EE[|P|_{TV}]<\infty$, which a short calculation using the closedness of $\Distr(\calY)$ shows.\\
Note that a $P$ is Bochner-measurable iff it is Borel-measurable and tight - this is fulfilled in essentially all practically relevant cases. However, if needed, one could use Gelfand-Pettis or Dunford integration instead.}
The interpretation will be that $L$ compares a prediction, the first argument, to the observation, the second argument. Through the type of $L$, it is implicitly assumed that the prediction takes the form of a distribution in $\calP$, which encodes a predictive claim about how the second argument, the observation in $\calY$, is sampled.\\
In this manuscript, we will usually consider $\calP$ to be one of the following convex spaces:
\begin{enumerate}
\itemsep-0.2em
\item[(i)] the set of absolutely continuous distributions on $\calY$
\item[(ii)] the set of discrete distributions on $\calY$
\item[(iii)] the set of mixed distributions on $\calY$
\item[(iv)] the set of all distributions on $\calY$
\item[(v)] one of the above with additionally specified integrability or regularity conditions on elements of $\calP$ (e.g., squared-integrability)
\end{enumerate}
It will always be assumed that $\calL(Y|X=x)\in\calP$ for any $x\in\calX$, i.e., $L$ is able to assess the best possible prediction.\\
Depending on the type of distributions contained in $\calP$, which by construction implies an assumption on what form both the predictions and the ``true'' conditionals may take, we will respectively call $L$ an absolutely continuous probabilistic loss, discrete probabilistic loss, mixed probabilistic loss, and so on in analogy.
\end{Def}

\begin{Def}\label{Def:proper}
A probabilistic loss functional $L:\calP\times \calY\rightarrow \RR\cup\{\infty\}$ is called:
\begin{enumerate}
\itemsep-0.2em
\item[(i.a)] \emph{convex} if $L(\EE[P],y)\le \EE [L(P,y)]$ for any random variable $P$ t.v.i.~ $\calP$
\item[(i.b)] \emph{strictly convex} if $L(\EE[P],y)\le \EE [L(P,y)]$ for any random variable $P$ t.v.i.~$\calP$, and if equality holds if and only if $P$ is a constant random variable
\item[(ii.a)] \emph{proper} if $\EE[L(\calL(Y),Y)]\le \EE[L(p,Y)]$ for any random variable $Y$ t.v.i.~$\calY$ where $\calL(Y)\subseteq \calP$.
\item[(ii.b)] \emph{strictly proper} if $\EE[L(\calL(Y),Y)]\le \EE[L(p,Y)]$ for any random variable $Y$ t.v.i.~$\calY$ where $\calL(Y)\subseteq \calP$, and if equality holds, i.e., $\EE[L(\calL(Y),Y)] = \EE[L(p,Y)]$, if and only if $\calL(Y)$ and $p$ are equal.
\item[(iii.a)] \emph{local} if $L$ is a loss for continuous or discrete distributions, and $L(p,y) = h(y,p(y))$ for some function $h:\calY\times \RR\rightarrow \RR\cup\{\infty\}$, where $p(y)$ denotes the evaluation of the density or probability mass function (corresponding to) $p$ at the value $y$ (notational identification of $\Distr(\calY)$ with $[\calY\rightarrow \RR^+]$).
\item[(iii.b)] \emph{strictly local} if $L$ is a loss for continuous or discrete distributions, and $L(p,y) = g(p(y))$ for some function $g:\RR\rightarrow \RR\cup\{\infty\}$, where $p(y)$ denotes the evaluation of the density or probability mass function (corresponding to) $p$ at the value $y$ (notational identification of $\Distr(\calY)$ with $[\calY\rightarrow \RR^+]$).
\item[(iv.a)] \emph{global} if $L$ is not strictly local.
\item[(iv.b)] \emph{strictly global} if $L$ is not local.
\end{enumerate}
By the usual convention, strictly [convex/proper/local/global] losses are also [convex/proper/local/global]. Hence, in particular, strictly local losses are never strictly global.\\
We also note that convexity and properness may depend on the chosen domain $\calP$, restricting $\calP$ may make a non-[convex/proper] loss [convex/proper] and a [convex/proper] loss strictly [convex/proper].
\end{Def}

Intuitively, convexity implies that a certain prediction is better than one that is random.
Properness formalizes the desired minimality property discussed in the previous Section~\ref{sec:propreset.probsupl}, namely that the perfect prediction minimizes the loss in expectation.
Locality and globality describe whether predicted probabilities other than for the observed event are important. Strict locality is algorithmically attractive as $L$ can be easily computed for predictions obtained from a prediction functional $f:\calX\rightarrow \Distr(\calY)$ if the density which is the output of $f$ is easy to evaluate at an arbitrary value, and theoretically attractive since entirely dependent on density values and not on a particular choice of $\calY$. However, globality might be theoretically attractive as per losses which consider predicted probabilities of close-by values, and will be important for mixed distributions.

\begin{Def}\label{Def:losses}
We define two probabilistic loss functions for both the continuous and discrete case which will be of particular interest to us:
\begin{enumerate}
\itemsep-0.2em
\item[(i)] the log-likelihood loss (short: log-loss, sometimes also called: cross-entropy loss)
$$L_\ell:(p,y)\mapsto -\log p(y).$$
\item[(ii)] the integrated squared loss (also called, in the continuous case: Gneiting loss, in the discrete case: multi-class Brier loss)
$$L_{Gn}:(p,y)\mapsto - 2 p(y) + \|p\|_2^2.$$
\end{enumerate}
As in Definition~\ref{Def:proper}, $p(y)$ denotes the evaluation of the density or probability mass function (corresponding to) $p$ at the value $y$.
In the discrete case, $\|p\|_2^2 := \sum_{y\in \calY} p(y)^2$. In the continuous case, $\|p\|^2_2 := \int_\calY p(y)^2 \diff y.$
\end{Def}

It is not immediately straightforward what the losses in Definition~\ref{Def:losses} measure, and, more specifically, the loss of what exactly. A justification for the nomenclature is given by the following Lemma~\ref{Lem:losses}:

\begin{Lem}\label{Lem:losses}
Consider the log-loss and the integrated squared loss in Definition~\ref{Def:losses}. Let $Y$ be an absolutely continuous or discrete random variable t.v.in $\calY\subseteq \RR^n$ with pdf/pmf $p_Y$, let $p\in\Distr (\calY)$. Then it holds that:
\begin{enumerate}
\itemsep-0.2em
\item[(i)]$\EE[L_\ell(p,Y)] - \EE[L_\ell(p_Y,Y)] = \int_\calY p_Y(y) \log \frac{p_Y(y)}{p(y)}\;\diff y$
\item[(ii)] $\EE[L_{Gn}(p,Y)] - \EE[L_{Gn}(p_Y,Y)] = \int_\calY \left(p_Y(y)- p(y)\right)^2\;\diff y$
\end{enumerate}
where the $\int$ are w.r.t.~the Lebesgue measure on $\RR^n$ in the absolutely continuous case and w.r.t. the counting measure in the discrete case (that is, sums).
\end{Lem}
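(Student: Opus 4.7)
The plan is to prove both identities by directly computing the two expectations on the left-hand side in each case, using only the definitions of the losses and the standard expression of an expectation of a function of $Y$ as an integral against its density/pmf. The factoring on the right-hand side is then essentially an algebraic rearrangement.

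For part (i), I would first write $\EE[L_\ell(p,Y)] = -\EE[\log p(Y)] = -\int_{\calY} p_Y(y)\log p(y)\,\diff y$, and similarly $\EE[L_\ell(p_Y,Y)] = -\int_{\calY} p_Y(y)\log p_Y(y)\,\diff y$, using $p_Y$ as the density/pmf of $Y$ in the integral (or sum in the discrete case). Subtracting the second from the first gives $\int_{\calY} p_Y(y)\bigl(\log p_Y(y) - \log p(y)\bigr)\,\diff y = \int_{\calY} p_Y(y)\log \frac{p_Y(y)}{p(y)}\,\diff y$, which is exactly the claimed KL divergence. The only subtle point is well-definedness: one needs $p_Y \ll p$ on the support of $p_Y$ for finiteness, otherwise both sides equal $+\infty$ by the standard convention $0\log 0 = 0$ and $a\log(a/0) = \infty$ for $a>0$; but this can be handled by a one-line remark.

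For part (ii), I would expand $\EE[L_{Gn}(p,Y)] = -2\,\EE[p(Y)] + \|p\|_2^2 = -2\int_{\calY} p_Y(y)p(y)\,\diff y + \|p\|_2^2$, and likewise $\EE[L_{Gn}(p_Y,Y)] = -2\int_{\calY} p_Y(y)^2\,\diff y + \|p_Y\|_2^2 = -\|p_Y\|_2^2$. Subtracting gives
\begin{equation*}
\|p\|_2^2 - 2\int_{\calY} p_Y(y)p(y)\,\diff y + \|p_Y\|_2^2 = \int_{\calY}\bigl(p(y)-p_Y(y)\bigr)^2\,\diff y,
\end{equation*}
which is the desired identity after expanding the square.

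I do not expect any genuine obstacle: both identities reduce to rewriting $\EE[h(Y)]$ as $\int p_Y\cdot h\,\diff y$ (or the analogous sum in the discrete case) and then simple algebra. The only item worth stating explicitly is that the discrete and absolutely continuous cases are handled uniformly by taking $\diff y$ to denote the relevant reference measure (Lebesgue or counting), as already noted in the lemma statement, so both cases follow from the same one-line computation.
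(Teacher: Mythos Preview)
Your proposal is correct and is exactly the approach the paper takes: the paper's own proof is the single sentence ``The statements follow from straightforward calculations after substituting definitions,'' and what you have written is precisely that straightforward calculation carried out in detail.
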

\begin{proof}
The statements follow from straightforward calculations after substituting definitions.
\end{proof}

Intuitively, Lemma\ref{Lem:losses} explains what the log-loss and the integrated squared loss measures and explains some of the nomenclature: with the notation of Lemma\ref{Lem:losses} if $\widehat{Y}$ is a random variable with pdf/pmf $p$, then the RHS expression in (i) is the cross-entropy of $\widehat{Y}$ w.r.t.$Y$, or the (information theoretic) Kullback-Leibler-distance of $p$ w.r.t.$p_Y$, which is known to be minimized iff $p=p_Y$ (e.g., see Proposition~\ref{Prop:losses}~(i) for a proof). The RHS expression in (ii) is the integrated squared distance between $p$ and $p_Y$, which is again minimized iff $p=p_Y$ (this is almost immediate, also see Proposition~\ref{Prop:losses}~(ii) for a proof).
The losses being, in expectation, well-known probabilistic distance measures implies that indeed the perfect prediction is the best one in terms of expected loss, as formalized through the properness property.

Further probabilistic losses, together with their properties, may be found in~\cite{gneiting2007strictly, dawid2007geometry}, including the Bregman type losses of which both log-loss and squared integrated loss are special cases, see~\cite[Section~5.3]{dawid2007geometry}. Many of the below results generalize to these, though in order to keep exposition simple, and since the algorithmic usefulness of the more general case is not entirely clear, we restrict discussion below to the two losses in Definition~\ref{Def:losses} above.

We continue with stating central mathematical properties of the log-loss and squared integrated loss, most importantly them being strict proper losses:

\begin{Prop}\label{Prop:losses}
\begin{enumerate}
\itemsep-0.2em
\item[(i)] the log-likelihood loss is strictly convex, strictly proper, and strictly local.

\item[(ii)] the integrated square loss is strictly convex, strictly proper, and global.
\end{enumerate}
\end{Prop}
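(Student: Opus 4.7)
The plan is to verify each of the six assertions (three properties for each of the two losses) by reducing them to Jensen's inequality or to the non-negativity statements already supplied by Lemma~\ref{Lem:losses}. The two analytic ingredients are strict concavity of $t\mapsto \log t$ on $\RR^+$, which gives both Gibbs' inequality and the convexity of $L_\ell$, and strict convexity of the squared $L^2$-norm on $\Distr(\calY)$, which gives both properness and convexity for $L_{Gn}$.

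The locality claims are immediate bookkeeping. By definition $L_\ell(p,y)=g(p(y))$ with $g(t)=-\log t$, so $L_\ell$ is strictly local. By contrast, $L_{Gn}(p,y)=-2p(y)+\|p\|_2^2$ depends on the values of $p$ away from $y$ through $\|p\|_2^2$, so it is not strictly local, hence global by Definition~\ref{Def:proper}(iv.a).

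For strict properness, I would invoke Lemma~\ref{Lem:losses} directly. For (i), the right-hand side $\int_\calY p_Y(y)\log\frac{p_Y(y)}{p(y)}\,\diff y$ is the Kullback--Leibler divergence of $p$ from $p_Y$; applying Jensen's inequality to the strictly convex function $-\log$ against the probability measure with density $p_Y$ yields non-negativity, with equality iff $p_Y/p$ is $p_Y$-a.e.\ constant, which combined with both being probability densities forces $p=p_Y$ a.e. For (ii), the right-hand side $\int_\calY (p_Y(y)-p(y))^2\,\diff y$ is manifestly non-negative and vanishes iff $p=p_Y$ a.e. Rearranging the identities of Lemma~\ref{Lem:losses} in each case gives $\EE[L(\calL(Y),Y)]\le \EE[L(p,Y)]$ with equality characterised exactly as required for strict properness.

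For strict convexity, let $P$ be a random variable t.v.i.~$\calP$. The Bochner-integral argument sketched in the footnote to the loss-functional definition yields $\EE[P](y)=\EE[P(y)]$ pointwise in $y$. For $L_\ell$, strict concavity of $\log$ and Jensen's inequality give $-\log\EE[P(y)]\le \EE[-\log P(y)]$, with equality at $y$ iff $P(y)$ is a.s.\ constant as a real-valued random variable; holding this across $y$ forces $P$ to be a.s.\ constant as a function-valued random variable. For $L_{Gn}$, the linear term $-2p(y)$ commutes with the expectation, so
\[
\EE[L_{Gn}(P,y)] - L_{Gn}(\EE[P],y) \;=\; \EE[\|P\|_2^2] - \|\EE[P]\|_2^2 \;\ge\; 0
\]
by Jensen applied to the strictly convex functional $p\mapsto \|p\|_2^2$, with equality iff $P=\EE[P]$ almost surely, i.e.\ $P$ is constant. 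The main technical obstacle is the evaluation-expectation interchange and the equality case of Jensen for the distribution-valued random variable $P$; this is where one must appeal to Bochner measurability and the convexity/integrability structure imposed on $\calP$, but everything downstream is routine substitution.
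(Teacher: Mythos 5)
Your proof is correct and follows essentially the same route as the paper's: Jensen's inequality applied to $Z:=P(y)$ for (strict) convexity of the log-loss, Gibbs' inequality (equivalently, non-negativity of the Kullback--Leibler divergence from Lemma~\ref{Lem:losses}) for its strict properness, and the explicit $L^2$ expansion $\EE[L_{Gn}(p,Y)]=\int_\calY(p_Y(y)-p(y))^2\,\diff y + \mathrm{const}$ for the integrated squared loss. You in fact go slightly further than the printed proof, which in part (ii) only establishes strict properness and leaves strict convexity and globality of $L_{Gn}$ unargued; your identity $\EE[\|P\|_2^2]-\|\EE[P]\|_2^2=\EE[\|P-\EE[P]\|_2^2]\ge 0$ with equality iff $P$ is constant supplies exactly those missing pieces.
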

\begin{proof}

(i) Locality follows by definition.
For convexity, consider a random variable $P$ which takes values in the domain $\calP\subseteq \Distr(\calY)$. Choose $y\in \calY$ and define $Z := P(y)$. Note that $Z$ is a real-valued random variable, and $\EE[P](y) = \EE[P(y)] = \EE [Z]$. Thus, by Jensen's inequality, $\log(\EE[Z])\le \EE[\log(Z)]$. Putting together all (in)equalities and definition of $L$, one obtains
$$L(\EE[P],y) = \log(\EE[Z]) \le \EE[\log(Z)] = \EE[L(P,Y)],$$
which proves convexity of $L$. Note that Jensen's theorem also states that $\log(\EE[Z])=\EE[\log(Z)]$ if and only if $Z$ is a constant random variable, which in the above implies strict convexity of $L$.
For strict properness, consider a random variable $Y$ t.v.i.~$\calY$ with probability mass or distribution function $p_Y:\calY\rightarrow \RR^+$. Gibbs' inequality states that
$$\EE[\log(p_Y(Y))]\le \EE[\log(p(Y))]$$
with equality if and only if the measures induced by $p_Y$ and $p$ agree.\\

(ii) Let $Y$ be an $\calY$-valued random variable with probability mass/density function $p_Y$.
For strict properness, an explicit computation (similar to the one in Lemma~\ref{Lem:losses}) shows that
$$\EE[L_{Gn}(p,Y)] = \int_\calY (p_Y(y)-p(y))^2 \diff y + \|p_Y\|^2_2,$$
which assumes its minimal value $\|p_Y\|^2_2$ if and only if $p_Y=p$ Lebesgue/discrete-almost everywhere, which is the same as $p_Y$-almost everywhere as $Y$ is assumed absolutely continuous or discrete.
\end{proof}

The logarithmic loss is furthermore the only local loss which is also strictly proper:

\begin{Prop}\label{Prop:logcanon}
\begin{enumerate}
\itemsep-0.2em
The log-likelihood loss is the only loss function, which is both strictly local and strictly proper for arbitrary choices of the prediction range $\calY$ and the predictive distribution space $\calP\subseteq \Distr(\calY)$.
\end{enumerate}
\end{Prop}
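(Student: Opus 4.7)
My plan is to apply the classical Bernardo--McCarthy variational characterization of strictly local proper scoring rules to the present setting. By strict locality $L(p,y) = g(p(y))$ for some $g:\RR^+ \to \RR \cup \{\infty\}$, so the task reduces to showing that strict properness, imposed uniformly across arbitrary $(\calY,\calP)$, forces $g(t) = -\log t$ (up to the additive and positive multiplicative constants that are unavoidable in any loss-function characterization, and which are fixed by the normalization convention of Definition~\ref{Def:losses}).

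First, I would specialize to a discrete test case: take $\calY = \{1,\ldots,n\}$ and $\calP$ the open simplex of fully supported distributions. Strict properness then asserts that for every $p_Y \in \calP$ the function
\[
 \Phi_{p_Y}(p) \;:=\; \sum_{y=1}^{n} p_Y(y)\, g(p(y))
\]
is uniquely minimized over $\calP$ at $p = p_Y$, subject to $\sum_y p(y) = 1$. A Lagrange-multiplier computation on the open simplex then yields the stationarity condition $p_Y(y)\, g'(p_Y(y)) = \lambda$ for every $y$, where $\lambda \in \RR$ is the multiplier and is independent of $y$.

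Second, I would invoke the ``arbitrary $(\calY,\calP)$'' clause. By varying $n$ and the mass profile of $p_Y$, one can realize any prescribed pair of values $t_1, t_2 \in (0,1)$ as coordinates of some admissible $p_Y$; the stationarity condition then forces $t_1\, g'(t_1) = t_2\, g'(t_2)$, so there is a constant $\lambda \in \RR$ with $t\, g'(t) \equiv \lambda$ on $(0,1)$. Integrating gives $g(t) = \lambda \log t + c$. Strict properness rules out $\lambda \geq 0$ (for $\lambda = 0$, $g$ is constant and $L$ is only weakly proper; for $\lambda > 0$, $g$ is increasing, contradicting that the self-prediction minimizes the expected loss at $p = p_Y$), so $\lambda < 0$, and after fixing the sign/scale convention one obtains $L(p,y) = -\log p(y)$.

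The main obstacle is that $g$ is a priori only a function $\RR^+ \to \RR \cup \{\infty\}$, so $g'$ need not exist. To make the Lagrange step rigorous I would either (i) first establish, from strict properness alone, that $g$ must be continuous and convex on $\RR^+$ (and hence differentiable almost everywhere) by constructing explicit two-point mass-swap perturbations of $p_Y$ and using the strict inequality in properness to rule out kinks and discontinuities; or (ii) bypass differentiability entirely by working directly with the finite-difference ratios $(g(t+h) - g(t))/h$ induced by infinitesimal mass swaps between two atoms of $p_Y$ and passing to the limit $h \to 0$. Either route yields the integrated identity $g(t) = \lambda \log t + c$ on the relevant range, after which the sign and scale arguments are routine; extending from the discrete test case to arbitrary continuous $\calY$ requires only that any continuous $p_Y$ realizes any prescribed pair of density values at two distinct points, which is immediate.
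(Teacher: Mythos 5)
Your proposal is correct and is essentially the paper's own proof: the paper simply cites Theorem~2 of \citet{bernardo1979expected}, whose argument is exactly your variational computation (strict locality reduces $L$ to $g(p(y))$, stationarity of the expected loss on the simplex forces $t\,g'(t)$ to be constant, and the arbitrariness of $\calY$ and $\calP$ extends the functional equation from probabilities in $(0,1]$ to all of $\RR^+$, which is genuinely needed since densities may exceed $1$). Your two suggested workarounds for the a priori lack of smoothness of $g$ play the same role as the paper's remark that the differential equation may be recast as an integral equation valid on the whole of $\RR^+$.
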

\begin{proof}
This is Theorem~2 in \cite{bernardo1979expected}. Note that the smoothness assumption is not necessary in its proof as the differential equation may also be phrased as the integral equation $g(x) = \int_{-\infty}^x \frac{1}{g(\tau)}\diff \tau$ (where $g:\RR\rightarrow \RR$ is the function such that $L(p,y) = g(p(y))$ completely determining any local loss). Following through the proof, note that the integral equation only needs to hold for $x$ in the range of possible pdf of $Y|X$, though this can be taken to be all of $\RR^+$ since locality and strict properness for any choice of $\calY$ and $\calP$ is required, hence the range of $Y|X$ may be chosen to be arbitrary intervals $[0,\alpha]$ whose union is $\RR^+$.
\end{proof}

The losses which are local, but not strictly local, yet strictly proper may also be characterized and give rise to certain Bregman losses~\cite{dawid2007geometry}.

One important yet subtle theoretical point is that properness of a loss is defined in terms of unconditional distributions. Hence this is not immediately equivalent to the minimality property as discussed in the previous Section~\ref{sec:propreset.probsupl} which is conditional on features from which the predictions are made. While this conditional variant of properness is more or less straightforwardly implied by the unconditional one, we state it explicitly as it establishes the relation to the task of probabilistic supervised learning:

\begin{Prop}\label{Prop:probmin}
Let $L:\calP\times \calY\rightarrow \RR \cup\{\infty\}$ with $\calP\subseteq \Distr (\calY)$ be a proper loss. Then, for any fixed $x\in\calX$, it holds that
$$\calL(Y|X=x) \in \argmin_p \EE \left[ L(p,Y) |X = x \right],$$
where the expectation is taken conditional on the event $X=x$, or equivalently w.r.t.~the conditional random variable $Y|X=x$. The minimizer $\calL(Y|X=x)$ is unique if $L$ is strictly proper.
\end{Prop}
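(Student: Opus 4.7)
The plan is to derive the conditional minimality from the unconditional definition of properness by applying it to a suitable auxiliary random variable, namely one whose law matches the conditional law $\calL(Y|X=x)$.

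First I would fix $x\in\calX$ and introduce a random variable $Y_x$ taking values in $\calY$ with $\calL(Y_x) = \calL(Y|X=x)$. By the standing assumption (stated at the end of Section~\ref{sec:propreset.setting}) the conditional $Y|X=x$ is either discrete or absolutely continuous for every $x$, so $\calL(Y|X=x)\in\calP$ and $Y_x$ is well-defined. Note that for any measurable $\phi:\calP\times\calY\to\RR\cup\{\infty\}$ the push-forward construction gives the identity
$$\EE[\phi(q,Y_x)] = \EE[\phi(q,Y)\mid X=x]$$
for any deterministic $q\in\calP$, simply because both sides are integrals of $\phi(q,\cdot)$ against the measure $\calL(Y|X=x)$.

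Next, I would apply the unconditional properness of $L$ (Definition~\ref{Def:proper}(ii.a)) to the random variable $Y_x$: for any $p\in\calP$,
$$\EE[L(\calL(Y_x),Y_x)] \le \EE[L(p,Y_x)].$$
Substituting $\calL(Y_x)=\calL(Y|X=x)$ and using the identity above with $\phi=L$, this becomes
$$\EE[L(\calL(Y|X=x),Y)\mid X=x] \le \EE[L(p,Y)\mid X=x],$$
which is exactly the claim that $\calL(Y|X=x) \in \argmin_p \EE[L(p,Y)\mid X=x]$.

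For the uniqueness claim under strict properness (Definition~\ref{Def:proper}(ii.b)), equality in the displayed inequality forces $\calL(Y_x) = p$, i.e., $p = \calL(Y|X=x)$, giving the asserted unique minimizer. I do not foresee any real obstacle here; the only subtle point is ensuring that the auxiliary $Y_x$ genuinely lives in the same ambient probability space framework under which properness is stated, but since properness is formulated for arbitrary $\calY$-valued random variables whose law lies in $\calP$, one may simply take $Y_x$ on any probability space realizing the distribution $\calL(Y|X=x)$, and the conclusion follows.
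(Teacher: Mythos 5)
Your proposal is correct and follows essentially the same route as the paper's proof: both reduce the conditional statement to the unconditional definition of properness applied to (a realization of) the conditional law $\calL(Y|X=x)$, with the conditioning handled by the fact that expectations against $\calL(Y|X=x)$ coincide with $X=x$-conditional expectations. Your introduction of the auxiliary variable $Y_x$ and the explicit push-forward identity merely makes precise the step the paper phrases informally as ``the conditioning may be pulled out.''
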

\begin{proof}
This is easily obtained from conditioning.
Write $p_Y := \calL(Y|X=x)$. Properness of $L$ implies that
$$\EE[L(p_Y,Y|X=x)] \le \EE[L(p,Y|X=x)]$$
for any $p\in\calP$. Since $L$ is deterministic (i.e., not a random quantity), the conditioning may be pulled out yielding
$$\EE[L(p_Y,Y)|X=x] \le \EE[L(p,Y)|X=x].$$
The statement for strictly proper $L$ follows from following through the inequality condition.
\end{proof}

We will further expand upon implications of this seemingly simple observation of Proposition~\ref{Prop:probmin}, namely that proper losses also have the desired minimality property in the minimal setting, in Section~\ref{Sec:uninfbase}

We would also like to note that neither log-loss nor squared integrated loss, in general, retain from the classical set-up the property of being non-negative\footnote{The ``argument'' for non-negativity that $-\log p$ is always positive since $p\le 1$ is in general wrong, as densities can assume values larger than $1$ (a fact which beginners of statistics are sometimes unaware of). However, this argument is correct for discrete distributions where $p\le 1$ is in fact true, and following the discussion in Section~\ref{sec:mixed}, it can be shown to be correct in the general case in a certain sense.}.
However, as the crucial step in model selection and assessment is (additive) comparison, we will see that this is not an important property to lose, especially since non-negativity holds for \emph{loss differences}, as implied by the conditional version of Lemma~\ref{Lem:losses}.

Furthermore, losses that may not be split per test point, i.e., empirical losses that are not mean losses, may be considered, such as
out-of-sample log-likelihoods which are dependent (across test data points). We will not consider those in this manuscript since the right assumptions under which these are proper estimates for some generalization loss are unclear and plausibly technical\footnote{It is not immediately apparent how to take asymptotics over $N$ in a prediction strategy
$f:\calX^N\rightarrow \Distr(\calY^N)$ that predicts non-independent probabilities, note that $\Distr (\calY^N) \neq \Distr (\calY)^N$. If $N$ is considered a parameter of $f$, without further assumption $f$ could be anything and in particular without sensible loss asymptotic in $N$. On the other hand, it is not
possible, as in the independent setting, to evaluate a fixed $f$ at any number of test data points, only at the fixed number of $N$ test points. There are multiple additional assumptions by which either may be remedied, but neither we were able to see seem canonical or promising in adding to the framework. Though this may be an interesting future research avenue.}.
Furthermore, the minimizer of the expected generalization loss, in any plausibly reasonable formalization of that case, would be dominated in log-loss by a prediction which is independent across test data points, as Proposition~\ref{Prop:marginal} below shows. We defer its proof until a point where the technical prerequisites are in place.

On a historical note: the use of the log-loss is classical, an early (possibly the first) occurrence for inference purposes may be found in the work of~\citet{good1952rational}, while the squared integrated loss probably occurs first in~\cite{brier1950verification} for the case of a discrete label domain; for the above concept's idea history (in the non-supervised predictive setting) see~\cite{gneiting2007strictly}, for further related prior art see Section~\ref{sec:priorart}.

\subsection{Recovering the classical setting}
\label{sec:classprob}
The classical supervised setting with proper mean loss may be recovered as follows: note that in the classical setting, i.e., where $\calY\subseteq \RR^n$, one can build a probabilistic prediction functional $f:\calX\rightarrow \Distr(\calY)$ from a probability density/mass function $p:\calY\rightarrow \RR^+$ and a classical, deterministic prediction functional $g:\calX\rightarrow \calY$, namely by setting $[f(x)](t) = p(t-g(x))$. The functional $g$ prescribes the location of the prediction distribution whose shape is fixed by $p$.\\
While this choice seems somewhat arbitrary, there is an intimate correspondence between such predictors as measured by the log-loss, and combinations of point predictors $g$ and a mean loss $L$, where the choice of mean loss $L$ will directly translate into a choice of distribution shape $p$.\\
We begin with the perhaps most simple and illustrative instance to show the gist of this correspondence:

\begin{Ex}\label{Ex:classic}
Let $g:\calX\rightarrow \calY$ be a (classical/point) prediction functional. Let $\epsilon$ be a Gaussian random variable with zero mean and variance $\sigma^2$.\\
Define $f: \calX\rightarrow \Distr (\calY)\;;\, x\mapsto \calL (g(x) + \epsilon)$.
Then, a short explicit computation shows that
$$L_{\ell}(f(x),y) = \frac{1}{2\sigma^{2}}\cdot L_{sq}(g(x),y) - \frac{1}{2}\log(2\pi \sigma^2).$$
In particular, for this choice of $\epsilon$ the log-loss is the mean squared loss up to re-scaling and shift by a constant, hence both are equivalent for the purpose of assessing and comparing classical prediction functionals.
\end{Ex}

More generally, supervised learning with any (classically) proper mean loss can be understood as predicting a fixed family of distributions where only the location is allowed to vary:

\begin{Prop}\label{Prop:classic}
Let $g:\calX\rightarrow \calY$ be a (point) prediction functional, let $L:\calY \times \calY \rightarrow \RR^+$ be a point prediction loss (i.e., for the classical setting), such that the probability density
$$p(x) \sim \exp\left(-L(x,0)\right)\quad\mbox{, i.e., } p(x) =\frac{1}{C}\cdot \exp\left(-L(x,0)\right)\;\mbox{for suitable normalizing constant}\; C,$$
is well-defined. Let $\epsilon$ be a random variable with density $p$.
Define $f: \calX\rightarrow \Distr (\calY)\;;\, x\mapsto \calL (g(x) + \epsilon)$. Then,
$$L_{\ell}(f(x),y) = L (g(x),y) - C\quad\mbox{where}\; C = \log\int_\calY \exp\left(-L(t,0)\right)\;\diff t$$
\end{Prop}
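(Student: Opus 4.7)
The plan is to compute $[f(x)](y)$ directly from the definition of $f$, substitute the exponential parametrization of $p$, and take the negative logarithm. Since $f(x) = \calL(g(x) + \epsilon)$ with $\epsilon$ having density $p$, the standard density-of-a-shifted-random-variable formula immediately gives $[f(x)](y) = p(y - g(x))$.

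Substituting $p(z) = \tfrac{1}{\tilde{C}} \exp(-L(z,0))$, where $\tilde{C} := \int_\calY \exp(-L(t,0))\,\diff t$ is the normalizing constant that makes $p$ integrate to one, one obtains
$$[f(x)](y) \;=\; \frac{1}{\tilde{C}} \exp\bigl(-L(y - g(x),\, 0)\bigr),$$
so that
$$L_\ell(f(x), y) \;=\; -\log[f(x)](y) \;=\; L(y - g(x),\, 0) \;+\; \log \tilde{C}.$$

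The remaining step is the identification $L(y - g(x), 0) = L(g(x), y)$, which is where an implicit structural hypothesis on $L$ is used. For the classical regression losses the statement is visibly aimed at — squared, absolute, Huber, and in general any loss of the form $L(a,b) = \ell(a-b)$ with $\ell$ even — this identity is immediate, and absorbing $\log \tilde{C}$ into the constant $C$ of the statement then yields the claimed formula. A consistency check against Example~\ref{Ex:classic} by specializing to the Gaussian $\epsilon$ (so that $L = L_{sq}/(2\sigma^2)$ and $\tilde{C} = \sqrt{2\pi\sigma^2}$) recovers the stated constant $\tfrac{1}{2}\log(2\pi\sigma^2)$, up to a sign convention on the constant term.

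The derivation is essentially a one-line computation, so there is no real analytic obstacle. The only subtlety worth flagging in the writeup is the implicit translation/symmetry assumption on $L$ that permits re-expressing $L(y - g(x), 0)$ as a standard loss $L(g(x), y)$ between prediction and label; without this, one merely obtains a loss functional depending on the signed residual $y - g(x)$, which is still a classical mean loss but not literally the input $L$.
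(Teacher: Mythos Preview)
Your proof is correct and follows precisely the approach the paper indicates (``substitution of definitions and a short elementary computation''). In fact you are more careful than the paper: you rightly flag both the implicit translation-invariance/symmetry assumption $L(y-g(x),0)=L(g(x),y)$ needed to recover the stated form, and the sign discrepancy in the constant term (the computation gives $+\log\tilde C$, whereas the statement writes $-C$).
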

\begin{proof}
This follows from substitution of definitions and a short elementary computation.
\end{proof}

Important example instances of the correspondence in Proposition~\ref{Prop:classic} are:
the choice of squared loss $L:(\widehat{y},y)\mapsto (\widehat{y}-y)^2$ corresponds to the choice of a Gaussian density $p$.
The choice of absolute loss $L:(\widehat{y},y)\mapsto |\widehat{y}-y|$ corresponds to the choice of a Laplacian density $p$.

Note that the computation in the example and the proof of Proposition~\ref{Prop:classic} are exactly the same when interpreting $\epsilon$ as an additive and probabilistic error term in a (frequentist) generative model, say with a Gaussian or Laplacian distribution. However, in the probabilistic setting, we do not interpret $\epsilon$ as an error, but instead as part of the prediction - since for one, the exact structure of $\epsilon$ is a (possibly unvalidated) assumption, and in fact, a smarter method could be expected to not only predict the mean but also output a predictive variance that may depend on the features (= the argument of $f$), a case which is not covered or even modelled in the classical setting.

More precisely, we do not assume any (frequentist) generative model beyond the observable, or any class of (Bayesian) belief model - since neither are directly observable hence, neither are empirically validable. Instead, we consider the in-practice observable (the data) as a ``ground truth'' in the empirical sense, i.e., a point of comparison for our prediction(s).
In fact, the above discussion shows that the very common frequentist assumption of the homoscedastic noise term in the generative model may easily (mis-)lead one to disregard any meaningful probabilistic prediction (while of course it is also not entirely wrong since it leads to useful special cases such as least squares regression).

As a further remark of interest, Proposition~\ref{Prop:classic} puts all classical mean losses, including the aforementioned mean squared error and the mean absolute errors, on the same scale by making them comparable through their identification with probabilistic supervised models, hence objects of the same kind that can be data-dependently compared on a common scale. This common scale is furthermore canonical since the log-loss is canonical, i.e., the only proper local loss by Proposition~\ref{Prop:logcanon}.

Interestingly, Example~\ref{Ex:classic} may already been found in lecture notes of~\citet[Section~13.4]{singh2012infotheory}, though this appears to be the only substantial part of the lecture notes making explicit reference to the supervised setting and we were unable to find any continuation of the discussion in the literature.

We would also briefly like to stress why considering prediction functionals which predict delta distributions supported at a point prediction is \emph{not} a valid way to recover the classical setting. While there is a 1:1 identification with classical prediction strategies, prediction of a delta distribution is incompatible with the more natural choices (i.e., the ones in Definition~\ref{Def:losses}) of probabilistic losses above. Thus considering delta predictions induces no correspondence between the whole setting, i.e., combinations of prediction strategies and losses on the classical and probabilistic side, only between different prediction strategies while breaking the connection to the loss functional.\\
Conversely, note how Proposition~\ref{Prop:classic} yields exactly the same loss, identifying a pair of classical loss and classical prediction functional with a pair of a probabilistic loss (the log-loss) and probabilistic prediction functional, where the identification on the level of the evaluated loss is algebraically exact.

\subsection{Probabilistic losses for mean-variance predictions}

The correspondence highlighted in Section~\ref{sec:classprob}, that is, between classical point predictions with a choice of loss, and probabilistic predictions with specific parametric form, may be further extended to a correspondence of certain probabilistic predictions to mean-variance predictions. Here, mean-variance predictions refer to predictions of only mean and variance rather than of a full distribution. We carry the correspondence out for the real univariate case.

\begin{Def}
We define the mean-variance likelihood-loss as
$$L_{mv}:(\RR\times \RR^+)\times \RR \rightarrow \RR\;,\; ((\mu,\nu),y)\mapsto \nu^{-1}\cdot(\mu-y)^2 +\log \nu$$
\end{Def}

As usual, the first argument of $L_{mv}$ is interpreted as a prediction, namely of a pair $(\mu,\nu)$ of a mean $\mu$ and a variance $\nu$. The observation is the second argument $y$, in the case of a perfect prediction it is sampled from a distribution with the predicted mean and variance, without further specifying the distributional form.

For illustrative purposes, we first prove a proper-like property for the mean-variance likelihood-loss:

\begin{Prop}\label{Prop:meanvar}
Let $Y$ be a random variable with finite expectation and variance. Then, 
$$(\EE[Y],\Var [Y])= \argmin_{(\mu,\nu)} \EE \left[ L_{mv}((\mu,\nu),Y)\right].$$
That is, the perfect prediction is the unique minimizer of the expected mean-variance likelihood-loss.
\end{Prop}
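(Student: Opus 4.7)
The plan is to minimize $\EE[L_{mv}((\mu,\nu),Y)]$ as an explicit function of $(\mu,\nu)$ by (a) taking the expectation inside, (b) applying a bias–variance style decomposition of the quadratic term, (c) optimizing in $\mu$ with $\nu$ held fixed, and (d) then optimizing in $\nu$.

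First I would use linearity of expectation and the fact that $\nu$ is deterministic to write
\[
\EE[L_{mv}((\mu,\nu),Y)] \;=\; \nu^{-1}\,\EE[(\mu-Y)^2] + \log \nu.
\]
The standard identity $\EE[(\mu-Y)^2] = (\mu-\EE[Y])^2 + \Var[Y]$ (which is just expanding the square and using finiteness of the first two moments) then gives
\[
\EE[L_{mv}((\mu,\nu),Y)] \;=\; \nu^{-1}(\mu-\EE[Y])^2 + \nu^{-1}\Var[Y] + \log\nu.
\]
Only the first summand depends on $\mu$, and since $\nu>0$, it is a strictly convex nonnegative function of $\mu$ uniquely minimized at $\mu^*=\EE[Y]$, where it vanishes.

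Next, substituting this optimal $\mu^*$, it remains to minimize $\varphi(\nu):=\nu^{-1}\Var[Y]+\log\nu$ over $\nu\in\RR^+$. Differentiating, $\varphi'(\nu) = -\nu^{-2}\Var[Y] + \nu^{-1}$, whose unique positive root is $\nu^*=\Var[Y]$; the second derivative $\varphi''(\nu)=2\nu^{-3}\Var[Y]-\nu^{-2}$ evaluated at $\nu^*$ is $\nu^{*-2}>0$, so this is a strict local minimum, and since $\varphi(\nu)\to\infty$ both as $\nu\to 0^+$ and $\nu\to\infty$, it is the unique global minimizer. Combining, $(\mu^*,\nu^*)=(\EE[Y],\Var[Y])$ is the unique joint minimizer.

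There is no real obstacle here; the only subtlety worth flagging is the degenerate case $\Var[Y]=0$, in which $\varphi$ is unbounded below as $\nu\to 0^+$ so the infimum is not attained. In that case the statement should be read with the implicit assumption $\Var[Y]>0$, or equivalently one restricts the prediction space so that $\nu>0$ and interprets the minimum as an essential infimum attained at the boundary. Otherwise the proof is an immediate two-step calculus computation, carried out in closed form.
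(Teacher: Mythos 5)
Your proof is correct and follows essentially the same route as the paper's: the same bias--variance expansion of $\nu^{-1}\EE[(\mu-Y)^2]$, minimization in $\mu$ for fixed $\nu$, then a one-variable calculus argument for $\nu$ (the paper substitutes $z=\nu^{-1}$ to get global convexity, whereas you check the critical point and boundary behaviour directly --- both are fine). Your remark about the degenerate case $\Var[Y]=0$ is a valid caveat the paper glosses over.
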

\begin{proof}
An elementary computation shows that
\begin{align*}
\EE \left[ L_{mv}((\mu,\nu),Y)\right] &= \nu^{-1}\cdot\EE[(\mu-Y)^2] +\log \nu\\
&= \nu^{-1} \Var[Y] + \nu^{-1}(\mu - \EE[Y])^2 + \log\nu. 
\end{align*}
For fixed $\nu$, the right hand side is minimized by $\mu = \EE[Y]$, i.e., $\EE \left[ L_{mv}((\EE[Y],\nu),Y)\right]\le \EE \left[ L_{mv}((\mu,\nu),Y)\right]$ for arbitrary $\nu$, and the minimizer is unique. The LHS is equal to $\nu^{-1} \Var[Y] + \log\nu =: g(\nu).$ Consider now the function $h:(0,\infty)\rightarrow \RR,\; z \mapsto z \Var[Y] -\log z$, and note that $g(\nu)=h(\nu^{-1})$. The function $h$ is smooth, with derivatives $h'(z) = \Var[Y] - z^{-1}$ and $h''(z) = z^{-2}$. As $h''$ is always positive on the domain, the unique critical point of $h$, which is the zero of $h'$, i.e., $z^{-1}=\Var[Y]$, is the unique minimum of $h$. Hence $\nu = \Var[Y]$ is the unique minimum of $\EE \left[ L((\EE[Y],\nu),Y)\right]$, i.e., $\EE \left[ L_{mv}((\EE[Y],\Var[Y]),Y)\right] \le \EE \left[ L_{mv}((\EE[Y],\nu),Y)\right]$ for any $\nu$, and the minimizer is unique.\\
Putting the two inequalities together proves the claim.
\end{proof}

The connection to probabilistic supervised learning may be established as below:

\begin{Rem}
Let $g:\calX\rightarrow (\RR,\RR+)$ be a mean-variance prediction functional. Let $\epsilon$ be a standard normal random variable, i.e., a Gaussian random variable with zero mean and variance one.\\
Define $f: \calX\rightarrow \Distr (\RR)\;;\, x\mapsto \calL (g(x)_1 + g(x)_2\cdot\epsilon)$.
Then, a short explicit computation shows that
$$L_{\ell}(f(x),y) = \frac{1}{2}\cdot L_{mv}(g(x),y) - \frac{1}{2}\log(2\pi).$$
In particular, for this choice of predictive distribution the log-loss is the mean-variance likelihood-loss up to re-scaling and shift by a constant, hence both are equivalent for the purpose of assessing and comparing classical prediction functionals.
\end{Rem}

In this context, Proposition~\ref{Prop:meanvar} may be seen as a loss analogue of the well-known statement on sample mean and variance being maximum likelihood estimators under Gaussianity assumptions. However, we would like to make two additional observations:

\begin{enumerate}
\itemsep-0.2em
\item[(i)] Proposition~\ref{Prop:meanvar} holds \emph{without any} assumptions on the distributional form of the true generative distribution (beyond finite expectation/variance). In particular, nowhere does Gaussianity have to be assumed.
\item[(ii)] In particular, a full distributional prediction of a Gaussian with mean and variance parameter, evaluated by log-loss, may equivalently be seen as a prediction of mean-variance only, as evaluated by the mean-variance likelihood-loss. This statement is entirely unaffected by the true distributional form of what is predicted, which may or may not be Gaussian.
\end{enumerate}

While on first glance it may seem strange that predicting a full distribution is supposed to be just as good as only predicting two key parameters, the reason is that we use likelihood-based losses for evaluation. The likelihood is closely linked to entropic measures, and the Gaussian is the most entropic distribution with fixed mean and variance. Further connections between likelihood, entropy, and prediction will be explored in Section~\ref{sec:entropy}.

\subsection{Short note: why taking (posterior) expectations is a bad idea}
\label{sec:classprob-badidea}
An observant reader may wonder whether the functional substitution (and hence the added level of complication) in defining the loss function with a functional and a scalar argument is really necessary; that perhaps defining a dummy random variable $Z_p$ with density $p$ and taking an expectation, say
$$L(p,y) := \EE_Z\left[ L_{sq}(Z_p,y)\right] = \EE_Z\left[ (Z_p-y)^2\right]$$
might be a better idea and simpler as one could now take any ``classical'' loss function in the expectation brackets. This is indeed what is in-practice used occasionally\footnote{what is presented is common usage in Bayesian machine learning, see~\cite{tran2016edward}, as opposed to the term's use in the Bayesian statistics community, see for example~\cite{gelman2014bayesian}, where PPC refer to a series of semi-graphical sanity checks rather than a quantitative, loss-based evaluation.} in a Bayesian context under the name ``posterior predictive checks'' (PPC), where $Z_p$ follows the sampling distribution of the posterior sample (and/or produces a finite posterior sample).

Unfortunately, this strategy is inappropriate for validating probabilistic predictive strategies as it is logically broken. This is because the deterministic, hence non-probabilistic, prediction of the mean $\EE[Z_p]$ always outperforms the probabilistic one, due to Jensen's inequality and convexity of the classical loss. For the use of the classical squared loss specifically, as in the usual bias-variance decomposition (compare Proposition~\ref{Prop:BVprob}), one obtains more explicitly
$$L(p,y) = \Var (Z_p) + (y - \EE_Z \left[Z_p\right])^2 \ge (y - \EE_Z \left[ Z_p\right])^2 = L_{sq}(\EE_Z(Z_p),y)$$
for arbitrary $p$. The main logical issue with PPC is that any non-trivial predictive distribution is dominated by predicting a point estimate (its mean), as measured by the PPC through $L$. Thus adding predictive expressivity through the distribution is not advantageous compared to the classical point predictions, and one could have restricted oneself to point predictions in the first place.

Notabene, this holds only as measured through PPC, thus the correct conclusion is that PPC is an unsuitable criterion to assess probabilistic predictions, rather than an issue with the task of probabilistic prediction itself\footnote{In particular, the argument sometimes heard in the context of machine learning, namely that point predictions are all that matters since they dominate probabilistic ones in PPC, is invalid as it implicitly, and falsely, assumes validity of PPC for assessing probabilistic predictions.}. On the contrary, certainly modelling the predictive distribution accurately is something that should be rewarded above not modelling it, if quantifying the uncertainty inherent to the prediction is important. While of course it should be measured by a method appropriate for the probabilistic setting, such as measured by a proper probabilistic loss functional.\\

Given the above, one may also wonder whether PPC may be modified to yield an appropriate method for assessing probabilistic predictions. For example, one could include moments of $Z_p$ instead the variation around $y$, but in order to distinguish arbitrary distributions from the optimal one, an infinity of moments are needed (since the set of possible distributions has infinite cardinality), which then indirectly leads back to substituting into the density (which is the Fourier transform of the characteristic function) by a detour which has more restrictive convergence assumptions.

Some parts of this discussion are found, in mathematically similar form, at the end of Section~3.1.2 of~\cite{vehtari2012Bayesian} (though the connection to the discussion in Section~\ref{sec:classprob} seems to be new, as well as the implied interpretation of this discussion with regards to the popular PPC methodology).

\newpage
\section{Model assessment and model selection}
\label{sec:modelvalidation}

In this section, we study model assessment and model selection strategies in the probabilistic supervised learning setting. More precisely, we showcase algorithms to estimate $\varepsilon(f)$ for a prediction functional or prediction strategy $f$, similar to those in the classical supervised learning setting. In this section, the loss $L$ is fixed, and the generalization error $\varepsilon = \varepsilon_L$ uses that fixed loss, unless stated otherwise.

\subsection{Out-of-sample estimation of the generalization loss}
\label{sec:modelvalidation.estim}

For a prediction functional $f$ t.v.i.~$[\calX\rightarrow \Distr (\calY)]$, possibly estimated from training data, the expected generalization loss $\varepsilon(f)$ can be straightforwardly estimated from an independent test sample $\calT = \{(X_1^*,Y_1^*),\dots, (X_M^*,Y_M^*)\}$, in a direct application of the usual out-of-sample/empirical estimation strategy of the generalization loss in the classical supervised setting. More precisely, in the out-of-sample setting one assumes that the test sample $\calT$ and the estimated prediction functional $f$ are independent.
The usual estimator for the generalization loss is the simple (conditional) mean estimator
$$\widehat{\varepsilon}(f|\calT) = \frac{1}{M}\sum_{i=1}^M L(f(X_i^*),Y_i^*)\quad\mbox{for estimating}\;\varepsilon(f)= \mathbb{E}\left[L(f(X),Y)\right].$$
As the feature-label-pairs in $\calT$ are i.i.d., the summands $L(f(X_i^*),Y_i^*)$ are also i.i.d.~after conditioning on $f$, hence the estimator $\widehat{\varepsilon}$ for estimating $\varepsilon$ is an instance of estimating a sample mean, namely that of the (conditionally i.i.d.) sample of test losses $L_i:=L(f(X_i^*,Y_i^*))$. Thus, the estimation asymptotics of $\widehat{\varepsilon}$ will be governed by the size of the test set $M$, and the concrete model $f$. Summarizing this discussion more formally:

\begin{Prop}\label{Prop:est}
Let $f$ be a $[\calX\rightarrow \Distr (\calY)]$-valued random variable (a prediction functional possibly depending on random training data through a learning strategy).
Consider a test set $\calT = \{(X_1^*,Y_1^*),\dots, (X_M^*,Y_M^*)\}$ with $(X_i^*,Y_i^*)~\sim (X,Y)$ i.i.d., and independent of $f$ (as a set, i.e., mutually).
Then,
$$\widehat{\varepsilon}(f|\calT) := \frac{1}{M}\sum_{i=1}^N L(f(X_i^*),Y_i^*)$$
is an estimator of the expected generalization loss $\varepsilon(f)= \mathbb{E}\left[L(f(X),Y)\right]$ (expectations taken over $f$ and $(X,Y)$), with the following properties:
\begin{enumerate}
\itemsep-0.2em
\item[(i)] $\widehat{\varepsilon}(f|\calT)$ is an unbiased estimator of $\varepsilon(f)$. I.e., $\EE \left[\widehat{\varepsilon}(f|\calT)\right] = \varepsilon(f)$
\item[(ii)] $\widehat{\varepsilon}(f|\calT) |f$ is a consistent estimator of $\varepsilon^* (f):=\mathbb{E}\left[L(f(X),Y)\right|f]$, i.e., when conditioning both estimator and estimated quantity on $f$. Important note: this is in general wrong without the conditioning on $f$.
\item[(iii)] $\sqrt{M}\left(\widehat{\varepsilon}(f|\calT) - \varepsilon^* (f)\right) \overset{d}{\rightarrow} \calN \left(0,\Var(L(f(X),Y)|f)\right)$, as $M\rightarrow \infty$, i.e., $\widehat{\varepsilon}(f|\calT)$ has $O(\sqrt{M})$ convergence asymptotic (conditional on $f$).
\end{enumerate}
\end{Prop}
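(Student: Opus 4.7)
The plan is to handle the three parts essentially by reducing each claim to a standard fact about i.i.d.\ sample means, applied after conditioning on $f$. The only piece of genuine probabilistic structure needed is the observation that, conditional on $f$, the scalar random variables $L_i := L(f(X_i^*), Y_i^*)$ for $i=1,\dots,M$ are i.i.d.\ with common conditional mean $\varepsilon^*(f) = \EE[L(f(X),Y)\mid f]$. This follows from the assumed (mutual) independence between $f$ and $\calT$ together with the i.i.d.\ assumption on $\calT$: conditioning on $f$ freezes $f$ into a deterministic function, and the push-forward through this deterministic map preserves both identical distribution and independence of the $(X_i^*, Y_i^*)$.

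For (i), I would use the tower property: $\EE[\widehat{\varepsilon}(f\mid\calT)] = \EE\bigl[\EE[\widehat{\varepsilon}(f\mid\calT)\mid f]\bigr]$. By linearity and the i.i.d.-given-$f$ observation above, the inner conditional expectation equals $\varepsilon^*(f)$, and the outer expectation equals $\varepsilon(f)$ by definition. For (ii), I would apply the (weak or strong) law of large numbers to the conditionally i.i.d.\ sequence $L_i \mid f$, which yields $\widehat{\varepsilon}(f\mid\calT) \to \varepsilon^*(f)$ in the appropriate mode of convergence, conditional on $f$. I would then remark (matching the proposition's ``important note'') that without conditioning, $\widehat{\varepsilon}(f\mid\calT)$ need not converge to $\varepsilon(f)$, because the $L_i$ share the common random factor $f$, so their unconditional variance does not vanish as $M\to\infty$; at most one can say it converges to the random quantity $\varepsilon^*(f)$, whose expectation is $\varepsilon(f)$.

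For (iii), I would apply the classical central limit theorem to the conditionally i.i.d.\ real-valued sequence $L_i\mid f$, provided $\Var(L(f(X),Y)\mid f)<\infty$; this gives exactly the stated Gaussian limit with the stated variance. The hypothesis $\Var(L(f(X),Y)\mid f)<\infty$ should be recorded as an implicit regularity condition on the learning strategy and loss (or alternatively, one states the convergence on the event of $f$ where this conditional variance is finite).

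The main ``obstacle'', such as it is, is purely expository rather than technical: making the conditioning-on-$f$ formalism clean despite $f$ being a function-valued random variable. I would address this by invoking a standard regular conditional distribution for $(X_i^*,Y_i^*)_{i=1}^M$ given $f$; by the assumed independence, this regular version is simply the product of $M$ copies of the law of $(X,Y)$, which legitimises treating $f$ as deterministic throughout the proof and then integrating against the marginal law of $f$ at the end. Everything else is a direct citation of LLN and CLT for real-valued i.i.d.\ sequences.
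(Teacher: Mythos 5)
Your proposal is correct and follows essentially the same route as the paper, which likewise reduces all three claims to standard properties of the i.i.d.\ sample mean applied to the conditionally i.i.d.\ losses $L(f(X_i^*),Y_i^*)$ given $f$, with (iii) being the classical CLT. Your additional remarks on the regular conditional distribution and the implicit finite-variance hypothesis for (iii) are sensible elaborations of details the paper leaves tacit.
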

\begin{proof}
These are standard properties of the i.i.d.~sample mean estimator, applied to $\varepsilon^*(f)$ which is a mean of the conditionally i.i.d.~losses $L(f(X_i^*),Y_i^*)$. (iii) is the classical central limit theorem for the i.i.d.~sample mean.
\end{proof}

Note that property~(ii) of Proposition~\ref{Prop:est} intuitively translates to saying that $\widehat{\varepsilon}$ is a good estimator for assessing goodness of a \emph{particular learnt prediction functional} $f$, though not necessarily (and in usual practical situations usually not) a good estimator for assessing goodness of the {\emph prediction strategy} which has led to obtain $f$, i.e., the prediction strategy whose formal counterpart is the unconditional random variable $f$.

An estimator of the latter kind can be obtained by replicates and/or re-sampling akin to the frequently used cross-validation or bootstrapping schemes. However, none of those are specific to the probabilistic losses or the probabilistic setting in general, nor are the properties in Proposition~\ref{Prop:est} which exposes a direct and straightforward correspondence to the classical setting. By virtue of this correspondence, the many open questions in cross-validation and bootstrapping also remain open for the probabilistic setting, and solving those in one setting will usually also solve them in the other. Though neither is the aim of this particular manuscript, thus we refer the reader to the respective meta-learning literature~\cite[e.g.,][]{arlot2010survey} instead of explicitly substituting a probabilistic $L$ into a number of well-known (or less well-known) statements for deterministic losses.

Finally, we would like to note that property~(iii) of Proposition~\ref{Prop:est} guarantees model selection in $O(\sqrt{M})$ asymptotic, as in the case of classical supervised learning - under the assumption that we have specified an algorithm for model selection.

\subsection{Model comparison and model performance}
\label{Sec:modelvalidation.tuning}

Estimating the generalization loss for a single model is worthless without a point of comparison, which can be a different model, or a baseline model - this is no different from the classical case, where measures such as the regression MSE or classification accuracy are also scientifically meaningless when given only for a single model without a baseline. Note that even though ``zero'' appears to be a canonical reference point in the classical setting, it is in general never reached even by a perfect model due to the residual noise which is a data-dependent quantity (compare the term $\Var(\varepsilon)$ in Proposition~\ref{Prop:BVclass} below), and anyway the crucial point of comparison is rather the ``uninformed'' model which is a surrogate for guessing, that is, for not using any information about the features. This is because it is of much higher practical interest to determine whether a model is better than guessing, than to determine whether it is worse than perfect (which it always will be anyway, except in the empirically unnatural noise-free setting). The performance of such an uninformed model in terms of say MSE, accuracy in the classical setting, or the probabilistic losses (e.g., log-loss or squared integrated loss) in the probabilistic setting, will always be data-dependent and hence needs to be estimated for an explicit comparison, or implicitly compared to.

Similarly, automated tuning of parameters or other meta-modelling schemes are also impossible without a proper way to compare the predictive goodness of different models, and to compare to an uninformed baseline.

The principle of model comparison in the probabilistic setting is simple and can be seen as a direct generalization of the classical setting.
Suppose we would like to compare models/modelling strategies $f_1,\dots, f_k$, one or multiple of which are possibly state-of-art baselines or uninformed baselines. We can do this on an independent test sample $\calT = \{(X_1^*,Y_1^*),\dots, (X_M^*,Y_M^*)\}$, by computing the samples of test errors
$$\calS_j = \{L(f_j(X_i^*),Y_i^*)\;:\; i \in \{1,\dots, M\}\}\quad\mbox{for}\; j = 1\dots k.$$
Recall that $\widehat{\varepsilon}(f_j|\calT)$ is the mean of the sample $\calS_j$ whose elements are i.i.d.~conditional on $f_j$ (due to the assumption that $\calT$ is an independent sample), thus follows the normal large-sample asymptotic of the mean, as stated in Proposition~\ref{Prop:est}.
Further notice that the samples $\calS_j$ are paired/grouped, by the index $i$ of $L(f_j(X_i^*),Y_i^*)$.

It is now interesting to observe how frequentist and Bayesian approaches to model selection coincide on the paired samples $\calS_j$. Namely, writing $\widehat{\mu}_j:=\widehat{\varepsilon}(f_j|\calT)$ for the mean of the sample $\calS_j$, one has:
\begin{enumerate}[(i)]
\itemsep-0.2em
\item the difference in means $\widehat{\mu}_j - \widehat{\mu}_i$ may be seen as a difference in evidence. In fact, if the comparison is by log-loss, i.e., $L=L_\ell$, then this coincides with the (Bayesian) logarithmic Bayes ratio for the independent test set, often used in Bayesian model comparison of $f_j$ and $f_i$ (though often the test set is not necessarily independent).
\item the samples $\calS_j$ and $\calS_i$ are paired i.i.d., and follow a normal asymptotic, hence a (frequentist) hypothesis test for difference of means (such as t-test) can be used for frequentist model comparison.
\item Note how such a (frequentist) hypothesis test assesses the plausibility of the (Bayesian) difference in evidence, in the case of the log-loss whether the logarithmic Bayes ratio is different from zero (though a stylized purist Bayesian would not accept the concept of ``significance'' as a measure for how non-zero the ratio is).
\end{enumerate}

Similarly, there is a correspondence of group-wise or portmanteau hypothesis tests and the quantifications based on the collection of all pairwise Bayes factors. In the forecasting setting, this dual interpretation has already been noted in~\cite[Section~7]{gneiting2007strictly} and in ~\cite[Section~3]{dawid2015bayesian}, while the normal asymptotic of the $\widehat{\mu}_j$ is characteristic of the supervised prediction setting, as it is tied to the i.i.d.-ness of the test sample.\\

More generally, even though $\widehat{\mu}_j$ has a (generative, frequentist) normal asymptotic, i.i.d.-ness of the test sample allows the application of non-parametric pairwise or portmanteau hypothesis test such as the Wilcoxon signed rank test for pairwise testing, and Cramer-von-Mises or Anderson-Darling for portmanteau testing. Non-parametric testing may even be preferable to application of Student family tests since in a finite sample setting the generative distribution may deviate arbitrarily from the normal. Algorithm ~\ref{alg:testing} gives pseudo-code for prototypical (single-split) validation.

Further, the samples $\calS_i$, may be used to compute (frequentist) $z$-approximated confidence intervals for $\widehat{\mu}_i$, as the standard error of the sample $\calS_i$. In our setting, such a confidence interval, would (asymptotically) coincide with a (Bayesian) credible interval for $\widehat{\mu}_i$ due to the normal asymptotic of $\widehat{\mu}_i$.
Algorithm~\ref{alg:meanerr} describes pseudo-code for prototypical (single-split) strategy to estimate the $\widehat{\mu}_i$ and frequentist standard errors $\widehat{\sigma}_\mu$ for the confidence intervals (a two-sided 95\% confidence interval would be $\widehat{\mu}_i\pm 1.96\cdot \widehat{\sigma}_\mu$).

Also note that the central limit asymptotic in Proposition~\ref{Prop:est} guarantees selection of the best prediction functional(s) in a (frequentist) generative setting if the comparison above is used for model selection (i.e., select the models for which there exist no significantly worse models).

Automated model tuning may be similarly done by the usual nested validation schemes, though parameter tuning may also be attempted through method-specific (e.g., likelihood type or Bayesian) methodology.

\begin{algorithm}[ht]
\caption{Validation meta-algorithm: Group-wise comparison of probabilistic predictors.\newline
\textit{Input:} Training data $\calD$, test data $\calT = \{(X_1^*,Y_1^*),\dots,(X_M^*,Y_M^*)\}$, predictors $f_1,\dots, f_k$,
a grouped sample test\newline
\textit{Output:} Which predictors are best/how they compare \label{alg:testing}}
\begin{algorithmic}[1]
    \State train $f_1,\dots, f_k$ on $\calD$
    \State use the trained $f_i$ to make predictions $p_{ij}:=f_j(X_i^*), i = 1\dots N, j = 1 \dots k$ for the labels in the test set
    \State Obtain grouped/paired samples $\calS_j := \{L(p_{ij},Y_i^*)\;:\; i \in \{1,\dots, M\}\}\quad\mbox{for}\; j = 1\dots k.$
    \State Run a paired/grouped test on the grouped/paired samples $\calS_j$.
       For example, if $k=2$ for a comparison of two methods, run a Wilcoxon signed-rank test on the pairs $(p_{i1},p_{i2}),i=1\dots N$
       and report the significance level.
\end{algorithmic}
\end{algorithm}

\begin{algorithm}[ht]
\caption{Validation meta-algorithm:\newline Error estimate and frequentist confidence intervals for the predictive loss.\newline
\textit{Input:} Training data $\calD$, test data $\calT = \{(X_1^*,Y_1^*),\dots,(X_M^*,Y_M^*)\}$, prediction strategy $f$\newline
\textit{Output:} an estimate $\widehat{\mu}$ for $\varepsilon(f)$ and an error estimate
       $\widehat{\sigma}_\mu$ for the standard error of this estimate \label{alg:meanerr}}
\begin{algorithmic}[1]
    \State train $f$ on $\calD$
    \State use the trained $f$ to make predictions $p_1:=f(X_1^*),\dots, p_M:= f(X_M^*)$ for the labels in the test set
    \State Return $\widehat{\varepsilon}:=\frac{1}{M}\sum_{i=1}^M L(p_i,Y_i^*)$ as the estimate
    \State Return $\widehat{\sigma}_\mu:= \left(\frac{1}{M(M-1)}\sum_{i=1}^M (L(p_i,Y_i^*) - \widehat{\varepsilon})^2\right)^{1/2}$
\end{algorithmic}
\label{alg:validation-2}
\end{algorithm}

\subsection{Uninformed baseline: label density estimation}
\label{Sec:uninfbase}

In scientific model assessment, a necessary argument for usefulness of any model is its improvement about ``uninformed guessing''. Since even if model A improves upon model B, both could be worse than an uninformed guess without further quantification, making the A vs B comparison meaningless without the comparison to such a ``guessing'' baseline.

For (both probabilistic and classical) supervised learning, we claim (and show) that the natural baselines are given by the class of prediction functionals that disregard the test features in making the prediction on the test set. It is not a-priori clear whether it should be able to make use of the training features, the training labels, or both; below we give a mathematical argument why the uninformed baseline is allowed to make use of the full training set.

We start with formalizing the concept of uninformedness and related properties.

\begin{Def}
\label{Def:uninformedprob}
A prediction functional $f:\calX \rightarrow \Distr(\calY)$ is called uninformed if it is a constant functional, i.e., if $f(x) = f(y)$ for all $x,y\in\calX$. We write $u_p$ for the uninformed prediction functional $u_\alpha:x\mapsto p$.\\
A prediction strategy, i.e., a $[\calX\rightarrow \Distr(\calY)]$-valued random variable, is called uninformed if all the values it takes are uninformed (prediction functionals).
\end{Def}

Note that uninformedness does \emph{not} imply that for fixed $x\in\calX$, the evaluation $f(x)$ is a constant random variable as $f$ may depend on the (random) training data $\calD$. However, the following characterizations hold:

\begin{Prop}
\label{Prop:uninformeddef}
In the above setting, the following are equivalent.
\begin{itemize}
\item[(i)] The strategy $f$ is uninformed.
\item[(ii)] $f(Z)$ is conditionally independent of $Z$, given $f$, for every $\calX$-valued random variable $Z$ independent of $f$.
\end{itemize}
In particular, statement (i) implies that (but is in general not equivalent to)
\begin{itemize}
\item[(ii')] $f(Z)$ is conditionally independent of $Z$, given $f$.
\end{itemize}
\end{Prop}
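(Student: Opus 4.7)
The plan is to prove $(i) \Rightarrow (ii)$ and $(ii) \Rightarrow (i)$ separately, then to read off $(i) \Rightarrow (ii')$ as a by-product of the forward direction, and finally to exhibit an explicit counterexample to $(ii') \Rightarrow (i)$.

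For $(i) \Rightarrow (ii)$, I would observe that if $f$ is uninformed, then every realization $\phi$ of $f$ is a constant functional $x \mapsto p_\phi$ for some $p_\phi \in \Distr(\calY)$. Consequently, for any $\calX$-valued random variable $Z$, the composition $f(Z)$ is a deterministic measurable function of $f$ alone, so its conditional law given $f$ is a point mass at $p_f$, regardless of the $\sigma$-algebra $Z$ generates. This immediately yields $f(Z) \perp Z \mid f$. Independence of $Z$ and $f$ is nowhere invoked in this argument, which simultaneously gives the weaker claim $(i) \Rightarrow (ii')$.

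For $(ii) \Rightarrow (i)$ I would argue contrapositively. If $f$ is not uninformed, then with positive probability the realized $\phi = f(\omega)$ is non-constant on $\calX$. The crucial step is extracting from this two fixed points $x^*, y^* \in \calX$ such that the event $E := \{f(x^*) \neq f(y^*)\}$ has positive probability; this is the main technical obstacle, as the equality $\{f \text{ non-constant}\} = \bigcup_{(x,y)} \{f(x) \neq f(y)\}$ ranges over an uncountable family in general. Under the standing regularity conventions of Section 2.1 (or equivalently a separability assumption on the image of $f$) the existence of such a pair follows by passing to a countable dense subset and applying the union bound; the countable-$\calX$ case is immediate. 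Having fixed such $x^*, y^*$, I would let $Z$ be independent of $f$ with $\Pr[Z = x^*] = \Pr[Z = y^*] = 1/2$. Then for any $\phi \in E$, conditionally on $f = \phi$ the random variable $Z$ remains Bernoulli on $\{x^*, y^*\}$ (by the independence $Z \perp f$), and $f(Z) = \phi(Z)$ is an injective function of $Z$ on this two-point support, so $f(Z)$ and $Z$ are maximally dependent given $f = \phi$. Since $\Pr(E) > 0$, the conditional independence $f(Z) \perp Z \mid f$ fails, contradicting (ii).

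For the asserted non-equivalence of (i) and (ii'), I would point to the simplest possible counterexample: take any non-uninformed $f$ together with a $Z$ that is almost surely constant (and hence necessarily \emph{not} independent of $f$ in general). Then $f(Z)$ is a deterministic function of $f$, so $f(Z) \perp Z \mid f$ holds trivially, while $f$ itself is not uninformed. This shows that the hypothesis ``$Z$ independent of $f$'' in (ii) cannot be dropped.
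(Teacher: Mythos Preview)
Your proof of the equivalence $(i)\Leftrightarrow(ii)$ is correct and follows essentially the same approach as the paper: the forward direction notes that $f(Z)$ is $\sigma(f)$-measurable when every realisation of $f$ is constant, and the reverse direction argues by contraposition using a two-point Bernoulli $Z$ independent of $f$. You are in fact more careful than the paper in one respect: you flag that passing from ``$f$ is non-constant with positive probability'' to ``there exist \emph{fixed} $x^*,y^*$ with $\Pr[f(x^*)\neq f(y^*)]>0$'' requires a countability or separability argument, whereas the paper simply asserts the existence of such a pair.

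However, your treatment of $(ii')$ contains a genuine error. An almost surely constant random variable is independent of \emph{every} random variable, since its $\sigma$-algebra is trivial modulo null sets. Hence your parenthetical ``(and hence necessarily \emph{not} independent of $f$ in general)'' is false, and your conclusion ``this shows that the hypothesis `$Z$ independent of $f$' in (ii) cannot be dropped'' does not follow from this example: your constant $Z$ satisfies that hypothesis. What your counterexample actually demonstrates is that knowing $f(Z)\perp Z\mid f$ for a \emph{single} $Z$ does not imply $(i)$; the content of passing from $(ii')$ to $(ii)$ is the \emph{universal quantifier} over all (independent) $Z$, not the independence clause. Indeed, if one interprets $(ii')$ as ``$f(Z)\perp Z\mid f$ for \emph{every} $\calX$-valued $Z$'' (i.e.\ dropping only the independence clause but keeping the quantifier), then $(ii')$ is \emph{stronger} than $(ii)$ and therefore equivalent to $(i)$ by your own argument---contradicting the paper's assertion of non-equivalence. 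The intended reading of $(ii')$ is the single-$Z$ statement, and for that your constant-$Z$ example works; only the surrounding justification needs to be corrected.
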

\begin{proof}
(i) $\Rightarrow$ (ii): Let $Z$ be an arbitrary $\calX$-valued random variable. Since $f$ is independent of $f$, by definition of independence it holds that, $f = f|Z$, and in particular $f(x) = f(x)|Z$ for an arbitrary constant $x\in\calX$. Since $f$ is uninformed, it holds that $f(x) = f(Z)$. Thus, $f(Z) = f(Z)|Z$ which implies (ii), because $Z$ was arbitrary.\\
(ii) $\Rightarrow$ (i): we prove this by contraposition. Suppose not(i), i.e., $f$ is not uninformed. Then, by definition, there are $x,x'\in\calX$ such that $f(x)\neq f(x')$ does not hold with probability one. Thus, a random variable $Z$ which assumes values $x,x'$ with probability $1/2$ each will be associated with $f(Z)$, which is not-(ii).
\end{proof}

We will show that the best uninformed prediction is given by the perfect density estimate:

\begin{Def}
We call the predictor $\varpi_Y := u_{\calL(Y)} = [x\mapsto \calL(Y)]$ (the constant predictor which always predicts the law of $Y$) the \emph{best uninformed predictor}.
\end{Def}

For this, we relate performance of prediction strategies to prediction functionals:
\begin{Def} \label{Def:detrain}
Let $f$ be a prediction strategy. We define $[\EE f] : x\mapsto \EE[ f(x)].$\\
For any random variable $Z$, we define $[\EE f |Z] : x\mapsto \EE[ f(x) |Z].$
\end{Def}

\begin{Lem}\label{Lem:functionjensen}
Let $L$ be a convex loss.
It holds that $\varepsilon_L(\EE f) \le \varepsilon_L(f)$.
If $L$ is strictly convex, equality holds if and only if $f$ is a prediction functional.
\end{Lem}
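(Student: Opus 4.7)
The plan is to condition on the test point $(X,Y)$ and apply convexity of $L$ to the (random) predictive distribution $f(x)$, exploiting the fact that $f$ is independent of the test pair.

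First I would unwind the definition:
\[
\varepsilon_L(f) \;=\; \EE[L(f(X),Y)] \;=\; \EE_{(X,Y)}\!\left[\EE_f\big[L(f(X),Y)\,\big|\,X,Y\big]\right],
\]
using the tower property. Because the estimator $f$ is independent of the test pair $(X,Y)$ (this is built into the setup of Section~\ref{sec:propreset.probsupl} and Proposition~\ref{Prop:est}), conditioning on $(X,Y)=(x,y)$ leaves the law of $f$ unchanged. Hence for each fixed $(x,y)$, $f(x)$ is a $\calP$-valued random variable whose distribution does not depend on $(x,y)$, and Definition~\ref{Def:proper}(i.a) (convexity of $L$) applied to $P := f(x)$ gives
\[
L\big(\EE[f(x)],\,y\big) \;\le\; \EE\big[L(f(x),y)\big].
\]
By Definition~\ref{Def:detrain}, $\EE[f(x)] = [\EE f](x)$, so the right-hand side above is exactly $\EE_f[L(f(X),Y)\mid X=x, Y=y]$ and the left-hand side is $L([\EE f](x),y)$. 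Taking expectation over $(X,Y)$ and using the tower property in reverse yields $\varepsilon_L(\EE f) \le \varepsilon_L(f)$, which is the main inequality.

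For the strict convexity addendum, note that Definition~\ref{Def:proper}(i.b) gives equality in the pointwise convexity bound iff $f(x)$ is a constant random variable (in the randomness of $f$). Since the outer $\EE_{(X,Y)}$ is of a non-negative integrand (the conditional gap is $\ge 0$ everywhere), equality in the integrated inequality forces $f(x)$ to be almost surely constant for $\calL(X)$-almost every $x$. This is exactly the statement that $f$, viewed as a function-valued random variable, is almost surely equal to a fixed deterministic element of $[\calX \rightarrow \Distr(\calY)]$ on the support of $X$, i.e.\ $f$ is (a.s.\ equivalent to) a prediction functional. The converse direction is immediate: if $f$ is a deterministic prediction functional, then $\EE f = f$ and the inequality holds with equality trivially.

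The only subtle step will be the equality clause: strict convexity only forces constancy of $f(x)$ for $x$ in the support of $X$, so the ``if and only if'' should be read modulo $\calL(X)$-null sets, which matches the usual convention in this manuscript for statements about prediction functionals assessed via expected losses. No further tools beyond convexity, tower property, and independence of $f$ from the test pair are needed.
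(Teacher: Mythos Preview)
Your proof is correct and follows essentially the same route as the paper: apply the convexity inequality $L(\EE[P],y)\le \EE[L(P,y)]$ with $P=f(x)$ for each fixed test point, then integrate out over $(X,Y)$, with the equality case handled by the strict-convexity clause of Jensen. Your version is a bit more explicit about conditioning, independence of $f$ from the test pair, and the $\calL(X)$-almost-everywhere caveat in the equality clause, but the underlying argument is the same.
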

\begin{proof}
by Jensen's inequality and convexity of $L$, it holds that
$\EE[L(\EE [f(x)],Y)] \le \EE[L(f(x),Y)]$ for any prediction strategy $f$ and any $x\in\calX$. Thus, it also holds that
$$\varepsilon_L([\EE f]) = \EE[L([\EE f](X),Y)] \le \EE[L(f(X),Y)] = \varepsilon_L(f).$$
The equality statement is given by the converse statement in Jensen's inequality.
\end{proof}

\begin{Prop}
\label{Prop:uninformed}
Let $L$ be a convex, proper loss. The following are equal:
\begin{itemize}
\item[(i)] $\varepsilon_L(\varpi_Y)$
\item[(ii)] $\min \{\varepsilon_L(f)\;:\;f\mbox{ is an uninformed prediction functional}\}$
\item[(iii)] $\min \{\varepsilon_L(f)\;:\;f\mbox{ is an uninformed prediction strategy}\}$
\end{itemize}
If $L$ is also strictly proper, the minima in (ii) and (iii) are unique and only assumed if $f=\varpi_Y$ ($X$-almost everywhere).
\end{Prop}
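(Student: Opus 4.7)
The plan is to establish equality by three easy inequalities, two of which are essentially trivial inclusions and one of which uses properness. Uniqueness under strict properness follows from tightening the properness inequality via conditioning.

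First I would set up notation. Every uninformed prediction functional is trivially an uninformed prediction strategy (a constant, hence measurable, random variable valued in $[\calX\to\Distr(\calY)]$), so (iii) $\le$ (ii). Since $\varpi_Y = u_{\calL(Y)}$ is itself an uninformed functional, (ii) $\le$ (i). It therefore suffices to prove the single inequality $\varepsilon_L(\varpi_Y) \le \varepsilon_L(f)$ for every uninformed prediction strategy $f$; combined with the two trivial inclusions this collapses the chain and forces (i) $=$ (ii) $=$ (iii).

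For that main inequality I would represent an uninformed strategy $f$ by a $\Distr(\calY)$-valued random variable $G$ defined pointwise by $f(x) = G$; by Definition \ref{Def:uninformedprob} this is well-defined and, by the standing independence assumption in Section \ref{sec:propreset.probsupl}, $G$ is independent of the test pair $(X,Y)$. Then $\varepsilon_L(f) = \EE[L(f(X),Y)] = \EE[L(G,Y)]$. Conditioning on $G$, tower property gives
\[
\varepsilon_L(f) = \EE_G\bigl[\,\EE[L(G,Y)\mid G]\,\bigr],
\]
and for each realization $G = g$, properness of $L$ (Definition~\ref{Def:proper}(ii.a)) applied to the random variable $Y$ yields $\EE[L(g,Y)] \ge \EE[L(\calL(Y),Y)] = \varepsilon_L(\varpi_Y)$. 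Integrating over $G$ gives $\varepsilon_L(f) \ge \varepsilon_L(\varpi_Y)$, which is precisely (i) $\le$ (iii). This closes the equality chain.

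For the uniqueness statement under strict properness, suppose $f$ is an uninformed strategy with $\varepsilon_L(f) = \varepsilon_L(\varpi_Y)$. The conditional inequality $\EE[L(G,Y)\mid G = g] \ge \varepsilon_L(\varpi_Y)$ is then an equality for $G$-almost every $g$; strict properness (Definition~\ref{Def:proper}(ii.b)) forces $g = \calL(Y)$ for such $g$, i.e.\ $G = \calL(Y)$ almost surely, so $f = \varpi_Y$ almost surely. In the special case of an uninformed functional $f = u_p$ this reduces to $p = \calL(Y)$, giving uniqueness in (ii), and the full statement in (iii) follows as above; the qualifier ``$X$-almost everywhere'' covers the fact that $f$ and $\varpi_Y$ are only required to agree modulo $\calL(X)$-nullsets.

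The only potential subtlety, and hence the main thing to be careful about, is measurability and the independence assumption on $G$: one needs that the ``constant-in-$x$'' characterization of uninformedness (Proposition~\ref{Prop:uninformeddef}) lets us identify $f$ with a genuine $\Distr(\calY)$-valued random variable that is independent of the test pair $(X,Y)$, so that the conditioning step on $G$ is legitimate and properness can be invoked pointwise. Everything else is elementary and bypasses any need for convexity of $L$ or appeal to Lemma~\ref{Lem:functionjensen}, though one could alternatively route through that lemma (defining $[\EE f]$ as in Definition~\ref{Def:detrain}, which is still uninformed when $f$ is) to reduce (iii) to (ii) before invoking properness.
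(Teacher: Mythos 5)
Your proof is correct, and on the one non-trivial leg it takes a genuinely different route from the paper. The paper closes the chain with two separate inequalities: $S_{(i)}\le S_{(ii)}$ by applying properness to each fixed uninformed functional $u_p$, and $S_{(ii)}\le S_{(iii)}$ by passing from an uninformed strategy $f$ to the uninformed functional $[\EE f]$ via Lemma~\ref{Lem:functionjensen} — a step that invokes Jensen's inequality and hence the \emph{convexity} hypothesis on $L$. You instead prove $S_{(i)}\le S_{(iii)}$ in one stroke: identify the uninformed strategy with a $\Distr(\calY)$-valued random variable $G$ independent of $(X,Y)$, condition on $G$, and apply properness realization-wise to $h(g):=\EE[L(g,Y)]\ge h(\calL(Y))$ before integrating over $G$. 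This buys you something the paper's route does not: your argument never uses convexity of $L$, so it shows the equality (and, with strict properness, the uniqueness) holds for proper losses regardless of convexity, whereas the paper's proof genuinely needs the convexity hypothesis for the $S_{(ii)}\le S_{(iii)}$ leg. The trade-off is that your argument leans on the measurability/independence bookkeeping for $G$ that you correctly flag as the only subtlety, while the paper's route keeps everything at the level of the already-established Lemma~\ref{Lem:functionjensen} and is the template reused elsewhere (e.g.\ in Corollary~\ref{Cor:posterior}). Your closing remark that one could alternatively route through $[\EE f]$ is exactly the paper's proof.
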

\begin{proof}
We begin by proving equivalence of (i), (ii), and (iii).\\
Denote by $S_{(i)}, S_{(ii)}, S_{(iii)}$ the quantities defined in (i), (ii), (iii).\\
Since $\varpi_Y$ is a prediction functional and all prediction functionals are prediction strategies, it holds that $S_{(i)}\ge S_{(ii)}\ge S_{(iii)}$.\\
$S_{(i)}\le S_{(ii)}:$ By properness of $L$, it holds that
$\EE[L(\calL(Y),Y)] \le \EE[L(p,Y)]$ for any $p$ (in the considered domain), by definition of uninformed predictors this is equivalent to $\EE[L(\varpi_Y(x),Y)] \le \EE[L(u_p(x),Y)]$ for any $x\in\calX$. Thus,
$$\varepsilon_L(\varpi_Y) = \EE[L(\varpi_Y(X),Y)] \le \EE[L(u_p(x),Y)] = \varepsilon_L(u_p).$$
This implies $S_{(i)}\le S_{(ii)}$ since all elements of the set in (ii) are of form $u_p$ for some $p$.\\
$S_{(ii)}\le S_{(iii)}:$ Lemma~\ref{Lem:functionjensen} implies
$\varepsilon_L([\EE f]) \le \varepsilon_L(f).$ Since $[\EE f]$, by definition, is a prediction functional, this implies that $S_{(ii)}\le S_{(iii)}$.\\
All inequalities together prove equivalence.\\
Uniqueness in case $L$ is strictly proper follows from above observing that, in this case, inequalities are strict unless $\varpi_Y(x) = u_p(x)$ for ($X$-almost all) $x\in\calX$.
\end{proof}

The main argument why the best uninformed prediction is the correct mathematical object for a baseline comparison is mathematical. As we will show later, for a strictly proper loss, outperforming the best uninformed predictor $\varpi_Y$ is possible if and only if $Y$ and $X$ are not statistically independent:

\begin{Thm}
\label{Thm:entropypreview}
Let $L$ be a strictly proper, strictly convex loss. Then, the following are equivalent:
\begin{itemize}
\item[(i)] $X$ and $Y$ are statistically independent.
\item[(ii)] There is no prediction functional $f:\calX\rightarrow \Distr(\calY)$ such that $\varepsilon_L(f)\lneq \varepsilon_L(\varpi_Y)$.\\
     I.e., there is no ($L$-)better-than-uninformed prediction functional for predicting $Y$ from $X$.
\item[(iii)] There is no prediction strategy $f$ taking values in
$[\calX\rightarrow \Distr(\calY)]$ such that $\varepsilon_L(f)\lneq \varepsilon_L(\varpi_Y)$.\\
 I.e., there is no ($L$-)better-than-uninformed prediction strategy for predicting $Y$ from $X$.
\end{itemize}
\end{Thm}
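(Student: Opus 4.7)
The plan is to prove the cyclic chain (i) $\Rightarrow$ (ii) $\Rightarrow$ (iii) $\Rightarrow$ (i), leveraging Propositions~\ref{Prop:probmin} and~\ref{Prop:uninformed} together with Lemma~\ref{Lem:functionjensen}. The implication (iii) $\Rightarrow$ (ii) is immediate, since every prediction functional is trivially a (constant) prediction strategy, so the existence of a better-than-uninformed strategy is a weaker requirement than the existence of a better-than-uninformed functional, meaning the nonexistence statements go the other way.

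For (ii) $\Rightarrow$ (iii), I would argue by contrapositive. Given a strategy $f$ with $\varepsilon_L(f) < \varepsilon_L(\varpi_Y)$, Lemma~\ref{Lem:functionjensen} yields the prediction functional $[\EE f]$ with $\varepsilon_L([\EE f]) \le \varepsilon_L(f) < \varepsilon_L(\varpi_Y)$, contradicting (ii). Thus any strategy that beats the uninformed baseline gives rise to a functional that does so as well.

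For (i) $\Rightarrow$ (ii), if $X \perp Y$ then $\calL(Y|X=x) = \calL(Y)$ for $X$-almost every $x$, so $\varpi_Y(x) = \calL(Y|X=x)$ pointwise. Proposition~\ref{Prop:probmin} then tells us $\varpi_Y(x) \in \argmin_p \EE[L(p,Y)|X=x]$ for each $x$, so conditioning on $X$ and then taking the outer expectation gives $\varepsilon_L(\varpi_Y) \le \varepsilon_L(f)$ for every functional $f$.

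The main content is (ii) $\Rightarrow$ (i), which I also do by contrapositive: assume $X$ and $Y$ are not independent, and exhibit a functional that beats $\varpi_Y$. The natural candidate is the oracle $\varpi_{Y|X}: x \mapsto \calL(Y|X=x)$. By strict properness (Proposition~\ref{Prop:probmin} in the strict case), $\calL(Y|X=x)$ is the \emph{unique} minimizer of $p \mapsto \EE[L(p,Y)|X=x]$. Since $X \not\perp Y$, there is a measurable set $A \subseteq \calX$ with $\PP(X \in A) > 0$ such that $\calL(Y|X=x) \ne \calL(Y)$ for $x \in A$; on $A$ we therefore have the strict conditional inequality
\[
\EE[L(\varpi_{Y|X}(x), Y) \mid X=x] < \EE[L(\varpi_Y(x), Y) \mid X=x],
\]
while on the complement equality holds. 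Taking the outer expectation over $X$ and integrating the strict inequality on a set of positive measure yields $\varepsilon_L(\varpi_{Y|X}) < \varepsilon_L(\varpi_Y)$, as desired. The main potential obstacle is regularity of the oracle $\varpi_{Y|X}$ (measurability, integrability of the conditional loss), but the manuscript's convention on canonical measures and well-defined integrals, together with the convexity assumption on $\calP$ ensuring $\calL(Y|X=x) \in \calP$, takes care of these technicalities.
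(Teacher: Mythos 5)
Your proof is correct and follows essentially the same route as the paper, which establishes this result as part of the longer equivalence chain in Theorem~\ref{Thm:entropy}: both arguments rest on Jensen's inequality to reduce strategies to functionals, on properness to show the marginal prediction is conditionally optimal under independence, and on strict properness to show the oracle $\varpi_{Y|X}$ strictly beats $\varpi_Y$ on a set of positive $X$-measure when independence fails. The only organizational difference is that the paper handles strategies directly in its (i)$\Rightarrow$(ii) step by applying convexity to $q:=\EE[f(X)]$ and using independence of $f(X)$ and $Y$, whereas you factor this through the de-randomized functional $[\EE f]$ of Lemma~\ref{Lem:functionjensen}; both are valid.
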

\begin{proof}
This follows from the later Theorem~\ref{Thm:entropy} where the proof is carried out.
\end{proof}

Phrased more intuitively, Theorem~\ref{Thm:entropypreview} implies that $\varpi_Y$ is the best possible baseline certifying for ``uninformedness'' in the formal statistical sense, since any prediction that is better necessarily implies that non-baselines can exist - in both the self-referential sense, as well as through necessarily implying statistical dependence of $Y$ on $X$.

Or, more positively: if any prediction strategies are baselines, then certainly those that are uninformed prediction functionals, since their prediction is obtained without looking at any data - not at the training data by being fixed, and not at the test features by being constant per Definition~\ref{Def:uninformedprob}. On the other hand, Proposition~\ref{Prop:uninformed} in combination with Theorem~\ref{Thm:entropypreview} states that being able to outperform all such baselines is equivalent to statistical dependence of $Y$ on $X$.

We conclude with some remarks on the concept of uninformedness as given above.
First, Proposition~\ref{Prop:uninformed} may seem to show, on first glance, that, contrarily to what is claimed above, using the smallest class as uninformed predictors - namely the fixed predictors which are entirely independent of any data - is a better choice in practice. However, this conclusion is a non-sequitur and in fact faulty. Namely, if uninformed predictors are a baseline to which one wishes to compare, considering a wider, not smaller class of such prediction strategies will give a more solid baseline. This is especially true in the estimation sense discussed in Section~\ref{sec:entropy}.

Second, the above argumentation exposes a scientifically sufficient set of baselines: namely, methods estimating $\calL(Y)$ from the i.i.d.~sample $(X_1,Y_1),\dots, (X_N,Y_N)\sim (X,Y)$. It is interesting to note that while the class of uninformed baselines includes density estimators which perform this task by looking only at $Y_1,\dots,Y_N\sim Y$, density estimation is a strict subset, since the side information in the $X_i$ may be used, and may be useful even if the task is an unconditional estimation of $\calL (Y)$. This is for example easily seen in the situation where $X$ is identical to $Y$ up to some small noise terms.

On the topic of inference, we remark the following:
If the goodness of the prediction is measured by the logarithmic loss function $L_\ell$, goodness of prediction is measured by the likelihood of the predicted distribution.
Thus, if it is known that $Y$ follows a parametric distribution class, then (frequentist) maximum likelihood estimation, (Bayesian) parametric inference, or (hybrid) maximum a-posteriori estimation are possible uninformed estimators that may be baselines. For classification, i.e., a finite $\calY$, density estimation is particularly simple;
a loss-optimal uninformed prediction is achieved by predicting as probabilities the observed class frequencies (assuming all classes have been observed at least once in the training sample).

For the regression case and the case of general $\calY$, three characteristic differences to common inference should be observed:
First, even if a parametric model is known, the parameter itself is not of interest - since the distribution is the output. In particular, in frequentist inference, the prediction is not the estimate, but the distribution with the parameter substituted. In Bayesian inference, the prediction is not the posterior for the parameter(s), but the distribution with the parameter whose law is the posterior distribution. The probabilistic bias-variance decomposition in Proposition~\ref{Prop:BVprob} implies that taking a point-mixture, i.e., predicting a fixed distribution conditional on the training sample, namely the posterior predictive distribution for $Y$ (without modelling $X$) has lower expected loss than sampling from the parameter posterior.
Second, in general, no parametric model is known, hence the uninformed prediction will take the form of non-parametric density estimation.
As we will see in Section~\ref{sec:BIC}, non-parametric variants of classical maximum likelihood or Bayesian density estimation are both ill-defined when the set of possible distributions is completely unrestricted - due to potentially unbounded likelihood.

However, when combined with out-of-sample tuning as presented in Section~\ref{Sec:modelvalidation.tuning} and often carried out in the conditional density estimation literature (see Section~\ref{sec:priorart.assessment}), that problem can be avoided, as illustrated in Section~\ref{sec:oos}.

\subsection{Short note: why permutation of features/labels is not the uninformed baseline}
\label{Sec:badbaseline}

In the light of our suggestion in Section~\ref{Sec:uninfbase} to use constant predictors, hence density estimators, as uninformed baseline, a much more popular suggestion needs to be discussed: the permutation baseline. We hold that the permutation baseline is practically inappropriate and theoretically inadequate as a baseline, as we explain below.

The permutation baseline is found, in the earliest instance known to us, implemented in the scikit-learn toolbox~\cite{pedregosa2011scikit},
and has recently also been discussed in the context of hypothesis testing via prediction~\cite{lopez2016revisiting},
as a predictive baseline (for classical supervised classification).

Mathematically, the suggestion is to use as a pseudo-uninformed baseline the predictor $f:\calX\rightarrow \calY$ (in the classical setting) or
$f:\calX\rightarrow \Distr (\calY)$ (in the probabilistic setting) which has been obtained by learning on modified training data
$(X_{\pi(1)},Y_1),\dots,(X_{\pi(N)},Y_N)$ where $\pi$ is a uniformly sampled random permutation of the numbers $\{1,\dots, N\}$.
The baseline performance is then estimated by the performance of such an $f$, given a sensible choice of learning algorithm.
Alternatively, labels are permuted but not feature vectors, or both are permuted independently, both of which is equivalent to the above as long as
the sub-sampling for performance estimation is uniformly random.

The quite strong (enthymemous) argument in favour of using this $f$ as a baseline is that in jumbling the order of the features $X_i$, no $f$ can
make sensible use of the $X_i$ because the relation to the $Y_i$ is destroyed in the process.

We argue that this is a non-sequitur and hence a non-argument - the only certain way to prevent prediction of a test label $Y^*$ by a test feature $X^*$ (without preventing prediction altogether) is, by definition of what this means, actually removing access to the test feature $X^*$. Leaving access to $X^*$ may still enable $f$ to predict $Y^*$ well from $X^*$, irrespective of what happened to the training sample.

One argument which actually shows inadequacy of the permutation baseline is that a Theorem such as Theorem~\ref{Thm:entropypreview} cannot be true for a permutation version of ``uninformed'', since a prediction strategy not restricted in its image can always guess a good prediction functional.

Even worse, there are situations where one need not even resort to guessing, where the true prediction functional may even be perfectly recovered from the permuted data by the very same prediction strategy which also works on the unpermuted data. We give a stylized example where this is the case:

Let $(X,Y) := (1,1)\;\mbox{with probability}\; 2/3,\; (-1,-1)\;\mbox{with probability}\; 1/3,$ and consider, as usual, i.i.d.~samples $(X_1,Y_1),\dots, (X_N,Y_N)$ for training.
A naive algorithm which simply assigns the most frequent label to the most frequent feature - such as the one sketched as Algorithm~\ref{alg:countassoc} - will make correct predictions with high probability. This is true no matter whether the permutation is applied or not, and is independent of the size of the dataset, note for example that none of the variables or decisions in Algorithm~\ref{alg:countassoc} change.

\begin{algorithm}[ht]
\caption{Count association classifier, fitting. This is a theoretical counterexample to the permutation baseline, do not use in practice.\newline
\textit{Input:} Training data $\calD = \{(X_1,Y_1),\dots (X_N,Y_N)\}$, where $X_i$ and $Y_i$ take values in $\{-1,1\}$.
both trainable\newline
\textit{Output:} a classical predictor $f:\calX\rightarrow \calY$ \label{alg:countassoc}}
\begin{algorithmic}[1]
    \State Let $N_X\leftarrow \card\{X_i\;:\;X_i = 1\}$
    \State Let $N_Y\leftarrow \card\{Y_i\;:\;Y_i = 1\}$
    \State If $|N_X - N_Y| < |N- N_X + N_Y|$ output $f: x\mapsto x$
    \State Else output $f: x\mapsto -x$
\end{algorithmic}
\end{algorithm}

The counterexample may seem pathological, but in fact it is only stylized. Much less pathological examples may be easily constructed through replacing the numbers $-1,1$ well-separable clusters in whatever continuous or mixed variable space. Probabilistic versions of the examples can be obtained in straightforward ways (e.g., assigning small but non-zero probabilities to the ``other'' class).

Together with Theorem~\ref{Thm:entropypreview}, one may conclude that defining ``uninformedness'' through the data and not through properties of the learning algorithm may be theoretically insufficient\footnote{ Though the use of a spuriously good baseline is not too damaging in practice as it causes type II errors rather than type I errors. That is, methods compared with a too good baseline will be found less often to outperform the spuriously overperforming permutation baseline, than they would be found to outperform a fair baseline.}, especially since it does not define a valid class of baselines without further modifications or assumptions, as the above examples and considerations show.

In contrast to the permutation baseline criticized above, the ``constant prediction'' definition of uninformedness forbids $f$, by construction, to make use of the relevant information in prediction. The claim that a constant $f$ is indeed prevented from using any learnt information about the prediction rule is mathematically proven in Theorem~\ref{Thm:entropypreview}.

\subsection{Classical baseline: residual density estimation}
\label{Sec:classicbase}

The identification of classical supervised regression learning with certain homoscedastic probabilistic predictions, as described in Section~\ref{sec:classprob},
has direct algorithmic consequences in allowing to apply classical supervised learning algorithms as possible solution to the probabilistic setting.
These classical algorithms made probabilistic yield an important class of baselines in the probabilistic setting, since comparison to them certifies whether and,
if yes, to which extent the use of genuinely ``probabilistic'' models is necessary in addition.

The probabilistic form of classical prediction strategies is directly implied by the discussion in Section~\ref{sec:classprob} which shows that there is a one-to-one equivalence between:
a [choice of a probabilistic predictor predicting a location-variable density] and a
[choice of classical point-predictor and a choice of classical loss function].
For example, the probabilistic predictors always yielding a variance-1-Gaussian with variable mean are in one-to-one equivalence to the point-predictors evaluated by the mean squared error.
The probabilistic predictors always yielding a variance-1-Laplace distribution with variable mean are in one-to-one equivalence to the point-predictors evaluated by mean absolute error.

Note that in this correspondence, multiplying the classical loss function by a constant is equivalent to scaling the variance of the location-variable density,
hence the scaling parameter needs to be optimally determined to yield a fair baseline.

General transformation of a classical to probabilistic one may be done via the following meta-algorithm, which converts a density estimator and a classical prediction method into a
probabilistic predictor, by virtue of applying a density estimator to the residuals of a classical method, as
outlined in stylized pseudo-code as Algorithm~\ref{alg:classiprob}.

\begin{algorithm}[ht]
\caption{Meta-algorithm: obtaining a point prediction type baseline probabilistic regression strategy from a point prediction regression strategy and a distribution estimation strategy.\newline
\textit{Input:} Training data $\calD = \{(X_1,Y_1),\dots (X_N,Y_N)\}$, a classical prediction strategy $g$ t.v.i.~$[\calX\rightarrow \calY]$, a probabilistic (for baseline: uninformed) predictor $h$, t.v.i.~$[\calX\rightarrow \Distr (\calY)]$.\newline
\textit{Output:} a probabilistic predictor $f$ t.v.i~$[\calX\rightarrow \Distr (\calY)].$ \label{alg:classiprob}}
\begin{algorithmic}[1]
    \State Train $g$ on $\calD$.
    \State Obtain residuals $\rho_i:= Y_i - g(X_i)$ for $i=1,\dots, N$.
    \State Train $h$ on the sample $(X_1,\rho_1),\dots, (X_N,\rho_N)$.
    \State Return $f:= x\mapsto \left[y\mapsto [g(x)](y-f(x))\right]$.
\end{algorithmic}
\end{algorithm}

The idea in Algorithm~\ref{alg:classiprob} can of course be also directly applied to boost a classical prediction strategy with any (not necessarily uninformed) probabilistic one, though the resulting prediction strategy is only a point prediction type baseline if the probabilistic prediction strategy $h$ is a density estimation strategy, i.e., uninformed. Note that in that case, by the discussion in Section~\ref{Sec:uninfbase}, the density estimation strategy may look at the full training set, including the training features $X_i$.

If the classical loss is fixed in addition to the classical algorithm, parametric density estimators can be used to determine the scaling constant equivalent to the variance. For example, for the probabilistic version of ``classical predictor with mean squared error'', the probabilistic prediction strategy $h$ should predict a Gaussian with mean zero and with a variance estimated as the variance of the sample of residuals $\rho_i$.

We note that the above is reminiscent of some ideas suggested previously by~\citet{hansen2004nonparametric}, for example as part of the ``two-step method'' for conditional density estimation, though the full correspondence presented above appears to be a novel.

We also note that, interestingly, the correspondence allows to indirectly ``compare'' different choices of classical loss functions via equivalence to the shape of the location-variable density, which is immediate in the probabilistic setting but quite counterintuitive in the classical one.

The above discussion may be modified lightly to yield a classical baseline for classification, by an idea similar to Algorithm~\ref{alg:classiprob}, and recalling from Section~\ref{Sec:uninfbase} that the best uninformed baseline (hence the ``density estimator'') is predicting as probabilities the observed frequencies.
The classification version simply predicts probabilities of observed frequencies, conditioned on the point prediction of the classical algorithm.

\subsection{Bayes type information criteria and empirical risk asymptotics for the log-loss}
\label{sec:BIC}
In a probabilistic context, the Bayesian information criterion (BIC) is probably one of the most used model selection criteria.
To training data $\calD = \{(X_1,Y_1),\dots, (X_N,Y_N)\}$ t.v.i.~$\calX\times \calY$, a (usually parametric) model $f:\calX\rightarrow \Distr (\calY)$ is fitted.
The Bayesian information quantity for the model $f$ is then, formulated in our setting,
$$B_f := \left[ \frac{1}{N} \sum_{i=1}^N L_\ell(f(X_i),Y_i)\right]+k\cdot \log N,$$
where $k$ is the ``number of parameters'' in the model $f$. On the fixed batch of data $\calT$, among multiple possible models of type $f$, the one with smallest value $B_f$ is preferred; in the criterion, the term $k\cdot \log N$ acts as a penalizer for ``model complexity''.

Three important things should be noted about the Bayesian information criterion:
First, there is no independent ``test data'' as in classical supervised learning, or our proposed approach to model selection.

Second, the first term (in the square brackets) is nothing else than the training loss or in-sample loss or ``empirical risk'', as given in our proposed approach, i.e.,
$$B_f = \widehat{\varepsilon}(f|\calD) + \frac{k}{2}\cdot \log N.$$
Note that since $\calD$ is the training data from which $f$ was obtained as well (see the first note), $\calD$ and $f$ will \emph{not} be independent as it would be the case for an independent test data set, hence the discussion in Section~\ref{sec:modelvalidation.estim} will not apply (directly).

Third, and this is maybe also a misunderstanding about the BIC quite frequent among practitioners of statistics,
the criterion itself is only valid (or asymptotically valid) for models of a certain class,
usually similar to linear or polynomial regression models where $k$ will be the number of coefficients.
While for such models the folklore results hold, e.g., that BIC is an asymptotic estimate of the generalization error, they break completely down
for models as simple as Gaussian mixture models, an example we work out for illustrative purposes:

Our class of possible output distributions, in this example, consists of the parametric class
$$p(z|\mu,\sigma) = \frac{1}{K}\sum_{i=1}^K \frac{1}{\sqrt{2\pi\sigma^2_i}}\exp \left(-\frac{1}{2}\cdot \frac{(z-\mu_i)^2}{\sigma_i^2}\right),$$
with univariate argument $z\in \RR$, and the ``intuitive'' number of free parameters is $k=2K$ (namely, $K$ from the $\mu$-s and $K$ from the $\sigma$-s).
Under these parametric assumptions, we can write $f:x\mapsto [p(.| m(x),s(x)]$ with suitable $m,s:\calX\rightarrow \RR$.

It is now not too difficult to see that the term $k\cdot \log N$ is fixed for any such $f$, while each summand can become arbitrarily large by setting $\mu_i = X_j$ for some $i,j$ and letting $\sigma_j \rightarrow 0$. As this is true for any term while all terms are finite, $B_f$ is not bounded below in terms of the choice of $m$ and $s$. Note that there is no problem with making that choice algorithmically, as the training data have actually been seen, and all information to construct the predictions with unbounded (BIC) loss are readily available. Also, clearly (due to arbitrarily large discrepancy to any fixed ``correct'' prediction), such a prediction is nonsensical and can be shown to be arbitrarily bad in generalization.

The intuitive meaning of this example is, simply put, the phenomenon of \emph{overfitting} in the probabilistic setting - it is \emph{not} prevented by the BIC unless the model class is restricted, usually quite heavily. This is no different from the situation in classical supervised learning - empirical risk type regularizers (such as the analogue of BIC, training error plus the same regularisation term as in BIC) work well as long as one is within the remit of highly parametric regression models such as linear or polynomial regression, while this strategy completely breaks down for even slightly more advanced methods such as kernel regression, tree models or tree ensembles, or neural networks.

The same argument carries through, mutatis mutandis, for other popular model selection criteria, for which only the regularisation term changes. For Akaike's (information criterion) the regularisation term is $k$ instead of $\frac{k}{2}\log N$ which is as well constant throughout the counterexample. For the different versions of WAIC, note that the
log-loss is computed in-sample with a model class dependent regularization term that may be upper bounded in the pathological example.

Hence, in summary, and in absence of another reliable estimate of the probabilistic generalization error in a model-independent setting, we hold that the out-of-sample estimate discussed in Section~\ref{sec:modelvalidation.estim} is preferable already based on its elementary yet appealing (and, amongst all, proven) convergence guarantees.

Further discussion on the relation of the information criteria for predictive model evaluation may be found in~\cite{gelman2014understanding}, as stated above and demonstrated by the (counter-)example, assumptions on the model class need to be made for such relations to hold.

\subsection{No counterexample when out-of-sample}
\label{sec:oos}

Even though BIC and AIC break down with overfitting in a model-agnostic setting, that does not necessarily imply that a proposed alternative, such as the one presented in our framework, is free of shortcomings.
This means that even though the generalization error is reliably estimated, the selected model could, in-principle, still be nonsensical in any finite sample setting without any proof to the contrary.

However, we briefly show why the proposed out-of-sample validation is not misled in a non-parametric set-up, differently from the information criteria outlined above. This is due to a universal lower bound on the expected generalization loss, which depends only on the law of $(X,Y)$.

Suppose we are in the example of Section~\ref{sec:BIC}, or in any other situation where we would like to predict probabilistically.
Suppose that from training data, a prediction strategy $f$ which is a $[\calX\rightarrow \Distr (\calY)]$-valued random variable (through dependency on the training data) has been learnt.
By the bias-variance decomposition which we will prove later in Proposition~\ref{Prop:BVprob} and the positivity constraints therein, for the expected generalization log-loss $\varepsilon(f)=\EE \left[L_{\ell}(f(X),Y)|X\right]$, it holds that $\varepsilon(f) \ge \Ent (Y/X)$, where $\Ent(Y/X)$ is the conditional entropy of $Y$ w.r.t.~$X$. This is known to be a finite number for given $X,Y$ (assuming squared integrable densities). Since $\Ent(Y/X)$ does not depend on $f$, any expected generalization loss will be bounded below by it; for $Y|X$ following ``usual distributions'', as well as any that are compactly supported, the variance of $\varepsilon(f)$ is also finite.

Hence the pathology of unbounded expected generalization loss which may occur in-sample, as for the BIC in Section~\ref{sec:BIC}, will not occur out-of-sample.
This may seem slightly paradoxical since there are, similarly as in the in-sample case, predicted distributions with unbounded out-of-sample likelihood.
However, these are unlikely to occur and contribute only finitely in expectation, when distributions are obtained \emph{from a prediction strategy}, as soon as said prediction strategy is statistically independent of the data it is tested on - which is arguably a very weak requirement, in comparison to the pathology it removes.

Note that taking full expectations (i.e., over the training data) will \emph{not} remove the unboundedness in the in-sample case either, since then $f$ is again dependent on any training data point, hence the pathologically overfitting example in Section~\ref{sec:BIC} may be constructed around any training data point, and the pathology appears in expectation as well as in realization.

Thus the only way to remove the model selection pathology from appearing for log-loss type in-sample measures, such as BIC or AIC, is to restrict the choice of prediction strategies (by the discussion in Section~\ref{sec:BIC} effectively to variants of linear regression) - whereas the above reasoning shows that there is neither a pathology\footnote{Strictly speaking, it shows that there are no pathologies of the same type as they are inherent to BIC/AIC/WAIC in a black-box setting. Whereas the theorems in this manuscript show desirable properties, hence the absence of those pathologies which would be their negation. One feature of the proposed set-up which may be unusual is that empirical loss estimates on the same dataset may assume both negative and positive values, as $\Ent(Y/X)$ may be either, and is also unknown. However, this is strictly speaking not a pathology as methods are always compared to each other rather than to a putative optimum which is unknown, and practically never ``zero'' in any other classical setting either.} nor an artificial restriction on allowed prediction strategies in the proposed probabilistic supervised learning framework, which in many respects constitutes an extension of the state-of-art in validation of supervised learning strategies.

\newpage
\section{Mixed outcome distributions}
\label{sec:mixed}
Well-definedness and validity of the model assessment workflow outlined in Section~\ref{sec:modelvalidation} together with the guarantees given therein is less clear if mixed distributions are allowed as predictions or true conditionals $Y|X$. For example, the discussion in Section~\ref{sec:oos} which asserts lower boundedness of the expected generalization error is no longer correct if one allows mixed ``true'' distributions $Y|X$, even if predictions are restricted to be discrete or continuous.
Namely, in the case where conditionals $Y|X=x$ can be mixed, a prediction functional $f$ may put an arbitrarily high continuous density on a discrete point seen in the training set which can re-occur in the test set with positive probability - in this scenario, the out-of-sample loss is not lower bounded as the yet higher density on the already seen data point will lead to a yet lower expected generalization loss. This phenomenon is crucially due to the re-observations of points in the training set which may happen with probability, which cannot happen for absolutely continuous distributions, where the probability of re-observing any data point is zero.\\
Even worse, the logarithmic and squared integrated losses are ill-defined for prediction strategies which are also able to \emph{predict} mixed predictive distributions - such as Bayesian sampling from the predictive posterior, in which predictions take the form of said samples, which are sums of delta distributions through considering the posterior predictive sample as an empirical distribution.\\

To address the issues with mixed distributions, we summarize strategies to cope with true and predicted distributions which may be mixed on a domain $\calY\subseteq \RR^n$. Unlike in the continuous and discrete case, a good solution to the mixed case appears to be an open question - more precisely, it is unclear how to define an algorithmically simple\footnote{as in: no integration/approximation necessary, optimally evaluable by an analytic formula}, computationally tractable, strictly convex and strictly proper loss for the mixed case.

We showcase multiple possible approaches, together with algorithmic considerations and an overview of their advantages and disadvantages.

\subsection{Integrating a discrete loss applied to binary cut-offs}

The most frequently found strategy in literature applies to the case $\calY \subseteq \RR$ and obtains a proper loss by integrating up, over $x$, a proper binary loss for predicting correctly whether the target value is smaller than $x$. More formally:

\begin{Def}\label{Def:RPSes}
Let $L:\Distr(\BB)\times \BB\rightarrow \RR$ be a convex loss for boolean targets, i.e., for $\BB = \{\mbox{yes},\mbox{no}\}$. Let $Z$ be an $\RR$-valued random variable.\\
We define a loss for mixed distributions (parameterized through their cdf) in $\calP\subseteq \Distr(\RR)$ as follows:
$$L^Z:\calP\times \RR \rightarrow \RR\;\quad (F,y)\mapsto\EE[L(F_{\le Z}, y\le Z)],$$
where by $F_{\le \tau}$ we denote the pmf such that $F_{\le \tau}(\mbox{yes}) = F(\tau)$ and $F_{\le \tau}(\mbox{no}) = 1-F(\tau)$; and where by $y\le \tau$ we denote the corresponding boolean variable, i.e., ``yes'' if $y\le \tau$ and ``no'' if $y\gneq \tau$.
\end{Def}

\begin{Rem}\label{Rem:RPS}
An alternative, mathematically more verbose but equivalent definition is the one usually found in literature (see~\cite{gneiting2007strictly}):
$$L^Z: (F,y) \mapsto \int_{-\infty}^y w(\tau)\cdot L(F(\tau), 0)\;  \diff \tau + \int_{y}^\infty w(\tau)\cdot L(F(\tau), 1)  \;\diff \tau,$$
which is due to an alternative notation where $L\in [\RR\times \{0,1\}\rightarrow\RR]$, i.e., $L$ takes a number (not distribution) in the first argument, and a number (not ``yes'' or ``no'') in the second.\\
The ``continuous rank probability score'' (CRPS) is obtained for the boolean loss function $L:(p,y)\mapsto (1-p(y))^2$ in our notation, which is $L:(p_1,y)\mapsto y(1-p_1)^2 + (1-y)p_1^2$ in the more verbose alternative notation usually found in literature.\\
We also would like to note that a weight density $w$ is sometimes not specified or set to $w=1$, but in general $w$ is required to be a probability density for convergence of the integral/expectation. Lower boundedness of $L$ and $w$ being a probability density implies lower boundedness of $L^Z$.
\end{Rem}

\begin{Prop}
Assume the notation of Definition~\ref{Def:RPSes}. Assume that the $Z$ is absolutely continuous with everywhere positive (in particular: non-zero) probability density function $w:\RR\rightarrow \RR^+$.\\
Then the following hold:
\begin{itemize}
\item[(i)] If $L$ is not constant, then $L^Z$ (restricted to absolutely continuous predictions) is strictly global.
\item[(ii)] If $L$ is proper, then so is $L^Z$.
\item[(iii)] $L$ is strictly proper if and only if $L^Z$ is.
\end{itemize}
\end{Prop}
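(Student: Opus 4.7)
The plan is to start by unpacking $L^Z$ as an explicit integral. Since $Z$ has density $w$,
$$L^Z(F,y) \;=\; \int_\RR w(\tau)\, L\bigl(F_{\le\tau},\,\OOne[y\le\tau]\bigr)\, d\tau,$$
and Fubini gives $\EE_Y[L^Z(F,Y)] = \int_\RR w(\tau)\,\EE_Y[L(F_{\le\tau},Y\le\tau)]\,d\tau$. For each fixed $\tau$, the boolean $Y\le\tau$ has law $(F_Y)_{\le\tau}$, so the inner expectation is exactly the expected boolean-loss with truth parameter $F_Y(\tau)$ and predicted parameter $F(\tau)$. This reduction to a pointwise (in $\tau$) Bernoulli question is the engine of all three parts.

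For (ii), I would apply properness of $L$ inside the $\tau$-integral, $\EE_Y[L(F_{\le\tau},Y\le\tau)] \ge \EE_Y[L((F_Y)_{\le\tau},Y\le\tau)]$, and integrate against $w\ge 0$. For the $\Rightarrow$ direction of (iii), strict properness upgrades this to a strict inequality whenever $F(\tau)\neq F_Y(\tau)$; if the total integrals agree, then the non-negative integrand must vanish $w$-a.e., and since $w>0$ pointwise, Lebesgue-a.e.~as well, giving $F=F_Y$ Lebesgue-a.e. Right-continuity of CDFs then promotes this to equality everywhere.

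For the $\Leftarrow$ direction of (iii) I would argue contrapositively. If $L$ fails strict properness then there exist $\alpha\neq\beta\in[0,1]$ with
$$\alpha L(q_\beta,\text{yes}) + (1-\alpha) L(q_\beta,\text{no}) \;=\; \alpha L(q_\alpha,\text{yes}) + (1-\alpha) L(q_\alpha,\text{no}),$$
writing $q_\gamma$ for the Bernoulli$(\gamma)$ pmf. I would pick $y_1<y_2$, take $Y$ supported on $\{y_1,y_2\}$ with $P(Y=y_1)=\alpha$, and let $F$ be the CDF of the two-point distribution on $\{y_1,y_2\}$ with weights $(\beta,1-\beta)$. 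Off $[y_1,y_2)$ both CDFs equal $0$ (below $y_1$) or $1$ (at/above $y_2$), so those contributions to $\EE_Y[L^Z(F,Y)] - \EE_Y[L^Z(F_Y,Y)]$ cancel; on $[y_1,y_2)$ the inner integrand vanishes by the displayed equality. Hence $L^Z$ is not strictly proper even though $F\neq F_Y$.

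For (i), non-constancy of $L$ yields parameters $\gamma_1\neq\gamma_2$ and a boolean value $y^*$ with $L(q_{\gamma_1},y^*)\neq L(q_{\gamma_2},y^*)$. Fix any $y\in\RR$, choose an interval $I$ strictly on whichever side of $y$ forces $\OOne[y\le\tau]=y^*$ throughout $I$, and construct two absolutely continuous densities $p_1,p_2$ which vanish in a neighborhood of $y$ (so $p_1(y)=p_2(y)=0$) while their CDFs satisfy $F_i(\tau)=\gamma_i$ on $I$. Positivity of $w$ then makes the $\tau$-integrals differ on $I$, giving $L^Z(F_1,y)\neq L^Z(F_2,y)$ despite $p_1(y)=p_2(y)$, which is incompatible with any identity $L^Z(p,y)=h(y,p(y))$; hence $L^Z$ is strictly global. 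The main technical obstacle is the construction in (iii)$\Leftarrow$: one must check the two-point $F,F_Y$ lie in whatever domain $\calP$ has been fixed for $L^Z$; if $\calP$ is restricted to absolutely continuous distributions, a mollification together with continuity of $L$ in the first argument (which follows from its convexity on $[0,1]$) is needed, otherwise the construction is direct.
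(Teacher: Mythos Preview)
Your approach is essentially the same as the paper's: both express $L^Z$ as a $w$-weighted integral over $\tau$ and reduce (ii) and (iii) to pointwise Bernoulli properness at each threshold $\tau$. Your treatment of (iii)$\Rightarrow$ via ``nonnegative integrand vanishing $w$-a.e.\ $\Rightarrow$ Lebesgue-a.e.\ $\Rightarrow$ everywhere by right-continuity'' is the contrapositive of the paper's ``distinct CDFs differ on a set of positive Lebesgue (hence $Z$-) measure''; your (iii)$\Leftarrow$ two-point construction is exactly the paper's, with the paper taking $\{y_1,y_2\}=\{0,1\}$. Your caveat about $\calP$ possibly excluding discrete $F,F_Y$ is well-taken but moot here, since Definition~\ref{Def:RPSes} explicitly frames $L^Z$ as a loss for mixed distributions.

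The one place your route differs is (i). The paper simply exhibits two \emph{discrete} CDFs (mass $\tfrac12$ at $0$ and $\tfrac12$ at $\pm 1$) with equal pmf at $y=0$ but different $L^Z$-values, and does not spell out why the two integrals differ; it also does not construct absolutely continuous examples despite the parenthetical restriction in the statement. Your plan is more faithful to that restriction, building absolutely continuous $p_1,p_2$ with $p_1(y)=p_2(y)=0$ and $F_i\equiv\gamma_i$ on an interval $I$. However, your final step ``positivity of $w$ makes the $\tau$-integrals differ on $I$, hence $L^Z(F_1,y)\neq L^Z(F_2,y)$'' has a small gap: since $F_1$ and $F_2$ cannot agree everywhere outside $I$ (being continuous and taking different constant values on $I$), contributions from outside $I$ could in principle cancel the difference on $I$. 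You would need to control the construction a bit more (e.g., force $F_1=F_2$ on the entire half-line on the \emph{other} side of $y$, and argue monotonicity of $\gamma\mapsto L(q_\gamma,y^*)$ from convexity, or similar) to conclude. This is a genuine but easily patchable detail; the paper's own proof of (i) is no more complete on this point.
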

\begin{proof}
(i) By definition of strict globality, it suffices to specify two cdf $F,G$ and $y\in\RR$ such that $F(y)=G(y)$ but $L^Z(F,y)\neq L^Z(G,y)$. This is for example achieved by both $F,G$ putting mass $1/2$ on $0$, and $1/2$ mass on $\pm 1$, with a different sign for $F,G$.\\
(ii) Consider now a real random variable $Y$ with cdf $F_Y$.\\
Properness of $L$ implies $\EE[L(p_B,B)] \le \EE[L(p,B)]$ for any boolean/binary random variable $B$ with cdf $p_B$, and arbitrary pmf $p$. That implies
$$\EE[L(F_{Y,\le z}, Y\le z)]\le \EE[L(F_{\le z}, Y\le z)]$$
for any real-valued cdf $F$ and any $z\in\RR$. This (used as the inequality) implies
\begin{align*}
L^Z(F_Y,Y)& = \EE[L(F_{Y,\le Z}, Y\le Z)]\\
&= \EE_Z\EE_{Y|Z}[L(F_{Y,\le Z}, Y\le Z)|Z]\\
&\le \EE_Z\EE_{Y|Z}[L(F_{\le Z}, y\le Z)|Z]\\
&= \EE[L(F_{\le Z}, y\le Z)]\\
&= L^Z(F,Y)
\end{align*}
which implies the claim since $F$ was arbitrary.\\
(iii) We make two preparatory notes. First, (iii.1), by the proof in part (ii), we have that
$$L^Z(F,Y) - L^Z(F_Y,Y) = \EE_Z[\Delta(F_{\le Z},F_{Y,\le Z}, Y\le Z)|Z]\ge 0$$
for any cdf $F$ and $Y$ as in (ii), where $\Delta(F,F_Y,Z):= \EE_{Y|Z}[L(F_{\le Z}, Y\le Z) - L(F_{Y,\le Z}, Y\le Z)]$.
It also holds that $\Delta(F,F_Y, Z)\ge 0$.\\
Second (iii.2), if $F$ and $F_Y$ are unequal, there is a set $U\subseteq \RR$ of positive Lebesgue measure such that $F(u)\neq F_Y(u)$ for all $u\in U$. The set $U$ has also positive $Z$-measure since $Z$ was assumed absolutely continuous.\\
We now carry out the proof of (iii), two directions need to be shown.\\
$L$ s.p.$\Rightarrow L^Z$ s.p.: First assume $L$ is strictly proper, and $F,F_Y$ are distinct. By strict properness of $L$, it holds that $\Delta(F,F_Y,z) \gneq 0$ whenever $F(z)\neq F_Y(z)$. By (iii.2), it holds that $\EE[\Delta(F_{\le Z},F_{Y,\le Z}, Y\le Z)|Z\in U]\gneq 0$, thus $\EE[\Delta(F_{\le Z},F_{Y,\le Z}, Y\le Z)\gneq 0$ by (iii.1). Since the LHS of the latest expression is equal to $L^Z(F,Y) - L^Z(F_Y,Y)$ by (iii.1), and $F$ was arbitrary, this proves strict properness of $L^Z$.\\
$L^Z$ s.p.$\Rightarrow L$ s.p.: We prove this by contraposition. Assume $L$ is not strictly proper. Then there is a binary distribution $B$ with pmf $p_B$ such that $\EE[L(p_B,B)] = \EE[L(p,B)]$ while $p\neq p_B$. Let $B'$ be the random variable corresponding to $p$. An elementary computation shows that $\EE[L^Z(F_B,Y_B)] = \EE[L^Z(F_{B'},Y_B)]$ where $Y_B,Y_{B'}$ and $F_B,F_{B'}$ are the real valued random variables and their cdf obtained from identifying the values $\{\mbox{yes},\mbox{no}\}$, with $\{1,0\}$ (note that it is not a typo that $Y_{B'}$ does not occur in the equation). Since  $p_B,p$ were distinct, $F_B$ and $F_{B'}$ are as well, thus $L^Z$ is not strictly proper.
\end{proof}

The expectation in Definition~\ref{Def:RPSes} or the integral in Remark~\ref{Def:RPSes} cannot be evaluated analytically in general, but may be approximated by numerical techniques, such as trapezoid rule based or Monte Carlo based. An interesting choice is taking $Z$ to be equal to an independent copy of $Y$, and using the data samples as approximation samples. In the case where all predictions are samples, i.e., sums of delta distributions, the integrals may be turned into sums if integrals over $w = p_Z$ may be easily computed.

Summarizing the above discussion:\\
{\bf Advantages} of the integration approach are the ability to generalize any loss function for binary distributions to the mixed distribution case.\\
{\bf Disadvantages} are the restriction to the univariate case, the need to choose $Z$, and the lack of an analytical evaluation procedure for the integrated loss.

\subsection{Convolution of the target with a kernel}
\label{sec:mixed.conv}

Another possible strategy is based on observing that convolution with a continuous kernel, thus rendering mixed distributions continuous, for any $\calY\subseteq \RR^n$. Comparing a convoluted predicted distribution with a suitably noisy random variable thus removes the issue of having to deal with mixed random variables altogether.

\begin{Def}
Let $p\in\Distr(\calY)$, let $q$ be an absolutely continuous distribution in $\calY$.
We denote by $p\ast q$ the convolution distribution of $p$ with $q$, i.e.,
$$(p\ast q)(y) = \int_\calY p(\tau)q(y-\tau)\;\diff \tau.$$
We denote by $\ast q$ the operator of convolution with $q$, i.e.,
$$\ast q:\Distr(\calY)\rightarrow \Distr(\calY)\;:p\mapsto p\ast q.$$
For $\calQ\subseteq \Distr(\calY)$ we call the operator $\ast q$ injective if the natural restriction $\ast q:\calQ \rightarrow \Distr(\calY)$ is injective.
\end{Def}

\begin{Def}\label{Def:convol}
Let $L:\calQ\times \calY\rightarrow \RR$ be a convex loss for absolutely continuous predictions in $\calQ\subseteq \Distr(\calY)$. Let $Z$ be an absolutely continuous, $\calY$-valued random variable with pdf $p_Z$.\\
We define a loss for mixed distributions (parameterized through their cdf) in $\calP\subseteq \Distr(\calY)$ as follows:
$$L \ast Z:\calP\times \calY \rightarrow \RR\;\quad (p,y)\mapsto \EE[L(p\ast p_Z, y + Z)].$$
We call $L \ast Z$ a \emph{convolution loss}, the $Z$-convolved loss $L$ (e.g., standard normal convoluted log-loss).
\end{Def}

We make two remarks: $L\ast Z$ os well-defined, since convolutions of mixed with absolutely continuous distributions are absolutely continuous.\\
Also, note that $Z$ may be chosen by the experimenter, in particular an arbitrary number of values may be sampled from it. Samples from $Z$ do not appear explicitly in an evaluation $L \ast Z$, in which $Z$ figures only as auxiliary in construction.

The convolution process defines (strictly) proper losses, in case the absolutely continuous loss $L$ was:

\begin{Prop}\label{Prop:strprop}
Assume the notation of Definition~\ref{Def:convol}. The following hold:
\begin{itemize}
\item[(i)] If $L$ is proper, then so is $L\ast Z$.
\item[(ii)] Assume $L$ is strictly proper. Then, $L\ast Z$ is strictly proper if and only if the convolution operator $\ast p_Z$ is injective on mixed distributions.
\end{itemize}
\end{Prop}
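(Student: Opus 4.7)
The plan is to reduce everything to properness of $L$ applied to the convolved (and thus absolutely continuous) random variable $Y+Z$, exploiting the fact that for any $p\in\calP$, the convolution $p\ast p_Z$ is absolutely continuous and lies in the domain where $L$ is proper.

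For part (i), I would fix a random variable $Y$ t.v.i.~$\calY$ and let $Z$ be independent of $Y$ with density $p_Z$. The key observation is that $Y+Z$ is absolutely continuous with $\calL(Y+Z)=\calL(Y)\ast p_Z$, and by Fubini (since $Z$ is independent of $Y$ and of $f$, and does not depend on $p$),
\begin{equation*}
\EE\bigl[(L\ast Z)(p,Y)\bigr] \;=\; \EE\bigl[L(p\ast p_Z,\,Y+Z)\bigr].
\end{equation*}
Applying properness of $L$ to the absolutely continuous random variable $Y+Z$ with candidate prediction $p\ast p_Z\in\calQ$ yields $\EE[L(\calL(Y+Z),Y+Z)]\le \EE[L(p\ast p_Z,Y+Z)]$, which, after rewriting $\calL(Y+Z)=\calL(Y)\ast p_Z$ and undoing the Fubini swap, is exactly $\EE[(L\ast Z)(\calL(Y),Y)]\le \EE[(L\ast Z)(p,Y)]$. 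So $L\ast Z$ is proper.

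For part (ii), both directions proceed by essentially the same rewriting. For the ``if'' direction, assume $\ast p_Z$ is injective on $\calP$ and suppose $p\in\calP$ with $p\neq \calL(Y)$. Then by injectivity, $p\ast p_Z\neq \calL(Y)\ast p_Z=\calL(Y+Z)$ as elements of $\Distr(\calY)$. Strict properness of $L$, applied to the absolutely continuous $Y+Z$, now gives a strict inequality $\EE[L(\calL(Y+Z),Y+Z)]<\EE[L(p\ast p_Z,Y+Z)]$, which translates back to $\EE[(L\ast Z)(\calL(Y),Y)]<\EE[(L\ast Z)(p,Y)]$, establishing strict properness. For the ``only if'' direction (contrapositive), assume $\ast p_Z$ is not injective on $\calP$ and pick $p_1\neq p_2$ in $\calP$ with $p_1\ast p_Z = p_2\ast p_Z$. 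Taking $Y$ with $\calL(Y)=p_1$, the Fubini identity gives
\begin{equation*}
\EE\bigl[(L\ast Z)(p_2,Y)\bigr]=\EE\bigl[L(p_2\ast p_Z,Y+Z)\bigr]=\EE\bigl[L(p_1\ast p_Z,Y+Z)\bigr]=\EE\bigl[(L\ast Z)(\calL(Y),Y)\bigr],
\end{equation*}
so equality holds against $p_2\neq \calL(Y)$, which contradicts strict properness of $L\ast Z$.

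The only mildly delicate step is the Fubini interchange that lets us view $\EE[(L\ast Z)(p,Y)]$ as $\EE[L(p\ast p_Z,Y+Z)]$; this is where the independence of $Z$ from $Y$ and the integrability built into the definition of a probabilistic loss are used. The remaining obstacle, which is essentially design rather than proof, is recognising that strict properness of the outer loss hinges precisely on the ability to recover $p$ from $p\ast p_Z$ — that is, the obstruction is exactly the kernel of $\ast p_Z$ on $\calP$, motivating the injectivity hypothesis and its characteristic-function/universal-kernel characterisations discussed in the surrounding section.
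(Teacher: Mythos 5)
Your proposal is correct and follows essentially the same route as the paper's proof: both rest on the identity $\EE[(L\ast Z)(p,Y)]=\EE[L(p\ast p_Z,Y+Z)]$, the fact that $\calL(Y+Z)=\calL(Y)\ast p_Z$, and then (strict) properness of $L$ applied to $Y+Z$, with injectivity of $\ast p_Z$ used to pass between $p\ast p_Z=\calL(Y)\ast p_Z$ and $p=\calL(Y)$ in part (ii). The only cosmetic difference is that you phrase the forward direction of (ii) contrapositively (unequal distributions give strict inequality) where the paper argues from equality of losses to equality of distributions; these are logically identical.
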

\begin{proof}
(i) Let $Y$ be a mixed random variable t.v.i.~$\calY$ with distribution $p_Y$, let $p$ be any mixed distribution. By the definitions,
$$\EE[(L\ast Z)(p,Y)]= \EE[L(p\ast p_Z, Y+Z)],\quad\mbox{and}\;\EE[(L\ast Z)(p,Y)]= \EE[L(p_Y\ast p_Z, Y+Z)],$$
where $Z$ is independent of $Y$ and total expectations are taken over both on the RHS.\\
Since addition of random variables is convolution of densities, the density of $Y+Z$ is equal to $p_Y\ast p_Z$. Thus, by properness of $L$, it holds that
$$\EE[L(p\ast p_Z, Y+Z)]\ge \EE[L(p_Y\ast p_Z, Y+Z)].$$
Putting together all equalities and inequalities yields $\EE[(L\ast Z)(p,Y)] \ge \EE[(L\ast Z)(p_Y,Y)]$ which yields the claim since $p$ was arbitrary.\\
(ii) There are two directions to prove, both under the assumption that $L$ is strictly proper.\\
$\ast p_Y$ injective $\Rightarrow$ $L\ast Z$ strictly proper: Let $p$ be a distribution such that $\EE[(L\ast Z)(p,Y)] = \EE[(L\ast Z)(p_Y,Y)]$. It suffices to show that $p=p_Y$ since $p$ was arbitrary.. By definition, the equality implies
$$\EE[L(p\ast p_Z, Y + Z)] = \EE[(L\ast Z)(p,Y)] = \EE[(L\ast Z)(p_Y,Y)] = \EE[L(p_Y\ast p_Z, Y + Z)].$$
Since the distribution of $Y+Z$ is $p_Y\ast p_Z$, strict properness of $L$ implies that
$p_Y\ast p_Z = p\ast p_Z$. Injectivity of $\ast p_Z$ implies $p_Y=p$, which was the statement to prove.\\
$L\ast Z$ strictly proper $\Rightarrow$ $\ast p_Y$ injective: We show this by contraposition. Suppose $\ast p_Z$ is not injective, i.e., there are $p,q$ such that $p\neq q$ and $p\ast p_Z = q\ast p_Z$. Let $Y$ be a random variable with distribution $p_Y$. Then,
$\EE[L(p\ast p_Z, Y + Z)]=\EE[L(q\ast p_Z, Y + Z)]$ since $p\ast p_Z= q\ast p_Z$. But by definition, the LHS equals $\EE[(L\ast Z)(p,Y)],$ and the RHS equals $\EE[(L\ast Z)(q,Y)]$. Since $p$ is the distribution of $Y$ and $p\neq q$, this implies that $L\ast Z$ is not strictly proper, which was the statement to prove.
\end{proof}

We make two remarks about Proposition~\ref{Prop:strprop}: first, a natural question is whether examples of distributions whose convolution operator is injective exist. Indeed they do, examples are distributions which are at the same time characteristic, as a kernel functional, in the sense of~\cite{sriperumbudur2011universality}. For example, convolution with a Gaussian $Z$ (one of the running examples in~\cite{sriperumbudur2011universality}) is injective, as it corresponds to the heat diffusion kernel, which is universal\footnote{One more elementary way to show injectivity is through using that convolution in the original domain is the same as multiplication in the Fourier domain. As the Fourier transform of a Gaussian is Gaussian which is strictly positive, division by its Fourier transform is well-defined, hence the inverse is also well-defined as it may explicitly be obtained as $(\ast p_Z)^{-1}(f) = \calF^{-1}\left(\calF(f)/\calF(p_Z)\right)$.}. Second, one may ask whether $L\ast Z$ is not strictly proper if $L$ is not. This is not straightforward to answer, since a non-injective $\ast p_Z$ may cancel out the non-strictness in $L$, which is in turn seems to be the simplest characterization of this situation. Locus of non-injectivity and locus of non-properness need to agree, otherwise the arguments of Proposition~\ref{Prop:strprop}~(i) or~(ii) may be applied to the difference set.

Regarding algorithmic tractability, the only general way to evaluate $\EE[L(p\ast p_Z, y + Z)]$ appears to be numerical integration, for example of Monte Carlo type, methods which also come with their own uncertainty bounds (e.g., central limit theorem based), such as the adaptor algorithm presented later in Section~\ref{sec:adaptors.mixed} and natural extensions. Though, for specific $L$, specific $Z$, or specific forms of predicted distributions (such as empirical samples), less demanding and even analytic solutions may exist; however, the best practical approach does not seem clear. \\

Summarizing:\\
{\bf Advantages} of the convolution approach are the applicability to general (multi-variate real) $\calY$ and the ability to generalize any loss function for continuous distributions to the mixed distribution case.\\
{\bf Disadvantages} are the need to choose $Z$, and the lack of an analytical evaluation procedure for the convolved loss.

\subsection{Kernel discrepancy losses}
\label{sec:mixed.kernel}

Another option in the mixed distribution case are so-called kernel discrepancy losses first considered by~\citet{eaton1982method}.

\begin{Def}\label{Def:kernloss}
Let $k:\calY\times \calY\rightarrow \RR$ be a symmetric positive definite\footnote{Symmetric means: $k(y,y')=k(y',y)$ for all $y,y'\in\calY$. One possible definition of positive definiteness is: all possible symmetric kernel matrices formed with $k$, i.e., matrices of the form $K\in\RR^{N\times N}$ where $K_{ij}=k(y_i,y_j)$ for $y_1,\dots, y_N\in\calY$, are positive semi-definite. Note the discrepancy in nomenclature between a positive \emph{definite} kernel \emph{function} and a positive \emph{semi-definite} kernel \emph{matrix}, which is standard and uniform across literature.} kernel function.
We define a loss for mixed distributions in $\calP\subseteq \Distr(\calY)$ as follows:
$$L_k:\calP\times \calY \rightarrow \RR\;\quad (p,y)\mapsto -2 k(p,y) + k(p,p),$$
where for $\calP$-valued arguments, we define
$$k(p,y) := \int_\calY p(z) k(z,y) \;\diff z,\quad\mbox{and}\;k(p,p) := \int_{\calY^2} p(z) k(z,z') p(z') \;\diff z\;\diff z'.$$
We call $L_k$ the \emph{kernel discrepancy loss} (associated with the kernel $k$).
\end{Def}

Kernel discrepancy losses and convolved squared integrated losses are closely related. While in general, none is a special case of the other, there is a direct link between the two:

\begin{Lem}\label{Lem:kernconv}
Consider the squared integrated loss $L_{Gn}:\calP\times \calY\rightarrow \RR, (p,y)\mapsto -2 p(y) + \|p\|_2^2$ (see Definition~\ref{Def:losses}).\\
Let $Z$ be a continuous $\calY$ valued random variable with distribution function $p_Z:\calY\rightarrow \RR^+$.\\
Define $k_Z:\calY\times\calY\rightarrow \RR, (y,y')\mapsto \int_\calY p_Z(y)p_Z(\tau-y')\;\diff \tau$.\\
Then, $L_{Gn}\ast Z = L_{k_Z}$.
\end{Lem}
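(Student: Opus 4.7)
The plan is a direct expansion via Fubini/Tonelli and a single change of variables. By Definition~\ref{Def:convol},
\[(L_{Gn}\ast Z)(p,y) \;=\; \EE\bigl[L_{Gn}(p\ast p_Z,\, y+Z)\bigr] \;=\; -2\,\EE\bigl[(p\ast p_Z)(y+Z)\bigr] \;+\; \|p\ast p_Z\|_2^2,\]
where the second term comes out of the expectation because it does not depend on $Z$. It then suffices to identify the two summands with $-2k_Z(p,y)$ and $k_Z(p,p)$ respectively, where I read $k_Z(y,y')$ as the autocorrelation kernel $\int_\calY p_Z(\tau-y)\,p_Z(\tau-y')\,\diff\tau$ (the displayed formula in the lemma appears to be a transcription typo; the substitution $\tau'=\tau+y$ shows these two forms agree and make the kernel symmetric and positive definite).

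For the linear term, I would write out the expectation as an integral against $p_Z$ and unfold the convolution:
\[\EE\bigl[(p\ast p_Z)(y+Z)\bigr] \;=\; \int_\calY\!\int_\calY p(s)\,p_Z(y+z-s)\,p_Z(z)\,\diff s\,\diff z.\]
By Tonelli the order of integration can be swapped, and the inner integral over $z$ is, after the substitution $\tau=z+s$, exactly $\int p_Z(\tau-s)\,p_Z(\tau-y)\,\diff\tau = k_Z(s,y)$. Integrating against $p(s)$ gives $k_Z(p,y)$ by Definition~\ref{Def:kernloss}, producing the desired $-2k_Z(p,y)$.

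For the quadratic term, I would expand the square of the convolution pointwise and again apply Tonelli:
\[\|p\ast p_Z\|_2^2 \;=\; \int_\calY\!\left(\int_\calY p(s)\,p_Z(t-s)\,\diff s\right)^{\!2}\diff t \;=\; \int_{\calY^2}\!\! p(s)\,p(s')\left(\int_\calY p_Z(t-s)\,p_Z(t-s')\,\diff t\right)\diff s\,\diff s'.\]
The inner integral is $k_Z(s,s')$ by definition, so the whole expression equals $k_Z(p,p)$. Combining the two yields $(L_{Gn}\ast Z)(p,y) = -2k_Z(p,y) + k_Z(p,p) = L_{k_Z}(p,y)$, which is the claim.

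There is no real obstacle beyond bookkeeping: the only subtle points are (i) pinning down the correct intended form of $k_Z$ so that the two expressions coincide, and (ii) justifying the interchange of integration/expectation, which is immediate since $p,p_Z\ge 0$ and all integrands are nonnegative, so Tonelli applies without integrability caveats.
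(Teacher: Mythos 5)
Your proof is correct, and in fact the paper states Lemma~\ref{Lem:kernconv} without any proof at all, so there is no argument of the authors' to compare against; the direct Fubini/Tonelli expansion you give is the natural one and it closes the gap the paper leaves open. Your diagnosis of the displayed formula for $k_Z$ as a typo is also right and is essential: as literally printed, $\int_\calY p_Z(y)p_Z(\tau-y')\,\diff\tau = p_Z(y)$ (since $p_Z$ integrates to one), which is neither symmetric nor positive definite and would make the lemma false; the intended kernel is the autocorrelation $k_Z(y,y')=\int_\calY p_Z(\tau-y)p_Z(\tau-y')\,\diff\tau = \langle p_Z(\cdot-y),p_Z(\cdot-y')\rangle_{L^2}$, which is also what makes the remark after the lemma (Gaussian $Z$ yields Gaussian $k_Z$) come out right. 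The only blemish is the change of variables in the linear term: for the inner integral $\int p_Z(y+z-s)\,p_Z(z)\,\diff z$, the substitution that produces $\int p_Z(\tau-s)\,p_Z(\tau-y)\,\diff\tau$ is $\tau=z+y$, not $\tau=z+s$; the identity you assert is nevertheless correct, and the rest of the argument (pulling $\|p\ast p_Z\|_2^2$ out of the expectation, expanding the square, and invoking Tonelli for nonnegative integrands) is sound.
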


Lemma~\ref{Lem:kernconv} is interesting in two cases: in the case where $Z\sim \calN (0,\sigma^2)$ is Gaussian, so is $k_Z$ (also known as the squared exponential radial basis function kernel or the heat kernel). In the limiting case $\sigma\rightarrow 0$ of the convolution loss, or alternatively for taking the delta kernel $k(y,y') = \delta(y-y')$ with $\delta$ being the delta distribution in the kernel discrepancy loss, the squared integrated loss $L_{Gn}$ is recovered. However it should be noted that the limiting case of the squared integrated loss is, strictly speaking, not\footnote{unlike for example~\citet{dawid2007geometry} states in the respective section~6.3} a special case of the kernel discrepancy loss in Definition~\ref{Def:kernloss}, as the latter is a mixed distribution loss, whereas the former is a loss for continuous distribution, obtained as a limiting case but not as a special case (which would imply the squared integrated loss being valid for mixed distributions).

In literature, it is also sometimes incorrectly asserted\footnote{for example in section~6.3 of~\cite{dawid2007geometry}; interestingly, \citet{gneiting2007strictly} do not make the false statement in the parallel section~5.1.} that kernel discrepancy losses are strictly proper - this is untrue in general, for example for degenerate choices for $k$ such as constant kernels, e.g., $k:(y,y')\mapsto 42$ (which yields a loss that is proper but not strictly proper).
To characterize properness of kernel discrepancy losses, we introduce some results and concepts well-known in the kernel learning community. A foundational statement is the Moore-Aronszajn theorem:

\begin{Prop}\label{Prop:Moore-Aronszajn}
Let $k:\calY\times\calY\rightarrow \RR$ be a symmetric positive definite kernel function.\\
There is a map $\phi:\calY\rightarrow \calH$ where $\calH$ is a Hilbert space with inner product $\langle .,.\rangle_\calH :\calH\times \calH\rightarrow \RR$, such that $k(y,y') = \langle \phi(y),\phi(y')\rangle_\calH $ for all $y,y'\in\calY$.\\
The map $\phi$ and the Hilbert space $\calH$ are unique up to isomorphism, one possible choice has
$$\phi:y\mapsto k(y,.) \quad\mbox{and}\quad \left\langle k(y,.),k(y',.)]\right\rangle_\calH = k(y,y'),$$
where for $y\in\calY$, $k(y,.)$ is short-hand notation for the element $[z\mapsto k(y,z)]$ of $\calH$.
\end{Prop}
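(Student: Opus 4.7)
The plan is the classical construction-plus-completion argument: first build an explicit pre-Hilbert space of functions on $\calY$ whose inner product reproduces $k$, then complete it to a Hilbert space, and finally argue uniqueness by showing any two such spaces are canonically isometric.

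Concretely, I would start by letting $\calH_0$ denote the $\RR$-vector space of finite formal linear combinations $f = \sum_{i=1}^n a_i\, k(y_i,\cdot)$ viewed as functions $\calY\to\RR$, and defining a bilinear form by
\[
\left\langle \sum_i a_i\, k(y_i,\cdot),\, \sum_j b_j\, k(y'_j,\cdot)\right\rangle_{\calH_0} := \sum_{i,j} a_i b_j\, k(y_i,y'_j).
\]
The first routine checks are that this form is symmetric (by symmetry of $k$), well-defined independently of the chosen representation of $f$ and $g$ (use the reproducing identity below to rewrite it as $\sum_j b_j f(y'_j) = \sum_i a_i g(y_i)$), and positive semi-definite (this is exactly the positive definiteness assumption on $k$, applied to any finite kernel matrix). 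The reproducing property $\langle f, k(y,\cdot)\rangle_{\calH_0} = f(y)$ follows directly from the definition with $g = k(y,\cdot)$.

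To upgrade positive semi-definiteness to positive definiteness, apply Cauchy--Schwarz for semi-definite forms to get $|f(y)|^2 = |\langle f, k(y,\cdot)\rangle_{\calH_0}|^2 \le \langle f,f\rangle_{\calH_0}\cdot k(y,y)$, so $\langle f,f\rangle_{\calH_0}=0$ forces $f\equiv 0$ as a function. Then complete $\calH_0$ with respect to the induced norm in the usual way (Cauchy sequences modulo null sequences); identify elements of the completion with pointwise-defined functions on $\calY$ via the reproducing property, which extends to the completion because evaluation at $y$ is continuous with norm $\sqrt{k(y,y)}$. Setting $\phi(y) := k(y,\cdot)$, the identity $\langle \phi(y),\phi(y')\rangle_\calH = k(y,y')$ is built into the construction.

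The main obstacle is the bookkeeping in the completion step: verifying that the limit objects remain honest functions on $\calY$ (not just equivalence classes of Cauchy sequences) and that the inner product, bilinearity, and the reproducing identity all extend continuously. For uniqueness up to isomorphism, suppose $(\calH',\phi')$ is another such pair; define a linear map $U:\mathrm{span}\{\phi(y):y\in\calY\}\to\calH'$ by $\phi(y)\mapsto \phi'(y)$ and extend by linearity. The kernel identity forces $\|U f\|_{\calH'} = \|f\|_\calH$ on the dense subspace, so $U$ extends uniquely to an isometric isomorphism $\calH\to \overline{\mathrm{span}}\{\phi'(y)\}\subseteq \calH'$; the image is all of $\calH'$ (or one restricts $\calH'$ to the closed span, as is the convention here), giving the stated uniqueness.
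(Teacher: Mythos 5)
Your proposal is correct and is precisely the classical Moore--Aronszajn construction: pre-Hilbert space of finite kernel combinations, well-definedness and positive semi-definiteness of the bilinear form from the kernel matrices, upgrade to definiteness via Cauchy--Schwarz and the reproducing identity, completion, and uniqueness via the canonical isometry on the dense span. The paper does not carry out a proof of its own --- it simply cites Aronszajn (1943, 1950) and Sch\"olkopf--Smola --- so your sketch is exactly the argument contained in those references, with the completion bookkeeping (limits remaining pointwise-defined functions because evaluation functionals are bounded by $\sqrt{k(y,y)}$) being the only step that genuinely requires care.
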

\begin{proof}
This is asserted as statement 2.4 of~\cite{aronszajn1950theory} which is usually given as a reference as it credits Moore, however the proof itself is only cited there and carried out in earlier work of~\citet{aronszajn1943theorie}. Another, more recent reference is the book of~\citet{scholkopf2002learning}.
\end{proof}

\begin{Def}
Keep the notation of Proposition~\ref{Prop:Moore-Aronszajn}.
The asserted properties of $\calH$ are called the \emph{reproducing property}. The Hilbert space $\calH$ the RKHS (reproducing kernel Hilbert space) of $k$, and the map $\phi$ is called the feature map of $k$.
\end{Def}

The correct notion corresponding to strict properness is characteristicness of the kernel, we follow~\citet{fukumizu2004dimensionality} and~\citet{sriperumbudur2008injective} for the definition:

\begin{Def}\label{Def:charac}
A symmetric positive definite kernel function $k:\calY\times\calY\rightarrow \RR$ with RKHS $\calH$ and feature map is called characteristic for $\calP\subseteq \Distr(\calY)$ if the so-called mean embedding map
$$\mu_k: \calP\rightarrow \calH,\; p\mapsto \EE_{Z\sim p}[\phi(Z)] = \EE_{Z\sim p}[k(Z,.)]\quad\mbox{ is injective.}$$
\end{Def}

We are now ready to state the properties of the kernel discrepancy loss.

\begin{Prop}\label{Prop:propkernel}
Assume setting and notation of Definition~\ref{Def:kernloss}. That is, consider a symmetric positive definite kernel function $k:\calY\times \calY\rightarrow \RR$ and the associated kernel discrepancy loss $L_k:\calP\times \calY \rightarrow \RR$ with $\calP\subseteq \Distr(\calY)$.\\
Then, it always holds that $L_k$ is a proper loss.\\
Furthermore, the following are equivalent:
\begin{itemize}
\item[(i)] $L_k$ is strictly proper.
\item[(ii)] $k$ is characteristic for $\calP$.
\end{itemize}
\end{Prop}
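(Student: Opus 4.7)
The plan is to rewrite $L_k$ in terms of the mean embedding $\mu_k$ guaranteed by the Moore--Aronszajn theorem (Proposition~\ref{Prop:Moore-Aronszajn}), and then read off both properness and the characterisation of strict properness essentially as a squared-distance identity in the RKHS.

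First I would use the feature map $\phi:\calY\to \calH$ with $k(y,y')=\langle \phi(y),\phi(y')\rangle_\calH$ to express the two integrals appearing in $L_k$. By linearity of the inner product and Fubini (which is justified under the standing integrability conventions in the manuscript, since mean embeddings exist whenever $\EE_{Z\sim p}\sqrt{k(Z,Z)}<\infty$, a mild condition absorbed into the definition of $\calP$), we have $k(p,y)=\langle \mu_k(p),\phi(y)\rangle_\calH$ and $k(p,p)=\|\mu_k(p)\|_\calH^2$, where $\mu_k(p)=\EE_{Z\sim p}[\phi(Z)]$ is the mean embedding as in Definition~\ref{Def:charac}. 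Hence
\[
L_k(p,y)=-2\langle \mu_k(p),\phi(y)\rangle_\calH+\|\mu_k(p)\|_\calH^2.
\]

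Next, I would fix a $\calY$-valued random variable $Y$ with law $p_Y\in\calP$ and compute the expected loss. Since $\EE[\phi(Y)]=\mu_k(p_Y)$, we obtain
\[
\EE[L_k(p,Y)] = -2\langle \mu_k(p),\mu_k(p_Y)\rangle_\calH + \|\mu_k(p)\|_\calH^2
= \|\mu_k(p)-\mu_k(p_Y)\|_\calH^2 - \|\mu_k(p_Y)\|_\calH^2.
\]
The second term is independent of $p$, so $\EE[L_k(p,Y)]\ge -\|\mu_k(p_Y)\|_\calH^2 = \EE[L_k(p_Y,Y)]$, with equality iff $\mu_k(p)=\mu_k(p_Y)$. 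This immediately establishes properness of $L_k$ unconditionally, and shows that strict properness reduces to the assertion that $\mu_k(p)=\mu_k(p_Y)$ forces $p=p_Y$ for all $p,p_Y\in\calP$.

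Finally, I would observe that the latter condition is, by Definition~\ref{Def:charac}, exactly the injectivity of the mean embedding $\mu_k:\calP\to\calH$, i.e., characteristicness of $k$ on $\calP$. The two implications (i)$\Leftrightarrow$(ii) then follow: if $k$ is characteristic, the squared-norm identity above is zero only at $p=p_Y$, yielding strict properness; conversely, if $\mu_k$ fails to be injective, there exist distinct $p\ne p_Y$ in $\calP$ with $\mu_k(p)=\mu_k(p_Y)$, for which $\EE[L_k(p,Y)]=\EE[L_k(p_Y,Y)]$, violating strict properness. I do not anticipate a serious obstacle; the only subtlety is the implicit integrability/Bochner-measurability required for $\mu_k(p)$ to be a well-defined element of $\calH$ and for Fubini to apply when turning the double integral into $\|\mu_k(p)\|_\calH^2$, which should be handled with a brief remark referring to the integrability conditions already imposed on $\calP$ in the manuscript.
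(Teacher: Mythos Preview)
Your proof is correct and follows essentially the same route as the paper: both rewrite the expected loss as $\|\mu_k(p)-\mu_k(p_Y)\|_\calH^2$ plus a term independent of $p$, then read off properness and identify strict properness with injectivity of $\mu_k$. The only cosmetic difference is that the paper introduces independent copies $Z,Z'\sim p$ and $Y,Y'\sim p_Y$ to push expectations through the inner product, whereas you go directly via $k(p,y)=\langle\mu_k(p),\phi(y)\rangle_\calH$; your sign on the constant term ($-\|\mu_k(p_Y)\|_\calH^2$) is in fact the correct one.
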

\begin{proof}
Let $\calH$ be the RKHS and $\phi$ the feature map for $k$. Let $p,p_Y\in\calP$ be any two distributions, let $Z,Z'\sim p, Y,Y'\sim p_Y$ be independent random variables. By definition of $L_k$ (Definition~\ref{Def:kernloss}) and $\mu_k$ (Definition~\ref{Def:charac}), one has
\begin{align*}
\EE[L_k(p,Y)] &= \EE[k(Z,Z') - 2 k(Y,Z)]\\
& = \EE\left[\left\langle\phi(Z),\phi(Z')\right\rangle_H - 2\left\langle\phi(Y),\phi(Z)\right\rangle_H\right]\\
&= \left\langle \EE\left[\phi(Z)\right],\EE\left[\phi(Z')\right]\right\rangle_H - 2\left\langle\EE\left[\phi(Y)\right],\EE\left[\phi(Z)\right]\right\rangle_H\\
&= \left\langle \mu_k(p), \mu_k(p)\right\rangle_H - 2\left\langle \mu_k(p_Y), \mu_k(p)\right\rangle_H\\
&= \left\langle \mu_k(p), \mu_k(p)\right\rangle_H - 2\left\langle \mu_k(p_Y), \mu_k(p)\right\rangle_H + \left\langle \mu_k(p_Y), \mu_k(p_Y)\right\rangle_H - \left\langle \mu_k(p_Y), \mu_k(p_Y)\right\rangle_H\\
 &= \left\langle \mu_k(p) - \mu_k(p_Y), \mu_k(p) - \mu_k(p_Y)\right\rangle_H  - \left\langle \mu_k(p_Y), \mu_k(p_Y)\right\rangle_H\\
&= \|\mu_k(p)-\mu_k(p_Y)\|_\calH^2 + \|\mu_k(p_Y)\|_\calH^2
\end{align*}
where the first equality is the definition of $L_k$; the second is using the reproducing property in Proposition~\ref{Prop:Moore-Aronszajn}; the third uses independence of $Y,Y',Z,Z'$ and linearity of $\EE$ and $\langle .,.\rangle_\calH$; the rest are algebraic manipulations,
where $\|.\|_\calH:\calH\rightarrow \RR^+$ is the canonical norm on $\calH$.
Since the second term $\|\mu_k(p_Y)\|_\calH^2$ does not depend on $p$, the expected loss $\EE[L_k(p,Y)]$ is minimized iff $\|\mu_k(p)-\mu_k(p_Y)\|_\calH^2 = 0$, which happens iff $\mu_k(p) = \mu_k(p_Y)$ since $\|.\|_\calH$ is a norm. In particular, $\EE[L_k(p,Y)]$ is minimized if $p=p_Y$, which proves properness of $L_k$.\\

We now prove the equivalence.\\
(ii)$\Rightarrow$ (i): as $L_k$ is strictly proper, $\EE[L(p,Y)]$ is minimized iff $p=p_Y$. By the above, $p=p_Y$ happens iff $\mu_k(p) = \mu_k(p_Y)$. Since both $p$ and $p_Y$ were arbitrarily chosen in $\calP$ which is the range of $\mu_k$, this is equivalent to stating that $\mu_k$ is injective, thus asserting that $k$ is characteristic for $\calP$.\\
(i)$\Rightarrow$ (ii): as $k$ is characteristic, $\mu_k(p) = \mu_k(p_Y)$ iff $p=p_Y$. Thus, by the above, $\EE[L_k(p,Y)]$ is minimized iff $p=p_Y$. This is equivalent to stating that $L_k$ is strictly proper, since both $p$ and $p_Y$ were arbitrarily chosen in $\calP$.
\end{proof}

In case there is an identification via Lemma~\ref{Lem:kernconv} of a convolution loss with a kernel discrepancy loss, say in the case of a Gaussian convolution and the Gaussian kernel, Propositions~\ref{Prop:strprop} and~\ref{Prop:propkernel} together relate injectivity of the convolution to characteristicness of the kernel.

Importantly, note that a number of popular choices for kernel functions, such as the Euclidean and polynomial kernels, are not injective, hence not characteristic, thus they don't give rise to a strictly proper kernel discrepancy loss.
On the other hand, possible choices\footnote{a proof is obtained either following~\cite{sriperumbudur2011universality} for characteristicness, or considering the convolution/Fourier transformation argument as in Section~\ref{sec:mixed.conv}} for a strictly proper kernel discrepancy loss are the Gaussian kernel $k(y,y') = \exp\left(-\frac{\|y-y'\|^2}{2\sigma^2}\right)$ and the Laplace kernel $k(y,y') = \exp\left(- \lambda \|y-y'\|\right)$. While these (and others) give rise to strictly proper losses, note that a kernel must be chosen, including a kernel parameter for which there does not seem to be a principled way of choosing.\\

More generally:\\
{\bf Advantages} of the kernel loss are the applicability to general (multi-variate real) $\calY$, and simple analytic form for predictions which are discrete/pure (e.g., empirical samples or posterior samples).\\
{\bf Disadvantages} are the need to choose a characteristic kernel, and the lack of an analytical evaluation procedure for the case of arbitrary mixed distributions.

\subsection{Realizing that all computing is discrete}
\label{sec:mixed.discrete}

It is a simple yet often overlooked fact that all arithmetic in computers is necessarily of finite accuracy, i.e., the set of representable states $y\in\calY$ is finite,
which, following this argument, one may assume without loss of generality.

Similarly, all integrals may be expressed as sums, though the main point which requires care is that the implementation enforces the sum-to-one constraint of probability densities.
We explain this in the example $\calY \subseteq \RR$, where there is a well-ordering on $\calY$, and due to finiteness of $\calY$ for each $y\in\calY$ a unique antecessor (= next smallest element) which
we will denote by $y^-$.
Thus, for a predicted $p\in \Distr (\RR)$, one may consider the left-continuous cumulative density function $F_p := z\mapsto  P(Z \le z)$ where $Z$ is a random variable with density $p$. We may obtain a distribution $p'\in \Distr (\calY)$ by setting $\Delta p(y):= F_p(y) - F_p(y^-)$ if $y$ is not the smallest element of $\calY$, and $\Delta p(y) := F_p(y)$ otherwise, yielding a function $\Delta: \Distr(\RR) \rightarrow \Distr(\calY)$.
The reduction to the discrete case is obtained by considering for any possibly occurring prediction functional $f\in[\calX \rightarrow \Distr(\RR)]$ the prediction functional
$\Delta\circ f \in [\calX\rightarrow \Distr(\calY)]$ instead.

Classical approximation results may now be invoked to justify this choice, but since the data are also necessarily supported in $\calY$, one may even argue
that at no point an approximation is actually made or is necessary to be made, hence no such results are necessary as a theoretical justification.\\

Unfortunately, most practical implementations of the above lead to numerical issues without a specific implementation of a probability type for mixed distributions. As an example, consider the case where one encodes distributions as probability mass functions on the $2^{64}\approx 1.6\cdot 10^{19}$ different numbers which may be represented by an IEEE 754 double, and the case of a mixed distribution with mass $1/2$ on 0, and otherwise being standard normal Gaussian. The successor of $0$ is the value of significant precision, ca.~$1.1\cdot 10^{-16}$, which will carry a mass of ca.~$0.5\cdot 10^{-16}$, which is below the value of significant precision of the double. While the precision issue may eventually be circumvented by smart application of the logarithm for the strictly local log-loss, it is less clear how the discretization principle would work for a strictly global loss such as the squared integrated loss (e.g., for computing the term $\|p\|_2^2$), since summation over $1.6\cdot 10^{19}$ evaluations is intractable.\\

Summarizing the above discussion:\\
{\bf Advantage} of the discretization approach is potential applicability to any $\calY$.\\
{\bf Disadvantage} is that it entails numerical issues for which the existence of a practicable solution is unclear.
(though if possible to solve the practical implementation issues, it might become the cleanest of all approaches)

\subsection{Decomposition into continuous and discrete part}
\label{sec:mixed.split}

Another approach is making use of the fact that mixed distributions, i.e., distributions without a singular part, may be separated into an absolutely continuous and a discrete part by the Lebesgue decomposition theorem. Prediction of these two parts may then be studied separately.

More precisely, any $\calY$-valued mixed random variable $Y$ with density $p_Y$ may be decomposed into:

\begin{itemize}
\item[(i)] the pure/discrete locus $\Pure (p_Y)$ of $p_Y$, which is a countable sub-set of $\calY$, and in most practically relevant cases finite;
\item[(ii)] a probability $\tau:= P(Y\in \Pure (p_Y))$ for observing a value in the pure locus;
\item[(iii)] a continuous random variable $Y_c$ t.v.i.~$\calY$, or equivalently an absolutely continuous distribution $Y_c$ on $\calY$;
\item[(iv)] a discrete random variable $Y_d$ t.v.i.~$\Pure (p_Y)$, or equivalently a positive probability $p_Y(y) := P(Y_d = y)$ for any $y\in \Pure (p_Y)$, with $\sum_{y\in \Pure (p_Y)} p_Y(y) = 1$
\end{itemize}

If one writes $Y_b$ for the random variable ``$Y\in\Pure (p_Y)$'', taking the value $1$ if it is and $0$ otherwise, it is directly seen that $Y_b$ is Bernoulli distributed with parameter $\tau$. The random variable $Y$ can be decomposed by conditioning on $Y_b$, in the sense that $Y|(Y_b = 1) = Y_d$ and $Y|(Y_b = 0) = Y_c$. This corresponds to a graphical model with three components, namely, $Y_b$, $Y_c$, and $Y_d$.

When comparing a prediction with an observation, both possibly mixed, goodness of the prediction may be assessed through each of the three components, more precisely:
\begin{itemize}
\item[(i)] whether the domain of $Y_d$, the pure locus $\Pure (p_Y)$ has been correctly identified
\item[(ii)] whether the parameter $\tau$ of $Y_b$ is close to the truth
\item[(iii)] whether $Y_c$ is a good prediction for the continuous part
\item[(iv)] whether $Y_d$ is a good prediction for the discrete part
\end{itemize}

In particular, if the pure locus $\Pure(p_Y)$ is known, meaning if task (i) is solved by an oracle or a-priori knowledge, models for tasks (ii), (iii), and (iv) may be estimated and assessed separately, and composite models may be obtained from combining any separate models solving the three tasks, and all of the discussion on models in the discrete or absolute continuous setting applies to the three tasks separately:

\begin{Prop}
Let $L_i:\calP_i\times \calY_i\rightarrow \RR, i\in\mbox{b,c,d}$ be convex loss functions, where:
$\calY_b = \{1,0\}$ and $\calP_b = \Distr(\calY_b)$; where $\calY_c = \calY$ and $\calP_c$ are the continuous distributions on $\calY$; and where $\calY_d$ is the (known) pure locus and $\calP_d = \Distr (\calY_d)$.\\
For a predicted distribution $p$, let $p_i,i\in\mbox{b,c,d}$ be the predicted distributions for the three components, as above. Let $\alpha_i\in \RR^+$ be positive.
$$\mbox{Define}\;L:(p,y)\mapsto \alpha_b L_b(p_b,[y\in \calY_d]) +
\left\{\begin{array}{cc} \alpha_c L_c(p_c, y)& \mbox{if}\;y\not\in\calY_d\\ \alpha_d L_d(p_d, y)& \mbox{if}\;y\in\calY_d\end{array} \right.$$
Considering $L$ as a loss for predicted and true distributions with fixed pure locus $\calY_d$, the following are true:
\begin{itemize}
\item[(i)] $L$ is convex.
\item[(ii)] If all $L_i,i\in\mbox{b,c,d}$ are proper, then so is $L$.
\item[(iii)] If all $L_i,i\in\mbox{b,c,d}$ are strictly proper, then so is $L$.
\end{itemize}
\end{Prop}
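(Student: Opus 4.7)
The plan is to exploit the product structure $\calP_b\times\calP_c\times\calP_d$ induced by the Lebesgue-type decomposition: viewed on this product, $L$ decouples into three additive pieces, each depending on only one factor. Each of the three claims then reduces to the analogous property of the component loss $L_i$ applied in isolation. For (i), convex combinations in $\calP_b\times\calP_c\times\calP_d$ act componentwise, so for fixed $y\in\calY$ the map $L(\cdot,y)$ is a sum of three terms, each of which is either $\alpha_i$ times a convex function of a single factor (by convexity of $L_i$) or identically zero depending on whether $y\in\calY_d$; a sum of componentwise convex functions is convex, which gives (i).

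For (ii), I would fix a true mixed distribution $p_Y$ with decomposition $(p_{Y,b},p_{Y,c},p_{Y,d})$ and $\tau:=P(Y\in\calY_d)$, and rewrite the expected loss by the law of total expectation conditioned on $\OOne_{Y\in\calY_d}$:
$$\EE[L(p,Y)] = \alpha_b\EE[L_b(p_b,[Y\in\calY_d])] + \alpha_c(1-\tau)\EE[L_c(p_c,Y)\mid Y\notin\calY_d] + \alpha_d\tau\EE[L_d(p_d,Y)\mid Y\in\calY_d].$$
The first term is the expected $L_b$-loss against the Bernoulli observation $[Y\in\calY_d]$, whose law is exactly $p_{Y,b}$, so by properness of $L_b$ it is minimized at $p_b=p_{Y,b}$. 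The second term is (up to the constant factor $\alpha_c(1-\tau)$) the expected $L_c$-loss against $Y\mid Y\notin\calY_d$, whose law is the continuous component $p_{Y,c}$, so properness of $L_c$ minimizes it at $p_c=p_{Y,c}$. The third term is handled symmetrically, yielding $p_d=p_{Y,d}$. Since each summand depends only on its own factor, joint minimization decouples into three independent per-factor minimizations and delivers a minimum at $p=p_Y$.

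For (iii), strict properness of each $L_i$ upgrades each per-term minimizer to being unique, and independence of the three factors in the parameter space turns uniqueness per term into uniqueness of the joint minimizer, hence strict properness of $L$. The only subtlety is that in the degenerate cases $\tau=0$ or $\tau=1$ the weight on one of the per-term losses vanishes, so the corresponding component of $p_Y$ is not identifiable---but this reflects a genuine absence of one part of $Y$ rather than a failure of the argument. The main obstacle to watch out for is precisely this bookkeeping issue: ``predicted distribution'' must be interpreted as the triple $(p_b,p_c,p_d)$ in the product convex space, not as the mixed measure $p$ itself, because the map from triples to measures is nonlinear in $\tau=p_b(1)$ and working directly with mixed measures would obscure the componentwise convex structure that makes the decoupling argument go through.
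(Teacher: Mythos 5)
Your proposal is correct and follows essentially the same route as the paper's proof: decompose the expected loss into the three additive terms with fixed weights $\alpha_b$, $\alpha_c(1-\tau)$, $\alpha_d\tau$ (where $\tau$ depends only on $p_Y$, not on $p$), observe that convex combinations act componentwise on the triple $(p_b,p_c,p_d)$, and apply (strict) properness of each $L_i$ to its own factor. Your remark about the degenerate cases $\tau\in\{0,1\}$ is a legitimate subtlety that the paper's proof passes over in silence.
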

\begin{proof}
We fix a ``true'' random variable $Y$ with component variables $(Y_b,Y_c,Y_d)$, where $Y_b=\Bin(\tau)$, i.e., $P(Y\in \calY_d)= \tau$. Also consider a ``predicted'' distribution $p$ with component distributions $(p_b,p_c,p_d)$. By definition,
$$\EE[L(p,Y)] = \alpha_b L_b(p_b,Y_b) + (1-\tau)\alpha_c L_c(p_c,Y_c) + \tau\alpha_d L_d(p_d,Y_d).$$
(i) Let $P=(P_b,P_c,P_d)$ be a random variable taking values in distributions with pure locus $\calY_d$, decomposed such as above. From the definition, note that the decomposition of $\EE[P]$ is $(\EE [P_b],\EE [P_c],\EE [P_d])$. By definition and combining inequalities, positive linear combinations of convex losses are hence again convex.\\
(ii) Let $p_Y$ be the distribution of $Y$, with component distributions $(p_{Y,b},p_{Y,c},p_{Y,d})$. Properness of the $L_i$ implies that $p_{Y,i}$ is a minimizer of $p_i\mapsto \EE[L_i(p_i,Y_i)]$. Thus, $p$ is a minimizer of $p\mapsto \EE[L(p,Y)]$ as it is a positive linear combination of the $\EE[L_i(p_i,Y_i)]$ (with fixed coefficients, since $\tau$ does not depend on $p$ but on $p_Y$).\\
(iii) Let $p_Y$ be the distribution of $Y$, with component distributions $(p_{Y,b},p_{Y,c},p_{Y,d})$. Strict properness of the $L_i$ implies that $p_{Y,i}$ is the unique minimizer of $p_i\mapsto \EE[L_i(p_i,Y_i)]$. Thus, $p$ is the unique minimizer of $p\mapsto \EE[L(p,Y)]$ as it is a positive linear combination of the $\EE[L_i(p_i,Y_i)]$.
\end{proof}

Unfortunately, the pure locus of the true distribution $Y$ is in general unknown, unless in some practically relevant cases where it is clear through the predictive setting, e.g., if predictions are to be made on the interval $\calY = [0,1]$ and only the two boundaries $0,1$ may occur with positive discrete probability.\\
In the general case where the pure locus is unknown, it is not clear how assessment or estimation is to be achieved. The resultant problem is the same as the issue discussed in the introduction to this section, where in a putative training data set a sample from the pure locus may occur only once and re-appear in the test data set with positive probability. This allows for construction of pathological learning strategies with expected out-of-sample loss that is not bounded away from minus infinity (see Section~\ref{sec:oos}), unless the case of ``double observation'' is sensibly addressed. While a number of straightforward strategies we can think of seem to take immediate care of this, they seem rather like heuristics and even less principled than the full discretization approach presented in Section~\ref{sec:mixed.discrete}.\\

Summarizing the above discussion:\\
{\bf Advantages} of the decomposition approach include general applicability and a direct link to two well-understood cases, the discrete and continuous one.\\
{\bf Disadvantages} are the arbitrary choice of the coefficients $\alpha_i$, and the lack of applicability as-is to the case where the pure/discrete part of the true distribution is unknown.
(though if possible to solve this theoretical issue, it might become the cleanest of all approaches)

\subsection{Mixed-to-continuous adaptors}

A relatively simple solution to the problem of mixed distribution is to convert any prediction which is mixed into a prediction that is easier to treat, usually an absolutely continuous one.

Using notation of Section~\ref{sec:mixed.conv}, one may for example consider
$$\tilde{L}(p,y) = L(p\ast p_Z,y),$$
i.e., converting a mixed prediction $p$ into an absolutely continuous prediction $p\ast p_Z$.

However, the main thing to note is that $\tilde{L}$ will, in general, not be a proper or even strictly proper loss.\\
Thus, considering the convolution procedure above as the definition of a new loss $\tilde{L}$ is problematic. However, understanding the procedure as a modification to or a component of a prediction strategy producing $p$ resp.~$p\ast p_Z$ is perfectly valid. Though this means that as an approach it is best understood in the context of composite prediction strategies rather than in the context of losses for the mixed case, hence we defer the discussion to Section~\ref{sec:meta-algorithms} were meta-strategies for probabilistic prediction, including wrapped and composite, will be considered.\\

Summarizing the above discussion:\\
{\bf Advantage} of the adaptor approach is that it reduces evaluation of mixed density prediction to evaluation of continuous density prediction.\\
{\bf Disadvantage} is that this approach wraps a mixed strategy inside a continuous one which is whose goodness is assessed, instead of the wrapped one which is assessed indirectly.

\subsection{Non-mixed distributions}

A mathematically inclined reader will notice that the class of mixed distributions is still restrictive compared to all distributions, since not all distributions are mixed. It is hence a-priori unclear whether there is a yet more general and relevant case, that of general distributions, which needs to be studied as well. However, we claim that while it might be mathematically interesting, studying the more general case of all distributions (on an arbitrary domain) could be practically irrelevant, due to the following:

By Lebesgue's decomposition theorem, on any (measurable) domain $\calY$, the predicted label distributions may be decomposed in an absolutely continuous measure, a discrete/pure measure, and a singular measure. Empirical estimation or inference with singular measures appears to be a widely open topic, with unclear benefit in predictive practice; similarly, while absolutely continuous and discrete measures are well understood, there is to our knowledge no classification or explicit characterization of singular measures.

Thus, assuming the absence of a singular part may be sensible as long as there is no general theory of estimation for singular measures or a practical use to it.

\newpage
\section{Learning theory and model diagnostics}\label{sec:ltheory}

This section presents a number of theoretical statements on model performance and learning, establishing parallels to the deterministic setting and information theoretical results:

\begin{itemize}
\item[(i)] a probabilistic bias-variance trade-off, extending the trade-off(s) found in the deterministic setting
\item[(ii)] an information theoretical description of model performance, including results which equate unpredictability to statistical independence,
\item[(iii)] a discussion of residuals in the probabilistic setting.
\end{itemize}

\subsection{Variations on Jensen}

Before proving results in learning theory, we present some elementary yet helpful variations on Jensen's inequality for the case of conditioning. The statements in Lemma~\ref{Lem:geomar} are straightforward consequences, but we state them for easy reference.

\begin{Lem}
\label{Lem:geomar}
Let $X$ be a univariate real random variable taking only non-negative values. Let $\phi:\RR\rightarrow \RR$ be a convex function\footnote{For example, $[x\mapsto L(x,y)]$ where $x$ is a convex loss.}. Then:
\begin{itemize}
\item[(i)] $\phi [\EE (X)] \le \EE [\phi(X)].$
\item[(ii)] $\phi \left(\EE_{X|Z} [X|Z]\right) \le \EE_{X|Z} [\phi(X)|Z]$ for any random variable $Z$ t.v.i.~$\calZ$.
\item[(iii)] $\phi [\EE (X)] \le \EE_Z \left[\phi \left(\EE_{X|Z} [X|Z]\right)\right] \le \EE [\phi(X)]$ for any random variable $Z$ t.v.i.~$\calZ$.
\end{itemize}
Furthermore, if $\phi$ is in addition strictly convex, then:
\begin{itemize}
\item[(i')] Equality holds in (i) if and only if $X$ is constant.
\item[(ii')] Equality holds in (ii) if and only if $X = f(Z)$ for some function $f:\calZ\rightarrow \calX$.
\item[(iii')] The left inequality in (iii) is an equality if and only if $X = f(Z)$ for some function $f:\calZ\rightarrow \calX$ (defined $Z$-almost-everywhere). The right inequality in (iii) is an equality if and only if $X$ and $Z$ are statistically independent.
\end{itemize}
\end{Lem}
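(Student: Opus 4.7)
The plan is to prove (i), (ii), (iii) as a cascade of three Jensen-type inequalities applied at progressively finer levels of conditioning, and then extract (i'), (ii'), (iii') from the strict version of Jensen.

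First, I would prove (i) as the classical Jensen inequality via a one-line supporting-hyperplane argument: choose a subgradient $\lambda$ of $\phi$ at $\EE[X]$, so that $\phi(x) \ge \phi(\EE[X]) + \lambda(x - \EE[X])$ for every $x$, and take expectations. Under strict convexity the bound is strict except at the unique point where $\phi$ coincides with its tangent, giving (i').

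Next, (ii) is (i) applied pointwise to the conditional law $X \mid Z = z$ for each $z \in \calZ$, with the result read as an identity of $Z$-measurable random variables. Likewise (ii') follows from (i') applied $Z$-almost-everywhere: equality holds for $Z$-a.e.~$z$ iff $X \mid Z = z$ is a.s.~constant at that $z$, which is precisely the statement that $X = f(Z)$ up to a $Z$-null set.

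Then (iii) chains the two. Let $W := \EE_{X|Z}[X \mid Z]$, a $Z$-measurable random variable with $\EE[W] = \EE[X]$ by the tower property. Applying (i) to $W$ gives the left inequality; applying (ii) pointwise in $Z$ and integrating (using $\EE_Z\bigl[\EE_{X|Z}[\phi(X)\mid Z]\bigr] = \EE[\phi(X)]$) gives the right inequality.

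The main obstacle is (iii'), where some care is needed and the statement as written deserves scrutiny. The sharp equality conditions produced by the cascade above are: right equality iff $X = f(Z)$ almost surely (an integrated form of (ii')); and left equality iff $\EE[X \mid Z]$ is $Z$-almost-everywhere constant (applying (i') to $W$). Statistical independence of $X$ and $Z$ implies the mean-constancy condition and is the direction needed for the downstream application in Theorem~\ref{Thm:entropypreview}, but the converse does not hold in general (e.g., if $X$'s conditional variance depends on $Z$ while its conditional mean does not). In the write-up I would therefore pair each inequality with its sharp characterization, then note independence as a clean sufficient strengthening in the form the later theorems use.
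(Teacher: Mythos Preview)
Your cascade is exactly the paper's approach: (i) is Jensen, (ii) is Jensen applied to each conditional law, and (iii) chains (i) applied to $W=\EE[X\mid Z]$ with the $Z$-expectation of (ii). Your derivation of the equality cases is also the right one.

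More importantly, you have correctly diagnosed that the statement of (iii') is defective, and the paper's own proof does not repair it. In the paper's proof of (iii) the words ``left'' and ``right'' are interchanged (taking $\EE_Z$ in (ii) yields the \emph{right} inequality, and applying (i) to $W$ yields the \emph{left} one); carried through to (iii'), the paper attaches ``$X=f(Z)$'' to the wrong side and then asserts that constancy of $W=\EE[X\mid Z]$ is ``equal to all conditionals $X\mid Z\in\calZ'$ being equal, which is equivalent to statistical independence'' --- an overreach, since constancy of conditional \emph{means} says nothing about the rest of the conditional law. Your sharp characterizations (left equality $\iff \EE[X\mid Z]$ a.s.\ constant; right equality $\iff X=f(Z)$ a.s.) are the correct ones, with independence merely sufficient for the former. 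Since the downstream use (Theorem~\ref{Thm:entropy}) only needs the direction ``independence $\Rightarrow$ left equality'', your proposed fix --- state the sharp conditions and note independence as a clean sufficient strengthening --- is exactly what is needed.
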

\begin{proof}
(i) and (i') is the usual statement of Jensen's inequality.\\
(ii) follows directly from (i) by conditioning on $Z$. More precisely, (i) implies (ii) for conditioning on any measurable $\calZ'\subseteq \calZ$, i.e., $\EE [\phi(X)|Z\in\calZ'] \le \phi \left(\EE [X|Z\in \calZ']\right)$. Since $\calZ'$ is arbitrary, (ii) follows.\\
(ii') follows from (i'): in case $Z$ is discrete, the proof is not technical: (i') implies that $\EE [\phi(X)|Z=z] = \phi \left(\EE [X|Z=z]\right)$ if and only if $X|Z=z$ is constant, i.e., takes value $f(z)$. For mixed or continuous $Z$, the proof logic is the same, though requires a full measure theoretic approach (which we do not carry out here).\\
(iii) the left inequality follows from taking total expectations (over $Z$) in (ii). The right inequality follows from applying (i) to the random variable $\EE_{X|Z} (X|Z)$, which is a function of $Z$.\\
(iii') for the left inequality follows from noting that equality holds if and only if the inequality in (ii) holds $Z$-almost always, and from (ii').\\
(iii') for the right inequality follows from noting that by (i'), equality on the right hand side holds if and only if this function is constant, which is equal to all conditionals $X|Z\in \calZ', \calZ'\subseteq \calZ$ being equal, which is equivalent to statistical independence of $X$ and $Z$.
\end{proof}

We continue with conditional variants of Definition~\ref{Def:detraincond} and Lemma~\ref{Lem:functionjensen} for use in marginalization statements.

\begin{Def} \label{Def:detraincond}
Let $f$ be a prediction strategy. We define $[\EE f] : x\mapsto \EE[ f(x)].$\\
For any random variable $Z$, we define $[\EE f |Z] : x\mapsto \EE[ f(x) |Z].$
\end{Def}

\begin{Lem}\label{Lem:functionjensencond}
Let $L$ be a convex loss, let $Z$ be a random variable.
\begin{itemize}
\item[(i.a)] It holds that $\varepsilon_L(\EE f) \le \varepsilon_L(f)$.
\item[(i.b)] If $L$ is strictly convex, $\varepsilon_L(\EE f) = \varepsilon_L(f)$ if and only if $f$ is a prediction functional.
\item[(ii.a)] It holds that $\varepsilon_L(\EE f|Z) \le \varepsilon_L(f)$.
\item[(ii.b)] If $L$ is strictly convex, $\varepsilon_L(\EE f|Z) = \varepsilon_L(f|Z)$ if and only if $f$ is a functional with parameter $Z$, i.e., ($Z$-almost) all conditionals $f|Z=z$ are constant prediction functionals.
\end{itemize}
\end{Lem}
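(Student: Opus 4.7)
Parts (i.a) and (i.b) are a literal restatement of the already-proved Lemma~\ref{Lem:functionjensen}, so these two sub-parts require nothing beyond citing that earlier lemma. The new content lies in (ii.a) and (ii.b), which are its conditional analogues, and the key tool is the conditional Jensen inequality in Lemma~\ref{Lem:geomar}(ii) and (ii').

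For (ii.a), I would fix arbitrary $x \in \calX$ and $y \in \calY$ and apply convexity of $\phi := L(\cdot, y)$ to the $\Distr(\calY)$-valued random variable $f(x)$, conditioning on $\sigma(Z)$:
$$L\bigl(\EE[f(x)\,|\,Z],\, y\bigr) \;\le\; \EE\bigl[L(f(x), y)\,\big|\,Z\bigr]\quad\text{almost surely.}$$
Specializing to $x = X$, $y = Y$ and taking total expectation yields the claim: the left-hand side integrates to $\varepsilon_L([\EE f|Z])$ by Definition~\ref{Def:detraincond}, and the tower property collapses the right-hand side to $\EE[L(f(X),Y)] = \varepsilon_L(f)$. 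No independence assumption between $Z$ and $(X,Y,f)$ is needed since conditional Jensen holds unconditionally given the usual integrability.

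For (ii.b), I would read equality into the same inequality and invoke Lemma~\ref{Lem:geomar}(ii'): under strict convexity, pointwise equality in conditional Jensen holds iff the conditional law of $f(x)$ given $Z$ is a Dirac mass, i.e.\ $f(x)$ is a measurable function of $Z$. Non-negativity of the integrand (from (ii.a)) turns the integrated equality into pointwise equality $(X,Y,Z)$-almost everywhere, and strict convexity then gives that $f(x)$ is $Z$-measurable for $X$-almost every $x$. Equivalently, $f = h(Z)$ for some measurable map $h$ from the range of $Z$ into $[\calX \rightarrow \Distr(\calY)]$, which is exactly the assertion that each conditional strategy $f\,|\,Z{=}z$ is a (non-random, hence ``constant'' in the distributional sense) prediction functional. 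The converse direction is immediate: if $f$ is a deterministic function of $Z$, then $\EE[f|Z] = f$ almost surely and both sides of the equality coincide trivially.

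The main obstacle is the measure-theoretic step of promoting the integrated equality to the $x$-indexed pointwise statement. Vanishing in expectation of the non-negative gap $\EE[L(f(X),Y)\,|\,Z] - L(\EE[f(X)\,|\,Z], Y)$ gives only $(X,Y,Z)$-almost-sure vanishing, and extracting the cleaner conclusion ``$f(x)$ is $Z$-measurable for $X$-a.e.\ $x$'' requires a Fubini-type argument exchanging the integration order between $X$ and the extrinsic randomness carried by the strategy $f$. This is exactly the kind of technicality glossed over per the caveat in Section~2.1, and a fully rigorous version would need to make the swap and the handling of the resulting null sets explicit.
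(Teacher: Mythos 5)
Your proposal is correct and follows exactly the route the paper takes: the paper's entire proof reads ``All statements are direct applications of Lemma~\ref{Lem:geomar}'', and your argument is just that application spelled out, with (i.a)/(i.b) deferred to Lemma~\ref{Lem:functionjensen} and (ii.a)/(ii.b) obtained from the conditional Jensen statements (ii) and (ii') of Lemma~\ref{Lem:geomar}. Your explicit flagging of the measure-theoretic step in (ii.b) (promoting the integrated equality to the $x$-indexed pointwise statement) is a detail the paper glosses over entirely, and is a fair reading of its Section~2.1 caveat; the only small quibble is that identifying the integrated right-hand side with $\varepsilon_L(f)$ does rely on the standing assumption that $f$ is independent of the test pair $(X,Y)$, so independence is not quite as dispensable as your parenthetical suggests.
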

\begin{proof}
All statements are direct applications of Lemma~\ref{Lem:geomar}.
\end{proof}

\subsection{Bayesics on predictions and posteriors}

We discuss some advanced topics related to Bayesian posteriors.
More precisely, in the Bayesian paradigm, prediction and inference results take the form of a (Bayesian belief) posterior distribution. Though in the predictive setting, it is not necessarily clear what the most appropriate choice for the object over which the distribution is taken would be in the probabilistic supervised setting. Namely, we consider the following issues:
\begin{itemize}
\item[(i)] When making a prediction for a label point $x\in \calX$, should the prediction be a posterior distribution over elements of $\calY$, that is, an element of $\Distr (\calY)$, usually called the ``predictive posterior''? Or should it be a posterior distribution over predicted distributions in $\Distr (\calY)$, that is, an element of $\Distr(\Distr(\calY))$?
\item[(ii)] When predicting labels for \emph{multiple} test points: should the predictive posterior be joint, i.e., over all test points and an element of $\Distr(\calY^N)$, or should there be one predictive posterior per test point (a marginal posterior), so all predictions together are an element of $\Distr(\calY)^N$?
\item[(iii)] Combining the above, one may also ask whether in the case of multiple test points, one should consider as Bayesian predictions elements of $\Distr(\calY^N)$, $\Distr(\calY)^N$,
    $\Distr(\Distr(\calY)^N),$ or $\Distr(\Distr(\calY))^N.$
\end{itemize}

Under the assumption that one accepts that external evaluation by proper losses is a reasonable quantifier of predictive quality, we argue, by a line of reasoning  similar to  Section~\ref{sec:classprob-badidea}, that for the purpose of prediction, marginal posteriors in $\Distr (\calY)$ are the choice which is empirically validable - i.e., we claim that the answer to question (iii) is to only consider (indepedent) predictions $\Distr(\calY)^N$. In particular, no posteriors over distributions in (i), or joint test set posteriors in (ii), should be considered.

It is noteworthy that answers to these are not uniform across Bayesian literature. For example, (i) is answered differently, depending on whether the prediction is for a classification task ($\calY$ is discrete), or for a regression task ($\calY = \RR$). In the classification case, one may find posterior distributions over the predicted probability, i.e., in essence a posterior over possible values of the conditional law $\calL (Y|X=x)$ parameterized through the probabilities $P(Y = c)$ for all $c\in \calY$. The posterior is hence an element of $\Distr(\Distr(\calY))$. On the other hand, in the regression case, the ``(belief) distribution of (conditional) distributions'' which is more natural in the Bayesian set-up - but usually not properly expressible in common Bayesian notation - is usually integrated out to yield the ``predictive posterior''. Mathematically, instead of the law of a random variable $P$ t.v.in $\calP\subseteq \Distr(\calY)$, which is an element of $\Distr(\Distr(\calY))$, the law of its expectation $\EE [P]$, an element of $\Distr(\calY)$, is considered. Bayesian notation, in a semi-parametric case, usually denotes the latter law as $\int p(y|\theta)p(\theta)\;\diff \theta$, while the former law is not notationally expressible (it would be the push-forward of $p(\theta)$ onto the set of all $p(.|\theta)$, which in our notation is $\calP$).

We proceed describing the line of reasoning leading to our claims regarding points (i) - (iii). We note that the answer to (iii) already follows from arguing (i) - no posterior distributions over predictive distributions - and (ii) - no joints over test point predictions. Our key assumption is that expected loss - rather than a posterior over the predicted loss - is the main endpoint of evaluation, as in Section~\ref{sec:modelvalidation}. One may of course dispute this assumption.

The first fact, i.e., that no posteriors over predictive distributions should be predicted follows directly from Corollary~\ref{Cor:bagging}, or Lemma~\ref{Lem:functionjensencond}~(ii.b) where $Z$ is taken to be the posterior over predicted distributions. It can also be seen to directly follow from the definition of convexity~\ref{Def:proper}. Namely, all of these statements imply, in slightly different form, that for a convex loss, predicting a single, fixed distribution will have lower or equal expected loss than predicting a posterior over losses and then obtaining the expected loss by integration over the posterior of the loss.

The second fact, i.e., that predictions of joint probabilities of test labels are dominated, in expectation, by predictions of independent probabilities of test labels, may be derived as an application of Proposition~\ref{Prop:marginal}. We would like to point out that this statement is specific to the log-loss.

\begin{Prop}
\label{Prop:marginal}

Consider a joint probabilistic prediction functional $f:\calX^N\rightarrow \Distr (\calY^N)$ and the corresponding marginal/independent prediction functional
$$f^\amalg: \calX^N\rightarrow \Distr (\calY)^N,\quad (x_1,\dots, x_N)\mapsto \prod_{i=1}^N\left(\int f(x_1,\dots, x_N) \;\diff x_1 \dots \diff x_{i-1} \diff x_{i+1}\dots \diff x_N\right).$$
In particular, assume that $f$ is independent of the data $(X_i,Y_i),i=1\dots N$.
Then, for the expected losses, it holds that
$$\EE\left[-\log f^\amalg(X_1,\dots, X_N)(Y_1,\dots, Y_N)\right] \le \EE\left[-\log f(X_1,\dots, X_N)(Y_1,\dots, Y_N)\right]$$
where $(X_1,Y_1),\dots,(X_N,Y_N)\sim (X,Y)$ i.i.d.~(as in the usual set-up).
\end{Prop}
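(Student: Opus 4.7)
The natural approach is to condition on the test features and exploit the i.i.d.\ factorization of the true conditional label distribution, reducing the claim to an inequality between KL divergences. After conditioning on $X := (X_1,\dots,X_N)$, the i.i.d.\ assumption on $(X_i,Y_i)$ gives that the true conditional of $(Y_1,\dots,Y_N)$ given $X$ factorizes as $q_X := \prod_{i=1}^N q_{X_i}$, with $q_{X_i} := \calL(Y \mid X = X_i)$. Write $p := f(X)$ for the joint prediction and $p_i$ for its $i$-th marginal, so that $f^{\amalg}(X) = \prod_i p_i$; note that independence of $f$ from the test data is what lets us treat $p$ as deterministic conditionally on $X$ (up to an outer $f$-expectation that factors out below).

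Applying the conditional version of Lemma~\ref{Lem:losses}~(i) separately to $p$ and to $\prod_i p_i$ rewrites the two conditional expected losses as
\begin{align*}
\EE\bigl[-\log f(X)(Y) \bigm| X\bigr] &= \Ent(q_X) + \KL(q_X \,\|\, p),\\
\EE\bigl[-\log f^{\amalg}(X)(Y) \bigm| X\bigr] &= \sum_i \Ent(q_{X_i}) + \sum_i \KL(q_{X_i} \,\|\, p_i).
\end{align*}
Since $q_X$ factorizes, the entropy terms on both sides coincide, $\Ent(q_X) = \sum_i \Ent(q_{X_i})$, and the proposition reduces, pointwise in $X$, to
$$\sum_{i=1}^N \KL(q_{X_i} \,\|\, p_i) \;\le\; \KL(q_X \,\|\, p),$$
after which taking expectation over $X$ and over $f$ concludes.

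The main obstacle is this final KL-divergence inequality. A convenient path is to expand the difference using $q_X = \prod_i q_{X_i}$ into
$$\KL(q_X \,\|\, p) - \sum_i \KL(q_{X_i} \,\|\, p_i) \;=\; \int q_X(y)\,\log\frac{\prod_i p_i(y_i)}{p(y)}\,\diff y,$$
and then to argue that the right-hand side is non-negative, for instance via Jensen's inequality applied to the log-ratio $\log(p/\prod_i p_i)$ against the measure $q_X$, or via the log-sum inequality using that $\int p = \int \prod_i p_i = 1$. The delicate point is that this log-ratio is being averaged under $q_X$ rather than under $p$ or $\prod_i p_i$, so the standard monotonicity of KL under marginalization does not apply directly; one genuinely needs to use that $q_X$ factorizes — and very possibly the outer expectation over the i.i.d.\ features $X$ — to recover the sign. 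Identifying the right information-theoretic identity that exploits both the product structure of $q_X$ and the additional $X$-expectation is where I expect the real technical content of the proof to lie.
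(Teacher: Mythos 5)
Your reduction is sound as far as it goes: conditioning on the test features, applying the conditional form of Lemma~\ref{Lem:losses}~(i), and cancelling the entropy terms (legitimate because the true conditional $q_X$ factorizes) is an exact reformulation of the claim as the pointwise inequality $\sum_{i}\KL(q_{X_i}\|p_i)\le \KL(q_X\|p)$. But you stop precisely at the step that carries all the content, and your suspicion about it is well founded: the inequality you need is \emph{false} in general, and no Jensen or log-sum argument will recover the sign, not even after the outer expectation over the i.i.d.\ features. Concretely, take $N=2$, $\calX$ a single point, $\calY=\{0,1\}$, $Y_1,Y_2\sim\Bern(\epsilon)$ i.i.d.\ with $\epsilon$ small, and let $f$ predict the joint pmf $p(0,0)=p(1,1)=2/5$, $p(0,1)=p(1,0)=1/10$. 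Both marginals of $p$ are uniform, so $f^\amalg$ assigns mass $1/4$ to every outcome and $\EE[-\log f^\amalg(X)(Y)]=\log 4\approx 1.386$, whereas $\EE[-\log f(X)(Y)]\rightarrow -\log(2/5)\approx 0.916$ as $\epsilon\rightarrow 0$. The correlated joint prediction strictly beats its product of marginals, so the displayed inequality fails; equivalently, in your formulation, $\sum_i\KL(q_i\|p_i)=\log 4 \gneq \KL(q\|p)=-\log(2/5)$ at $\epsilon=0$.

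For comparison, the paper's own proof takes the chain-rule route: it writes $\log P(Y_1,\dots,Y_N)=\sum_i\log P(Y_i|Y_1,\dots,Y_{i-1})$, invokes a ``marginal-conditional correspondence'' $P(Y_i)=\EE_{Y_1,\dots,Y_{i-1}|X_*}[P(Y_i|Y_1,\dots,Y_{i-1})]$, and applies Jensen's inequality to the concave logarithm. That identity holds only when the mixing measure over $Y_1,\dots,Y_{i-1}$ is $P$'s \emph{own} marginal; in the proof it is taken with respect to the true conditional law of $Y_1,\dots,Y_{i-1}$ given $X_*$, which is exactly the mismatch of measures you flagged (``averaged under $q_X$ rather than under $p$''). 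So your diagnosis of where the difficulty sits is accurate, but the honest verdict is that your proof is incomplete at that point, and the counterexample shows the gap cannot be closed for the statement as written; it closes only under additional hypotheses, for instance when the predicted joint $P$ is itself a product or when its marginals coincide with the $q_X$-mixtures of its conditionals.
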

\begin{proof}
Define notation for the set of test features $X_* := \{X_1,\dots, X_N\}$, test labels $Y_*:= \{Y_1,\dots, Y_N\}$, and the evaluation of $f$ at $X_*$, that is $P:= f(X_1,\dots, X_N) | X_*$, where conditioning is over $X_*$, equivalently over all $X_i$.
Define notation for $X_*$-conditional marginals and argument-conditional marginals of $P$ as follows:
\begin{align*}
P(Y_1,\dots, Y_{i-1}) & := \EE_{Y_i,\dots, Y_N|X_*} P(Y_1,\dots, Y_N)\\
P(Y_i|Y_1,\dots, Y_{i-1}) & := P(Y_1,\dots, Y_i|X_*)/P(Y_1,\dots, Y_{i-1}),\quad\mbox{or, more generally,}\\
P(Y_i,i\in I) & := \EE_{Y_i,i\not\in I|X_*} P(Y_1,\dots, Y_N)\\
P(Y_i,i\in I|Y_j,j\in J) & := P(Y_i,i\in I\cup J|X_*)/P(Y_j,j\in J)
\end{align*}
Not that all expectations are conditional on the set of test features $Z$.
Note that, by definition, one has
\begin{equation}\label{eqn:LHS}
\log f^\amalg(X_1,\dots, X_N)(Y_1,\dots, Y_N) = \sum_{i=1}^N \log P(Y_i)
\end{equation}
which is, up to the expectation and sign, the left hand side of the expression to prove.
Also note, that, by definition, it holds that
\begin{equation}\label{eqn:RHS}
\log f(X_1,\dots, X_N)(Y_1,\dots, Y_N) = \sum_{i=1}^N \log P(Y_i|Y_1,\dots, Y_{i-1}),
\end{equation}
which is, up to the expectation and sign, the right hand side.
Note that both expressions are conditional on $X_*$ by the above convention.

Now note that by the marginal-conditional-correspondence,
$$P(Y_i) = \EE_{Y_1,\dots, Y_{i-1}|X_*} P(Y_i|Y_1,\dots, Y_{i-1}).$$
hence, by Lemma~\ref{Lem:geomar}~(ii) we may conclude that
$$\log P(Y_i) \ge \EE_{Y_1,\dots, Y_{i-1}|X_*} \left[\log P(Y_i|Y_1,\dots, Y_{i-1})\right]$$
Summing over $i$, we therefore obtain that
$$\sum_{i=1}^N \log P(Y_i) \ge \sum_{i=1}^N \EE_{Y_1,\dots, Y_{i-1}|X_*} \left[\log P(Y_i|Y_1,\dots, Y_{i-1})\right],$$
and by taking further expectation over $Y_*$, we obtain that
$$\EE_{Y_*|X_*}\left[\sum_{i=1}^N \log P(Y_i)\right] \ge  \EE_{Y_*|X_*}\left[\sum_{i=1}^N \log P(Y_i|Y_1,\dots, Y_{i-1})\right],$$
and after flipping the sign and taking further expectation over $X_*$, the claim is obtained directly from substituting
Equations~\ref{eqn:LHS} and~\ref{eqn:RHS}.
\end{proof}

We would like to remark: Proposition~\ref{Prop:marginal} implies that it is sufficient to consider prediction strategies of the type $f: \calX^N\rightarrow \Distr (\calY)^N$ instead of strategies $f: \calX^N\rightarrow \Distr \left(\calY^N\right).$ While this probably cannot be further reduced to the situation $f:\calX\rightarrow \calY$
without additional assumptions, it is unclear how to obtain reliable generalization estimates for a function of type $f: \calX^N\rightarrow \Distr (\calY)^N$ that looks at all test data samples at once. Considering a situation similar to the permutation baseline counterexample in Section~\ref{Sec:badbaseline} may lead to the conjecture that allowing a method to look at too many (e.g., more than one) test data points at once may be form of ``cheating'', at least when it comes to evaluating the goodness of such a method.\\

It also should be pointed out that Proposition~\ref{Prop:marginal} covers only the setting in which the log-loss (the only strictly local strictly proper loss) is used for evaluating performance.
While unclear what the corresponding statement for other losses would be (potentially giving rise to Stein-like paradoxes), this is already due to the simple issue that it is unclear how a general loss would be defined or a sample of feature-label pairs as opposed to a single point. As long as the definition implies~\ref{eqn:RHS} in the proof (which is natural due to the likelihood interpretation of the log-loss), the straightforward generalization of Proposition~\ref{Prop:marginal} still holds.

\subsection{Bias and variance in the probabilistic setting}
\label{Sec:biasvariance}

An elementary yet central motif in the theory of classical supervised learning is the well-known bias-variance trade-off.

To briefly recapitulate in our notation, the classical bias-variance-trade-off decomposes the expected squared loss (i.e., for the deterministic loss function $L(\widehat{y},y) = (\widehat{y} - y)^2)$ of a prediction functional $f:\calX \rightarrow \calY$, or a prediction strategy $f$ that is a $[\calX\rightarrow\calY]$-valued random variable.\\
It is usually also assumed that the generative process $(X,Y)$ that takes values in $\calX\times \calY$ corresponds to a homoscedastic noise process, i.e., that $Y = \varpi(X) + \varepsilon$, where $\varpi:\calX\rightarrow\calY$ is a ``true'' functional, the random variable $\varepsilon$ is centered noise, and $X,\varepsilon, f$ are mutually independent (= independent noise, independent test set). The usual statement of the classical bias-variance-trade-off (valid for both prediction functionals and prediction strategies $f$) is as follows:

\begin{Prop}
\label{Prop:BVclass}
In the classical supervised learning setting as above, it holds that
$$\varepsilon(f)=\EE \left[L_{sq}(f(X),Y)\middle|X\right] =  \Bias(f|X)^2 + \Var (f|X) + \Var(\varepsilon),$$
where the expressions on the right hand side are defined as follows:
\begin{align*}
L_{sq}(\widehat{y},y) &= (\widehat{y} - y)^2\\
\Bias(f|X) &= \EE[ f(X)|X] - \EE[f(X)|X]\\
\Var(f|X) &= \EE[f(X)^2|X]- \EE[f(X)|X]^2,
\end{align*}
and all expectations are taken with respect to both $f$ and the conditional random variable $Y|X$.
All summands on the right hand side are non-negative numbers.
Furthermore, it holds that $\varepsilon(f) = \Var(\varepsilon)$ if and only if $f(X)=\varpi(X)$ almost always, i.e., the prediction is perfect.
In particular, the above is a proper additive decomposition of the expected generalization squared loss.
\end{Prop}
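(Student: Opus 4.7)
The approach is the standard add-and-subtract decomposition, adapted to the conditioning on $X$ and the random-variable interpretation of $f$. First I would rewrite the loss, using $Y = \varpi(X) + \varepsilon$, as
$$L_{sq}(f(X),Y) = (f(X) - Y)^2 = \bigl((f(X) - \EE[f(X)|X]) + (\EE[f(X)|X] - \varpi(X)) - \varepsilon\bigr)^2,$$
and then expand the square into three squared terms and three cross terms.

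Next I would take conditional expectations given $X$, with the goal of showing that the three cross terms vanish. The term involving $(f(X) - \EE[f(X)|X])(\EE[f(X)|X] - \varpi(X))$ has conditional expectation zero because, conditional on $X$, the second factor is deterministic and the first factor has conditional mean zero by construction. The two cross terms involving $\varepsilon$ vanish because $\varepsilon$ is independent of $(X,f)$ and centered, so conditioning and pulling out the independent factor gives $\EE[\varepsilon] = 0$. The three surviving squared terms are then, respectively, $\Var(f|X)$ (the conditional variance of $f(X)$ given $X$), $\Bias(f|X)^2$ (noting that the definition as stated in the proposition contains a typo; the intended right-hand term is $\varpi(X)$, so that $\EE[f(X)|X] - \varpi(X)$ is the bias), and $\EE[\varepsilon^2|X] = \Var(\varepsilon)$, where the last equality uses $\EE[\varepsilon]=0$ and independence of $\varepsilon$ from $X$.

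Non-negativity of each summand is immediate: two are squares and one is a variance. For the equality characterization, note that $\varepsilon(f) = \Var(\varepsilon)$ forces both $\Bias(f|X)^2 = 0$ and $\Var(f|X) = 0$ (almost surely in $X$), which together give $f(X) = \EE[f(X)|X] = \varpi(X)$ almost surely; conversely, $f(X) = \varpi(X)$ almost surely makes both the bias and variance terms vanish.

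The main conceptual obstacle is bookkeeping around the fact that $f$ is a function-valued random variable, so conditional expectations must be taken jointly over $f$ and $Y|X$, and the independence assumption $f \perp (X,\varepsilon)$ needs to be invoked explicitly when pulling the $\varepsilon$-factors out of the cross terms and when recognizing $\EE[f(X)|X]$ as the appropriate center for the variance term. Once the independence assumptions are used carefully, the calculation is elementary.
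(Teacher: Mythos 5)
Your proof is correct and is precisely the argument the paper compresses into its one-line proof ("substitution of definitions, multiplying out, and using linearity of expectation"): the add-and-subtract expansion around $\EE[f(X)|X]$ and $\varpi(X)$, vanishing of the cross terms via conditional centering and the independence of $\varepsilon$ from $(X,f)$, and the equality case from forcing both nonnegative $f$-dependent summands to zero. You are also right that the displayed definition of $\Bias(f|X)$ in the statement contains a typo and should read $\EE[f(X)|X]-\varpi(X)$.
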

\begin{proof}
This follows from substitution of definitions, multiplying out, and using linearity of expectation (on both sides).
\end{proof}

We establish a form in which this generalizes to our probabilistic setting where we drop any assumption above and return to the full probabilistic setting in Section~\ref{sec:propreset}. In that case, we wish to decompose the expected generalization loss of a prediction strategy $f$. No further assumptions, e.g., on the data generative process, are made.

\begin{Def}\label{Def:biasvar}
Consider the probabilistic supervised learning setting with:\\
a convex loss $L:\calP\times\calY\rightarrow \RR, \calP\subseteq \Distr(\calY),$
a data generative process $(X,Y)$ t.v.i.~$\calX\times \calY$,\\
a prediction strategy $f$ t.v.i.~$[\calX\rightarrow\Distr(\calY)].$\\
We define quantities
\begin{align*}
\varepsilon_L(f|X) &= \EE\left[L(f(X),Y)\middle|X \right]\\
\Var_L(f|X) &=  \EE\left[ L(f(X),Y) \middle|X \right] - \EE\left[L([\EE f](X),Y) \middle|X \right],\quad \mbox{with}\;\EE f\;\mbox{as in Definition~\ref{Def:detrain}}\\
\Bias_L(f|X) &= \EE\left[L([\EE f](X),Y) \middle|X \right] - \EE\left[L(\varpi_{Y|X}(X),Y) \middle|X \right]\\
\DBias_L(f|X) &= \EE\left[L([\EE f](X),Y) \middle|X \right] - \EE\left[L([\EE f](X),Y - \alpha) \middle|X \right]\\
\PBias_L(f|X) &= \EE\left[L([\EE f](X),Y - \alpha) \middle|X \right] - \EE\left[L(\varpi_{Y|X}(X),Y) \middle|X \right]\\
\Err_L (Y|X) &:= \EE_{Y|X} \left[L(\varpi_{Y|X}(X),Y) \middle| X \right]\\
\mbox{where:} & \alpha = \argmin_{\widehat{\alpha}\in\RR}  \EE\left[L([\EE f](X),Y - \widehat{\alpha})\right]\\
&  \varpi_{Y|X} = [x\mapsto \calL(Y|X=x)],\quad\mbox{i.e., the ``true'' density}
\end{align*}
where all generalization errors are conditional on the feature variable $X$, and computed w.r.t.~the joint data generating process $(X,Y)$.\\
Note that, by definition, $\Bias_L(f|X) = \DBias_L(f|X) + \PBias_L(f|X),$ and
$$\varepsilon_L(f|X) = \Bias_L(f|X) + \Var_L(f|X) + \Err_L(Y|X),$$
hence the quantities induce a trade-off decomposition of the $X$-conditional expected generalization error.\\
We call the defined random variables (all deterministic functions of $X$):
\begin{align*}
\Err_L(Y|X) &\quad \mbox{the irreducible generalization error at $X$},\\
\Var_L(f|X) &\quad \mbox{the prediction ($L$-)variance of $f$ at $X$},\\
\Bias_L(f|X) &\quad \mbox{the prediction ($L$-)bias of $f$ at $X$},\\
\DBias_L(f|X) &\quad \mbox{the location specitic part of } \Bias_L(f|X),\\
\PBias_L(f|X) &\quad \mbox{the shape specific part of } \Bias_L(f|X).
\end{align*}
We also define variants $\Err_L(Y|X=x) = \EE\left[L(f(X),Y)\middle|X=x \right]$,\\
$\Bias_L(f) = \EE\left[L([\EE f](X),Y) \middle|X =x\right] - \EE\left[L(\varpi_{Y|X}(X),Y) \middle|X =x\right],$ etcetera, where the conditioning is explicit on an input $x\in\calX$.\\
Furthermore, we define unconditional variants $\Err_L(Y/X) = \EE[\Err_L(Y|X)]$, $\Bias_L(f) = \EE[\Bias_L(f|X)],$ etc., for which in the naming the qualifier ``at $X$'' is omitted (while the dependence on $L,X,Y$ and possibly $f$ remains)
\end{Def}

Note that the two-step definition of the expected generalization error $\varepsilon_L(f)$ (through conditioning on $X$ and then taking total expectations) coincides with our previous definition, due to the law of iterated expectation.

The bias term has a well-known form for the log-loss (aka cross-entropy loss) and the squared integrated loss, closely related to their names:

\begin{Ex}\label{Ex:BiasDiv}
Let $f:\calX\rightarrow \Distr (\calY)$ be a prediction functional (e.g., $f=\EE g$ for a prediction strategy $g$). Then:
\begin{itemize}
\item[(i)] For the log-loss $L: (p,y)\mapsto -\log p(y)$, it holds that\\
$\Bias_L(f|X) = D_{KL}(f(X)\|\varpi_{Y|X}(X)),$ where $D_{KL}(p\| q) := \int_\calY p(y) \log\frac{p(y)}{q(y)}\;\diff y$ is the Kullback-Leibler divergence.

\item[(ii)] For the squared integrated loss $L:(p,y) \mapsto -2 p(y) + \|p|_2^2$, it holds that\\
$\Bias_L(f|X) = \|f(X) - \varpi_{Y|X}(X))\|^2_2,$ where $\|p\|_2^2 := \int_{\calY} p(y)^2\;\diff y$ is the $\mbox{L}^2$-norm of functions.
\end{itemize}
In the more general terminology of~\cite{dawid2007geometry}, $\Bias_L(f|X)$ is the ($L-$)divergence function of $f(X)$ w.r.t.~$\varpi_{Y|X}(X)$, and $\Bias_L(f)$ is its expectation over $X$, which conversely could also be taken as a definition of a ``conditional divergence function'' for the case of a general loss.
\end{Ex}

Important properties from the classical bias-variance trade-off transfer to the probabilistic case:

\begin{Prop}
\label{Prop:BVprob}
Consider the probabilistic supervised learning setting with:\\
a convex loss $L:\calP\times\calY\rightarrow \RR, \calP\subseteq \Distr(\calY),$
a data generative process $(X,Y)$ t.v.i.~$\calX\times \calY$,\\
a prediction strategy $f$ taking values in $[\calX\rightarrow\Distr(\calY)].$\\
Consider the bias-variance decomposition in Definition~\ref{Def:biasvar}:
$$\varepsilon_L(f|X)=\EE \left[L(f(X),Y)|X\right] =  \PBias(f|X) + \DBias(f|X) + \Var (f|X) + \Err_L (Y/X).$$
The right hand side expression of the trade-off have the following properties:
\begin{itemize}
\item[(i.a)] All summands on the RHS which depend on $f$, i.e., $\DBias_L(f|X),\PBias_L(f|X), \Var_L(f|X)$, are non-negative random variables.
\item[(i.b)] In particular, $\DBias_L(f),\PBias_L(f), \Var_L(f)$ are non-negative numbers.
\item[(ii.a)] $\varepsilon_L(f|X=x) = \Err_L (Y|X=x)$ if $f(x)=\varpi_{Y|X}(x)$, i.e., if the prediction at $x$ is perfect.
\item[(ii.b)] If $L$ is strictly proper, then $\varepsilon_L(f|X=x) = \Err_L (Y|X=x)$ if and only if $f(x)=\varpi_{Y|X}(x)$.
\item[(iii.a)] $\varepsilon(f) = \Err_L (Y/X)$ if $f=\varpi_{Y|X}$ ($X$-almost everywhere).
\item[(iii.b)] If $L$ is strictly proper, then $\varepsilon(f) = \Err_L (Y/X)$ if and only if $f=\varpi_{Y|X}$ ($X$-almost everywhere).
\item[(iv.a)] $\Var_L(f) = 0$ if $f$ is a prediction functional (i.e., a constant function-valued random variable).
\item[(iv.b)] If $L$ is strictly convex, then $\Var_L(f) = 0$ if an only $f$ is a prediction functional.
\end{itemize}
In particular, the above is a proper decomposition of the expected loss (with desirable minimality properties).
\end{Prop}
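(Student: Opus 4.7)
The plan is to first dispatch the decomposition itself --- a telescoping of the definitions --- and then handle the four groups of claims in turn, since each one is a short argument once the right tool is invoked. The identity $\varepsilon_L(f|X) = \PBias_L(f|X) + \DBias_L(f|X) + \Var_L(f|X) + \Err_L(Y|X)$ is literally a telescoping sum after substituting the definitions from Definition~\ref{Def:biasvar}: four terms cancel pairwise and only $\EE[L(f(X),Y)|X]$ survives. Everything substantive lives in items (i)--(iv).

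For the non-negativity statements (i.a)--(i.b), I would treat the three summands separately. First, $\Var_L(f|X) \geq 0$ is Jensen's inequality applied to the $f$-randomness using convexity of $L$ in its first argument (Definition~\ref{Def:proper}(i.a)), then taking $Y|X$-expectation. Second, $\DBias_L(f|X) \geq 0$ holds by construction, since $\alpha$ is chosen to minimize $\widehat\alpha \mapsto \EE[L([\EE f](X),Y-\widehat\alpha)]$ and $\widehat\alpha = 0$ is an admissible choice. Third, $\PBias_L(f|X) \geq 0$ is the most delicate: here I would use properness of $L$ together with a translation-compatibility of $L$ (enjoyed by the log-loss and by the squared integrated loss). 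Properness applied to the target $Y-\alpha$ with its true conditional $\calL(Y-\alpha|X)$ gives $\EE[L([\EE f](X),Y-\alpha)|X] \geq \EE[L(\calL(Y-\alpha|X), Y-\alpha)|X]$, and translation invariance identifies the right-hand side with $\Err_L(Y|X)$. The unconditional statement (i.b) then follows by $X$-integration of non-negative quantities.

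For the perfect-prediction characterizations (ii.a)--(iii.b), part (ii.a) is immediate by substituting $f(x) = \varpi_{Y|X}(x)$ into $\varepsilon_L(f|X=x)$. For the converse (ii.b), I would first condition on the realization of $f$ and apply Proposition~\ref{Prop:probmin}, the conditional form of strict properness, to each realization: strict properness gives $\EE[L(f(x),Y)|X=x,f] \geq \Err_L(Y|X=x)$ with equality iff $f(x) = \varpi_{Y|X}(x)$ almost surely. The unconditional statements (iii.a)--(iii.b) then follow by $X$-integration: since the integrand $\varepsilon_L(f|X) - \Err_L(Y|X)$ is a non-negative random variable (by the pointwise (ii.a)/(ii.b)), its total expectation vanishes iff it vanishes $X$-almost everywhere.

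Finally, for the variance characterizations: (iv.a) is immediate, because for a prediction functional $[\EE f](X) = f(X)$ identically, so $\Var_L(f|X) = 0$ pointwise. The interesting direction is (iv.b): if $\Var_L(f) = 0$ and $\Var_L(f|X) \geq 0$, it must vanish $X$-almost everywhere, and at each such $x$ the $Y|X=x$-integral of the non-negative quantity $L(f(x),Y) - L([\EE f](x),Y)$ vanishes; strict convexity of $L$ (the equality case of Jensen in Definition~\ref{Def:proper}(i.b)) makes this integrand strictly positive in $y$ unless $f(x)$ is a constant random variable, so $f(x)$ must be $f$-almost surely constant for ($X$-almost) every $x$, i.e., $f$ is a prediction functional. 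The main obstacle throughout is the $\PBias$ non-negativity claim, which as noted is not a consequence of convexity alone --- it leans on properness together with translation compatibility of $L$ with the shift $\alpha$, both of which hold for the two concrete losses of primary interest in this paper.
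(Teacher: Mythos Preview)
Your proposal is correct and follows essentially the same route as the paper: telescoping for the decomposition, Jensen/convexity for $\Var$, the $\argmin$ definition for $\DBias$, properness for $\PBias$, (strict) properness via Proposition~\ref{Prop:probmin} for (ii), integration over $X$ for (iii), and the equality case of Jensen (equivalently Lemma~\ref{Lem:functionjensen}) for (iv). The one noteworthy difference is that you are more careful than the paper on $\PBias\ge 0$: the paper's proof simply invokes properness, whereas you correctly observe that properness only gives $\EE[L([\EE f](X),Y-\alpha)\mid X]\ge \EE[L(\calL(Y-\alpha\mid X),Y-\alpha)\mid X]$, and identifying the right-hand side with $\Err_L(Y\mid X)$ requires translation invariance of the $L$-entropy --- a property held by the log-loss and the squared integrated loss but not by an arbitrary proper loss.
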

\begin{proof}
Substitution and summation on the right hand side immediately yields the left hand side (by linearity of expectation).\\

(i.a) The claim for $\Var$ follows from Lemma~\ref{Lem:geomar}~(iii), applied to conditioning on $X$.\\
The claim for $\DBias$ follows from the definition of $\alpha$ and $\argmin$.\\
The claim for $\PBias$ follows from properness of $L$, noting that $\varpi_{Y|X}(X)$ is the law of $Y|X$. (i.b) is a direct consequence, by taking total expectations.\\

(ii.a),(ii.b) follow directly from (strict) properness of $L$, noting that due to the conditioning, the first arguments are all fixed elements of $\Distr(\calY)$.\\

(iii.a) is directly implied by (ii.a), integrating over $x$.\\
We prove (iii.b) by contraposition. Assume $f(x)\neq\varpi_{Y|X}(x)$ for all $x\in\calU$ with an $\calU\subseteq \calX$ of positive probability under $X$. Then, by (ii.b), $\varepsilon_L(f|X\in\calU)\neq \Err_L(Y|X\in\calU)$, and by (i.a), $\varepsilon_L(f|X\in\calU) - \Err_L(Y|X\in\calU)\gneq 0$. Again using (i.a), this implies $\varepsilon_L(f|X) - \Err_L(Y/X)\gneq 0$ which was the negated supposition to prove.\\

(iv.a) and (iv.b) are a rephrasing of Proposition~\ref{Lem:functionjensen}, by substituting definitions.
\end{proof}

Analogies between the probabilistic bias-variance trade-off given by Proposition~\ref{Prop:BVprob} and the classical one in Proposition~\ref{Prop:BVclass} may be obtained in the following ways:

First, substitution of a suitable parametric Gaussian density as the output of $f$, as in section~\ref{sec:classprob}, recovers the classical decomposition (up to scaling and constants, as well as the square in the bias term).

Second, noting that in both cases, the variance terms $\Var$ are characterized as being independent of the ground truth (true labels or true distribution); the bias terms are characterized by not depending on the randomness in the prediction strategy $f$ (e.g., the training set), but on $f$ only through its expectation $\EE f$ which is a prediction functional. Both trade-offs have a bias term that quantify dislocation, or additive distance of the average prediction from the ground truth, $\PBias$ and $\DBias$. Both trade-offs have a ``minimum achievable expected error'' $\Err$ corresponding to a perfect prediction, which is the generative noise term $\Var(\varepsilon)$ in the classical setting; in the probabilistic setting, the minimum error term corresponds to a conditional entropic quantity, as it will be shown in Lemma~\ref{Lem:Entpre}~(iv) below. However, the additional term $\PBias$, which quantifies the mismatch in the shape of the predicted density with the shape of the true density, appears only in, and is thus characteristic for, the probabilistic setting.

An important practical consequence of Proposition~\ref{Prop:BVprob} is Corollary~\ref{Cor:posterior} below which is the probabilistic equivalent of the classical variance reduction lemma.\\
For optimal readability in frequentist and Bayesian contexts alike, we give two (up to conditioning) equivalent formulations of the same statement.

\begin{Cor}
\label{Cor:posterior}
Consider the probabilistic supervised learning setting, as in Proposition~\ref{Prop:BVprob}.
Let $f$ be a prediction strategy t.v.i.~$[\calX\rightarrow \Distr (\calY)]$.
\begin{itemize}
\item[(i)] It holds that
$$\Var_L(\EE f|X) \le \Var_L(f|X),\; \Bias_L(\EE f|X) = \Bias_L(f|X),\;\mbox{in particular}\;\varepsilon_L(\EE f)\le \varepsilon_L(f).$$
\item[(ii)] Let $\theta$ be any random variable, let $\calD$ be the training data on which $f$ depends.
Let $f_{\setminus \theta} := \EE_{\theta|\calD}[f|\calD]$, i.e.~$f_{\setminus \theta}$, is $f$, with $\theta$ marginalized conditional on the training data $\calD$.
Then,
$$\Var_L(f_{\setminus \theta}|X) \le \Var_L(f|X),\; \Bias_L(f_{\setminus \theta}|X) = \Bias_L(f|X),\;\mbox{in particular}\;\varepsilon_L(f_{\setminus \theta})\le \varepsilon_L(f)$$
\end{itemize}
\end{Cor}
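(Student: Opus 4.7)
The plan is to reduce both parts of the corollary to a single key observation: the map $f\mapsto \EE f$ (and more generally the partial marginalization $f\mapsto f_{\setminus\theta}$) preserves the quantity $\EE f$ that figures in the bias term, while reducing the first term $\EE[L(f(X),Y)|X]$ of $\Var_L$ by Jensen's inequality. The bias-variance decomposition of Proposition~\ref{Prop:BVprob} then delivers the loss inequality for free. I expect no serious obstacle; the only subtlety is to cleanly separate the roles of the randomness in $f$ and the ``outer'' conditioning on $X$.

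For part (i), note that $\EE f$ is, by Definition~\ref{Def:detrain}, a deterministic element of $[\calX\rightarrow\Distr(\calY)]$, hence a prediction functional, so $\EE[\EE f] = \EE f$. Substituting into Definition~\ref{Def:biasvar} one immediately gets
\[
\Var_L(\EE f|X) = \EE[L([\EE f](X),Y)|X] - \EE[L([\EE(\EE f)](X),Y)|X] = 0,
\]
and
\[
\Bias_L(\EE f|X) = \EE[L([\EE(\EE f)](X),Y)|X] - \EE[L(\varpi_{Y|X}(X),Y)|X] = \Bias_L(f|X).
\]
The variance inequality $\Var_L(\EE f|X) = 0 \le \Var_L(f|X)$ is then immediate from Proposition~\ref{Prop:BVprob}(i.a). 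Combining the two identities with the bias-variance decomposition $\varepsilon_L(f|X)=\Var_L(f|X)+\Bias_L(f|X)+\Err_L(Y|X)$ gives $\varepsilon_L(f|X)-\varepsilon_L(\EE f|X)=\Var_L(f|X)\ge 0$, so that $\varepsilon_L(\EE f)\le \varepsilon_L(f)$ after taking total expectations (this last statement is also immediate from Lemma~\ref{Lem:functionjensen}).

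For part (ii), the central step is the tower property: for every $x\in\calX$,
\[
\EE[f_{\setminus\theta}(x)] = \EE_{\calD}\bigl[\EE_{\theta|\calD}[f(x)|\calD]\bigr] = \EE[f(x)],
\]
so $\EE f_{\setminus\theta} = \EE f$ as prediction functionals. Substituting this equality into the definition of $\Bias_L$ yields $\Bias_L(f_{\setminus\theta}|X) = \Bias_L(f|X)$ at once. For the variance, the convexity of $L$ (Definition~\ref{Def:proper}(i.a)) applied conditionally on $\calD$ gives, pointwise in $x,y$,
\[
L\bigl(\EE_{\theta|\calD}[f(x)|\calD],\,y\bigr) \le \EE_{\theta|\calD}[L(f(x),y)|\calD],
\]
i.e.\ $L(f_{\setminus\theta}(x),y) \le \EE_{\theta|\calD}[L(f(x),y)|\calD]$. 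Taking the outer expectation over $\calD$ and over $Y|X$ and invoking the tower property once more,
\[
\EE[L(f_{\setminus\theta}(X),Y)|X] \le \EE[L(f(X),Y)|X].
\]
Subtracting the common quantity $\EE[L([\EE f](X),Y)|X] = \EE[L([\EE f_{\setminus\theta}](X),Y)|X]$ from both sides gives $\Var_L(f_{\setminus\theta}|X)\le \Var_L(f|X)$, and the bias-variance decomposition then yields $\varepsilon_L(f_{\setminus\theta})\le \varepsilon_L(f)$, completing the proof. Part (i) may in fact be recovered as the special case of (ii) where $\theta$ carries all the randomness in $f$ given $\calD$, or equivalently where $\calD$ is taken to be trivial.
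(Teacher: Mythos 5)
Your proposal is correct and follows essentially the same route as the paper's proof: the bias term is preserved because $\EE f_{\setminus\theta}=\EE f$ by iterated expectation, the variance term decreases by conditional Jensen applied to $\EE_{\theta|\calD}$ (the paper cites Lemma~\ref{Lem:functionjensencond}~(ii.a) for exactly this step), and the loss inequality then follows from the decomposition in Proposition~\ref{Prop:BVprob}. Your closing remark that (i) is the degenerate case of (ii) matches the paper's own reduction ("to obtain (i), remove the conditioning"), and your explicit observation that $\Var_L(\EE f|X)=0$ is a harmless sharpening.
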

\begin{proof}
We prove the slightly more general formulation (ii); to obtain (i), remove the conditioning in the notation.
It holds that $\Bias_L(f_{\setminus \theta}|X) = \Bias_L(f|X)$ since $\EE[f_{\setminus \theta}|X] = \EE[f|X]$, by construction.
For $\Var_L(f_{\setminus \theta}|X) \le \Var_L(f|X)$, apply Lemma~\ref{Lem:functionjensencond}~(ii.a) to the expectation $\EE_{\theta | \calD}$.
The statement for $\varepsilon_L$ follows from the decomposition in Proposition~\ref{Prop:BVprob}.
\end{proof}

Intuitively, Corollary~\ref{Cor:posterior}~(ii) implies that when predicting using fully Bayesian models, it is better (or equally good), in terms of any convex loss, to marginalize over free parameters ($\theta$ in the Corollary), instead of sampling and/or leaving them in as additional randomness in the prediction functional $f$.
We will expand upon further algorithmic consequences of this for bagging and model averaging in the later section~\ref{sec:baggeraging}.

\subsection{Information, predictability, and independence}
\label{sec:entropy}

In this section, we study the dependency relation between two random variables $X,Y$, t.v.i.~ $\calX,\calY$, from the perspective of probabilistic predictability and relate this to information theoretic quantities.

In this section, we will consider a strictly proper, convex loss function $L:\calP\times \calY\rightarrow \RR$, applicable to the law of $Y$, i.e., we assume that $\calL(Y)\in \calP$. We note that quantities defined below may depend on the choice of $L$.

We will further write:
\begin{align*}
p_Y&:= \calL(Y)\quad\mbox{for the law of}\;Y\\
p_{Y|X}(x)&:= \calL(Y|X=x)\quad\mbox{for the law of}\;Y\;\mbox{conditional on}\;X=x
\end{align*}
As per the usual convention, the sub-indices $X,Y,Y|X$ are not considered values but part of the notation, thus expectations do not affect those sub-indices.

We now define key information theoretic quantities\footnote{The quantities in Definition~\ref{Def:entropies} are related to, but not identical with the key quantities in~\cite{dawid2007geometry}, even though they look similar on first glance. While~\citet{dawid2007geometry} consider divergences between two different distributions on $\calY$, our divergences are of an $\calX$-valued random variable to an $\calY$-valued random variable. Neither appear to be an obvious generalisation of the other, since~\citet{dawid2007geometry} consider distributions, while we consider potentially associated random variables, thus the former is not obtained for taking $\calX = \calY$.}:

\begin{Def}\label{Def:entropies}
Let $(X,Y)$ be a pair of random variables taking values as above. We define:
\begin{align*}
\Ent (Y) &= \EE\left[L(p_Y,Y)\right] \\
\Ent (Y|X=x) &:= \EE_{Y|X} \left[L(p_{Y|X}(x),Y) \middle| X=x \right]\quad\mbox{for}\;x\in\calX\\
\Ent (Y|X\in\calA) &:= \EE_{Y|X} \left[L(p_{Y|X}(x),Y) \middle| X \subseteq \calA\right]\quad\mbox{for}\;\calA\subseteq\calX\\
\Ent (Y|X) &:= \EE_{Y|X} \left[L(p_{Y|X}(X),Y) \middle| X \right]\\
\Ent (Y / X) &:= \EE_X\left[\Ent (Y|X)\right] = \EE_X \EE_{Y|X} \left[L(p_{Y|X}(X),Y) \middle| X \right].
\end{align*}
The value $\Ent (Y)$ is called the ($L-$)\emph{entropy} of $Y$.
The value $\Ent (Y|X=x)$ is called the \emph{conditional ($L$-)entropy} of $Y$ conditioned on the event $X=x$, similarly $\Ent (Y|X\subseteq \calA)$. The random variable $\Ent (Y|X)$ is called the \emph{feature-informed conditional ($L$-)entropy} of $Y$ conditioned on $X$. We call $\Ent (Y/X)$ the (total) \emph{conditional ($L$-)entropy} of $Y$ above $X$.\\
The implicit dependence on $L$ will not be made notationally explicit, only verbally where appropriate, but it will be understood.
\end{Def}

We note that if $L$ is the logarithmic loss, then Definitions~\ref{Def:entropies} define classically familiar\footnote{This is correct with a somewhat non-standard choice regarding $\Ent(Y|X)$, and the introduction of $\Ent(Y/X)$, which is to resolve a notational clash that readers familiar with both information theory and statistics may spot or will have observed. Namely:\\
In statistical notation, the conditional expectation $\EE[Y|X]$ is a random variable that is a function of the value of $X$. As $\Ent$ is an operator quite similar to the expectation $\EE$, in notational concordance, $\Ent(Y|X)$ should behave in analogy, i.e., it should be random variable which is a function of the value of $X$.\\
In information theoretic notation, however, the conditional entropy $\Ent (Y|X)$ is a number and not a random variable, which constitutes the notational clash. However, the information theoretic $\Ent (Y|X)$ is nothing else the expectation of the natural statistical definition of $\Ent(Y|X)$.\\
As the statistical variant will become important, we cannot just discard it but need to resolve the clash.
To this end, in our notation, $\Ent(Y|X)$ is a random variable which is a function of the value of $X$, and $\Ent(Y/X)$ is its expectation, which equals $\Ent(Y|X)$ in information theoretic notation.} entropic quantities;
for the choice of squared loss, we obtain the integrated squared error measures familiar from density estimation.

We will relate them two types of optimal predictions:

\begin{Def}\label{Def:bestuninf}
We define two ``best'' prediction functionals:
\begin{itemize}
\item[(i)] the \emph{best prediction functional} $\varpi_{Y|X}: x\mapsto \calL (Y|X=x) = p_{Y|X}(x)$ .
\item[(ii)] the \emph{best uninformed prediction functional} $\varpi_{Y}: x\mapsto \calL (Y) = p_Y$.
\end{itemize}
\end{Def}
We note that Definition~\ref{Def:bestuninf}~(i) agrees with the definition of the ``perfect prediction'' in Definition~\ref{Def:biasvar} of the bias-variance decomposition; and Definition~\ref{Def:bestuninf}~(ii) agrees with the definition of best uninformed prediction functional given in Section~\ref{Sec:uninfbase}.

We repeat further definitions from Section~\ref{Sec:uninfbase} on performance of predictors to which we will relate the entropic quantities:

\begin{Def}
A prediction functional $f:\calX\rightarrow \calP$ is called:
\begin{itemize}
\item[(i)] \emph{uninformed} if $f$ is constant, i.e., if $f(x)=f(x')$ for all $x,x'\in\calX$.
\item[(ii)] \emph{($L$-)better-than-uninformed} if $\varepsilon_L(f)\lneq \varepsilon_L(\varpi_Y)$
\end{itemize}
A prediction strategy, i.e., a random variable t.v.i.~$[\calX\rightarrow \calP]$, is called:
\begin{itemize}
\item[(i)] \emph{uninformed} if all realizations of $f$ are constant functionals, i.e., if $f(x)|f=f(x')|f$ for all $x,x'\in\calX$.
\item[(ii)] \emph{($L$-)better-than-uninformed} if $\varepsilon_L(f)\lneq \varepsilon_L(\varpi_Y)$
\end{itemize}
As before, all prediction strategies are assumed independent of $(X,Y)$.
\end{Def}

The justification for the terminology ``best'' (uninformed or overall prediction functional) is given by the following result, which also relates the generalization performances of the best predictors to the entropy $\Ent(Y)$ to the conditional entropy $\Ent(Y/X)$, and the latter two to each other in a classically familiar way:

\begin{Lem}\label{Lem:Entpre}
The following hold for prediction functionals:
\begin{itemize}
\item[(i)] $\Ent (Y) = \inf_f \varepsilon(f)$ where the minimum is taken over uninformed predictors $f:\calX\rightarrow \calP$.
The infimum is achieved as a minimum, by and only by the best uninformed predictor $\varpi_{Y}$.
\item[(ii)] $\Ent (Y/X) = \inf_f \varepsilon (f)$ where the minimum is taken over all prediction functionals $f:\calX\rightarrow \calP$.
The infimum is achieved as a minimum, by and only by the best predictor $\varpi_{Y|X}$.
\end{itemize}
In particular:
\begin{itemize}
\item[(iii)] $\Ent (Y) = \varepsilon(\varpi_{Y})$,
\item[(iv)] $\Ent (Y/X) = \varepsilon(\varpi_{Y|X}) = \Err_L(Y/X)$, and
\item[(v)]  $\Ent(Y/X)\le \Ent(Y)$.
\end{itemize}
\end{Lem}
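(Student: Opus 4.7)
The plan is to prove (iii) and (iv) first by direct substitution of definitions, then obtain (i) from Proposition~\ref{Prop:uninformed} and (ii) from Proposition~\ref{Prop:probmin}, and finally derive (v) as an immediate consequence of the two variational characterisations. This packaging works because all the nontrivial content — strict properness yielding minimality of $\calL(Y|X=x)$ pointwise and of $\calL(Y)$ among constants — has already been established.

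For (iii), I would simply note that since $\varpi_Y$ is the constant prediction functional $x\mapsto p_Y$, one has $\varepsilon(\varpi_Y)=\EE[L(\varpi_Y(X),Y)]=\EE[L(p_Y,Y)]=\Ent(Y)$ by definition. For (iv), I would use the tower property: $\varepsilon(\varpi_{Y|X})=\EE[L(\varpi_{Y|X}(X),Y)]=\EE_X\EE_{Y|X}[L(p_{Y|X}(X),Y)\mid X]=\EE_X[\Ent(Y|X)]=\Ent(Y/X)$, where the last equality is the definition of $\Ent(Y/X)$. The identification $\Ent(Y/X)=\Err_L(Y/X)$ is a line-by-line comparison of Definitions~\ref{Def:biasvar} and~\ref{Def:entropies}, since both expand to $\EE_X\EE_{Y|X}[L(\varpi_{Y|X}(X),Y)\mid X]$.

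For (i), the set of uninformed prediction functionals is exactly $\{u_p : p\in\calP\}$. Proposition~\ref{Prop:uninformed}, applied with the strictly proper loss $L$ fixed in this section, states that $\varpi_Y$ uniquely minimises $\varepsilon_L$ over uninformed prediction functionals, and combined with (iii) the minimal value is $\Ent(Y)$. For (ii), I would invoke Proposition~\ref{Prop:probmin}: for each fixed $x\in\calX$, $p_{Y|X}(x)=\calL(Y|X=x)$ is the unique minimiser of $p\mapsto \EE[L(p,Y)\mid X=x]$, so that for any prediction functional $f:\calX\to\calP$,
\[
\EE[L(f(x),Y)\mid X=x]\;\ge\;\EE[L(\varpi_{Y|X}(x),Y)\mid X=x],
\]
with equality iff $f(x)=\varpi_{Y|X}(x)$. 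Taking total expectation over $X$ and using (iv), one obtains $\varepsilon(f)\ge \Ent(Y/X)$, with equality iff $f=\varpi_{Y|X}$ $X$-almost everywhere, proving (ii).

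Finally, (v) follows because every uninformed prediction functional is a prediction functional, so the infimum in (ii), taken over a larger set, is at most the infimum in (i); substituting the values from (i) and (ii) gives $\Ent(Y/X)\le \Ent(Y)$. There is no real obstacle: the only step requiring care is keeping the conditioning straight in (ii) so that strict properness applies pointwise before integration, but this is already packaged in Proposition~\ref{Prop:probmin}.
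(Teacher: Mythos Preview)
Your proof is correct and essentially equivalent to the paper's, with only organisational differences. You establish (iii) and (iv) first by direct substitution and then derive (i) and (ii) from them, whereas the paper proves (i) and (ii) first and obtains (iii) and (iv) as corollaries; both orderings are fine since the substitution step is immediate either way. The more substantive difference is in the citations: for (i) you invoke Proposition~\ref{Prop:uninformed}, while the paper argues directly from strict properness applied to the constant $q=f(x)$; for (ii) you invoke Proposition~\ref{Prop:probmin} and integrate the pointwise inequality, while the paper instead cites the bias-variance decomposition (Proposition~\ref{Prop:BVprob}), whose non-negativity of $\Bias_L$ and $\Var_L$ gives $\varepsilon(f)\ge\Err_L(Y/X)$. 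Your route via Proposition~\ref{Prop:probmin} is arguably the more direct one, since the relevant part of Proposition~\ref{Prop:BVprob} is itself proved by exactly the conditional-properness argument you spell out. Part (v) is handled identically in both.
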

\begin{proof}
(i) Let $f\in [\calX\rightarrow \calP]$ be uninformed. By definition of $\varepsilon$, one has
$\varepsilon(f) = \EE \left[L(f(X),Y)\right].$ By definition of uninformed, there is $q$ such that $q=f(x)$ for all $x\in\calX$, thus
$\EE \left[L(f(X),Y)\right] = \EE \left[L(q,Y)\right].$
Since $L$ is strictly proper, the last expression is minimized in $q$ for taking $q=p_Y$, and the minimal value is exactly $\Ent(Y)$, by definition of strict properness.\\
(ii) $\varepsilon (f)\ge \Ent(Y/X)$ is implied by Proposition~\ref{Prop:BVprob} and taking expectations over $X$. The fact that the value $\Ent(Y/X)$ is achieved by the choice $\varpi_{Y|X}$, i.e., that $\Ent(Y/X)=\varepsilon(\varpi_{Y|X})$, follows from substitution of definitions. Strict properness of $L$ implies that this is the unique minimizer.\\
(iii) is a direct consequence of (i), and (iv) is a direct consequence of (ii).\\
(v) follows from the fact that the set over which (ii) is minimizing includes the set over which (i) is minimizing.
\end{proof}

We would like to note that while the values $\Ent(Y),\Ent(Y/X)$ may depend on the choice of the loss function $L$, the best uninformed predictor and the best predictor do not depend on that choice as long as the loss is strictly proper, as it has been assumed in this section. The only concept in Definition~\ref{Def:bestuninf} that may depend on the choice on $L$ is that of being ($L$-)better-than-uninformed.

\begin{Thm}
\label{Thm:entropy}
The following are equivalent:
\begin{itemize}
\item[(i)] $X$ and $Y$ are statistically independent.
\item[(ii)] There is no ($L$-)better-than-uninformed prediction strategy for predicting $Y$ from $X$.
\item[(iii)] There is no ($L$-)better-than-uninformed prediction functional for predicting $Y$ from $X$.
\item[(iv)] $\varepsilon(\varpi_{Y|X}) = \varepsilon(\varpi_{Y})$.
\item[(v)] $\varpi_{Y|X} = \varpi_{Y}$.
\item[(vi)] $\Ent (Y/X) = \Ent (Y)$.
\end{itemize}
\end{Thm}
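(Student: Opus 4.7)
The plan is to establish the ring of implications (i)$\Rightarrow$(v)$\Rightarrow$(iv)$\Rightarrow$(iii)$\Rightarrow$(ii)$\Rightarrow$(i), with (iv)$\Leftrightarrow$(vi) handled as a separate observation. Most of the equivalences reduce to rearranging definitions, so the bulk of the work lies in the two places where strict properness or strict convexity of $L$ must be used to extract a uniqueness statement.

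First I would dispatch (iv)$\Leftrightarrow$(vi) by simply quoting Lemma~\ref{Lem:Entpre}(iii)(iv), which give $\varepsilon(\varpi_Y)=\Ent(Y)$ and $\varepsilon(\varpi_{Y|X})=\Ent(Y/X)$. Then the trivial direction (v)$\Rightarrow$(iv) is immediate (equal functionals yield equal losses), and (i)$\Rightarrow$(v) is the definitional fact that independence forces $\calL(Y|X=x)=\calL(Y)$ for $X$-almost all $x$, hence $\varpi_{Y|X}=\varpi_Y$. Conversely, (v)$\Rightarrow$(i) is also purely measure-theoretic: if $\varpi_{Y|X}(x)=\calL(Y)$ for $X$-almost every $x$, then the conditional law of $Y$ given $X$ coincides with the marginal law of $Y$, which is exactly the statement of independence.

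The first genuinely nontrivial step is (iv)$\Rightarrow$(v). Here I would use Lemma~\ref{Lem:Entpre}(ii), which, under the strict properness of $L$, asserts that $\varpi_{Y|X}$ is the \emph{unique} minimizer of $\varepsilon$ among all prediction functionals. The uninformed functional $\varpi_Y$ is also a prediction functional, and assumption (iv) says it attains the same minimal value $\varepsilon(\varpi_{Y|X})$. Uniqueness of the minimizer therefore forces $\varpi_Y=\varpi_{Y|X}$ ($X$-almost everywhere), which is (v).

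Finally I would close the loop by proving (iii)$\Leftrightarrow$(iv) and (ii)$\Leftrightarrow$(iii). For (iii)$\Leftrightarrow$(iv): by Lemma~\ref{Lem:Entpre}(v) we always have $\varepsilon(\varpi_{Y|X})\le\varepsilon(\varpi_Y)$; absence of a better-than-uninformed functional (which includes $\varpi_{Y|X}$ itself) forces the reverse inequality, giving (iv); conversely, if (iv) holds then $\varepsilon(\varpi_{Y|X})=\varepsilon(\varpi_Y)$ is the infimum over functionals by Lemma~\ref{Lem:Entpre}(ii), so no functional can strictly beat $\varpi_Y$. For (ii)$\Leftrightarrow$(iii), one direction is trivial since every functional is a (degenerate) strategy; for the other direction I would invoke Lemma~\ref{Lem:functionjensen}: if $f$ is a better-than-uninformed strategy, then the functional $\EE f$ satisfies $\varepsilon(\EE f)\le\varepsilon(f)\lneq\varepsilon(\varpi_Y)$, producing a better-than-uninformed functional. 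The main obstacle is really just (iv)$\Rightarrow$(v): one must be careful that the uniqueness afforded by strict properness is uniqueness of the argmin in $\Distr(\calY)$ pointwise (up to $X$-null sets), rather than merely uniqueness of the value. Otherwise the argument is an exercise in chaining the existing lemmas.
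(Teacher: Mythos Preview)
Your proof is correct and uses the same ingredients as the paper's proof---Lemma~\ref{Lem:Entpre} for the entropy/loss identities and the uniqueness of the minimizer under strict properness, and convexity of $L$ for the strategy-to-functional reduction. The organization differs slightly: the paper runs a single cycle (i)$\Rightarrow$(ii)$\Rightarrow$(iii)$\Rightarrow$(iv)$\Rightarrow$(v)$\Rightarrow$(i) (plus (v)$\Leftrightarrow$(vi)), proving (i)$\Rightarrow$(ii) \emph{directly} via independence (collapsing $f(X)$ to the constant $q=\EE[f(X)]\in\Distr(\calY)$, then comparing $q$ to $p_Y$ by properness), whereas you route (i) through (v) definitionally and handle the strategy-versus-functional gap separately via Lemma~\ref{Lem:functionjensen}. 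Your organization is arguably cleaner, since it isolates the use of independence (in (i)$\Leftrightarrow$(v)) from the use of convexity (in (ii)$\Leftrightarrow$(iii)), while the paper's direct (i)$\Rightarrow$(ii) intertwines both; but the substance is the same.
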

\begin{proof}
(i)$\Rightarrow$ (ii): Let $f$ be any prediction strategy, let $q:=\EE[f(X)]$, note that $q\in \calP$. By convexity of $L$, it holds that $\EE[L(q,Y)|f(X)] \le \EE[L(f(X),Y)|f(X)]$. Since $f$ is independent of $(X,Y)$ and $X$ is independent of $Y$ by assumption, it also holds that $f(X)$ is independent of $Y$. Thus, $\EE[L(q,Y)|f(X)] = \EE[L(q,Y)]$ which implies $\EE[L(q,Y)] \le \EE[L(f(X),Y)|f(X)]$. Taking total expectations implies $\EE[L(q,Y)] \le \EE[L(f(X),Y)] = \varepsilon (f)$. Strict properness of $L$ implies $\EE[L(p_Y,Y)]\le \EE[L(q,Y)]$. Lemma~\ref{Lem:Entpre} yields $\varepsilon(\varpi_{Y}) = \EE[L(p_Y,Y)].$ Putting all inequalities together (in reverse order) yields $\varepsilon(\varpi_{Y})\le \varepsilon (f)$, so $f$ is not ($L$-)better-than-uninformed. Since $f$ was arbitrary, (ii) is implied.\\
(ii)$\Rightarrow$ (iii): Any prediction functional is a (constant) prediction strategy, thus (ii) implies (iii).\\
(iii)$\Rightarrow$ (iv): This is implied by Lemma~\ref{Lem:Entpre}~(i) and~(ii), since the set of uninformed prediction functionals is contained in the set of prediction functionals.\\
(iv)$\Rightarrow$ (v): By Lemma~\ref{Lem:Entpre}~(i), $\varpi_{Y}$ is the unique prediction functional $f$ with $\varepsilon(f) = \Ent (Y)$. By Lemma~\ref{Lem:Entpre}~(ii), $\varpi_{Y|X}$ is the unique prediction functional $g$ with $\varepsilon(g) = \Ent (Y/X)$. Since $\Ent (Y/X)=\Ent (Y)$, the unique $f$ and the unique $g$ must coincide, hence $\varpi_{Y|X}=\varpi_{Y}.$\\
(v)$\Rightarrow$ (i): By definition of $\varpi_{Y|X}$ and $\varpi_{Y}$, the assumption (vi) implies that $p_{Y|X}(x) = p_Y$ for any $x\in \calX$. The latter is a possible definition of, or characterization of statistical independence of $X$ and $Y$.\\
(v)$\Leftrightarrow$ (vi): By Lemma~\ref{Lem:Entpre}, $\Ent (Y/X)=\varepsilon(\varpi_{Y|X})$, and $\Ent (Y)=\varepsilon(\varpi_{Y}),$ which implies the claim by substitution.\\
\end{proof}

Theorem~\ref{Thm:entropy} implies some practically interesting equivalences.
Comparing with the practical rationale in Section~\ref{Sec:uninfbase}, one may note that estimating $\varpi_{Y|X}$ is at the core of supervised learning, estimating $\varpi_{Y}$ is density estimation - thus the best prediction functional may be considered to be the best supervised prediction, and the best uninformed prediction functional may be considered to be the best density estimator.
Comparing the performance of the two, as in determining whether statement~(iv) of Theorem~\ref{Thm:entropy} is true, is the scientifically necessary act of baseline comparison.\\
Per the equivalences in Theorem~\ref{Thm:entropy}, this act of model comparison is equivalent to determining whether $X$ and $Y$ are independent; the sub-task of estimating the performance of the best supervised learner and the best density estimator becomes equivalent with estimating the entropy of $Y$, and estimating the conditional entropy of $Y$ w.r.t~$X$.

We recur to a particularly interesting implication of these equivalences, namely the identification of statistical independence testing with comparison of probabilistic models, in Section~\ref{Sec:hypotest}.

For the remainder of the section, we focus on further connections with the classical (Shannon/differential) entropies, in the case where $L$ in Definition~\ref{Def:entropies} is taken to be the log-loss and $(X,Y)$ are jointly absolutely continuous\footnote{Absolute continuity is not an entirely necessary assumption but makes exposition easier.}, t.v.i.~$\calX\times\calY$. For this case, we will in addition consider quantities directly related to $X$ and the respective marginal, joint, and conditional density functions $p_X,p_Y,p_{XY}, p_{Y|X}$ as per the usual notation.

\begin{Def}\label{Def:entropiesShannon}
In the situation above, we define:
\begin{align*}
\Ent (Y) &= \EE \left[- \log p{Y}(Y) \right] \\
\Ent (X) &= \EE \left[- \log p_{X}(X) \right] \\
\Ent (Y,X) &= \EE \left[- \log p_{X,Y}(X,Y) \right] \\
\Ent (Y|X) &:= \EE_{Y|X} \left[- \log p_{Y|X}(Y) \middle| X \right]\\
\Ent (Y / X) &:= \EE\left[\Ent (Y|X)\right] = \EE_X \left[\EE_{Y|X} \left[- \log p_{Y|X}(Y) \middle| X \right]\right]\\
&= - \int_{\calX}\int_{\calY} \log p_{Y|X}(y|x)\cdot p_{X,Y}(x,y)\;\diff x\;\diff y.
\end{align*}
The value $\Ent (X)$ is called the \emph{entropy} of $X$.\\
The value $\Ent (Y,X)$ is called the \emph{joint entropy} of $X$ and $Y$.\\
All other quantities are called the same as in Definition~\ref{Def:entropies}.
\end{Def}

The quantities occurring in both Definitions~\ref{Def:entropies} and~\ref{Def:entropiesShannon} agree for the choice of $L$ being the log-loss in Definitions~\ref{Def:entropies}; the quantities exclusive to Definition~\ref{Def:entropiesShannon} all include integration over a logarithm that involves a density over the value range of $X$.

Classically, one has the following well-known relations between marginal and joint entropies:

\begin{Prop}
\label{Prop:entropy}
The following inequalities hold classically for the (Shannon/differential) entropies defined above:
\begin{itemize}
\item[(i)] $\Ent (X)\le \Ent(X,Y)\;\mbox{and}\;\Ent (Y)\le \Ent(X,Y)$
\item[(ii)] $ \Ent(X,Y)\le \Ent(X)+ \Ent(Y),$ and equality holds if and only if $X$ and $Y$ are statistically independent (as random variables).
\end{itemize}
If furthermore $\calX$ resp.~$\calY$ is discrete, then
\begin{itemize}
\item[(iii)] $\Ent (X)\ge 0\;\mbox{resp.}\;\Ent (Y)\ge 0$
\end{itemize}
(this is not necessarily true if $\calX$ resp.~$\calY$ is not discrete)
\end{Prop}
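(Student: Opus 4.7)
The plan is to derive (i), (ii), and (iii) from the chain rule $\Ent(X,Y)=\Ent(X)+\Ent(Y/X)$ (and the symmetric version obtained by swapping $X$ and $Y$), together with the results already established in the excerpt for the log-loss case.

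First I would establish the chain rule by elementary computation: using $p_{X,Y}(x,y)=p_X(x)\cdot p_{Y|X}(y\mid x)$, take $-\log$ of both sides and apply linearity of $\EE$, giving
\[\Ent(X,Y)=\EE[-\log p_X(X)]+\EE[-\log p_{Y|X}(Y\mid X)]=\Ent(X)+\Ent(Y/X),\]
where the second summand matches Definition~\ref{Def:entropiesShannon}. The analogous identity $\Ent(X,Y)=\Ent(Y)+\Ent(X/Y)$ follows by symmetry.

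For (ii), apply Lemma~\ref{Lem:Entpre}(v) with $L=L_\ell$ to obtain $\Ent(Y/X)\le \Ent(Y)$. Combined with the chain rule this gives $\Ent(X,Y)\le \Ent(X)+\Ent(Y)$. The equality characterization is then immediate from Theorem~\ref{Thm:entropy}: by (vi) $\Leftrightarrow$ (i) of that theorem, $\Ent(Y/X)=\Ent(Y)$ holds iff $X$ and $Y$ are statistically independent. Substituting into the chain rule yields equality in (ii) exactly in that case. Alternatively (and equivalently), one can rewrite $\Ent(X)+\Ent(Y)-\Ent(X,Y)$ as $\EE\bigl[\log\tfrac{p_{X,Y}(X,Y)}{p_X(X)p_Y(Y)}\bigr]=D_{KL}(p_{X,Y}\|p_Xp_Y)\ge 0$, with equality iff $p_{X,Y}=p_Xp_Y$, which is an alternative route via strict properness of the log-loss.

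For (iii), in the discrete case the pmf satisfies $p_X(x)\le 1$, so $-\log p_X(x)\ge 0$ pointwise, hence $\Ent(X)=\EE[-\log p_X(X)]\ge 0$; the same argument applies to $Y$. For (i), the chain rule reduces the claim to $\Ent(Y/X)\ge 0$ and (symmetrically) $\Ent(X/Y)\ge 0$. Under discreteness of $Y$ (respectively $X$) the conditional densities $p_{Y|X}(\cdot\mid x)$ are pmfs bounded by $1$, so the same pointwise argument as in (iii) gives non-negativity of the conditional entropy; this is the case where the claim is cleanest. The main obstacle—and the only delicate point—is part (i) in the genuinely continuous case, where $\Ent(Y/X)$ need not be non-negative as a differential entropy; here the inequality has to be read in the conventional differential sense in which the joint density exists on $\calX\times\calY$ and the marginal/conditional decomposition is well-defined, so that $\Ent(Y/X)\ge 0$ holds whenever the conditional density $p_{Y|X}(\cdot\mid x)$ does not concentrate on a set of smaller dimension (i.e.\ in the non-degenerate absolutely continuous regime assumed throughout Section~\ref{sec:entropy}).
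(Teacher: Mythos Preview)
The paper states this proposition as a collection of classical facts and gives no proof; it serves as background for Corollary~\ref{Cor:entropy}. So there is no paper proof to compare against. Your arguments for (ii) and (iii) are correct, and your route for (ii) --- chain rule plus Lemma~\ref{Lem:Entpre}(v) for the inequality and Theorem~\ref{Thm:entropy} for the equality characterization --- is a clean way to derive the classical statement from the paper's own loss-general results, and is not circular since those results are proven independently.

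Your treatment of (i), however, has a genuine gap. You correctly reduce (i) to $\Ent(Y/X)\ge 0$ via the chain rule, and correctly flag that this is delicate for differential entropy. But your attempted rescue --- that non-negativity holds ``in the non-degenerate absolutely continuous regime assumed throughout Section~\ref{sec:entropy}'' --- is false: even for smooth, strictly positive joint densities on $\RR^2$ the conditional differential entropy can be negative. For instance, with $Y\mid X\sim\calN(0,\sigma^2)$ one has $\Ent(Y/X)=\tfrac{1}{2}\log(2\pi e\sigma^2)$, which is negative for any $\sigma^2<(2\pi e)^{-1}$, and then $\Ent(X,Y)<\Ent(X)$. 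Concentration on a lower-dimensional set is not the issue; scale alone suffices. Hence absolute continuity does not salvage (i). In fact, statement (i) as written holds only in the discrete case; this appears to be an imprecision in the paper's formulation rather than something your proof can repair.
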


It is interesting to note how the symmetric statements in Proposition~\ref{Prop:entropy} translate to statements about the $X$-conditional (and loss-general) quantities in Definitions~\ref{Def:entropies}:

\begin{Cor}
\label{Cor:entropy}
The following equalities and inequalities hold for the conditional entropy $\Ent(Y/X)$:
\begin{itemize}
\item[(i)] $\Ent (Y/X) = \Ent (Y,X) - \Ent(X)$
\item[(ii)] $\Ent (Y/X) = \Ent (Y)$ if and only if $X$ and $Y$ are statistically independent.
\end{itemize}
\end{Cor}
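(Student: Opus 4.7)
The plan is to prove Corollary~\ref{Cor:entropy} by direct computation for part (i) and by combining (i) with the classical sub-additivity relation of Proposition~\ref{Prop:entropy}~(ii) for part (ii); alternatively, part (ii) also follows immediately from Theorem~\ref{Thm:entropy} specialised to the log-loss, which gives an independent route.

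For part~(i), I would start from the integral expression for $\Ent(Y/X)$ spelled out in Definition~\ref{Def:entropiesShannon},
\[
\Ent(Y/X) = -\int_{\calX}\!\int_{\calY} \log p_{Y|X}(y|x)\, p_{X,Y}(x,y)\,\diff y\,\diff x,
\]
and apply the elementary factorisation $p_{Y|X}(y|x) = p_{X,Y}(x,y)/p_X(x)$, so that $\log p_{Y|X}(y|x) = \log p_{X,Y}(x,y) - \log p_X(x)$. Substituting and splitting the integral into two pieces gives one term equal to $\Ent(Y,X)$ directly from its definition, and a second term that, after marginalising $y$ via $\int_\calY p_{X,Y}(x,y)\,\diff y = p_X(x)$, collapses to $-\EE[-\log p_X(X)] = -\Ent(X)$. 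Adding the two yields $\Ent(Y/X) = \Ent(Y,X) - \Ent(X)$. This is essentially the classical chain rule for differential entropy, so no subtle step arises beyond verifying Fubini-type swapping of the order of integration, which is licit under the blanket integrability assumption of the manuscript.

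For part~(ii), I would combine (i) with Proposition~\ref{Prop:entropy}~(ii): the inequality $\Ent(X,Y) \le \Ent(X) + \Ent(Y)$, together with the equality characterisation, yields $\Ent(Y/X) = \Ent(X,Y) - \Ent(X) \le \Ent(Y)$ with equality if and only if $X$ and $Y$ are statistically independent. Alternatively (and more cleanly, since it sidesteps requoting the classical Proposition), I would simply invoke Theorem~\ref{Thm:entropy}: specialising the strictly proper, strictly convex loss $L$ to the log-loss (which is both, by Proposition~\ref{Prop:losses}~(i) and strict convexity), the equivalence of (i) and (vi) in that theorem is exactly the claim of (ii).

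No step here looks genuinely hard; the only place to take care is ensuring that the densities are well-defined and that the integrals decompose as claimed, which follows from the absolute continuity assumption made explicitly just before Definition~\ref{Def:entropiesShannon}. I would present part (i) as an explicit calculation and part (ii) as a one-line reduction to Theorem~\ref{Thm:entropy}, noting in passing the parallel with Proposition~\ref{Prop:entropy}~(ii) to highlight how the classical symmetric subadditivity statement becomes, in the conditional form, the asymmetric predictability-versus-independence equivalence that is central to the framework.
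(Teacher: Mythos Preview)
Your proposal is correct and takes essentially the same approach as the paper: part~(i) is proved by the same direct computation using $p_{Y|X}(y|x) = p_{X,Y}(x,y)/p_X(x)$ and splitting the integral, and part~(ii) is derived from~(i) together with Proposition~\ref{Prop:entropy}~(ii). Your alternative route for~(ii) via Theorem~\ref{Thm:entropy} is a valid extra observation not in the paper's proof, but the primary argument matches.
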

\begin{proof}
(i) Note that by definition (of conditional density), $p_{Y|X}(y|x) = p_{X,Y}(x,y)/p_X(x)$. Hence
\begin{align*}
\Ent (Y/X) &= - \int_{\calX}\int_{\calY} \log p_{Y,X}(y,x)\cdot p_{X,Y}(x,y)\;\diff x\;\diff y + \int_{\calX}\int_{\calY} \log p_{X}(x)\cdot p_{X,Y}(x,y)\;\diff x\;\diff y\\
&= - \int_{\calX}\int_{\calY} \log p_{X,Y}(x,y)\cdot p_{X,Y}(x,y)\;\diff x\;\diff y + \int_{\calX} \log p_{X}(x)\cdot p_{X}(x)\;\diff x\\
&= \Ent (Y,X) - \Ent(X).
\end{align*}
The statement and (ii) then follow from Proposition~\ref{Prop:entropy}~(ii).
\end{proof}

It is interesting to note how the equivalence of (i) and (vi) in Theorem~\ref{Thm:entropy}, i.e., equivalence of statistical independence and zero conditional entropy, relates to the classical entropic independence statement which is Proposition~\ref{Prop:entropy}~(iii). Corollary~\ref{Cor:entropy}~(iii) is an equivalent statement which is obtained through subtracting $\Ent (X)$ from both sides of the equality in Proposition~\ref{Prop:entropy}~(iii) while noting Corollary~\ref{Cor:entropy}~(i).\\
In this latter characterization of independence via the conditional entropy $\Ent(Y/X)$, the entropy of $X$ itself does not actually figure, in contrast to the Proposition~\ref{Prop:entropy}~(iii) via the joint entropy $\Ent(Y,X)$. This makes the conditional characterization of Corollary~\ref{Cor:entropy}~(iii) much more appealing from a practical perspective, since it avoids estimation of the entropy of $X$ altogether. The formulation via strictly proper losses in Theorem~\ref{Thm:entropy} may arguably be considered even more appealing, as estimation of entropies in a multivariate setting is replaced altogether by the much better understood task of prediction and predictive model validation, in the context of an arbitrary loss functional.\\
From a theoretical perspective, all links above are equally valuable, as they closely tie together predictive modelling, statistical independence testing, and information theory; methodological as well as theoretical insights about the one will hence immediately transfer to the other two.\\
We would also like to note that the results linking the log-loss and the Shannon/differential entropic quantities may be generalized to Bregman losses and Bregman entropic quantities, which includes the squared integrated loss as a special case; despite this being a straightforward generalization, we refrain from carrying it out it would make the discussion more technical and the link to classical quantities less clear.

\subsection{Transformations and probabilistic residuals}
\label{Sec:trafores}

Here, we study how a differentiable transformation of the target variable affects a model and its performance. For this, we consider the above setting of generative random variables $(X,Y)$ with the additional assumption that $Y$ takes values in the reals, and that $Y$ and $Y|X$ have densities $p_Y,p_{Y|X}$ and continuous, piece-wise differentiable cdf $F_Y,F_{Y|X}$.

Below, we briefly repeat some well-known facts about transformations of random variables:

\begin{Lem}\label{Lem:trafo}
Let $Y$ be an absolutely continuous variable taking values on a compact $\calY\subseteq \RR$ with density $p_Y$. Let $T:\calY\rightarrow\calZ$ be a (piece-wise) diffeomorphism\footnote{$T$ is a diffeomorphism if it is bijective, and $T$ and its inverse $T^{-1}$ are both (piece-wise) differentiable}, let $t:=\frac{\diff T(x)}{\diff x}$.\\
Consider the random variable $Z:=T(Y)$ with cdf $F_Z$. Then:
\begin{itemize}
\item[(i)] $Z$ is absolutely continuous, i.e., has a pdf $p_Z$, and $F_Z$ is continuous, piece-wise differentiable.
\item[(ii)] It holds that $p_Z(z) = \frac{p_Y (T^{-1}(z))}{\left|t(T^{-1}(z))\right|}$.
\item[(iii)] $Z$ is uniform on $[0,1]$ if and only if $T=F_Y$.
\end{itemize}
\end{Lem}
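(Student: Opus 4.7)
The plan is to treat all three parts via the standard change-of-variables machinery applied piecewise, with the only real care needed to handle the fact that $T$ is merely piecewise (not globally) differentiable.

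First I would prove (i) by writing the cdf of $Z$ directly. Decompose $\calY$ into finitely many intervals $I_1,\dots,I_k$ on which $T$ restricts to a genuine diffeomorphism onto its image. Then for any $z\in\calZ$,
\[
F_Z(z) = P(T(Y)\le z) = \sum_{j=1}^k P\!\left(Y\in I_j \cap T^{-1}((-\infty,z])\right),
\]
and each summand can be written as an integral of $p_Y$ over an interval whose endpoints depend continuously and piecewise-differentiably on $z$ (using that $T|_{I_j}$ is a monotone homeomorphism). Continuity and piecewise differentiability of $F_Z$ follow. Existence of a density $p_Z$ then follows because $F_Z$, being piecewise $C^1$ with bounded derivative on a compact domain, is absolutely continuous.

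For (ii), I would differentiate the decomposition above. On each $I_j$, $T|_{I_j}$ is either increasing or decreasing; applying the fundamental theorem of calculus and the chain rule on the corresponding piece yields a term $p_Y(T^{-1}(z))/|t(T^{-1}(z))|$ for the unique preimage within $I_j$. Summing (and noting that for a bijection the preimage is unique globally) gives the claimed formula; this is just the standard one-dimensional change-of-variables formula, and I would not grind through the computation.

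For (iii), the forward direction is a direct substitution: if $T=F_Y$, then $t=p_Y$ on the (relevant part of the) support, so (ii) gives $p_Z(z)=p_Y(F_Y^{-1}(z))/p_Y(F_Y^{-1}(z))=1$ on $[0,1]$, i.e., $Z\sim U[0,1]$. (One should remark that $F_Y$ is a bijection onto $[0,1]$ precisely where $p_Y>0$, so the hypothesis that $T$ is a diffeomorphism restricts us to a support on which $p_Y$ does not vanish; this is the only subtle point.) Conversely, if $Z=T(Y)$ is uniform on $[0,1]$, then matching densities via (ii) gives $|t(T^{-1}(z))|=p_Y(T^{-1}(z))$ for $z\in[0,1]$, which combined with monotonicity of $T$ (forced since $T$ is a diffeomorphism onto $[0,1]$, hence globally monotone) and the boundary condition forces $T=F_Y$ up to a sign/orientation convention.

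The main obstacle is purely bookkeeping: ensuring that ``piecewise diffeomorphism'' is compatible with $T$ being a bijection (so that $T^{-1}(z)$ in the formula in (ii) makes sense as a single point rather than a finite set), and handling the possibility that $p_Y$ vanishes on parts of $\calY$ in the converse direction of (iii). Once the convention is fixed that $T$ is a global bijection whose inverse is piecewise differentiable, the three parts reduce to the standard textbook change-of-variables argument.
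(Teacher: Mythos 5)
The paper does not prove this lemma at all: it is introduced with ``we briefly repeat some well-known facts about transformations of random variables'' and no proof environment follows, so there is nothing to compare against except the standard textbook argument, which is exactly what you supply. Your treatment of (i) and (ii) — decomposing $\calY$ into finitely many intervals on which $T$ is a genuine monotone diffeomorphism, writing $F_Z$ as a sum of integrals of $p_Y$ with piecewise-$C^1$ endpoints, and differentiating — is the intended argument and is correct. Your proof of the ``if'' direction of (iii) by substituting $t=p_Y$ into (ii) is also correct, and that is the only direction the paper ever uses downstream (Proposition 4.12(iii) and the residual/boosting results build solely on $T=F_Y$ giving a uniform).

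The one genuine gap is in your converse for (iii). You assert that monotonicity of $T$ is ``forced since $T$ is a diffeomorphism onto $[0,1]$, hence globally monotone'' — but the hypothesis is only that $T$ is a \emph{piecewise} diffeomorphism, and a bijective piecewise diffeomorphism need not be globally monotone. Concretely, take $Y$ uniform on $[0,1]$ (so $F_Y=\mathrm{id}$) and $T$ the shift $y\mapsto y+\tfrac12 \pmod 1$, realized as a two-piece increasing diffeomorphism of $[0,1)$ onto itself: then $T(Y)$ is uniform on $[0,1]$ but $T\neq F_Y$. So the ``only if'' in (iii) is false as literally stated in the piecewise setting, not merely ``up to a sign/orientation convention'' as you suggest; it holds only if one additionally assumes $T$ is globally monotone increasing (in which case your argument via (ii) plus the boundary condition $T(\inf\calY)=0$ does close). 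Since the statement itself is imprecise on this point and the converse is never invoked elsewhere in the paper, this does not propagate, but your proof of that direction as written does not go through under the stated hypotheses.
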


Next, we introduce notation for a transformed density such as in Lemma~\ref{Lem:trafo}~(ii), and prediction methods whose predicted density is transformed:

\begin{Def}
Let $p:\calY\rightarrow \RR^+$ be a probability density, let $T:\calY\rightarrow\calZ, U:\calZ\rightarrow\calY$ be (piece-wise) diffeomorphisms.
\begin{itemize}
\item[(i)] We write $T^\sharp p := \left[z\mapsto \frac{p (T^{-1}(z))}{\left|t(T^{-1}(z))\right|}\right]$, where $t:=\frac{\diff T(x)}{\diff x}$ and call $T^\sharp p$ the push-forward of $p$ (through $T$). The induced map $T^\sharp$ is called the push-forward operator induced by $T$.
\item[(ii)] For a prediction functional $f\in[\calX\rightarrow [\calY\rightarrow\RR^+]]$, we define in analogy the push-forward of $f$ (through $T$) as $T^\sharp f := [x\mapsto T^\sharp f(x)]$, i.e., the push-forward operator $T^\sharp$ is applied to every element of the image.
\item[(iii)] For a prediction strategy, we apply $T^\sharp$ to every prediction strategy in which it can take values in, as such the notation coincides with the standard meaning of $T^\sharp  f$ where $f$ may be a random variable.\\
\item[(iv)] We write abbreviatingly $U_\sharp :=\left(U^{-1}\right)^\sharp$ and call $U_\sharp$ the pull-back operator induced by $U$. The applicates $U_\sharp p$ and $U_\sharp f$ are called pull-backs of $p,f$ (through $U$).
\end{itemize}
\end{Def}

One verifies that the push-forward and pull-back operators have the naturally expected properties:
\begin{Lem}\label{Lem:Tsharp}
Let $T:\calY\rightarrow\calZ, U:\calZ\rightarrow \calZ'$ be (piece-wise) diffeomorphisms. Then:
\begin{itemize}
\item[(i.a)] $U^\sharp T^\sharp p = (U\circ T)^\sharp  p$ for any probability density function $p:\calY\rightarrow \RR^+$.
\item[(i.b)] $U^\sharp T^\sharp f = (U\circ T)^\sharp  f$ for prediction strategy $f$ t.v.i.~ $\calX\rightarrow[\calY\rightarrow \RR^+]$.
\item[(ii.a)] $T_\sharp U_\sharp p = (U\circ T)_\sharp  p$ for any probability density function $p:\calZ'\rightarrow \RR^+$.
\item[(ii.b)] $T_\sharp U_\sharp f = (U\circ T)_\sharp  f$ for prediction strategy $f$ t.v.i.~ $\calX\rightarrow[\calY\rightarrow \RR^+]$.
\item[(iii.a)] $T_\sharp T^\sharp  p = T^\sharp  T_\sharp p = p$ for any probability density function $p:\calY\rightarrow \RR^+$.
\item[(iii.b)] $T_\sharp T^\sharp  f = T^\sharp  T_\sharp f = f$ for any prediction strategy $f$ t.v.i.~$\calX\rightarrow[\calY\rightarrow \RR^+]$.
\item[(iv.a)] $T^\sharp  [\EE P] = \EE[T^\sharp P],$ and $T_\sharp  [\EE P] = \EE[T_\sharp P]$
    for any random variable $P$ t.v.i.~$[\calY\rightarrow \RR^+]$.
\item[(iv.b)] $T^\sharp  [\EE f] = \EE[T^\sharp f],$ and $T_\sharp  [\EE f] = \EE[T_\sharp f]$ for any prediction strategy $f$ t.v.i.~$\calX\rightarrow[\calY\rightarrow \RR^+]$.
\end{itemize}
\end{Lem}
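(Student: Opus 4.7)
The overall plan is to prove (i.a) by direct computation and then bootstrap everything else from it via three structural observations: (1) pull-back is push-forward of the inverse, (2) prediction strategies act pointwise in $x\in\calX$, and (3) push-forward is linear in the density argument.

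For (i.a), I would unfold the definition twice. Writing $u := \diff U/\diff z$, $t := \diff T/\diff x$, and $w$ for a point in the codomain of $U$, the definition gives
$$
U^\sharp T^\sharp p \,(w) \;=\; \frac{T^\sharp p(U^{-1}(w))}{|u(U^{-1}(w))|} \;=\; \frac{p\bigl(T^{-1}(U^{-1}(w))\bigr)}{|u(U^{-1}(w))|\cdot |t(T^{-1}(U^{-1}(w)))|}.
$$
The numerator already equals $p((U\circ T)^{-1}(w))$ since $(U\circ T)^{-1}=T^{-1}\circ U^{-1}$. For the denominator, the chain rule yields $(U\circ T)'(x)=u(T(x))\cdot t(x)$; evaluating at $x = T^{-1}(U^{-1}(w))$ so that $T(x)=U^{-1}(w)$ gives exactly the product $|u(U^{-1}(w))|\cdot |t(T^{-1}(U^{-1}(w)))|$. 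Matching numerator and denominator yields $(U\circ T)^\sharp p(w)$.

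Claim (ii.a) then follows formally: by definition $T_\sharp U_\sharp = (T^{-1})^\sharp (U^{-1})^\sharp$, which by (i.a) equals $(U^{-1}\circ T^{-1})^\sharp = ((U\circ T)^{-1})^\sharp = (U\circ T)_\sharp$. Claim (iii.a) is the special case $U=T^{-1}$ in (i.a) (or in (ii.a)), using $\id^\sharp p = p$, which itself is immediate from the definition. The prediction-strategy variants (i.b), (ii.b), (iii.b) follow at once because $T^\sharp$ and $T_\sharp$ act on an $f\in[\calX\to[\calY\to\RR^+]]$ by applying the density-level operator to $f(x)$ for each fixed $x\in\calX$; thus the equalities from parts (a) hold pointwise in $x$ and therefore as functional identities. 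Finally, (iv.a) follows because for every fixed $z$ the map $p\mapsto T^\sharp p(z)$ is a scaled point-evaluation functional $p\mapsto p(T^{-1}(z))/|t(T^{-1}(z))|$, hence linear and continuous in $p$, so it commutes with the Bochner expectation of a random density $P$; applying this pointwise in $z$ yields $T^\sharp[\EE P]=\EE[T^\sharp P]$, and the pull-back version is identical with $T^{-1}$ in place of $T$. Then (iv.b) follows again by pointwise application in $x\in\calX$.

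The main obstacle is keeping the computations honest under the piecewise differentiability assumption: the chain-rule identity used in (i.a) is only valid on the open dense set where $T$, $U$, and $U\circ T$ are all smooth, so the identities should be read as equalities of densities (i.e.\ almost everywhere with respect to Lebesgue measure), which is exactly the sense in which probability density functions are defined as elements of $[\calY\to\RR^+]$ in the manuscript. A secondary technicality is ensuring Bochner measurability and integrability of $T^\sharp P$ in (iv), which follows from the corresponding property for $P$ together with continuity of the scalar factor $1/|t\circ T^{-1}|$ on each smooth piece, as already discussed in the manuscript's footnote on Bochner integration.
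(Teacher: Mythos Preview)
Your proposal is correct and follows essentially the same approach as the paper: (i.a) by direct chain-rule computation, (ii.a) and (iii.a) by substituting inverses into (i.a), the (b)-variants by pointwise application in $x\in\calX$, and (iv) via linearity of $p\mapsto T^\sharp p(z)$ to interchange with expectation. Your treatment is in fact more explicit than the paper's, which merely says ``elementary computation, using the chain rule'' for (i.a) and writes out the one-line linearity argument for (iv.a); your remarks on the piecewise a.e.\ sense and Bochner integrability add rigor the paper only gestures at via its closing remark on the measure-theoretic push-forward.
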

\begin{proof}
(i.b), (ii.b) and (iii.b) follow directly from applying (i.a), (ii.a) and (iii.a) for every image of $f$.\\
(iii.a) follows from (i.a) for the choice $U = T^{-1}$ and/or switching $T$ with $T^{-1}$.\\
(i.a) follows from an elementary computation, using the chain rule.\\
(ii.a) follows from (i.a) after substituting $(T^{-1},U^{-1})$ from (ii) into $(U,T)$ from (i), using $(U\circ T)^{-1}= T^{-1} \circ U^{-1}$ and substituting definitions.\\
(iv.a) We prove the statement for $T^\sharp,$ as the statement for $T_\sharp$ follows from switching $T$ with its inverse, by (iii.a). By definitions,
\begin{align*}
T^\sharp [\EE P] = \left[z\mapsto \frac{[\EE P] (T^{-1}(z))}{\left|t(T^{-1}(z))\right|}\right]
= \left[z\mapsto \EE\left[\frac{P (T^{-1}(z))}{\left|t(T^{-1}(z))\right|}\right]\right]
= \EE[T^\sharp P],
\end{align*}
which is what was to prove. The parallel statements in (iv.b) follows from application to the range of $f$, i.e., by writing $P:= f(x)$ for any $x\in\calX$, applying (iv.a), and noting that $x$ was arbitrary.\\
A more generalizable proof of the above statements is obtained by observing that either push-forward/pull-back of $p$ is (as pdf of a measure) identical with the measure-theoretic push-forward/pull-back of (the measure induced by) $p$, in our case.
\end{proof}

The well-known properties in Lemma~\ref{Lem:trafo} have direct consequences for the log-loss:

\begin{Prop}\label{Prop:trafo}
Let $T:\calY\rightarrow\calZ$ be a (piece-wise) diffeomorphism with derivative $t$. Let $L:\calP\times \calY\rightarrow \RR; (p,y)\mapsto - \log p(y)$ be the log-loss (for absolutely continuous predictions $\calP$, see Definition~\ref{Def:losses}). Then the following hold:
\begin{itemize}
\item[(i)] $L(p,y) = L(p\circ T^{-1}, T(y))$ for any $p\in\calP, y\in\calY$.
\item[(ii)] $L(T^\sharp p,T(y)) = L(p, y) - L(|t|,y)$ for any $p\in\calP, y\in\calY$.
\item[(iii)] $L(F^\sharp p,F(y)) = 0$, where $F$ be the cdf to the pdf $p$.
\end{itemize}
\end{Prop}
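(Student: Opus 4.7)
The plan is to treat all three parts as essentially direct computations from the definition $L(q,z)=-\log q(z)$ together with the explicit formula for the push-forward given just before the proposition.

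First, for (i) I would substitute: $L(p\circ T^{-1}, T(y)) = -\log\bigl((p\circ T^{-1})(T(y))\bigr) = -\log p(y) = L(p,y)$, using only that $T^{-1}\circ T = \id$. This needs no hypothesis beyond $T$ being a bijection, so no differentiability is invoked here.

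Next, for (ii) I would unfold the definition of $T^\sharp p$: since $(T^\sharp p)(z) = p(T^{-1}(z))/|t(T^{-1}(z))|$, evaluating at $z=T(y)$ collapses $T^{-1}(T(y))$ to $y$, giving
\[
L(T^\sharp p, T(y)) = -\log\frac{p(y)}{|t(y)|} = -\log p(y) + \log |t(y)| = L(p,y) - L(|t|,y),
\]
where the last equality uses the definition of $L$ applied to the (not-necessarily-normalized) nonnegative function $|t|$ — here I would note that $L$ as a pure functional expression $-\log(\,\cdot\,)(y)$ makes sense for any positive function, even if $|t|$ is not a probability density. This is purely algebraic once (i) is noted.

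Finally, for (iii) I would observe that if $F$ is the cdf corresponding to the pdf $p$, then $F' = p$, so the derivative $t$ appearing in the definition of the push-forward is exactly $p$ (and is nonnegative, so $|t|=p$). Substituting into (ii) then yields
\[
L(F^\sharp p, F(y)) = L(p,y) - L(p,y) = 0.
\]
Alternatively one could cite Lemma~\ref{Lem:trafo}(iii): $F(Y)$ is uniform on $[0,1]$, hence $F^\sharp p$ is the constant density $1$ on $[0,1]$, so $-\log 1 = 0$ at any $y$ in the support. I would present the first route since it directly uses the just-proved (ii). I do not expect any real obstacle — the main thing to be careful about is the sign and the absolute value in part (ii), and to record explicitly that (i) requires only bijectivity while (ii)–(iii) use the piece-wise differentiability of $T$ (resp. $F$) through the Jacobian factor.
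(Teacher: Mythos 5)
Your proposal is correct and follows essentially the same route as the paper's proof: direct substitution of the definition of the log-loss and of $T^\sharp p$ for (i) and (ii), and obtaining (iii) as the special case $T=F$, $t=p$ of (ii). Your added remarks — that (i) uses only bijectivity, and that $L(|t|,y)$ is to be read as the formal expression $-\log|t(y)|$ even when $|t|$ is not a density — are accurate clarifications but do not change the argument.
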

\begin{proof}
(i) Substituting definitions, $L(p,y) = -\log p(y) = -\log (p\circ T^{-1}) (T(y)) = L(p\circ T^{-1}, T(y))$.\\
(ii) Substituting definitions,
\begin{align*}
L(T^*p,T(y)) &= - \log  \frac{p (T^{-1}(T(y)))}{\left|t(T^{-1}(T(y)))\right|} \\
&= - \log  p (T^{-1}(T(y))) + \log\left|t(T^{-1}(T(y)))\right| \\
&= - \log  \frac{p (y)}{\left|t(y)\right|} \\
&= - \log  p (y)+ \log\left|t(y)\right| \\
&= L(p,y) - L(|t|,y), \\
\end{align*}
which proves the claim.\\
(iii) is the interesting special case of (ii) where $T=F$ and $t=p$.
\end{proof}

There are two interesting implications of Proposition~\ref{Prop:trafo} for the supervised learning setting. The first states that the order of predictive performance is preserved under transformations, if measured by the log-loss:

\begin{Cor}\label{Cor:loglossmono}
Let $f,g$ be prediction strategies t.v.i.~$[\calX\rightarrow[\calY\rightarrow\RR^+]],$ as usual assumed independent of the generative process $(X,Y)$.\\
Let $T:\calY\rightarrow\calZ$ be a (piece-wise) diffeomorphism. Let $L:\calP\times \calY\rightarrow \RR; (p,y)\mapsto - \log p(y)$ be the log-loss.\\
Then, for any $x\in\calX,y\in\calY$, it holds that
$$L(g(x),y) - L(f(x),y) = L(T^\sharp g(x),T(y)) - L(T^\sharp f(x),T(y)),$$
i.e., paired log-loss differences preserved under a joint diffeomorphic transformation of the target and the prediction strategies. In particular, we have the same statement for generalization losses, i.e.,
$$\EE[L(g(X),Y)] - \EE[L(f(X),Y)] = \EE[L(T^\sharp g(X),T(Y))] - \EE[L(T^\sharp f(X),T(Y))],$$
and the following are equivalent:
\begin{itemize}
\item[(i)] $\EE[L(f(X),Y)]\le \EE[L(g(X),Y)]$.
\item[(ii)] $\EE[L(T^\sharp f(X),T(Y))]\le \EE[L(T^\sharp g(X),T(Y))]$.
\end{itemize}
The equivalence above also holds with equality instead of inequality, thus also with strict inequality instead of inequality.
\end{Cor}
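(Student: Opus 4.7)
The plan is to derive the pairwise difference identity directly from Proposition~\ref{Prop:trafo}~(ii), then obtain the generalization-loss statement by taking total expectations, and finally read off the three equivalences as immediate consequences.

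First I would apply Proposition~\ref{Prop:trafo}~(ii) separately to $p = f(x)$ and $p = g(x)$, for fixed $x \in \calX$ and $y \in \calY$. This gives
\begin{align*}
L(T^\sharp f(x), T(y)) &= L(f(x), y) - L(|t|, y), \\
L(T^\sharp g(x), T(y)) &= L(g(x), y) - L(|t|, y).
\end{align*}
The key observation is that the Jacobian correction term $L(|t|, y)$ depends only on $y$ and the transformation $T$, but not on the prediction. Hence subtracting the two equalities cancels this term and yields
\[
L(T^\sharp g(x), T(y)) - L(T^\sharp f(x), T(y)) = L(g(x), y) - L(f(x), y),
\]
which is the first claim of the corollary. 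Note this is a pointwise equality of real numbers (or extended reals), valid regardless of the randomness of $f$ and $g$.

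Next I would take expectations over the joint distribution of $(X, Y, f, g)$ on both sides. Since the identity holds pointwise, linearity of expectation yields
\[
\EE[L(T^\sharp g(X), T(Y))] - \EE[L(T^\sharp f(X), T(Y))] = \EE[L(g(X), Y)] - \EE[L(f(X), Y)],
\]
which is the second displayed identity. The only subtlety is ensuring the expectations exist, but since we are only manipulating a difference where the $L(|t|, Y)$ term cancels before taking expectations, no integrability concern on that term arises; the expectations on both sides are finite or infinite together.

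Finally, the equivalence of (i) and (ii) follows immediately: the generalization-loss identity says the two sides of the inequality differ by the same constant (in fact, zero), so (i) $\le$ reduces to (ii) $\le$, and likewise with equality (hence with strict inequality). I do not anticipate a genuine obstacle here; the entire proof is essentially an algebraic consequence of the Jacobian-cancellation in Proposition~\ref{Prop:trafo}~(ii), and the main point worth emphasizing in the write-up is precisely that the prediction-independent correction term drops out, which is the reason the log-loss is so well-behaved under reparameterization of the target.
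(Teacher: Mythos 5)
Your proposal is correct and follows essentially the same route as the paper's own proof: apply Proposition~\ref{Prop:trafo}~(ii) to both predictions so that the prediction-independent Jacobian term $L(|t|,y)$ cancels in the difference, then take total expectations over $(X,Y,f,g)$ and read off the equivalences. No gaps.
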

\begin{proof}
The first statement
$$L(g(x),y) - L(f(x),y) = L(T^\sharp g(x),T(y)) - L(T^\sharp f(x),T(y)),$$
follows directly from substituting the equality in Proposition~\ref{Prop:trafo}~(ii) to the two terms on the right hand side.\\
The second statement
$$\EE[L(g(X),Y)] - \EE[L(f(X),Y)] = \EE[L(T^\sharp g(X),T(Y))] - \EE[L(T^\sharp f(X),T(Y))]$$
follows from the first statement by linearity of expectation and integration over $X,Y,f,g$, noting that by assumption $(X,Y)$ is independent of the pair $(f,g)$.\\
The equivalence of (i) and (ii) is directly implied.
\end{proof}

While intuitively, Corollary~\ref{Cor:loglossmono} implies that order of model performance is preserved under (diffeomorphic) transformations of the target, more is true: not only the order in terms of generalization loss is preserved, but also all the generalization losses up to an additive offset. Even more, the paired loss differences obtained from any test sample remain unaffected by any transformation.

We note that Proposition~\ref{Prop:trafo} and Corollary~\ref{Cor:loglossmono} seem to require the log-loss to hold. Both are, in fact, wrong for other losses such as the squared integrated loss. It seems an interesting question whether either consequence in Corollary~\ref{Cor:loglossmono} characterizes the log-loss together with the assumption of being strictly proper, i.e., whether strict locality is implied.

The second implication of Proposition~\ref{Prop:trafo} is obtained by noticing that in Corollary~\ref{Cor:loglossmono}, all statements remain valid when $T$ is chosen in dependency on the features, $X$.

\begin{Cor}\label{Cor:loglosstrafolearn}
All statements of Corollary~\ref{Cor:loglossmono} remain correct if $T$, instead of being a fixed element of $[\calY\rightarrow\calZ]$, is a random variable taking values in diffeomorphisms in $[\calY\rightarrow\calZ]$.\\
In particular, $T$ may depend on any training data $f$ or $g$ have access to, on the values of $f$ and $g$ themselves, and even on $X$ and $Y$.
\end{Cor}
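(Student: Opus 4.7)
The plan is to observe that the first identity in Corollary~\ref{Cor:loglossmono}, namely
$$L(g(x),y) - L(f(x),y) = L(T^\sharp g(x),T(y)) - L(T^\sharp f(x),T(y)),$$
is in fact a pointwise algebraic identity in the four arguments $(x,y,T,\text{predictions})$, and hence holds without \emph{any} probabilistic assumption on how $T$ relates to the other variables. Concretely, Proposition~\ref{Prop:trafo}(ii) gives, for every fixed diffeomorphism $\tau:\calY\to\calZ$ with derivative $t$, every $p\in\calP$, and every $y\in\calY$, the equality $L(\tau^\sharp p,\tau(y)) = L(p,y) - L(|t|,y)$. Applying this twice (to $p=f(x)$ and to $p=g(x)$) and subtracting, the Jacobian correction $-L(|t|,y)$ cancels because it does not depend on $p$. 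This cancellation is the whole mechanism of the corollary in the deterministic case, and it is the reason the result extends verbatim to the randomized case.

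First, I would replay the cancellation step above for a generic realization $T=\tau$ of the random diffeomorphism. Since the equality holds pointwise for every admissible $(x,y,\tau)$, it holds almost surely as an identity of random variables, no matter how $T$ is coupled to $\calD$, $f$, $g$, $X$, or $Y$. This establishes the first (unexpected-loss) assertion of Corollary~\ref{Cor:loglossmono} in the randomized setting.

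Second, I would pass to expectations. Because the almost-sure equality is an \emph{equality} (not merely an inequality), linearity of expectation immediately gives
$$\EE[L(g(X),Y)] - \EE[L(f(X),Y)] = \EE[L(T^\sharp g(X),T(Y))] - \EE[L(T^\sharp f(X),T(Y))],$$
provided both sides are well-defined; this integrability is inherited from the hypotheses that made Corollary~\ref{Cor:loglossmono} meaningful in the first place, since the two sides differ only by the almost-surely vanishing random quantity whose integral is zero. The equivalence of (i) and (ii), together with its strict and equality variants, is then an immediate consequence of the two scalar sides being equal.

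The only potential obstacle is a measurability bookkeeping issue: one must check that, for a random diffeomorphism $T$, the compositions $T^\sharp f(X)$ and $T(Y)$ are themselves well-defined random variables and that the Jacobian $|t|$ evaluated at $Y$ is measurable. This is routine under the mild regularity already implicit in the statement (piecewise diffeomorphisms with measurable selection of pieces), and in particular the cancellation argument does not require any independence between $T$ and $(X,Y,f,g)$, which is precisely what allows $T$ to be learned from the data. No additional probabilistic hypotheses beyond those of Corollary~\ref{Cor:loglossmono} are therefore needed.
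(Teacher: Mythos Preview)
Your proposal is correct and follows essentially the same approach as the paper: both argue that the first identity in Corollary~\ref{Cor:loglossmono} is a pointwise equality valid for every fixed diffeomorphism, hence remains valid when $T$ is replaced by any realization of a random diffeomorphism, after which the expectation statements follow. Your version is more explicit about the cancellation of the Jacobian term $-L(|t|,y)$ and about measurability, whereas the paper's proof is a terse two-line remark to the same effect.
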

\begin{proof}
This follows from observing that the validity of the first statement is unchanged for the fixed $T$, in Corollary~\ref{Cor:loglossmono}, being any value which the random variable $T$ in this corollary may take. In particular, the consequences obtained from taking equations are still valid.
\end{proof}

Corollary~\ref{Cor:loglosstrafolearn} in particular allows us to consider the effect of a transformation learnt on the training set that may depend on a feature input. To make a general statement, we define shorthand notation for pullback-composition of prediction strategies:

\begin{Def}\label{Def:compose}
Let $T$ be a random variable t.v.i.~$[\calX\rightarrow [\calY\rightarrow \calZ]]$. We write:\\
\begin{itemize}
\item[(i)] $[T^\sharp f] := [x\mapsto T^\sharp(x) f(x)]$ for any prediction strategy $f$ t.v.i.~$[\calX\rightarrow \Distr (\calY)]$.
\item[(ii)] $[T_\sharp f] := [x\mapsto T_\sharp(x) f(x)]$ for any prediction strategy $f$ t.v.i.~$[\calX\rightarrow \Distr (\calZ)]$.
\end{itemize}
For a probabilistic prediction strategy $g$ t.v.i.~$[\calX\rightarrow \Distr(\calY)]$, where $\calY\subseteq \RR$, so $\Distr(\calY)$ may be identified with cdf in $[\calY\rightarrow [0,1]]$, we abbreviatingly write $g^\sharp f$ or $g_\sharp f$ (if ranges/domains agree) instead of defining a variant of $g$ that returns cdf.
\end{Def}

\begin{Cor}\label{Cor:trafoboost}
Let $f$ be a prediction strategy t.v.i.~$[\calX\rightarrow[\calY\rightarrow\RR^+]]$ i.e., predicting pdf. Let $F$ be the corresponding random variable t.v.i.~$[\calX\rightarrow[\calY\rightarrow [0,1]]$ which predicts cdf, i.e., for any $x\in\calX$, it holds that $F(x)$ is the cdf to the pdf $f(x)$.\\
Let $L:\calP\times \calY\rightarrow \RR; (p,y)\mapsto - \log p(y)$ be the log-loss.\\
Then, it holds that:
\begin{itemize}
\item[(i)] the random variable $L([F^\sharp f](X),F(X)(Y))$ is constant $0$.
\item[(ii)] in particular, $\EE[L([F^\sharp f](X),F(X)(Y))] = 0,$ which is the same performance as of the uninformed predictor that always predicts the uniform distribution on $[0,1]$.
\item[(iii)] Let $g$ be a probabilistic prediction strategy for predicting the transformed target $F(X)(Y)$ from $X$, i.e., t.v.i.~$[\calX\rightarrow [[0,1]\rightarrow\RR^+]$ and independent of $(X,Y)$.\\
    Then, $-\EE[L(g(X),F(X)(Y))] = \EE[L(f(X),Y)] - \EE[L([F_\sharp g](X),Y)]$
\item[(iv)] In particular, in the situation of (iii), the learning strategy $f_\sharp g$ outperforms $f$ in terms of expected generalization log-loss if and only if the generalization log-loss of $g$ (for predicting $F(X)(Y)$) is negative; equivalently, by (ii), if and only if $g$ outperforms the uniform uninformed baseline.
\end{itemize}
\end{Cor}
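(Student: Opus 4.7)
\textbf{Proof plan for Corollary~\ref{Cor:trafoboost}.}

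The plan is to treat (i) and (ii) as direct pointwise consequences of Proposition~\ref{Prop:trafo}, then to derive (iii) from the same proposition applied with $T$ varying in $x$ (justified by Corollary~\ref{Cor:loglosstrafolearn}), and finally to obtain (iv) by an algebraic rearrangement. Throughout, I use the convention of Definition~\ref{Def:compose}, so that $[F^\sharp f](x) = F(x)^\sharp f(x)$ and $[F_\sharp g](x) = F(x)_\sharp g(x) = (F(x)^{-1})^\sharp g(x)$, and I read $f(x)$ as the derivative of the diffeomorphism $F(x):\calY\to[0,1]$.

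For (i), I fix an arbitrary $\omega$ in the probability space and an arbitrary $x\in\calX$, $y\in\calY$. Then $F(x)$ is, by assumption, the cdf of the pdf $f(x)$. Apply Proposition~\ref{Prop:trafo}(iii) with $p = f(x)$ (and $F$ there equal to $F(x)$ here): it gives $L(F(x)^\sharp f(x),F(x)(y)) = 0$. Unwinding Definition~\ref{Def:compose}, this is exactly $L([F^\sharp f](x),F(x)(y))=0$, which is the assertion that the random variable in (i) is identically zero. Part (ii) is then immediate: taking expectations over $(X,Y,f)$ yields $\EE[L([F^\sharp f](X),F(X)(Y))]=0$, and the uniform density $u\equiv 1$ on $[0,1]$ satisfies $L(u,z)=-\log 1 =0$ for every $z\in[0,1]$, so the uninformed uniform predictor likewise has expected log-loss zero.

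For (iii), the key computation is to apply Proposition~\ref{Prop:trafo}(ii) pointwise with $T = F(x)^{-1}:[0,1]\to\calY$, so that the pdf being pushed forward is $g(x)\in[[0,1]\to\RR^+]$. Write $z = F(x)(y)$, so $T(z)=y$ and the derivative of $T$ at $z$ equals $1/f(x)(y)$. Then Proposition~\ref{Prop:trafo}(ii) yields
\[
L(T^\sharp g(x),\,T(z)) \;=\; L(g(x),z) \;-\; L(|t|,z),
\]
which, after substituting $T^\sharp g(x) = [F_\sharp g](x)$, $T(z)=y$, $z=F(x)(y)$, and $L(|t|,z) = -\log(1/f(x)(y)) = \log f(x)(y) = -L(f(x),y)$, becomes
\[
L([F_\sharp g](x),y) \;=\; L(g(x),F(x)(y)) \;+\; L(f(x),y).
\]
This identity holds for every $(x,y)$ and every realization of $f$ and $g$; since $g$ is independent of $(X,Y)$ and $F$ is determined by $f$, I may take total expectations on both sides and rearrange to obtain the claimed identity of (iii). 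Corollary~\ref{Cor:loglosstrafolearn} is what licenses allowing the diffeomorphism $F(x)^{-1}$ to depend on $x$ (and on the training randomness through $f$) while still yielding a valid expectation identity.

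Finally, (iv) follows by elementary rearrangement: by (iii), the difference $\EE[L(f(X),Y)] - \EE[L([F_\sharp g](X),Y)]$ equals $-\EE[L(g(X),F(X)(Y))]$, so the left side is strictly positive (i.e., $F_\sharp g$ strictly outperforms $f$) iff $\EE[L(g(X),F(X)(Y))]<0$, which by (ii) is exactly the statement that $g$ strictly outperforms the uniform baseline on the transformed target. The main obstacle I anticipate is bookkeeping in (iii): keeping the pushforward/pullback convention straight and correctly identifying the derivative of $F(x)^{-1}$ at $z = F(x)(y)$ with $1/f(x)(y)$, since a sign error or a swap of $F_\sharp$ with $F^\sharp$ will propagate through and break the final identity.
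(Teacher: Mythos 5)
Your proof is correct and follows essentially the same route as the paper: parts (i) and (ii) via Proposition~\ref{Prop:trafo}(iii) plus taking expectations, and (iv) by rearranging (iii). For (iii) you inline the required identity by applying Proposition~\ref{Prop:trafo}(ii) directly with $T=F(x)^{-1}$ and computing the Jacobian term $L(|t|,z)=-L(f(x),y)$, whereas the paper invokes the packaged invariance of paired loss differences (Corollary~\ref{Cor:loglossmono} via Corollary~\ref{Cor:loglosstrafolearn}) applied to the pair $(f,F_\sharp g)$ and then uses part (ii); these are the same underlying computation and your bookkeeping is correct.
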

\begin{proof}
(i) follows from Proposition~\ref{Prop:trafo}~(iii) applied to every joint value of $F,f,Y,X$.\\
The first statement (ii) follows from taking expectations in (i). The fact that $\EE[L(u(X),F(X)(Y))] =0$ for $u:x\mapsto[y\mapsto 1]$ being the uninformed prediction of uniform density on $[0,1]$ follows from $L(u,F(X)(Y)) = -\log u(X)(F(X)(Y)) = \log 1 = 0.$\\
(iii) follows from Corollary~\ref{Cor:loglosstrafolearn} applied to the second statement of Corollary~\ref{Cor:loglossmono} which implies
$$\EE[L([F^\sharp f](X),F(X)(Y))]-\EE[L(g(X),F(X)(Y))] = \EE[L(f(X),Y)] - \EE[L([F_\sharp g](X),Y)],$$
and then using (ii) which equates the first term on the LHS with zero.\\
(iv) is a re-formulation of the equality in (iii), in the special case where $\EE[L(g(X),F(X)(Y))]\lneq 0$.
\end{proof}

Corollary~\ref{Cor:trafoboost} may seem technical, but may be seen as a parallel to the following deterministic phenomenon: if one succeeds to predict the residuals of a first-pass prediction $f:\calX\rightarrow \RR$ better than a certain baseline, by using a residual predictor $g:\calX\rightarrow \RR$ to predict the residual $Y - f(X)$, the composite prediction strategy $g+f$ is better than $f$ alone. The principle which generalizes is first predicting via $f$, then correcting via the residual prediction $g$, and that this works if (and only if) $g$ is better than a certain prediction baseline (in the classical case with squared loss: always predicting the mean residual).\\
In the probabilistic case, the same idea is shown to work, though the additive correction has to be replaced by composition due to the probabilistic nature of prediction: $f$ predicts distributions, and $g$ is used to predict cdf applicates $F(X)(Y)$. Corollary~\ref{Cor:trafoboost}~(iv) shows that the composite prediction strategy $F_\sharp g$ is better than $f$ alone if (and only if) $g$ manages to predict the ``probabilistic residuals'' $F(X)(Y)$ better than the uniform uninformed predictor.\\

We note that somewhat similar computations have appeared in~\cite{menon2016linking}, some of which may be interpreted in terms of probabilistic residuals.

\subsection{Visual diagnostics of probabilistic models}
\label{sec:propres}
Residuals are a key motif in diagnostics of deterministic supervised models.
While Section~\ref{Sec:trafores} has introduced cdf applicates with properties similar to residuals, there are different objects in the probabilistic setting which share different aspects of classical residuals used in standard predictive model diagnostics. As such, it is no different from the classical deterministic setting for which also multiple types of residuals such as signed, unsigned and standardized residuals are known.\\

We collate a number of objects which may take place of these in the probabilistic regression setting (i.e., $\calY\subseteq\RR$) to diagnose predictive behaviour of a fixed method:

\begin{Def}
Let $(X_1^*,Y_1^*),\dots, (X_M^*,Y_M^*)$ be a test data batch t.v.i.~$(\calX\times \calY),$ and let $f:\calX\rightarrow \calP$ with $\calP\subseteq \Distr(\calY)$ be a prediction functional to be diagnosed. We define, for the rest of this section:
\begin{itemize}
\item[(i)] the sample of label pdf predictions $p_i:= f(X_i^*)$
\item[(ii)] a derived sample of point statistics, for example (but not necessarily) $Y'_i:=\EE[\widehat{Y}^*_i]$ where  $\widehat{Y}^*_i\sim p_i$
\item[(iii)] the sample of loss residuals $L_i:= L(p_i,Y_i^*)$
\item[(iv)] the sample of probability residuals $R_i:= F_i(Y_i^*)$, where $F_i$ is the cdf of the pdf $p_i$ define
\end{itemize}
\end{Def}

The natural analogies are as follows: loss residuals are analogous to the unsigned residuals, probability residuals analogous to signed and standardized residuals in the classical deterministic setting. The former analogy is given by the discussion in Sections~\ref{sec:classprob} and~\ref{sec:modelvalidation.estim}, while the latter analogy is given by the discussion at the end of Section~\ref{Sec:trafores}. More concisely, loss residuals are the natural basis for quantitative predictive model assessment, while the probability residuals are the natural basis for set-wise residual based model building strategies such as boosting.

More precisely, the predictions residuals above may be used for visual or quantitative diagnostics of residuals as follows:
\begin{itemize}
\item[(i)] A probabilistic true-vs-predicted plot, with bar/whiskers or quantile plot elements parallel to the $y$-axis. The $y$-axis indexes true values, the $x$-axis predicted values. The non-quantile point estimate $Y'_i$ may be additionally plotted. This is similar to the deterministic cross-plot in assessing model goodness and heteroscedasticity.
\item[(ii)] A probabilistic predicted-vs-loss-residual plot, as a scatter plot of pairs $(Y'_i,L_i)$, with $x$-axis indexing $\calY$ and $y$-axis the range of loss residual values. The $y$-axis may be zeroed by subtracting the paired loss values of the best uninformed baseline prediction.
\item[(iii)] A probabilistic predicted-vs-probability-residual plot, as a scatter plot of pairs $(Y'_i,R_i)$, with $x$-axis indexing $\calY$ and $y$-axis the range of probability residual values.
    As in the classical true-vs-residual plot, structure deviating from the expectation of a ($Y'_i$-conditional) uniform distribution indicates potential for residual boosting.
\end{itemize}

We would like to note that while it is tempting to replace the point statistic $Y'_i$ by the true value $Y^*_i$, the respective variants of (ii) and (iii) would be unclear to interpret due to interaction effects through the conditioning $L_i|Y^*_i$ resp.~$R_i|Y^*_i$. Especially the latter loses the property of $R_i|Y'_i$ being uniform for the perfect prediction.\\
It should be noted that this is not a property special to the probabilistic setting - the same issue arises in the case of deterministic point predictions where plotting residuals against true values instead of predicted values makes the plots difficult to interpret. For example, homoscedastic standard normal residuals in the signed residual plot (plotted against predicted values) are expected to form a band around the zero axis, while the analogue plot of signed residuals against true values will in general show a skew which in itself is both expected and unproblematic, thus uninformative.

\newpage
\section{Meta-algorithms for the probabilistic setting}
\label{sec:meta-algorithms}

Building on the setting in Section~\ref{sec:modelvalidation} and theoretical results in Section~\ref{sec:ltheory}, we provide applications in form of meta-algorithms derived from the probabilistic setting.
More precisely, in this section, we present meta-algorithms for:
\begin{itemize}
\item[(i)] using capped losses or mixture strategies to remove over-penalization of low-probability outliers when using the log-loss for assessment.

\item[(ii)] target adaption and target composition, for example to obtain probabilistic strategies from point prediction strategies, or to convert mixed predictions into continuous ones.

\item[(iii)] bagging or averaging probabilistic prediction strategies, based on properties of the probabilistic bias-variance trade-off in Proposition~\ref{Prop:BVprob}.

\item[(iv)] boosting probabilistic prediction strategies, based on a properties of the probability residuals considered in Section~\ref{Sec:trafores}.

\item[(v)] probabilistic predictive independence testing, i.e., testing of the hypothesis ``$X$ is independent of $Y$'' for two random variables $X,Y$, based on the equivalence of predictability and statistical dependence derived in Theorem~\ref{Thm:entropy}.
\end{itemize}

Meta-algorithms (i) and (ii) take the form of an output adaptor strategy, that is, a wrapping meta-strategy which creates a new prediction strategy by modifying solely predictions of a base strategy.\\
Meta-algorithms (iii) and (iv) take the form of an ensemble strategy, that is, a meta-strategy which creates a more powerful prediction strategy from a collection of weaker base strategies.\\
Meta-algorithm (v) takes the form of a workflow interface, that is, a meta-strategy which relates the workflow of statistical independence testing to the workflow of comparing of probabilistic supervised learning models through showing how any instance of the latter can be turned into a solution of the former.

\subsection{Target re-normalization and outlier capping for the log-loss}
\label{sec:logmix}

Before discussing more general meta-modelling strategies, we will discuss a specific meta-modelling strategy to address a peculiarity of the log-loss which can make successful adaptation of deterministic modelling strategies difficult.

Namely, if only for one test label point a very low probability/density is predicted, the loss residual for that test label point may be extremely high, up to a value of $\infty$ if a probability/density of zero is predicted. This property of heavily penalizing inaccurate predictions of low-probability events is not shared by all strictly proper losses, e.g., not by the squared integrated loss, however by Proposition~\ref{Prop:logcanon}, this issue cannot be solved by changing the loss without dropping strict locality which is a desirable property to have in at least one of the considered loss measures.

In this section, we outline an output adaptor meta-strategy, that is, a wrapping meta-strategy which acts by modifying predictions only, for the case of probabilistic regression, i.e., when the data generating process $(X,Y)$ takes values in $\calX\times \calY$ with $\calY\subseteq \RR$.

The main idea is to ensure that the predicted density is always above a certain value $\epsilon$, which implies an expected and empirical log-loss that is upper bounded by $-\log \epsilon$. For predictions on $[0,1]$ this can be achieved by mixing with a uniform, which can be interpreted both as a modification to the strategy and to the loss.

\begin{Prop}\label{Prop:lossapprox}
Let $f$ be a $[\calX\rightarrow \Distr(\calY)]$-valued prediction strategy, with $\calY = [0,1]$.
Consider the log-loss $L:(p,y)\mapsto -\log p(y)$
\begin{itemize}
\item[(i)] For the mixture strategy $f_\epsilon : x\mapsto \epsilon + (1-\epsilon) f(x)$, it holds that
$$ \min \left( -\log \epsilon, \varepsilon_L (f) - \log (1 - \epsilon)\right) \ge \varepsilon_L (f_\epsilon) \ge (1 - \epsilon)\varepsilon_L (f)$$

\item[(ii)] For the mixture loss $L_\epsilon : (p,y)\mapsto -\log \left(\epsilon + (1-\epsilon) p(y)\right)$, it holds that
$$ \min \left( -\log \epsilon, \varepsilon_L (f) - \log (1 - \epsilon)\right) \ge \varepsilon_{L_\epsilon}(f) \ge (1-\epsilon)\varepsilon_L (f) $$
\end{itemize}
Note that $L_\epsilon(f(x),y)) = L(f_\epsilon(x),y)$ for all $x\in\calX,y\in\calY$, so both statements are equivalent.
\end{Prop}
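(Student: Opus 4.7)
The plan is to reduce everything to elementary pointwise inequalities on the mixed density $\epsilon + (1-\epsilon) f(x)(y)$, then integrate over the joint law of $(X,Y,f)$. Note first that $L(f_\epsilon(x), y) = -\log(\epsilon + (1-\epsilon) f(x)(y)) = L_\epsilon(f(x), y)$ is immediate by substituting the definitions, so statements (i) and (ii) collapse to the same chain of inequalities on $\varepsilon_L(f_\epsilon) = \varepsilon_{L_\epsilon}(f)$, and it suffices to prove either.

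For the upper bound I would use two elementary pointwise lower bounds on the mixture. First, $\epsilon + (1-\epsilon) f(x)(y) \ge \epsilon$ by non-negativity of the density $f(x)$; applying $-\log$ (monotone decreasing) and integrating yields $\varepsilon_L(f_\epsilon) \le -\log \epsilon$. Second, $\epsilon + (1-\epsilon) f(x)(y) \ge (1-\epsilon) f(x)(y)$, whose $-\log$ splits additively into $-\log(1-\epsilon) - \log f(x)(y)$; taking expectations and using linearity gives $\varepsilon_L(f_\epsilon) \le \varepsilon_L(f) - \log(1-\epsilon)$. Taking the minimum of the two produces the upper bound as stated.

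For the lower bound I would invoke Jensen applied to the concave function $\log$ on the convex combination $\epsilon \cdot 1 + (1-\epsilon) f(x)(y)$ with weights $\epsilon$ and $1-\epsilon$ (where $1$ is the uniform density on $[0,1]$): this yields the pointwise inequality $\log(\epsilon + (1-\epsilon) f(x)(y)) \ge \epsilon \log 1 + (1-\epsilon) \log f(x)(y) = (1-\epsilon)\log f(x)(y)$, which after sign flip and integration delivers the required comparison between $\varepsilon_L(f_\epsilon)$ and $(1-\epsilon)\varepsilon_L(f)$.

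The main obstacle I anticipate is not any single inequality, which are all textbook, but the bookkeeping that ensures all expectations are well-defined: the mixture enforces $f_\epsilon(x)(y) \ge \epsilon > 0$, so $L(f_\epsilon(x), y) \le -\log \epsilon < \infty$ pointwise and every integral on the $f_\epsilon$-side is finite, while the bounds involving $\varepsilon_L(f)$ must be interpreted in the extended-real-line sense when $\varepsilon_L(f)$ itself is infinite (in which case the upper and lower bounds become trivial or automatic). Also worth checking at the end is that the Jensen step is in the right direction for the sign of $\varepsilon_L(f)$, since on $\calY=[0,1]$ densities may exceed $1$ and the corresponding losses may be negative.
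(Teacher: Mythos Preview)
Your approach is essentially identical to the paper's: the equivalence of (i) and (ii) and both branches of the upper bound come from the same pointwise monotonicity estimates on $\epsilon + (1-\epsilon)f(x)(y)$, and for the ``lower bound'' the paper invokes its convexity lemma (Lemma~\ref{Lem:functionjensencond}~(i.a)) applied to the mixture of $f$ with the uniform predictor $u$, which is exactly your Jensen step in disguise.

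There is, however, a genuine issue in the lower-bound step, and your closing worry about ``the right direction'' is precisely on target --- though the diagnosis is off. Concavity of $\log$ gives
\[
\log\bigl(\epsilon\cdot 1 + (1-\epsilon)\, f(x)(y)\bigr)\;\ge\;(1-\epsilon)\log f(x)(y),
\]
so after the sign flip one obtains $L(f_\epsilon(x),y)\le (1-\epsilon)\,L(f(x),y)$ \emph{pointwise}, and hence $\varepsilon_L(f_\epsilon)\le (1-\epsilon)\,\varepsilon_L(f)$. This is the \emph{opposite} of the inequality $\varepsilon_L(f_\epsilon)\ge (1-\epsilon)\,\varepsilon_L(f)$ as stated in the proposition. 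The direction here has nothing to do with the sign of $\varepsilon_L(f)$: the pointwise Jensen inequality holds unconditionally. A quick sanity check (take $f(x)$ to be the density $2y$ on $[0,1]$, $Y$ uniform, $\epsilon=1/2$: then $\varepsilon_L(f_\epsilon)\approx 0.045$ while $(1-\epsilon)\,\varepsilon_L(f)\approx 0.153$) confirms that the stated $\ge$ fails. The paper's own proof has the same defect: the cited lemma yields $\varepsilon_L(\EE g)\le \varepsilon_L(g)$, which with $g$ the random $\{u,f\}$-mixture gives $\varepsilon_L(f_\epsilon)\le (1-\epsilon)\,\varepsilon_L(f)$, not $\ge$ (and the displayed line in the paper additionally has $\varepsilon_L(f)$ where $\varepsilon_L(f_\epsilon)$ is meant). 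So your argument is sound mathematics, and it matches the paper's; but what both actually establish is $\varepsilon_L(f_\epsilon)\le (1-\epsilon)\,\varepsilon_L(f)$, and the rightmost inequality of the proposition is simply reversed. You should state the direction explicitly rather than writing ``delivers the required comparison'', and flag the discrepancy with the proposition as stated.
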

\begin{proof}
The statement $L_\epsilon(f(x),y)) = L(f_\epsilon(x),y)$ follows from definitions, thus (i) and (ii) are equivalent statements since the middle terns are equal.\\
For the first inequality, note that $-\log\epsilon \ge L_\epsilon(p,y)$, and $-\log \left((1-\epsilon) p(y)\right) \ge L_\epsilon (p,y)$  for any $p,y$, by monotonicity of the logarithm.\\
For the second inequality, Lemma~\ref{Lem:functionjensencond}~(i.a) may be applied, yielding
$$\varepsilon_L(f) \ge \epsilon\varepsilon_L(u) + (1-\epsilon)\varepsilon_L (f) = (1-\epsilon)\varepsilon_L (f),$$
where $u = [x\mapsto [y\mapsto 1]]$ is the uninformed predictor always predicting the uniform on $[0,1]$.
\end{proof}

Proposition~\ref{Prop:lossapprox} may be read in two ways:
(i) suggests modifying every, or potentially outlier sensitive prediction strategies by mixing predictions with the uniform; (ii) generically suggests modifying the log-loss to ensure lower boundedness. From a scientific perspective, the view (i) is preferable as the modified log-loss is no longer proper.\\
Either view also ensures, through the bounding inequalities, that for very small $\epsilon$, the estimated log-losses of the mixture modified predictor differ from those of the original method only in two ways: (a) lower bounding by $-\log \epsilon$, and (b) absolute/relative perturbation of size in the order of $\epsilon$. Thus, loss capped model comparison for mixture modified prediction strategies is equivalent to inference with the modified predictors, unless changes in the order of $\epsilon$ change significances. This equivalence to capped comparison may be made more quantitative in terms of a cut-off loss:

\begin{Cor}\label{Cor:cutoffloss}
Let $f$ be a $[\calX\rightarrow \Distr(\calY)]$-valued prediction strategy, with $\calY = [0,1]$.
Consider:\\
the log-loss $L:(p,y)\mapsto -\log p(y)$.\\
The cut-off log-loss $\tilde{L}_\epsilon:(p,y)\mapsto \min\left(-\log\epsilon, -\log p(y)\right)$.\\
The mixture log-loss $L_\epsilon : (p,y)\mapsto -\log \left(\epsilon + (1-\epsilon) p(y)\right)$.\\
It holds that:
\begin{itemize}
\item[(i)] $\varepsilon_{L_\epsilon} (f)  \le \left( \varepsilon_{\tilde{L}_\epsilon} (f) \right)$

\item[(ii)] $\left\|\varepsilon_{L_\epsilon} (f) - \varepsilon_{\tilde{L}_\epsilon} (f)\right\|\le \max\left(-\log (1 - \epsilon), \epsilon |\varepsilon_L (f)|\right)$
\end{itemize}
For small $\epsilon$, the RHS in (ii) is approximately $\epsilon\max(1,|\varepsilon_L (f)|)$.
\end{Cor}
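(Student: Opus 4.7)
The plan is to derive both (i) and (ii) via pointwise comparison of $L$, $L_\epsilon$, and $\tilde{L}_\epsilon$ and then integration against the law of $(X,Y)$. The algebraic core consists of the two mixture inequalities $\epsilon + (1-\epsilon)p(y)\ge \epsilon$ and $\epsilon + (1-\epsilon)p(y)\ge p(y)$ (the latter under the natural assumption $p(y)\le 1$, automatic in the discrete case), together with the Jensen-type bound $\varepsilon_{L_\epsilon}(f)\ge (1-\epsilon)\varepsilon_L(f)$ already supplied by Proposition~\ref{Prop:lossapprox}(ii).

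For (i), monotonicity of $-\log$ applied to the two mixture inequalities yields $L_\epsilon(p,y)\le -\log\epsilon$ and $L_\epsilon(p,y)\le -\log p(y)=L(p,y)$ pointwise. Taking the min of these two upper bounds gives $L_\epsilon(p,y)\le \min(-\log\epsilon,L(p,y))=\tilde{L}_\epsilon(p,y)$, and integration over $(X,Y)$ delivers (i).

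For (ii), I would split the absolute difference into its two signed directions and bound each separately. The upper direction $\varepsilon_{L_\epsilon}(f)-\varepsilon_{\tilde{L}_\epsilon}(f)\le -\log(1-\epsilon)$ comes from replacing the second mixture inequality by its universally valid weakening $\epsilon+(1-\epsilon)p(y)\ge (1-\epsilon)p(y)$, giving $L_\epsilon(p,y)\le L(p,y)-\log(1-\epsilon)$; combined with $L_\epsilon(p,y)\le -\log\epsilon$ and the elementary fact $\min(a,b+c)\le \min(a,b)+c$ for $c\ge 0$, this yields the universal pointwise bound $L_\epsilon(p,y)\le \tilde{L}_\epsilon(p,y)-\log(1-\epsilon)$, which integrates to the claim. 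The reverse direction $\varepsilon_{\tilde{L}_\epsilon}(f)-\varepsilon_{L_\epsilon}(f)\le \epsilon\varepsilon_L(f)$ uses the trivial pointwise $\tilde{L}_\epsilon\le L$ (so $\varepsilon_{\tilde{L}_\epsilon}(f)\le \varepsilon_L(f)$) together with Proposition~\ref{Prop:lossapprox}(ii)'s lower bound $\varepsilon_{L_\epsilon}(f)\ge (1-\epsilon)\varepsilon_L(f)$; subtracting gives the stated inequality. The $\max$ and $|\cdot|$ in the corollary then absorb the sign of $\varepsilon_L(f)$ via a routine case distinction: when $\varepsilon_L(f)\ge 0$ the reverse direction is controlled by $\epsilon\varepsilon_L(f)=\epsilon|\varepsilon_L(f)|$, while when $\varepsilon_L(f)<0$ the reverse direction becomes vacuous and the universal $-\log(1-\epsilon)$ slack alone dominates.

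The stated small-$\epsilon$ approximation $\epsilon\max(1,|\varepsilon_L(f)|)$ is then immediate from the Taylor expansion $-\log(1-\epsilon)=\epsilon+O(\epsilon^2)$. The main subtlety I expect to encounter is controlling the regime where $p(y)$ is close to $\epsilon$: there the pointwise discrepancy $\tilde{L}_\epsilon(p,y)-L_\epsilon(p,y)$ can be of order $\log 2$ rather than $O(\epsilon)$, and a careful expectation-level analysis is required to verify that this contribution is dominated by the $\epsilon|\varepsilon_L(f)|$ term (which is itself of order $-\log\epsilon$ precisely in the scenarios where $p(y)$ is small on average). The pointwise failure of the inequality underlying (i) on the set $\{p(y)>1\}$ is the other place where one must be careful; it is exactly compensated by the universal $-\log(1-\epsilon)$ slack appearing in the upper-direction bound of (ii).
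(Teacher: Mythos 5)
The paper offers no proof of this corollary (it is left implicit as a consequence of Proposition~\ref{Prop:lossapprox}), so your attempt must stand on its own — and it has two genuine gaps, both of which you half-notice but do not close. First, your proof of (i) rests on the pointwise inequality $\epsilon+(1-\epsilon)p(y)\ge p(y)$, which is equivalent to $p(y)\le 1$ and is \emph{not} automatic here: the predictions are densities on $\calY=[0,1]$ and may exceed $1$ (the paper itself warns against the ``$p\le 1$'' fallacy in a footnote of Section~\ref{sec:probloss}). Where $p(y)>1$ one has $\tilde{L}_\epsilon(p,y)=-\log p(y)\lneq L_\epsilon(p,y)$, so the pointwise inequality underlying (i) reverses; e.g.\ for the uninformed predictor with constant density $2$ on $[0,\tfrac12]$ and $Y$ uniform on $[0,\tfrac12]$ one gets $\varepsilon_{L_\epsilon}(f)=-\log(2-\epsilon)\gneq-\log 2=\varepsilon_{\tilde{L}_\epsilon}(f)$. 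Your remark that this failure is ``compensated by the $-\log(1-\epsilon)$ slack in (ii)'' is beside the point for (i), which has no slack; (i) genuinely needs the hypothesis $p(y)\le 1$ that you smuggle in.

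Second, the reverse direction of (ii) leans on the ``lower bound'' $\varepsilon_{L_\epsilon}(f)\ge(1-\epsilon)\varepsilon_L(f)$ quoted from Proposition~\ref{Prop:lossapprox}(ii). That inequality is indeed printed in the paper, but convexity of $-\log$ (equivalently, convexity of the log-loss, Proposition~\ref{Prop:losses}(i)) gives $-\log(\epsilon\cdot 1+(1-\epsilon)p(y))\le\epsilon\cdot(-\log 1)+(1-\epsilon)(-\log p(y))=(1-\epsilon)L(p,y)$, i.e.\ the \emph{opposite} direction, so you cannot use it as stated. The ``subtlety'' you defer in the regime $p(y)\approx\epsilon$ is in fact a genuine obstruction rather than a technicality: take $f$ uninformed with predicted density equal to $\epsilon$ on a set carrying all the mass of $Y$; then $\varepsilon_{\tilde{L}_\epsilon}(f)-\varepsilon_{L_\epsilon}(f)=\log(2-\epsilon)\approx\log 2$, while $\max\left(-\log(1-\epsilon),\epsilon|\varepsilon_L(f)|\right)=\max\left(-\log(1-\epsilon),-\epsilon\log\epsilon\right)\rightarrow 0$ as $\epsilon\rightarrow 0$, so the expectation-level analysis you postpone cannot succeed. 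What your argument does correctly and cleanly establish is the one universally valid half of (ii), namely $\varepsilon_{L_\epsilon}(f)-\varepsilon_{\tilde{L}_\epsilon}(f)\le-\log(1-\epsilon)$ via $\min(a,b+c)\le\min(a,b)+c$; the remaining claims require additional hypotheses (boundedness $p\le 1$ for (i), and control of the probability mass on which $p(y)=O(\epsilon)$ for the other half of (ii)) that neither you nor the corollary's statement supplies.
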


Corollary~\ref{Cor:cutoffloss} may be seen as a justification for the cut-off log-loss $\tilde{L}_\epsilon$, by expressing it as a perturbation of a strictly proper loss $L$, applied to a modified prediction strategy, the mixture strategy $f_\epsilon$ from Proposition~\ref{Prop:lossapprox}, thus making the cut-off loss more than simply a heuristic.\\

For univariate real predictions not in the range $[0,1]$, variable transformation results from Section~\ref{Sec:trafores} may be applied, by mapping the full real line to the unit interval. For this purpose, the sigmoid function is (not the only sensible choice but) appealing due to simplicity of the analytical expressions it involves:

\begin{Lem}\label{Lem:sigmoid}
Let $s:\RR\rightarrow [0,1], x\mapsto (\exp(-x)+1)^{-1}$ be the (logistic) sigmoid function. Write $s'$ for its first derivative. Then:
\begin{itemize}
\item[(i)] $s^{-1} = \logit,$ where $\logit: x\mapsto \log (x) - \log (1-x)$

\item[(ii)] $s'(x) = s(x)\cdot (1-s(x))$

\item[(iii)] $s'(s^{-1}(s)) = x\cdot (1-x)$

\item[(iv)] $\frac{\diff}{\diff x} \left[\log(s(x))\right] = 1-s(x)$
\end{itemize}
\end{Lem}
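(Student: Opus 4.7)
The plan is to derive all four identities from elementary calculus, with each part following more or less directly from the previous ones, so the proof is mostly a matter of organizing routine manipulations. Nothing here is a real obstacle; the only thing to be careful about is the algebraic simplification $\tfrac{e^{-x}}{1+e^{-x}} = 1 - s(x)$, which is the key step that makes the derivative formula take its compact multiplicative form.

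First I would prove (i) by inverting the defining equation: setting $y = (1+e^{-x})^{-1}$ and solving for $x$ gives $e^{-x} = (1-y)/y$, hence $x = \log y - \log(1-y) = \logit(y)$, and one checks that the implicit domain/range conditions $y \in (0,1)$ and $x \in \RR$ match. Then for (ii) I would differentiate $s(x) = (1+e^{-x})^{-1}$ by the chain rule to obtain $s'(x) = e^{-x}/(1+e^{-x})^2$, and rewrite the right-hand side as $s(x) \cdot \tfrac{e^{-x}}{1+e^{-x}}$; the identity $\tfrac{e^{-x}}{1+e^{-x}} = 1 - \tfrac{1}{1+e^{-x}} = 1 - s(x)$ then gives the claimed product form $s'(x) = s(x)(1-s(x))$.

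Part (iii) is immediate from (ii): substituting $s^{-1}(x)$ into the formula and using $s(s^{-1}(x)) = x$ yields $s'(s^{-1}(x)) = x(1-x)$. Part (iv) is equally direct from (ii) by the chain rule: $\tfrac{d}{dx}\log s(x) = s'(x)/s(x) = (s(x)(1-s(x)))/s(x) = 1 - s(x)$, with $s(x) > 0$ everywhere justifying the division. No deeper structural argument is needed; the whole lemma is a bookkeeping exercise assembling standard sigmoid identities that will be used downstream for the logit-based output adaptor.
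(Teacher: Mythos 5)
Your proposal is correct and matches the paper's approach, which simply asserts that all four statements follow from elementary analytical computations; you have merely written out those computations explicitly (correctly reading the typo in (iii) as $s'(s^{-1}(x)) = x(1-x)$). Nothing further is needed.
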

\begin{proof}
The statements are all results of elementary analytical computations.
\end{proof}

\begin{Cor}\label{Cor:lossapproxsigma}
Let $f$ be a $[\calX\rightarrow \Distr(\calY)]$-valued prediction strategy, with $\calY \subseteq \RR$.
Consider the log-loss $L:(p,y)\mapsto -\log p(y)$ and the uniform prediction strategy's pullback $s_\sharp u$, where $u:[x\mapsto [y\mapsto 1]]$.
\begin{itemize}
\item[(i)] Explicitly, it holds that $s_\sharp u: x\mapsto s'(x) = s(x)(1-s(x))$.

\item[(ii)] For the mixture strategy $f_\epsilon : x\mapsto \epsilon s_\sharp u + (1-\epsilon) f(x)$, it holds that
$$ \min \left( \varepsilon_L (s_\sharp u) -\log \epsilon, \varepsilon_L (f) - \log (1 - \epsilon)\right) \ge \varepsilon_L (f_\epsilon) \ge \epsilon \varepsilon_L (s_\sharp u) + (1 - \epsilon)\varepsilon_L (f)$$
\item[(iii)] For the mixture strategy $f_\epsilon : x\mapsto \epsilon s_\sharp u + (1-\epsilon) f(x)$, it holds that
$$ \min \left(-\log \epsilon, \varepsilon_L (s^\sharp f) - \log (1 - \epsilon)\right) \ge \varepsilon_L (s^\sharp f_\epsilon) \ge  (1 - \epsilon)\varepsilon_L (f),$$
where generalization losses are for predicting $s(Y)$.
\end{itemize}
\end{Cor}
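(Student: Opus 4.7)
The plan is to treat the three parts in turn, with (i) being a one-line definition-chasing, (ii) mirroring the proof of Proposition~\ref{Prop:lossapprox}~(i) verbatim with a non-uniform background density, and (iii) reducing to Proposition~\ref{Prop:lossapprox}~(i) after invoking the operator identities of Lemma~\ref{Lem:Tsharp}.

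For part (i), I would unwind the definition $s_\sharp = (s^{-1})^\sharp$ and apply the push-forward formula of Lemma~\ref{Lem:trafo}~(ii) with $T = s^{-1}$. Since $u \equiv 1$ and $(s^{-1})'(x) = \logit'(x) = [x(1-x)]^{-1}$, evaluating the Jacobian reciprocal at $s(z)$ gives $(s_\sharp u)(z) = s(z)(1-s(z))$, which equals $s'(z)$ by Lemma~\ref{Lem:sigmoid}~(ii).

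For part (ii), I would repeat the argument for Proposition~\ref{Prop:lossapprox}~(i), replacing the uniform baseline on $[0,1]$ by the logistic density $s_\sharp u$ identified in (i). Pointwise, $\epsilon\, s_\sharp u(y) + (1-\epsilon)\, f(x)(y) \ge \max\left(\epsilon\, s_\sharp u(y),\, (1-\epsilon)\, f(x)(y)\right)$; monotonicity of $-\log$ and taking expectations yield both halves of the upper bound $\min(\varepsilon_L(s_\sharp u) - \log\epsilon,\, \varepsilon_L(f) - \log(1-\epsilon))$. The lower bound $\epsilon\, \varepsilon_L(s_\sharp u) + (1-\epsilon)\, \varepsilon_L(f)$ follows from Lemma~\ref{Lem:functionjensencond}~(i.a) by regarding $f_\epsilon$ as the expectation of the two-atom strategy-valued random variable placing mass $\epsilon$ on $s_\sharp u$ and $1-\epsilon$ on $f$, together with convexity of the log-loss.

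For part (iii), the central observation is the algebraic identity
\[
s^\sharp f_\epsilon \;=\; \epsilon\, s^\sharp s_\sharp u + (1-\epsilon)\, s^\sharp f \;=\; \epsilon\, u + (1-\epsilon)\, s^\sharp f,
\]
where linearity of the pull-back (as an integral operator with Jacobian weight) gives the first equality and the round-trip identity $s^\sharp s_\sharp = \id$ from Lemma~\ref{Lem:Tsharp}~(iii.a) gives the second. This shows that $s^\sharp f_\epsilon$ is literally the $\epsilon$-uniform mixture of $s^\sharp f$ on $[0,1]$, so Proposition~\ref{Prop:lossapprox}~(i), applied to the strategy $s^\sharp f$ predicting the $[0,1]$-valued target $s(Y)$, delivers the upper bound as stated and a lower bound of $(1-\epsilon)\, \varepsilon_L(s^\sharp f)$. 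The main obstacle, and essentially the only non-routine step, is bridging this to the asserted lower bound which features $\varepsilon_L(f)$ rather than $\varepsilon_L(s^\sharp f)$. I would close this gap via Proposition~\ref{Prop:trafo}~(ii), which yields $\varepsilon_L(s^\sharp f) = \varepsilon_L(f) + \EE[\log s'(Y)]$; since $s'(y)\le 1/4$ forces $\EE[\log s'(Y)] \le 0$, the stated bound $(1-\epsilon)\,\varepsilon_L(f)$ is the weaker but cleaner form, and the additive correction has to be tracked (or absorbed) to match the statement exactly.
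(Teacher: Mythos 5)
Your part (i) and the reduction in part (iii) coincide with the paper's proof: the paper also establishes (iii) by applying Proposition~\ref{Prop:lossapprox} to $g:=s^\sharp f$ after noting $g_\epsilon = s^\sharp f_\epsilon$ (your operator-identity computation just makes this explicit). The genuine divergence is in (ii): you re-run the pointwise argument of Proposition~\ref{Prop:lossapprox} with the logistic density in place of the uniform, whereas the paper deduces (ii) from (iii) via Corollary~\ref{Cor:loglossmono}. The paper's route is worth internalising because it is exactly where the ``additive correction'' you worry about at the end gets absorbed: by Proposition~\ref{Prop:trafo}~(ii), $\varepsilon_L(h)-\varepsilon_L(s^\sharp h) = -\EE[\log s'(Y)] = \varepsilon_L(s_\sharp u)$ for every strategy $h$, and adding this constant offset to all terms of (iii) turns $-\log\epsilon$ into $\varepsilon_L(s_\sharp u)-\log\epsilon$ and $(1-\epsilon)\varepsilon_L(s^\sharp f)$ into $(1-\epsilon)\varepsilon_L(f)+\epsilon\,\varepsilon_L(s_\sharp u)$, i.e., precisely the bounds of (ii).

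Two directional slips need fixing. First, your bridge for the lower bound of (iii) goes the wrong way: since $\EE[\log s'(Y)]\le 0$ you have $(1-\epsilon)\varepsilon_L(s^\sharp f)\le(1-\epsilon)\varepsilon_L(f)$, so the printed bound $(1-\epsilon)\varepsilon_L(f)$ is the \emph{stronger} statement and cannot be recovered from Proposition~\ref{Prop:lossapprox} plus your correction term; it is not the ``weaker but cleaner form''. The paper's one-line proof of (iii) delivers only $(1-\epsilon)\varepsilon_L(s^\sharp f)$, and that is how the right-hand side has to be read for the subsequent derivation of (ii) to be consistent. Second, in (ii) you invoke convexity of the log-loss and Lemma~\ref{Lem:functionjensencond}~(i.a) for the lower bound, but convexity gives $L(\EE[P],y)\le\EE[L(P,y)]$, hence $\varepsilon_L(f_\epsilon)\le\epsilon\,\varepsilon_L(s_\sharp u)+(1-\epsilon)\varepsilon_L(f)$ --- the reverse of what you assert. (The same tool is cited for the corresponding inequality inside Proposition~\ref{Prop:lossapprox}, so this is inherited rather than introduced by you, but as written your step does not establish the claimed direction.) Your two upper bounds in (ii), by contrast, are correct and agree with what the paper's route yields.
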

\begin{proof}
(i) follows from Lemma~\ref{Lem:Tsharp}~(iii) which implies $s_\sharp = (s^{-1})^\sharp$.\\
(iii) follows from Proposition~\ref{Prop:lossapprox}, applied to $g:=s^\sharp f$ (taken as $f$ of Proposition~\ref{Prop:lossapprox}) and noticing that $g_\epsilon = s^\sharp f_\epsilon$.\\
(ii) follows from (iii) and Corollary~\ref{Cor:loglossmono}.
\end{proof}

Corollary~\ref{Cor:lossapproxsigma} states that the consequences of Proposition~\ref{Prop:lossapprox} still hold true, and usage of a capped log-loss, or perturbation mixtures - in the real case for example with the distribution $s(x)(1-s(x))$ - are still justified.

\subsection{Target adaptors and target composite strategies}
\label{sec:adaptors}
Depending on the native type of a prediction strategy $f$ t.v.i.~$\calX\rightarrow \calP$ for some $\calP\subseteq \Distr(\calY)$ in comparison to the task the strategy is to be evaluated for, the predicted type of $f$ may have to be changed as to allow fair evaluation with respect to the task of importance.

The most important cases are adaptors for conversion of:

\begin{itemize}
\item[(i)] probabilistic predictions to point predictions

\item[(ii)] point predictions to probabilistic predictions

\item[(iii)] empirical samples such as Bayesian predictive posterior samples to continuous distributions

\item[(iv)] mixed distributions to continuous distributions

\item[(v)] a continuous distribution to a specific parametric one

\end{itemize}

We discuss the different cases separately.
Many adaptors below take the obvious form of concatenation with a functional in $[\calP\rightarrow \calQ]$ for the desired $\calQ\subseteq \Distr(\calY)$, but we would like to point out that this does not necessarily need to be the case: adaptors may depend on the features and targets, and can be fitted to the training data as well; they also may combine multiple (not necessarily probabilistic) prediction strategies into a prediction strategy of the desired type.\\

\begin{Def}
We call functionals in $[\calP\rightarrow \calQ]$, as above, \emph{1:1 adaptor functionals}.\\
We call random variables t.v.i.~$[\calP\rightarrow \calQ]$ \emph{1:1 adaptor strategies}.
\end{Def}

\subsubsection{Conversion to point prediction}

While Section~\ref{sec:classprob} outlines how point prediction may be obtained as a sub-case for a fixed choice of distribution class, it may still be of interest to obtain a back-conversion to the classical supervised learning task.

For this, we briefly introduce some theory specific to the point prediction case, following~\cite{burkart2017predictive}.

\begin{Def}\label{Def:convex}
A functional $L:\calY\times \calY\rightarrow \RR$ is called point prediction loss functional, by convention we consider the first argument the predicted, and the second argument the observed label. Furthermore:\\
\begin{enumerate}
\item[(i)] $L$ is called convex if $L(\EE[Y],y)\le \EE[L(Y,y)]$ for all $y\in \calY$ and random variables $Y$ t.v.i.~$\calY$.
\item[(ii)] $L$ is called strictly convex if $L$ is convex, and $L(\EE[Y],y) = \EE[L(Y,y)]$ if and only if $Y$ is a constant random variable.
\end{enumerate}
\end{Def}

Point prediction loss functionals are further canonically paired with summary statistics of distributions:

\begin{Def}\label{Def:elicit}
Let $L:\calY\times \calY\rightarrow \RR$ be a convex point prediction loss functional.\\
For $p\in\Distr(\calY)$, define
$$\mu_L:\Distr(\calY)\rightarrow \calY, \; p\mapsto \underset{y\in\calY}{\argmin} \EE\left[L(y,Y)\right]\;\mbox{where}\; Y\sim p$$
(taking an arbitrary but fixed minimizer if there are multiple).
Following~\citet{gneiting2007strictly}, $\mu_L$ is called the \emph{eliciting functional} associated to $L$, and $\mu_L(p)$ is the \emph{summary} of $p$ \emph{elicited by} $L$.
\end{Def}

Well-definedness of $\mu_L$, i.e., existence of a minimizer, is ensured by convexity of $L$.
Well-known examples of elicited summaries are given in the following:

\begin{Lem}\label{Lem:elicit}
The following relations between losses and elicited statistics of real-valued random variables hold:
\begin{itemize}
\item[(i)] the convex squared loss $L_{sq}:(y,y_*)\mapsto (y-y_*)^2$ elicits the mean. That is, $\mu_{L_{sq}}(\calL(Z)) = \EE [Z]$ for any $\RR^n$-valued random variable $Z$.
\item[(ii)] the (convex but not strictly convex) absolute loss $L_{abs}:(y,y_*)\mapsto |y-y_*|$ elicits the median(s). That is, $\mu_{L_{abs}}(\calL(Z)) = \mbox{median} [Z]$ for any $\RR$-valued random variable $Z$.
\item[(iii)] the (convex but not strictly convex) quantile-loss (or short: Q-loss) $L(y,y_*)=\alpha\cdot m(y_*,y) + (1-\alpha)\cdot m(y,y_*)$, with $m(x,z)=\min(x-z,0)$, elicits the $\alpha$-quantile(s).
            That is, $\mu_L([Y]) = F^{-1}_Y(\alpha)$ for any $\RR$-valued random variable $Y$, where $F^{-1}_Y:[0,1] \rightarrow P(\RR)$ is the set-valued inverse c.d.f.~of $Y$ (with the convention that the full set of inverse values is returned at jump discontinuities rather than just an extremum).
\end{itemize}
\end{Lem}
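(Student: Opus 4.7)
The plan is to establish each item by directly minimizing the univariate map $\varphi(y) := \EE[L(y,Z)]$ in $y$ and identifying the set of argmin with the claimed summary. In each case, convexity of $\varphi$ follows by pushing convexity of $L(\cdot,z)$ through the expectation, so the set of minimizers is a (possibly degenerate) interval characterized by the first-order optimality condition $0 \in \partial\varphi(y)$.

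For (i), expanding $\EE[(y-Z)^2] = y^2 - 2y\,\EE[Z] + \EE[Z^2]$ yields a strictly convex smooth quadratic whose unique critical point, obtained by setting the derivative to zero, is $y = \EE[Z]$; this proves $\mu_{L_{sq}}(\calL(Z)) = \EE[Z]$.

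For (ii), I would use that $y \mapsto |y-z|$ has subdifferential $\sgn(y-z)$ for $y \neq z$ and $[-1,1]$ at $y = z$. Interchanging subdifferentiation with expectation (justified since the family is uniformly bounded) gives the one-sided derivatives of $\varphi$ as $P(Z < y) - P(Z \ge y)$ from the right and $P(Z \le y) - P(Z > y)$ from the left. The optimality condition $0 \in \partial\varphi(y)$ then reduces to $P(Z < y) \le 1/2 \le P(Z \le y)$, which is precisely the definition of a median; the set-valued inverse c.d.f.~convention $F_Z^{-1}(1/2)$ captures the entire plateau at a jump discontinuity.

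For (iii) the calculation is analogous: after suitably interpreting the loss so that minimization is well-posed, the subdifferential of $y \mapsto L(y,z)$ in $y$ takes the values $-\alpha$ for $y > z$, $1-\alpha$ for $y < z$, and the interval $[-\alpha, 1-\alpha]$ at $y = z$. Taking expectations and requiring $0 \in \partial\varphi(y)$ collapses to $P(Z < y) \le \alpha \le P(Z \le y)$, which is exactly the characterization $y \in F_Z^{-1}(\alpha)$ in the set-valued sense; note that (ii) is the special case $\alpha = 1/2$. The main obstacle throughout is handling non-smoothness and non-uniqueness of the minimizer at atoms and flat regions of $F_Z$, which is precisely what motivates working with subdifferentials and the set-valued inverse c.d.f.~convention flagged in the lemma statement. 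A minor bookkeeping matter in (iii) is fixing the sign convention on $m$ so that the optimization is a minimization rather than a maximization; once chosen, the first-order analysis above goes through verbatim.
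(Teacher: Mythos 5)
The paper does not actually prove Lemma~\ref{Lem:elicit}: it is introduced with ``well-known examples of elicited summaries are given in the following'' and no proof environment follows, the facts being treated as classical (they go back to the standard theory of elicitability, cf.~\citet{gneiting2007strictly}). So your subdifferential argument is not a rederivation of the paper's route but a proof the paper omits; it is the standard textbook argument, and its skeleton --- convexity of $\varphi(y)=\EE[L(y,Z)]$, first-order condition $0\in\partial\varphi(y)$, dominated convergence to interchange the (uniformly bounded) difference quotients with $\EE$ --- is sound and does establish all three items, including the set-valued statements at atoms and flat regions of $F_Z$.

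That said, your intermediate derivative bookkeeping is garbled in two places, even though your final characterizations are the correct ones. In (ii) the one-sided derivatives are swapped: the \emph{right} derivative of $\varphi$ is $P(Z\le y)-P(Z>y)$ and the \emph{left} derivative is $P(Z<y)-P(Z\ge y)$; requiring the former $\ge 0$ and the latter $\le 0$ gives $P(Z<y)\le 1/2\le P(Z\le y)$, which is what you state. In (iii) the subgradient values you quote ($-\alpha$ for $y>z$, $1-\alpha$ for $y<z$) are decreasing in $y$ and so cannot be the subdifferential of a convex function; for the standard pinball loss they should read $1-\alpha$ for $y>z$ and $-\alpha$ for $y<z$, which then yields exactly $P(Z<y)\le\alpha\le P(Z\le y)$. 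You are right to flag the sign convention in the paper's formula: with $m(x,z)=\min(x-z,0)$ the displayed $L$ is nonpositive and \emph{maximized} at $y=y_*$; moreover, after negating, the weights as printed penalize over-prediction with $\alpha$ and under-prediction with $1-\alpha$, so taken literally the statement elicits the $(1-\alpha)$-quantile --- i.e.\ the printed loss needs both a sign flip and an $\alpha\leftrightarrow(1-\alpha)$ swap (or $\min$ replaced by $\max$) to match the claim. None of this affects the validity of your method; just fix the signs consistently before writing it out.
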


The above implies two slightly different 1:1 adaptor functionals to point predictions:
\begin{enumerate}
\item[(i)] In practice, point prediction loss functionals are all convex. Thus, the predictive mean adaptor $\mu_{L_{sq}}: \calL(Z) \mapsto \EE[Z]$ is sensible, since by definition of convexity, it will always outperform a random sample from $Z$ in terms of the expected generalization loss, for any point prediction loss $L$ (not only the squared loss).
\item[(ii)] By Lemma~\ref{Lem:elicit}, the eliciting functional for the specific loss considered, $\mu_L$, is also a valid choice.
\end{enumerate}

It is interesting to note that among the 1:1 adaptor strategies, none is necessarily better than the other. (ii) will be best only if the predicted distributions' elicited statistics are sufficiently close to the conditionally elicited statistic. This is the case if the prediction is perfect, but for other predictions may arbitrarily deviate and leave (i) as the better option. In general, only systematic experiments can determine which option is better.\\

On a more general note, the more general class of 1:1 adaptor strategies convert predicted distributions into point predictions, while being allowed to use the features for prediction, and to train on the training data. As such, the task type for such 1:1 adaptor strategies falls in the field of distributional regression or classification, more generally distributional supervised learning, since it takes distributions as features.

\subsubsection{Adaptors for point prediction strategies}
\label{sec:adaptors.PPandResid}

We start with an important observation for the task of converting point predictions into probabilistic predictions: in general, there is no sensible way to translate a single point prediction into a single probabilistic prediction, as for a sensible specification of a distribution, at least two numbers must be specified\footnote{This statement is valid for all practical purposes, but it is not strictly mathematically true as there is a bijection $\RR^2\rightarrow \RR$, such as the Peano bijection, or mapping decimal expansion digits of the two input numbers to odd and even digits of the output number. However, all such bijections, in stylized practice, only add to the computational and theoretical burden rather than simplifying anything. Thus, validity of the statement that at least two numbers must be specified is in terms of necessity for all practical purposes - even though we are unaware of an argument from the theory of computation that would make this semi-qualitative statement mathematically and logically precise.}, usually corresponding to a location and a dispersion parameter. This is also mirrored by the discussion in Section~\ref{sec:classprob} where even for classical predictors turned probabilistic, the noise term $\epsilon$ needs to be specified, which at least requires a dispersion type parameter such as the noise variance, which together with the classical prediction implies at least two numbers to specify.

From the viewpoint of adaptors, this means that sensible point-prediction-to-probabilistic adaptor algorithms are necessarily composite.

The simplest form of such composite adaptors is obtained by composing point predictors which output the parameters of a parametric distribution - these can be obtained by choosing parameters which can be elicited in the sense of Definition~\ref{Def:elicit}, as detailed in Algorithm~\ref{alg:adapt-elicit}.

\begin{algorithm}[ht]
\caption{Elicitation-composition adaptor.\newline
\textit{Input:} Training data $\calD = \{(X_1,Y_1),\dots (X_N,Y_N)\}$. Data $(X_i,Y_i)$ t.v.i.~$\calX\times \calY$ (usually: $\calY\subseteq \RR$).
A family of parametric distributions $p(\theta_1,\dots,\theta_m)\in\Distr(\calY), \theta_i\in\calY$, together with point prediction losses $L_1,\dots,L_m:\calY\times\calY\rightarrow \RR$ such that $L_i$ elicits $\theta_i$, i.e., $\mu_{L_i}(p(\theta_1,\dots,\theta_m)) = \theta_i$. Point prediction strategies $f_i,i=1\dots m$ t.v.i.~$[\calX\rightarrow \calY]$, where $f_i$ is good w.r.t.~$L_i$.\newline
\textit{Output:} a composite probabilistic prediction strategy $f$ t.v.i.~$[\calX\rightarrow \Distr (\calY)$ \label{alg:adapt-elicit}}
\begin{algorithmic}[1]
    \State Fit $f_i$ to $\calD$, such that $f_i$ has small $L_i$-generalization loss.
    \State Return $f:=[x\mapsto p(f_1(x),\dots,f_m(x))].$
\end{algorithmic}
\end{algorithm}

One subtle thing to note is that the variance of a distribution cannot be elicited (see~\cite{Gneiting2011}), hence Algorithm~\ref{alg:adapt-elicit} is not directly applicable to many parametric output distributions parameterized in the standard way, e.g., to Gaussian distributions $p(\mu,\sigma^2)$ where $\mu,\sigma^2$ bear their usual meaning of mean and variance. While this may in-principle remedied through re-parameterization in terms of quantiles which are elicited by the quantile losses (see Lemma~\ref{Lem:elicit}), this may not be advisable as quantile estimation is known to be potentially unreliable due to potentially high variance, in the order of one over density squared at that quantile which may be very problematic when there is model mismatch.

One possible remedy are the non-elicitation based quantile estimation strategies as described in Section~\ref{sec:quantileregression}, which do not necessarily estimate predictive quantiles separately; another possible solution is the residual adaptor strategy described in Algorithm~\ref{alg:adapt-resid}, closely related to heteroscedastic prediction interval strategies as described in Section~\ref{sec:heteroscedastic-regression}, and primarily applicable to prediction of two-parameter, location-dispersion parametric distributions.

\begin{algorithm}[ht]
\caption{Residual-composition adaptor.\newline
\textit{Input:} Training data $\calD = \{(X_1,Y_1),\dots (X_N,Y_N)\}$. Data $(X_i,Y_i)$ t.v.i.~$\calX\times \calY$, where $\calY\subseteq \RR.$
A family of parametric distributions $p(\mu,\sigma)$, where $\mu$ is a location and $\sigma$ is a variance parameter.
Point prediction strategies $g,h$ t.v.i.~$[\calX\rightarrow \calY]$, with small generalization squared loss.\newline
\textit{Output:} a composite probabilistic prediction strategy $f$ t.v.i.~$[\calX\rightarrow \Distr (\calY)]$ \label{alg:adapt-resid}}
\begin{algorithmic}[1]
    \State Fit $g$ to $\calD$, such that $g$ has small squared generalization loss.
    \State Define $\rho^2_i:=\left(g(X_i) - Y_i\right)^2$
    \State Fit $h$ to training data $(X_1,\rho^2_1),\dots,(X_m,\rho^2_m)$.
    \State Return $f:=[x\mapsto p(g(x),h(x))]$.
\end{algorithmic}
\end{algorithm}

The residual adaptor Algorithm~\ref{alg:adapt-resid} is closely related to the point prediction type baseline resp.~its more general boosting type version given in Algorithm~\ref{alg:classiprob}, though both are slightly different. The residual adaptor Algorithm~\ref{alg:adapt-resid} combines two point prediction strategies, one for the location and one for the dispersion, into a probabilistic strategy, while Algorithm~\ref{alg:classiprob} takes as input one point prediction strategy for the location and a probabilistic density estimation strategy for the distributional shape. Both agree in the sub-case where the density estimator $h$ in Algorithm~\ref{alg:classiprob} has the form of $[x\mapsto p(0,h(x))]$ in Algorithm~\ref{alg:adapt-resid} for a centered family of distributions $p$.

In this practically very relevant sub-case we would like to make an important remark: empirical experiments indicate that it may be advantageous to lower bound the output of the point prediction functional $h$ in Algorithm~\ref{alg:adapt-resid} which predicts the dispersion. This lower bound itself may be tuned or predicted, but it can also be fixed (to a fixed fraction of the label variance, say). On an interesting note, this is itself a composition step around the point predictor, so with this bound, the final form of Algorithm~\ref{alg:adapt-resid} is a four-fold hierarchical composite.

Algorithm~\ref{alg:adapt-resid} also has a number of straightforward directions of generalization:
\begin{enumerate}
\item[(i)] Replacing squared residuals by absolute and log-residuals, and/or changing the dispersion parameter in terms of these residuals.
\item[(ii)] Replacing squared loss with other point prediction losses, concordant with the parametric form of $p$, as per Proposition~\ref{Prop:classic}.
\item[(iii)] For multivariate predictions, i.e., $\calY = \RR^m$, one may want to choose prediction strategies $g$ t.v.i.~$[\calX\rightarrow \calY]$ and $h$ t.v.i.~$\calX\rightarrow [\calY\otimes\calY]$, where $\calY\otimes \calY = \RR^{m\times m}$ is the tensor product space containing multivariate dispersion parameters such as covariance matrices. The residuals, in this case, need to be defined via the tensor square, e.g., $\rho^2_i := (g(X_i)-Y_i)\cdot (g(X_i)-Y_i)^\top$.
\end{enumerate}

\subsubsection{Adaptors for empirical samples such as from the Bayesian predictive posterior}

An important class of potential output distributions are empirical sample distributions, i.e., distributions which are averages of point masses. We introduce convenient notation for these.

\begin{Def}
For any set, denote by $S^\ast$ the set of tuples in $S$ of arbitrary length; for (boldface) $\bs\in\S^\ast$, denote by $s_i$ (non-boldface with sub-index) the $i$-th element, and by $\ell(s)$ the length of $s$.\\
\end{Def}

\begin{Def}
An empirical (sample) distribution on $\calY$ is a distribution of the form
$$\delta(\by):=\frac{1}{m}\sum_{i=1}^{\ell(\by)} \delta (y_i),$$
where for $y\in\calY$, we denote by $\delta(y)$ the delta distribution supported at $y$.\\
$\delta(\by)\in\Distr(\calY)$ is called the empirical distribution of $\by$, considered as a fixed ordered sample.
\end{Def}

Many Bayesian prediction strategies yield empirical distributions as outputs when sampling from a predictive posterior, even if the task is to predict a continuous distribution. A particularity of this setting is that the number of points in the posterior sample may be chosen by the experimenter, as a parameter of the method, and/or may vary.

An empirical-sample-to-continuous adaptor strategy will hence have the type $\calY^\ast\rightarrow \calP$, where we identify $\calY^\ast$ with empirical sample distributions, and where $\calP$ is the set of continuous distributions.
If a mixed loss such as in Section~\ref{sec:mixed} is used, no adaption is necessary, but transformation to continuous distributions may still benefit predictive performance.

Generally, it should also be noted that adaptors of the above type $[\calY^\ast\rightarrow \calP]$ superficially have the same core type as the density estimation baselines discussed in Section~\ref{Sec:uninfbase} which do not make use of training features, as both types of methods take as input a $\calY$-valued sample, and output a (potentially continuous) distribution. However, there is a subtle difference: the $\calY$-valued sample is the training set for the uninformed density estimation baseline, and the same learnt density is returned for every test feature. For the adaptor, a different $\calY$-valued sample, in $\calY^\ast$, is seen for each \emph{test data point} and is to be converted into a different continuous sample each. Despite this contextual difference, though, similar strategies may be used for both and are straightforward to convert to the other situation.

Two basic strategies are kernel and histogram density estimation, Algorithms~\ref{alg:adapt-kern} and~\ref{alg:adapt-hist} below.

\begin{algorithm}[ht]
\caption{Kernel density estimation adaptor.\newline
\textit{Input:} Empirical distribution $\delta(\by)$ with $\by\in\calY^\ast$. Continuous kernel distribution function $k_\sigma:\calY\rightarrow \RR^+$ with variance parameter $\sigma$, which is also a pdf. Bootstrap parameters $B,b$.\newline
\textit{Output:} A continuous distribution with pdf $p:\calY\rightarrow \RR^+$ \label{alg:adapt-kern}}
\begin{algorithmic}[1]
    \State Randomly sample $\by_1,\dots, \by_B\subseteq \by$, where $\card \by_i = b$.
    \State Estimate variances $\sigma_i$ of the samples $\by_i$, for all $i=1..B$ (e.g., by the empirical sample variance, or setting a pre-determined $\sigma$ if $\card\by_i = 1$).
    \State Return $p:=\left[x\mapsto \frac{1}{B}\sum_{i=1}^B \sum_{y\in\by_i} k_{\sigma_i}(y-x) \right]$
\end{algorithmic}
\end{algorithm}

\begin{algorithm}[ht]
\caption{Histogram density estimation adaptor.\newline
\textit{Input:} Empirical distribution $\delta(\by)$ with $\by\in\calY^\ast$. Disjoint bins $B_1,\dots, B_m\subseteq \calY$ with volume $V_1,\dots, V_m\in \RR$.\newline
\textit{Output:} A continuous distribution with pdf $p:\calY\rightarrow \RR^+$\label{alg:adapt-hist}}
\begin{algorithmic}[1]
    \State Define $V\leftarrow \sum_{i=1}^m V_m$
    \State Let $b_i\leftarrow \card (B_i\cap \by)/\card\by$ for all $i=1..m$
    \State Return $p:=\left[x\mapsto \sum_{i=1}^m \card(B_i\cap \{x\})\cdot \frac{b_i}{V_i} \right]$
\end{algorithmic}
\end{algorithm}

Classical kernel density estimation in Algorithm~\ref{alg:adapt-kern} is obtained for choosing $b=1,B=m$, taking $\by_i$ disjoint, and fixing $\sigma$ to be the same.
Histogram estimation via Algorithm~\ref{alg:adapt-hist} may have to be combined with target re-normalization in Section~\ref{sec:logmix} as histogram bars outside the bins $B_i$ will carry density zero which may lead to infinite generalization log-loss.
For both algorithms, hyper-parameters such as the choice of density kernel and histogram bin choice may be tuned on the training set.

\subsubsection{Adaptors for mixed distributions}
\label{sec:adaptors.mixed}
The case where a general mixed distribution is predicted is as rare as off-shelf algorithms which produce mixed predictions. Converting such a distribution to a continuous one may be part of the prediction strategy, required as the task is to predict a continuous distribution, or as part of the convolution loss presented in Section~\ref{sec:mixed.conv}.

In either case, convolution is (not the only but) a straightforward choice; Algorithm~\ref{alg:adapt-conv} provides a probabilistic algorithm for obtaining an approximation to a continuous convolution distribution.

Before stating it, we briefly introduce notation for easy notation of mixed distributions:

\begin{Def}
Denote by $\Delta(\RR)$ the elements of $(\RR^+)^\ast$ which sum to one, i.e., the probability simplex of arbitrary length.\\
For $\by\in\calY^\ast$ and $\bw\in\Delta(\RR)$ of equal length $\ell(\by) = \ell(\bw)$, define
$$\delta(\by,\bw) := \sum_{i=1}^{\ell(\by)} w_i\cdot \delta(y_i).$$
\end{Def}

As discussed in Section~\ref{sec:mixed.split}, every mixed decomposition $p$ may be decomposed as $p=\alpha_c q + \alpha_d\delta(\by,\bw)$ for $(\alpha_c,\alpha_d),\bw\in\Delta(\RR)$, $\by\in\calY^\ast$, and $q\in\Distr(\calY)$ continuous.

We are ready to state the convolution Algorithm~\ref{alg:adapt-conv}.

\begin{algorithm}[ht]
\caption{Convolution adaptor.\newline
\textit{Input:} Mixed distribution $p = \alpha_c q + \alpha_d\delta(\by)\in \Distr(\calY)$, with $q$ continuous and $\by\in\calY^\ast$. For convolution, a continuous random variable $Z$ (sampleable) with pdf $p_Z$. Sampling size parameter $m$.\newline
\textit{Output:} An approximation of $p\ast p_Z$. \label{alg:adapt-conv}}
\begin{algorithmic}[1]
    \State Return the function $f$ below as the convoluted density.
    \Function{$f$}{$y\in\calY$}
    \State Initialize $p_y\leftarrow \alpha_d\sum_{i=1}^m p_Z(y-y_i).$
    \State If $(\alpha_c,\alpha_d) = (0,1)$, return $p_y$ and terminate; the approximation is exact. Otherwise continue.
    \State Obtain an i.i.d. random sample $Z_1,\dots, Z_m\sim Z$.
    \State Set $p_y \leftarrow p_y + \alpha_c\frac{1}{m}\sum_{i=1}^m q(y-Z_i)$.
    \State Return $p_y$.
    \EndFunction
\end{algorithmic}
\end{algorithm}

We briefly show in which sense Algorithm~\ref{alg:adapt-conv} is correct and by which asymptotic it approximates:

\begin{Prop}
Keep the notation of Algorithm~\ref{alg:adapt-conv}.\\
Write $g:=[y\mapsto \frac{1}{m}\sum_{i=1}^m q(y-Z_i)],$ note that $g$ is a random variable t.v.i.~$[\calY\rightarrow \RR^+]$.\\
It holds that:
\begin{enumerate}
\item[(i)] $f = \alpha_c\cdot g + \alpha_d\cdot \delta(\by)\ast p_Z$,
\item[(ii.a)] $\EE[g] = q\ast p_Z.$
\item[(ii.b)] In particular, $\EE[f] = p\ast p_Z$.
\item[(iii)] $\Var[f(y)] = \alpha_c^2\Var[g(y)] = \frac{\alpha_c^2}{\sqrt{m}}\Var[q(y-Z)],$ for any $y\in\calY$.
\item[(iv)] If $\Var[f(y)]$ is finite for a fixed $y\in\calY$, then
$$\Var[f(y)] = \alpha_c^2\sqrt{m}\left(f(y) - \EE f(y)\right)\overset{d}{\rightarrow} \calN \left(0, \alpha_c^2 \Var  q(y-Z)\right),\;\mbox{as}\; m\rightarrow \infty.$$
\end{enumerate}
\end{Prop}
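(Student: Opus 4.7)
The plan is to take the five assertions in order, noting that each reduces to an elementary fact about sample means of i.i.d.\ random variables, once the deterministic/random decomposition of $f$ is made explicit.

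First, I would establish (i) directly from inspection of the algorithm. The function $f$ is defined by summing the initialization term, which is exactly $\alpha_d \cdot [\delta(\by)\ast p_Z](y)$ by the definition of the convolution of a discrete distribution with a continuous pdf, and the second term, which is precisely $\alpha_c\cdot g(y)$ by the definition of $g$. Thus $f = \alpha_c\cdot g + \alpha_d\cdot \delta(\by)\ast p_Z$ as a random function. This decomposition is the workhorse for all subsequent parts, because the first summand is deterministic (depending only on the fixed input data $\by, p_Z, \alpha_d$) while the second is a sample mean of i.i.d.\ terms $q(y-Z_i)$.

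For (ii.a) and (ii.b), I would compute $\EE[g(y)] = \EE\left[\frac{1}{m}\sum_{i=1}^m q(y - Z_i)\right] = \EE[q(y-Z)] = (q\ast p_Z)(y)$ by linearity of expectation, the i.i.d.\ assumption on the $Z_i$, and the definition of convolution. Statement (ii.b) then follows from (i) by linearity of expectation, combined with the distributivity of convolution over the mixture decomposition $p = \alpha_c q + \alpha_d\delta(\by)$.

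For (iii), I would use (i): since $\alpha_d\cdot \delta(\by)\ast p_Z$ is deterministic, it contributes zero to the variance and $\Var[f(y)] = \alpha_c^2 \Var[g(y)]$. Then, as $g(y)$ is a sample mean of $m$ i.i.d.\ copies of $q(y-Z)$, the usual identity $\Var[\frac{1}{m}\sum q(y-Z_i)] = \frac{1}{m}\Var[q(y-Z)]$ yields the claim. For (iv), under the finite-variance hypothesis, the classical i.i.d.\ central limit theorem applied to the random variables $q(y-Z_i)$ gives $\sqrt{m}\left(g(y) - \EE[g(y)]\right) \overset{d}{\to} \calN\!\left(0,\Var q(y-Z)\right)$, and multiplying by the constant $\alpha_c$ (together with the fact that the deterministic summand cancels after centering) yields the asserted convergence with asymptotic variance $\alpha_c^2 \Var q(y-Z)$.

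There is no substantive obstacle here; each step is a routine application of linearity, independence, or the CLT. The only points requiring minor care are (a) making sure the deterministic/random split in (i) is stated cleanly so that variance and centering computations proceed without distraction, and (b) verifying that the finite-variance hypothesis in (iv) is exactly what the classical CLT requires for the i.i.d.\ variables $q(y-Z_i)$, which it is.
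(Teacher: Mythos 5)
Your proof is correct and follows essentially the same route as the paper's: decompose $f$ into the deterministic discrete-part convolution $\alpha_d\cdot\delta(\by)\ast p_Z$ plus $\alpha_c$ times the i.i.d.\ sample mean $g$, then apply linearity of expectation, independence of the $Z_i$, and the classical CLT. One remark: your variance computation in (iii) correctly yields $\frac{\alpha_c^2}{m}\Var[q(y-Z)]$, whereas the statement as printed has $\frac{\alpha_c^2}{\sqrt{m}}$ --- the printed exponent is a typo (it is inconsistent with the $\sqrt{m}$ scaling of the CLT in (iv)), and your version is the one that actually holds.
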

\begin{proof}
(i) follows from writing out definitions, and noting that $\delta(y_i)\ast p_Z = [y\mapsto p_Z(y-y_i)]$.\\
(ii.a) follows from observing that for all $i = 1..m$, it holds that
$$\EE\left[q(y-Z_i)\right] = \int_\calY p_Z(z)\cdot q(y-z)\;\diff z = [q\ast p_Z](y).$$
The full statement follows from this, linearity of expectation and definition of $\EE[g]$.\\
(ii.b) follows from (ii.a) and linearity of convolution.\\
(iii) follows from substituting definitions and observing that the summands $q(y-Z_i)$ are independent since the $Z_i$ are.\\
(iv) follows from the central limit theorem for the mean of i.i.d.~samples
$g(y) = \frac{1}{m}\sum_{i=1}^m q(y-Z_i).$
\end{proof}

\subsubsection{Conversion to parametric distributions}

It may also be beneficial to convert a general non-parametric predicted distribution into a distribution from a prescribed parametric family, especially if it is known that the conditionals will be similar to that form. This task is special in three respects:
\begin{enumerate}
\item[(i)] It has the format of distributional regression, i.e., the input is a distribution.
\item[(ii)] It is probabilistic, i.e., the output is also a distribution.
\item[(iii)] It is supervised, i.e., the output distribution sought is not necessarily the ``closest'' in a discrepancy sense, or ``best'' in an estimation sense, but the one with the lowest probabilistic generalization loss.
\end{enumerate}

Having said that, nevertheless parametric estimation and discrepancy estimation strategies apply in principle, and form a class of baseline adaptors. Both are classical topics in statistics, though studying modifications to cope with all the features (i)-(iii) above is beyond the scope of this manuscript, especially since the situations in which there is an empirical benefit to converting a non-parametric prediction back to a parametric distribution type is not entirely clear.

\subsection{Bagging and model averaging for probabilistic prediction strategies}
\label{sec:baggeraging}

Bagging~\cite{breiman1996bagging} and Bayesian model averaging~\cite{raftery1997bayesian,hoeting1999bayesian} are popular techniques to improve model fit and generalization ability, leading to some of the best performing algorithms in practice, such as Breiman's random forests~\cite{breiman2001random} or Bayesian additive regression trees~\cite{chipman2010bart}.

In the supervised prediction setting, these techniques may be seen as a sub-case of entropic variance reduction by probabilistic marginalization as discussed in Corollary~\ref{Cor:posterior} which
we will slightly reformulate to make the connection to the bagging/averaging settings clearer:

\begin{Cor}
\label{Cor:bagging}
Let $\Theta$ be some parameter set. For a fixed parameter $\theta\in \Theta$,
let $f_\theta$ be a prediction strategy, i.e., a $[\calX\rightarrow\Distr(\calY)]$-valued random variable.
Let $\calD$ be any random variable, e.g., the training data and/or the source of uncontrollable randomness in $f_\theta$.
The following are true:
\begin{itemize}
\item[(i)] For any $\Theta$-valued random variable $Z$, it holds that $\varepsilon(\EE[f_Z|\calD])\le \EE\left[ \varepsilon(f_Z)\right]$.
\item[(ii)] For any (random or deterministic) $\theta_1,\dots, \theta_B\in\Theta$, it holds that
$$ \varepsilon\left(\frac{1}{B}\sum_{i=1}^B \widehat{f}_{\theta_i} \right)\le \frac{1}{B}\sum_{i=1}^B \varepsilon\left(\widehat{f}_{\theta_i}\right)$$
(where the averaged predictor predicts the average of distributions).
\end{itemize}
(ii) is a special case of (i), where $Z$ is the uniform distribution over $\{\theta_1,\dots, \theta_B\}$.
The difference between left hand side and right hand side, in either case, is $\Var (f_Z|\calD)$.
\end{Cor}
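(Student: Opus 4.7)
The plan is to deduce both parts from the immediately preceding Corollary~\ref{Cor:posterior}(ii), with part (ii) following from part (i) as a specialization, exactly as the statement itself advertises.

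For part (i), I would view the family $\{f_\theta\}_{\theta\in\Theta}$ together with the sampling of $Z$ as defining a single prediction strategy $g := f_Z$, i.e., the $[\calX\rightarrow \Distr(\calY)]$-valued random variable obtained by first sampling $Z$ and then the corresponding $f_Z$. Setting ``$\theta := Z$'' in the notation of Corollary~\ref{Cor:posterior}(ii), one has $g_{\setminus Z} = \EE_{Z|\calD}[g|\calD] = \EE[f_Z|\calD]$, so that corollary directly yields $\varepsilon_L(\EE[f_Z|\calD]) \le \varepsilon_L(f_Z)$. Since $\varepsilon_L$ already takes a total expectation, one may rewrite the right-hand side as $\EE[\varepsilon_L(f_Z)]$ by the tower property, which delivers (i). Equivalently, this is the conditional Jensen inequality of Lemma~\ref{Lem:functionjensencond}(ii.a), applied to the convex (in fact proper) loss $L$ and to $\EE_{Z|\calD}$.

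For part (ii), take $Z$ to be uniform on the finite set $\{\theta_1,\dots,\theta_B\}$, independent of all other randomness; then by construction $\EE[f_Z|\calD] = \tfrac{1}{B}\sum_{i=1}^B f_{\theta_i}$ (in the sense of Corollary~\ref{Cor:posterior}, noting that the averaged predictor is defined as the one that predicts the average distribution), and $\EE[\varepsilon_L(f_Z)] = \tfrac{1}{B}\sum_{i=1}^B \varepsilon_L(f_{\theta_i})$. Substituting into (i) gives (ii).

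For the closing variance-gap claim, I would apply the probabilistic bias-variance decomposition of Proposition~\ref{Prop:BVprob} to both sides. Since $\EE[\EE[f_Z|\calD]] = \EE[f_Z]$, the predictions $\EE[f_Z|\calD]$ and $f_Z$ share the same global mean functional, hence identical $\DBias_L$, $\PBias_L$ and $\Err_L(Y/X)$ terms; subtracting leaves exactly $\Var_L(f_Z|\calD)$, the residual variance attributable to $Z$ after conditioning on $\calD$. The only subtle point, which I expect to be the main but ultimately minor obstacle, is confirming that $\EE[f_Z|\calD]$ is well-defined as a $[\calX\rightarrow \Distr(\calY)]$-valued object (pointwise in $x$, the map $\theta\mapsto f_\theta(x)$ must be integrable into $\Distr(\calY)$). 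This is guaranteed by the Bochner/Gelfand--Pettis framework already committed to in Section~\ref{sec:probloss} via the convexity and total-variation-completeness of $\Distr(\calY)$, so no fresh technical work is required.
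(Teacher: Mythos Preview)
Your proposal is correct and follows essentially the same route as the paper: both statements are derived from Corollary~\ref{Cor:posterior}(ii) by identifying $\theta$ there with $Z$ here (then taking the expectation over $X$ to pass from the $X$-conditional inequalities to $\varepsilon_L$), with (ii) the uniform-$Z$ specialization, and the variance-gap claim obtained from the bias-variance decomposition since only the $\Var$ term differs. Your write-up is in fact more explicit than the paper's own proof, which is terse to the point of a sketch; the well-definedness remark you add is a nice touch but not present in the original.
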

\begin{proof}
Both statements follow from Corollary~\ref{Cor:posterior} where additionally expectations over $X$ are taken.
For (i), take the random variable $\theta$ in Corollary~\ref{Cor:posterior} to be $Z$ in this statement.
For (ii), substitute for $\theta$ in Corollary~\ref{Cor:posterior} the uniform distribution over $\{\theta_1,\dots, \theta_B\}$ into (i).
For the statement on the difference, note that expectations taken inside or outside only make a difference in case of the $\Var$ term in the bias-variance trade-off, since the others are deterministic.
\end{proof}

Corollary~\ref{Cor:bagging} hence yields strategies for constructing a stronger model out of a collection of weaker ones: select $Z$ in either case in a way such that
$\Var (f_Z | \calD)$ is large. Or, modify an initial model $f$ to one depending on $Z$, in a way that $\varepsilon(f_Z)$ is close to $\varepsilon(f)$ and $\Var (f_Z | \calD)$ is large.
In vernacular use, the strategy in Corollary~\ref{Cor:bagging}~(i) would be understood as ``model averaging'', especially if $\Theta$ parameterizes different, possibly nested models (or model classes); the strategy in Corollary~\ref{Cor:bagging}~(ii) would be understood as ``bagging'' a predictor, especially if the $\theta_i$ or $Z$ are some predictor $f$ learnt on independent re- or sub-samples of the training sample and/or feature set (and usually analyzed in the context of the classical squared loss or bias-variance trade-off, see Section~\ref{Sec:biasvariance} for the full correspondence).

The above does not indicate directly which types models profit from bagging or model averaging, though it implies that it should be exactly those which
have a high (entropic) variance dependence on hyper-parameter choices, such as decision trees in the classical setting.

It is also interesting to note that the probabilistic way of bagging appears to differ from the classical bagging strategy, as the probabilistic version involves mixing the output distributions, while the second involves averaging of the prediction itself, which by Section~\ref{sec:classprob} corresponds to averaging location parameters of parametric distributions rather than mixing them. This seeming discrepancy can be reconciled in the case of the deterministic squared loss (or the mean squared error) by noting that the predictive means, i.e., the expectations of the predicted distributions are the same in both cases, leading to the same point prediction strategy when considering the best point estimate as measured by the mean squared loss.\\
For deterministic losses other than the mean loss, the two bagging strategies in general need not agree with each other, however other parallels may be drawn, such as from the mean absolute loss to bragging (see e.g.~\cite{buhlmann2003bagging}), which is a parallel technique involving median aggregation.

\newpage
\subsection{Boosting of probabilistic predictors}
\label{sec:boost}

Many powerful meta-learning and ensembling strategies in supervised regression learning (i.e., in the situation $\calY\subseteq \RR$) rely on insightful use of residuals in the training phase. The most prominent example is boosting which is one of the most powerful meta-learning strategies in classical supervised learning, yielding well-known algorithms such as AdaBoost or the gradient boosting machines. Other examples include stacking and nesting strategies which, among others, leads to the back-propagation type algorithms for neural networks.

Boosting is a meta-learning strategy which is based on incrementally reducing the loss of a prediction strategy by modifying the prediction by a weak learner's prediction. Boosted predictors, in particular gradient boosting machines based on tree learners, have empirically proven to be superior in supervised
prediction tasks~\cite{friedman2000additive,friedman2001greedy,schapire2003boosting}.

Close connections of boosting to back-fitting, greedy-fitting, and incremental optimization of the log-likelihood have already been noted by~\citet{friedman2000additive} in the context of AdaBoost and generalized linear models.

Algorithm~\ref{alg:greedyboost} provides a probabilistic formulation of greedy-type residual boosting, based on Corollary~\ref{Cor:trafoboost}, which can be applied in the where case $\calY \subseteq \RR$, similar to classical back- or greedy-fitting. As in the classical case, a learning parameter $\alpha$ allows to regulate the strength of the boosting update, which is tunable, also dynamically.

\begin{algorithm}[h]
\caption{Greedy probabilistic residual boosting.\newline
\textit{Input:} Training data $\calD = \{(X_1,Y_1),\dots (X_N,Y_N)\}$. Data $(X_i,Y_i)$ take values in $\calX\times \calY$ where $\calY \subseteq \RR$.\newline
Weak learning strategies $f_1,\dots, f_k$, t.v.i.~$\calX\rightarrow \calY$, that can be trained on plug-in data.\newline Learning rate parameter $\alpha \in [0,1]$.\newline
\textit{Output:} a boosted predictor $f:\calX\rightarrow \Distr(\calY)$ \label{alg:greedyboost}}
\begin{algorithmic}[1]
    \State Fit an uninformed prediction functional $f^{(0)} : \calX\rightarrow \Distr (\calY)$ to $Y_1,\dots, Y_N$ (this could be ``always predict the sigmoid'',  as in Section~\ref{sec:logmix}).
    \State Let $F^{(0)}$ be the cdf corresponding to the unique predicted density of $f^{(0)}$.
    \State Set $\rho_{0j}\leftarrow F^{(0)}(Y_j)$ for all $j=1\dots N$.
    \For{$i=1\dots k$}
    \For{$j=1\dots N$}
    \State Denote by $F_{ij}$ the cdf to the pdf $f^{(i-1)}(X_j)$
    \State Compute residuals: set $\rho_{ij} \leftarrow F_{ij}(\rho_{{i-1},j})$
    \EndFor
    \State Fit strategy $f^{(i)}$ to data $(X_j,\rho_{ij}),j=1\dots N.$
    \State $f^{(i)}\leftarrow \alpha u + (1-\alpha) f^{(i)}$, where $u$ is the uniform uninformed predictor
    \EndFor
    \State Return $f:= f^{(0)}_\sharp f^{(1)}_\sharp f^{(2)}_\sharp  \dots f^{(N-1)}_\sharp f^{(N)}$, with abbreviating pull-back notation as in Definition~\ref{Def:compose}.
\end{algorithmic}
\end{algorithm}

\begin{algorithm}[h]
\caption{Gentle probabilistic boosting.\newline
\textit{Input:} Training data $\calD = \{(X_1,Y_1),\dots (X_N,Y_N)\}$. Data $(X_i,Y_i)$ take values in $\calX\times \calY$.
Weak learning strategies $1\dots M$ that can be trained on weighted training data. Learning rate parameters $\alpha,\gamma \in [0,\infty]$.\newline
\textit{Output:} a boosted predictor $f:\calX\rightarrow \Distr(\calY)$ \label{alg:gentleboost}}
\begin{algorithmic}[1]
    \State Initialize weights $w_1,\dots, w_N \leftarrow 1/N$.
    \State Learn an uninformed prediction functional $b_0 : \calX\rightarrow \Distr (\calY)$.
    \For{$i=1\dots M$}
    \State Fit strategy $i$ to $\calD$ using weight $w_j$ with data $(X_j,Y_j), j=1\dots N,$\\
     to obtain prediction functional $b_i: \calX\rightarrow \Distr (\calY)$
    \State Compute $\widehat{\beta} \leftarrow \argmin_\beta \sum_{j=1}^N L(f_{i-1}(X_j) + \beta b_i(X_j),Y_j)$, e.g., by gradient descent or by approximation
    \State Set $f_i\leftarrow (1-\gamma\cdot \widehat{\beta})\cdot f_{i-1} + \gamma\cdot\widehat{\beta}\cdot b_i$
    \State Update $w_i \leftarrow w_i + \alpha\cdot w_i\cdot L(f_i(X_i),Y_i)$
    \State Update $w_i \leftarrow w_i/\sum_{i=1}^N w_i$
    \EndFor
    \State Return $f:= f_m$.
\end{algorithmic}
\end{algorithm}

Algorithm~\ref{alg:gentleboost} provides an extension to the probabilistic case based on the Gentle AdaBoost idea~\cite[Algorithm 4]{friedman2000additive}
and observation~3 of~\cite[Section~4.2]{friedman2000additive} by which the AdaBoost update approximates a log-loss based update, which takes the place of the
AdaBoost update in Algorithm~\ref{alg:gentleboost}. Further, the probabilistic algorithm returns full probabilistic prediction which is a (linear) mixture of
the weak learner's probabilistic predictions.
Apart from that, Algorithm~\ref{alg:gentleboost} is a straightforward generalization of (a probabilistic variants of) Gentle AdaBoost. In accordance with modern implementations of boosting, tunable learning parameters  $\alpha,\gamma$ for the size of the model updates are included.

Corollary~\ref{Cor:bagging} indicates that for both algorithms, one would expect a performance that is at least not worse than of the unboosted variants.

\subsection{A multi-variate test for independence}
\label{Sec:hypotest}
With Theorem~\ref{Thm:entropy}, we have shown that statistical independence is equivalent to predictability.
This directly motivates a statistical test for independence, using model comparison of probabilistic prediction strategies.

More precisely, statistical independence can be tested directly by the scientifically necessary baseline comparison between the best uninformed predictor and a candidate prediction strategy.
By the remaining results in Section~\ref{sec:entropy}, this comparison may equally be interpreted as a hypothesis test for the entropy $\Ent (Y)$ and the conditional entropy $\Ent (Y/X)$ being different.

\subsubsection{Constructing a probabilistic predictive independence test}
We re-state previous equivalences which directly relate the independence testing task, the model validation task, and the entropy comparison task. In the following, notation is as in the generic probabilistic supervised learning setting task (Section~\ref{sec:propreset}), and we consider a fixed, strictly proper, strictly convex loss $L:\calP\times \calY\rightarrow \RR$ with $\calP\subseteq \Distr (\calY)$.

\begin{Prop} \label{Prop:estent}
In the above setting, the following are equivalent:
\begin{itemize}
\item[(i)] $X$ and $Y$ are not statistically independent.
\item[(ii)] There exists a prediction functional $f:\calX\rightarrow \Distr (\calY)$ such that $\varepsilon_L(f) \lneq \Ent_L(Y)$.
\item[(iii)] There exists a prediction functional $f:\calX\rightarrow \Distr (\calY)$ such that $\varepsilon_L (f) \lneq \varepsilon_L (g)$ for all uninformed prediction functionals $g$.
\item[(iv)] $\Ent_L(Y/X)\lneq \Ent_L(Y)$
\item[(v)] $\Bias_L (\varpi_Y)\gneq 0$
\end{itemize}
The validity of the above equivalence statement is unchanged by:
\begin{itemize}
\item[(a)] replacing any of the three mentions of prediction functionals in (ii) and (iii) by mentions of prediction strategies, t.v.i.~$[\calX\rightarrow \Distr (\calY)]$
\item[(b)] replacing quantification over ``all uninformed prediction functionals $g$'' in (iii) by the best uninformed predictor $g=\varpi_Y$.
\end{itemize}
\end{Prop}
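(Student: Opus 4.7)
The plan is to recognize that Proposition~\ref{Prop:estent} bundles together several restatements of the equivalences already established in Theorem~\ref{Thm:entropy} and Lemma~\ref{Lem:Entpre}, plus a one-line calculation for part~(v). I would anchor the proof at~(iv): the equivalence (i)~$\Leftrightarrow$~(iv) is literally the equivalence of~(i) and~(vi) in Theorem~\ref{Thm:entropy}, with strictness coming from Lemma~\ref{Lem:Entpre}~(v) which supplies $\Ent_L(Y/X)\le \Ent_L(Y)$ unconditionally, so ``not equal'' and ``strictly less'' coincide here.

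Next, I would reduce (ii) and (iii) to the same statement, simultaneously handling modification~(b). By Lemma~\ref{Lem:Entpre}~(iii), $\Ent_L(Y)=\varepsilon_L(\varpi_Y)$; by Lemma~\ref{Lem:Entpre}~(i), $\varpi_Y$ is the unique minimizer of $\varepsilon_L(g)$ over uninformed prediction functionals $g$. Consequently the three conditions ``$\varepsilon_L(f)\lneq \Ent_L(Y)$'' from~(ii), ``$\varepsilon_L(f)\lneq \varepsilon_L(\varpi_Y)$'' from variant~(b), and ``$\varepsilon_L(f)\lneq \varepsilon_L(g)$ for all uninformed $g$'' from~(iii) are three rephrasings of ``$f$ is $L$-better-than-uninformed''. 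The equivalence of~(i) and~(iii) in Theorem~\ref{Thm:entropy} then bridges to~(i). For modification~(a), one direction is trivial since every prediction functional is (a constant-valued) prediction strategy; for the other direction, Lemma~\ref{Lem:functionjensencond}~(i.a) (equivalently, Corollary~\ref{Cor:posterior}) lets me replace a beating strategy $f$ by the beating functional $\EE f$, since $\varepsilon_L(\EE f)\le \varepsilon_L(f)\lneq \varepsilon_L(\varpi_Y)$; alternatively, one may quote the equivalence of~(ii) and~(iii) in Theorem~\ref{Thm:entropy} directly.

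For~(v), a direct calculation suffices. Since $\varpi_Y$ is deterministic (a constant function-valued random variable), $\EE\varpi_Y = \varpi_Y$, and Definition~\ref{Def:biasvar} together with Lemma~\ref{Lem:Entpre}~(iii)--(iv) collapse to
\[
\Bias_L(\varpi_Y)=\varepsilon_L(\varpi_Y)-\varepsilon_L(\varpi_{Y|X})=\Ent_L(Y)-\Ent_L(Y/X),
\]
so (v)~$\Leftrightarrow$~(iv).

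I do not anticipate a substantive obstacle: the proof is essentially bookkeeping on top of Theorem~\ref{Thm:entropy}, Lemma~\ref{Lem:Entpre}, and the bias-variance decomposition. The one item to handle with mild care is aligning the strict inequalities in~(ii)--(v) with the ``not equal'' formulations inherited from Theorem~\ref{Thm:entropy}; this is uniformly handled by the unconditional $\le$ direction in Lemma~\ref{Lem:Entpre}~(v) (and the analogous Proposition~\ref{Prop:BVprob}~(i.b) nonnegativity), which in each case promotes ``$\neq$'' to the correctly-signed ``$\lneq$''.
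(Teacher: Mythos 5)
Your proposal is correct and follows essentially the same route as the paper, whose proof is a one-line citation of Theorem~\ref{Thm:entropy}, Lemma~\ref{Lem:Entpre}, and Proposition~\ref{Prop:uninformed}; you simply spell out the bookkeeping, including the correct promotion of ``$\neq$'' to ``$\lneq$'' via Lemma~\ref{Lem:Entpre}~(v) and the identity $\Bias_L(\varpi_Y)=\Ent_L(Y)-\Ent_L(Y/X)$ for part~(v).
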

\begin{proof}
This follows from the equivalences in Theorem~\ref{Thm:entropy}, Lemma~\ref{Lem:Entpre}, and Proposition~\ref{Prop:uninformed}.
\end{proof}

The equivalence in Proposition~\ref{Prop:estent} are directly applicable for hypothesis testing.
Namely, a prediction of $Y$ from $X$ which successfully outperforms all possible uninformed baselines (i.e., a ``better-than-uninformed prediction'') certifies for the alternative hypothesis, statistical dependence of $Y$ on $X$, in the following precise way:

\begin{Rem}
Let $\widehat{\varpi}_{Y|X}$ be a consistent (in the training size) estimate for the best predictor $\varpi_{Y|X}$, and let
$\widehat{\varpi}_{Y}$ be a consistent (in the training size) estimate for the best uninformed predictor $\varpi_{Y}$.
Let $\widehat{\varepsilon}_L$ be a consistent (in the test size) estimation procedure for the generalization loss $\varepsilon_L$,
e.g., the empirical test loss as described in Section~\ref{sec:modelvalidation.estim}. Then, one can argue that:
\begin{itemize}
\item[(i)] $\widehat{\varepsilon}_L(\widehat{\varpi}_{Y|X})$ is a consistent (in minimum of training and test size) estimate for $\Ent_L(Y/X)$
\item[(ii)] $\widehat{\varepsilon}_L(\widehat{\varpi}_{Y})$ is a consistent (in minimum of training and test size) estimate for $\Ent_L(Y)$
\item[(iii)] $\widehat{\varepsilon}_L(\widehat{\varpi}_{Y|X}) - \widehat{\varepsilon}_L(\widehat{\varpi}_{Y})$ is a consistent (in minimum of training and test size) estimate for $\Bias_L(\varpi_{Y|X})$
\item[(iv)] Any hypothesis test for whether $\varepsilon_L(\widehat{\varpi}_{Y|X}) \lneq \varepsilon_L(\widehat{\varpi}_{Y})$ is a hypothesis test for whether $\Ent_L(Y/X)\lneq \Ent_L(Y),$ or equivalently for whether $\Bias_L (\varpi_{Y|X})\gneq 0$.
\end{itemize}
More precisely, for an i.i.d.~sample $(X_1,Y_1),\dots,(X_N,Y_N)\sim (X,Y)$, any (possibly paired) test for whether the location of
$\calS_{Y|X}:= \{L(\widehat{\varpi}_{Y|X}(X_1),Y_1),\dots, L(\widehat{\varpi}_{Y|X}(X_N),Y_N)\}$ is smaller than the location of\\
$\calS_Y:= \{L(\widehat{\varpi}_{Y}(X_1),Y_1),\dots, L(\widehat{\varpi}_{Y}(X_N),Y_N)\}$ is, in a frequentist hypothesis testing sense, a test for whether $Y$ and $X$ are statistically dependent.
The null hypothesis would naturally be ``the location of $\calS_{Y|X}$ equals or is larger than the location of $\calS_{Y}$''.
\end{Rem}

Note that we have used estimates $\widehat{\varpi}_{Y|X}$ and $\widehat{\varpi}_{Y}$ for the best predictor and best uninformed
predictor - estimates which are naturally obtained from any benchmarking experiment, since the purpose of such an experiment is
to find the best predictor, and to ensure that it outperforms the best (uninformed) baseline.
Hence, such estimates may also be understood as the natural, and in fact best possible surrogates to the ``true'' best predictors
$\varpi_{Y|X}$ and $\varpi_{Y}$ obtainable in a benchmarking context. A similar reasoning holds if $\widehat{\varpi}_{Y|X}$ is not a consistent estimator for $\varpi_{Y|X}$ but merely for some ($L-$)better-than-uninformed prediction functional.

Algorithm~\ref{alg:indtest} codifies this procedure as a pseudo-code meta-algorithm, yielding a hypothesis test for
independence of $X$ and $Y$ from i.i.d.~samples $(X_1,Y_1),\dots,(X_N,Y_N)\sim (X,Y)$.
Two-sample independence testing may be seen as a special case
where $Y$ takes values in $\calY=\{-1,1\}$ indicating which sample the observation is in, for convenience pseudo-code is given as Algorithm~\ref{alg:RFtest} (with an explicit baseline performance estimate for the log-loss).

\begin{algorithm}[ht]
\caption{Meta-algorithm: a multi-variate probabilistic predictive independence test.\newline
\textit{Input:} Training data $\calD$, test data $\calT$, following i.i.d.~a joint distribution $(X,Y)$ t.v.i.~$\calX\times \calY$\newline
\textit{Output:} a (frequentist) $p$-value and test statistic on whether $X$ and $Y$ are independent\label{alg:indtest}}
\begin{algorithmic}[1]
    \State use $\calD$ (and a good model selection strategy) to compute\newline a good uninformed predictor $\widehat{\varpi}_{Y}:\calX\rightarrow \Distr(\calY)$
    \State use $\calD$ (and a good model selection strategy) to compute a good predictor $\widehat{\varpi}_{Y|X}:\calX\rightarrow \Distr(\calY)$
    \State Compute the sample of (informed) losses $\calS_{Y|X}:= \left\{L(\widehat{\varpi}_{Y|X}(x),y)\;:\;(x,y)\in\calT\right\}$
    \State Compute the sample of (uninformed) losses $\calS_Y:= \left\{L( \widehat{\varpi}_{Y}(x),y)\;:\;(x,y)\in\calT\right\}$
    \State Conduct a paired sample test for ``the location of $\calS_{Y|X}$ equals or is larger than the location of $\calS_{X}$'', as in Section~\ref{Sec:modelvalidation.tuning}.
    \State Return the resulting $p$-value and test statistic.
\end{algorithmic}
\end{algorithm}

\begin{algorithm}[ht]
\caption{Special case: a log-loss and random forest or [other favourite classifier] based two-sample test.\newline
\textit{Input:} Two i.i.d. samples $\calD_1,\calD_{-1}$ (sampled from two possibly distinct distributions), t.v.i.~$\calX$\newline
\textit{Output:} a $p$-value and test statistic whether $\calD_1$ and $\calD_{-1}$ are sampled from different distributions\label{alg:RFtest}}
\begin{algorithmic}[1]
    \State Merge the data samples with a ``source index'', i.e., define $\calD_{\cup} \leftarrow\{(x,i)\;:\;\;x\in \calD_i\}$
    \State Split $\calD_{\cup}$ randomly into a training set $\calD$ and test set $\calT$ with elements in $\calX\times \calY$, where $\calY = \{-1,1\}$
    \State On the training set $\calD$, train a probabilistic random forest (or other favourite classifier)
            to obtain the prediction rule $\widehat{\varpi}_{Y|X}:\calX\rightarrow \Distr(\calY)$
    \State Compute the sample of (informed) losses $\calS_{Y|X}:= \{-\log\widehat{\varpi}_{Y|X}(x)(y)\;:\;(x,y)\in\calT\}$
    \State Compute the empirical entropy of the best uninformed predictor $S_Y := - p_1\log p_1 - p_{-1} \log p_{-1},$ where $p_i:=\card\calD_i/\card\calD_{\cup}$
            (this is known explicitly: predicting as probabilities the observed class frequencies)
    \State Conduct a one-sample test for ``the location of $\calS_{Y|X}$ equals or is larger than $S_Y$''.
    \State Return the resulting $p$-value and test statistic.
\end{algorithmic}
\end{algorithm}

Note the major weaknesses of Algorithms~\ref{alg:indtest} and~\ref{alg:RFtest} which are: reliance on further unspecified model selection strategies,
and the choice of a paired (or unpaired) sample test on location. The latter is not severe, as one may choose to use a paired $t$-test for large sample sizes or
the non-parametric Wilcoxon signed-rank test (or rank-sum test).
The former is more an issue since model selection, especially in a probabilistic setting, remains a field which is quite open. However, model selection in a predictive setting is much better explored than test statistic selection in the field of multivariate independence testing.

\subsubsection{Workflow meta-interfaces in the probabilistic setting}

The strategy outlined above closely follows the rationale of~\citet{burkart2017predictive} in which an equivalent independence testing strategy using classical, point prediction losses and prediction strategies is discussed. As such, it shares its main properties and (dis-)advantages compared to literature (as discussed in~\cite{burkart2017predictive}, see there), including its main advantage which is the close link to the well-studied supervised learning workflow, including access to powerful off-shelf prediction methods as well as re-usability of automated model tuning and parameter selection strategies, which can all be plugged into the hypothesis testing strategy.

Similarly, conditional predictive independence tests and graphical model structure learning algorithms may be constructed by the same principles as outlined in~\cite{burkart2017predictive}, leveraging the probabilistic prediction workflow.

The main notable difference between using classical and probabilistic losses is that a single strictly proper loss is sufficient to certify for independence, as may be seen from Proposition~\ref{Prop:estent}, irrespective of the target domain $\calY$. This is in crass contrast to the deterministic case where an infinity of losses may be necessary for an equivalent certificate of independence. In the terminology of~\cite{burkart2017predictive}, any fixed strictly proper loss always forms a one-element faithful set in certifying for independence; re-stated, in contrast to the classical setting, there is never more than one strictly proper loss needed to certify for statistical independence.

This property of the probabilistic setting also allows to remove the issue with the deterministic predictive testing strategy which is the need to make a potential arbitrary choice of a (possibly infinite) set of losses, potentially with difficult estimation properties, to obtain an equivalent certificate for independence. Namely, by Proposition~\ref{Prop:logcanon}, the choice of log-loss is in a sense canonical.
Alternatively, Example~\ref{Ex:BiasDiv} relates the test statistic to a divergence estimate, which is the conditional KL-divergence for the log-loss and the mean integrated squared error of distributional prediction for the squared integrated loss.

\subsubsection{Relation to kernel discrepancy based testing procedures}

It is interesting to note that a number of procedures from the family of kernel discrepancy based independence tests may be seen as sub-cases of the probabilistic predictive independence test (Algorithm~\ref{alg:indtest}), obtained for specific choices of pairs of probabilistic prediction strategies, and instances of the kernel discrepancy loss in Section~\ref{sec:mixed.kernel}.

Most illustratively, we may obtain the two-sample testing procedure of~\citet{gretton2012kernel} by attempting to probabilistically predict what are usually the features from a binary class label:

\begin{Ex}
Consider a ``flipped'' binary classification setting, where the data generative process $(X,Y)$ takes values in $\calX\times \calY$ where $\calX=\{-1,1\}$. The data are obtained as i.i.d.~samples $(X_1,Y_1),\dots,(X_N,Y_N)\sim (X,Y)$, and we consider sub-sets $\bY(x) := \{Y_1^{(x)},\dots, Y_{N_x}^{(x)}\}$ such that $\bY(x)=\{Y_i\;:\; X_i = x\}$ for $x\in\calX$. That is, $\bY(1)$ is the (random) collection of all $Y_i$ with feature value $X_i=1$, and $\bY(-1)$ is the collection of all $Y_i$ with feature value $-1$.\\

As already noted by~\citet{sriperumbudur2009kernel}, the question of testing whether $X$ can be predicted from $Y$ (i.e., the ``common'' classification way rather the ``flipped'' way) is intimately related to the question whether the samples $\bY^{(1)},\bY^{(-1)}$ come from the same distribution, i.e., the two-sample testing task.

We show that we are even able to recover the MMD test statistic of~\citet{gretton2012kernel} by relating the two-sample testing task to the question of whether $Y$ can be predicted from $X$. For this, consider the kernel discrepancy loss $L_k$ from Section~\ref{sec:mixed.kernel}, and the following prediction strategy to certify for independence:
$$\widehat{\varpi}_{Y|X}: \calX\rightarrow \Distr(\calY)\;:\;x\mapsto \frac{1}{N_x}\sum_{y\in\bY (x)} \delta(y),$$
where $\delta(y)$ is the delta distribution located at $\calY$. More intuitively stated, $\widehat{\varpi}_{Y|X}$ predicts, for an input $x\in\calX = \{-1,1\}$ the empirical distribution of labels which have been observed with a feature $x$. As the uninformed baseline predictor estimate, we consider
$$\widehat{\varpi}_{Y} : \calX\rightarrow \Distr(\calY)\;:\; x\mapsto \frac{1}{N}\sum_{i=1}^N \delta(Y_i),$$
i.e., simply always predicting the empirical sample of the labels.

In order to estimate $\Bias_L(\varpi_{Y|X})$, we evaluate $\widehat{\varpi}_{Y|X}$ and $\widehat{\varpi}_{Y}$ in-sample, i.e., the test sample is \emph{the same sample} of $(X_i,Y_i)$ to which both predictors were fitted. An elementary computation yields
\begin{align*}
\widehat{\varepsilon}_{L_k}(\widehat{\varpi}_{Y|X}) - \widehat{\varepsilon}_{L_k}(\widehat{\varpi}_{Y}) &=
\frac{1}{N_{1}^2}\sum_{y,y'\in \bY(1)}k(y,y') - \frac{2}{N_1N_{-1}}\sum_{\substack{y\in \bY(1)\\y'\in\bY(-1)}}k(y,y') + \frac{1}{N_{-1}^2}\sum_{y,y'\in \bY(-1)}k(y,y'),
\end{align*}
where the right hand side is the kernel maximum mean discrepancy (MMD) statistic. Thus, two-sample testing via MMD is a particular sub-case of predictive independence testing with a kernel discrepancy loss and a pair of empirical prediction strategies (one conditional, one baseline).
\end{Ex}

\newpage
\section{Probabilistic supervised learning algorithms}
\label{sec:problearners}

We list some instances of algorithms that may be considered or adapted as probabilistic supervised learners.
These include off-shelf instances and descriptions of how existing modelling strategies may be adapted with minor modifications to the probabilistic supervised learning setting.

\subsection{Bayesian models}
\label{sec:bayesianmodels}
Probably the largest and most relevant class of models that may be instantiated for probabilistic supervised learning are Bayesian models.
However, not every Bayesian model is predictive, and even when predictive posteriors are obtained, there are subtle and possibly not immediately obvious conditions that should be met for applicability for the probabilistic prediction task.

Suppose we are in the usual supervised situation where $(X,Y)$ is a generative distribution t.v.i.~$\calX\times \calY$,
there is a training set $\calD = \{(X_1,Y_1),\dots, (X_N,Y_N)\}$ and a test set $\calT = \{(X^*_1,Y^*_1),\dots, (X^*_M,Y^*_M)\}$
(note the slight change in notation as we will need to talk about different test points individually).

In this scenario, Bayesian models may be invoked to compute, for any test index $i=1\dots M$, the posterior predictive distribution
$$p_i := p(y_i|X^*_i,\calD),$$
yielding probabilistic predictions $p_1,\dots, p_M$ which can then be checked against the $Y^*_1,\dots, Y^*_M$.

We would like to stress that this is the correct choice for supervised learning, because:

\begin{itemize}
\item[(i)] Any model parameters should be marginalized over, by Corollary~\ref{Cor:posterior}.
\item[(ii)] Predictions should be statistically independent when conditioned on the training data $\calD$, otherwise the out-of-sample log-loss will in general not be an estimate of the generalization loss as described in Section~\ref{sec:modelvalidation.estim}.
\end{itemize}

In particular, Bayesian models in a probabilistic supervised setting should \emph{not} be invoked as
\begin{itemize}
\item[(i)] Posteriors that may depend on, or are joint with model parameters; or
\item[(ii)] Predictive posteriors for $Y^*_i$ of the form $p_i := p(y_i|\calT,\calD),$ or $p_i := p(y_i|X^*_i,\calT\setminus \{(X^*_i,Y^*_i)\},\calD),$ say.
\end{itemize}

We would like to stress that we would do \emph{not} say that it wrong to obtain these posteriors in general, only that doing so prevents comparability with
non-Bayesian strategies and the uninformed baselines previously discussed.

\subsection{Probabilistic Classifiers}

For supervised classification, prediction of probabilities, even in semi-parametric, non-parametric, and non-Bayesian contexts, is quite standard.
More or less heuristic probabilistic variants of support vector machines, tree ensemble methods and neural networks (usually with a soft-max output layer, which includes logistic/multinomial regression) are routinely available in modern machine learning toolboxes.

However, it should be noted that the best uninformed baseline for this situation - always predicting as test probabilities the relative training frequencies (see the discussion in Section~\ref{Sec:uninfbase}, or, equivalently comparing to the entropy of the training label distribution) - does not seem to be part of the common workflow, at least it is not available in the usual off-shelf context.

Hence, if probabilistic classifiers are invoked, care should taken that an uninformed baseline comparison is conducted,
otherwise a solid statement that any method predicts better than (uninformed) ``guessing'' may not be made.

\subsection{Heteroscedastic regression and prediction intervals}
\label{sec:heteroscedastic-regression}

Heteroscedastic regression models aim at estimating regression models where the variance of the predictive distribution may depend on the
features.

Models of this type usually output an estimate for the predictive mean and variance, yielding so-called prediction intervals (sometimes: predictive intervals).
The more classical statistical models~\cite{park1966estimation, harvey1976estimating, stine1985bootstrap, davidian1987variance, vecchia1987simultaneous, patel1989prediction, welsh1994fitting} of this type
are usually based on either:
\begin{itemize}
\item[(i)] a two-step process, where first a classical regression model is fitted to predict the predictive mean,
and then another model is fitted to some monotone transform of the absolute residuals (e.g., the absolute residuals, squared residuals, log-residuals),
\item[(ii)] re-sample estimates of variance such as via bootstrapping or jackknifing the training set, though this will usually yield only
        an estimate for model stability and not the conditional predictive density.
\end{itemize}

A distributional prediction may be then obtained from combining the prediction for the location with a prediction of the variance.
It should be noted that most of these strategies may be applied to residuals of arbitrary black-box regression models,
similar to the more model-independent strategy suggested in Section~\ref{Sec:classicbase}.

More recently, the idea of heteroscedastic regression or variance prediction has re-surfaced in the machine learning community, mostly in the context of
neural networks with Gaussian target distributions~\cite{nix1994estimating,tibshirani1996comparison,heskes1997practical,hwang1997prediction,de1998prediction,papadopoulos2001confidence,
khosravi2011comprehensive}, but also for other popular non-linear supervised learning methods such as support vector machines~\cite{de2011approximate}
and bagged tree ensembles~\cite{wager2014confidence}.

\subsection{Conditional Density Estimation}

Conditional density estimation aims, in the probabilistic supervised regression setting, estimating the conditional densities $\calL (Y|X = x)$ for any given $x$.
This task is mathematically equivalent to the probabilistic supervised prediction task as any conditional density estimator may be interpreted as the probabilistic
predictor $x\mapsto \calL (Y|X = x)$, thus any implementation of a conditional density estimator may be directly interfaced.

Influential examples of this branch of literature, also containing further literature summaries, are~\cite{rosenblatt1969conditional,hyndman1996estimating, fan1996estimation, hall1999methods, hyndman2002nonparametric, hall2004cross, sugiyama2010conditional}.

However, even though the fitting part of the set-up is mathematically equivalent, it is worth noting some major features of conditional density estimation literature that are not quite congruent with the predictive set-up. Namely:
\begin{itemize}
\item[(i)] The goodness of conditional density estimation is, if at all, usually evaluated against an unknown distributional ground truth, for example via integrated squared error measures, as opposed to actual test observations. While the distribution-vs-distribution evaluation allows for proving approximation and learning theory results which are quite valuable, they cannot be used for external model validation proper as the true conditional distributions are unknown.
\item[(ii)] In most (but not all) of the conditional density estimation literature, the density estimate is not interpreted as a prediction, and fitting/prediction is not considered in the context of supervised learning. Consequently, parameter tuning of conditional density estimators is usually not done by predictive goodness estimation such as via cross-validation strategies, though there are exceptions~\cite{fan2004crossvalidation,hall2004cross}.
\item[(iii)] The fitting strategies considered under the term of conditional density estimation are usually (but not always) of the following specific kind: Nadaraya-Watson type kernel density estimators, kernel smoothing strategies, or variants of kernel basis function mixtures.
\end{itemize}

It is also worth noting, that the conditional density estimation literature is, to our knowledge, the only type of off-shelf density estimators that comes with rigorous convergence guarantees of the estimated conditional density to the (non-parametric) true density as the number of data points approaches infinity. Those guarantees are usually obtained as a natural generalization of guarantees of unconditional Nadaraya-Watson type estimators. In contrast, it is also interesting to note that the guarantees found in Bayesian literature are usually of different kind - namely, concerned with the posterior variance of model-specific parameters, or expected loss/utility.

\subsection{Gaussian Processes}
\label{sec:gaussianprocesses}
Gaussian processes are a class models where predictive densities at multiple points are jointly Gaussian~\cite{williams1996gaussian,rasmussen2006gaussian}.
The modelling framework is closely related to (and mathematically almost equivalent to) Kriging~\cite{matheron1963principles}, though the modern field of Gaussian processes
has substantially extended the expressivity in terms of non-parametric modelling beyond the original geo-spatial remit.

As a well-known connection to the field of kernel learning~\cite{scholkopf2002learning, shawe2004kernel}, it is noteworthy to observe that the predictive mean in Gaussian process regression is equivalent to the classical (point) prediction in kernel ridge regression.
The fitting of Gaussian process regression is, consequently, similar to the fitting of classical regression models, and the posterior variance may be interpreted as a prediction interval for kernel ridge regression.
Furthermore, the kernels correspondence can exploited to define Gaussian processes on features which may be objects of any kind~\cite{haussler1999convolution,lohdi02textclassification,bahlmann2002online}.
Recent work has seen extensions of Gaussian processes to explicitly incorporate heteroscedasticity mediated through features~\cite{kersting2007most,titsias2011variational}.

For non-expert practitioners, it is very important to note that the vanilla (GP: noise-free; Kriging: no nugget) formulation of Gaussian processes or Kriging yield a good estimate of the conditional density: the usual predictive distribution, while Gaussian, is not actually a valid predictive posterior (in Bayesian terminology), or (in frequentist terminology) a reasonable estimate for the predictive conditional density. Instead, the predictive distribution of vanilla GP/Kriging may be understood as follows: while located at the posterior predictive mean, its variance should be interpreted as a Bayesian uncertainty (or credibility) of a Bayesian predictive mean estimate, which approaches zero (and not the true conditional variance) as the number of training samples approaches infinity. We illustrate this very practical issue with an example:

\begin{Ex}\label{Ex:badGP}
Let $(X,Y)$ a pair of random variables t.v.i.~$\calX times \calY$ with $\calY\subseteq \RR$. Consider training data $(X_1,Y_1),\dots, (X_N,Y_N)$ to train a (vanilla) Gaussian process model, i.e., the prediction strategy $f$ t.v.i.~$[\calX\rightarrow \Distr (\calY)]$ where
\begin{align*}
f:& x\mapsto \calN(\mu(x),\sigma(x))\\
\mu(x) & := \kappa(x)^\top \cdot \left(K + \lambda I\right)^{-1}\cdot y\\
\sigma^2(x) & := k(x,x) - \kappa(x)^\top \left(K + \lambda I\right)^{-1} \kappa(x)\\
& \mbox{where}\; \kappa(x)_i = k(x,X_i)\;\mbox{and}\;K_{ij} = k(X_i,X_j),
\end{align*}
t.v.i.~$\RR^N$ and $\RR^{N\times N}$ respectively, where $\calN(a,b)$ denotes a univariate Gaussian distribution with mean $a$ and covariance $b$, and where $k:\calX\times \calX\rightarrow \RR$ is a kernel function (frequentist terminology) or covariance function (Bayesian terminology).\\
We further consider the toy case where all $X_i$ are equal to a single feature point $x\in \calX$, and where $Y|X=x$ is Gaussian with mean $\mu$ and variance $\sigma^2$.\\
If $f$ were a good probabilistic prediction strategy, then surely $f(x)$ should converge (in distribution) to the law of $Y|X=x$, i.e., to $\calN(\mu,\sigma)$, as $N\rightarrow \infty$.
However, an explicit computation (involving the Sherman-Morrison-Woodbury formula) shows that
\begin{align*}
\mu(x) = \frac{1}{N+\frac{\lambda}{k(x,x)}}\cdot \sum_{i=1}^N Y_i,\quad\mbox{and}\quad
\sigma^2(x) =  \frac{1}{\frac{N}{\lambda} + \frac{1}{k(x,x)}},
\end{align*}
so in fact $\sqrt{N}(f(x)-\mu)\overset{d}{\rightarrow} \calN(0, \lambda)$ as $N\rightarrow \infty$, which is the central limit asymptotic of a sample mean estimator rather than that of a density estimator for which $(f(x)-\mu)\overset{d}{\rightarrow} \calN(0, \sigma^2)$ would have to hold.\\

A less toy-like case (with random $X$) will see the posterior variance decreasing with the same asymptotic, though we do not carry out the calculation since the easier one is sufficient for the argumentation.\\

An even simpler (but slightly less mathematical) argument from which the inappropriateness of vanilla GPs for probabilistic supervised learning may be derived, is that $\sigma^2(x)$ does not depend on the label observations $Y_i$ at all, thus there is no means by which information on the true variance $\sigma^2$ could possibly enter the prediction $f(x)$ in general.\\

On the other hand, the Bayesian \emph{predictive posterior} in this example will be $\calN(\mu(x),\sigma^2(x) + \lambda)$, i.e., corresponding to taking the mean prediction as a point prediction, in a composite Gaussian prediction with constant variance $\lambda$.
\end{Ex}

Summarizing, Example~\ref{Ex:badGP} highlights a large caveat when attempting probabilistic predictions with (vanilla) Gaussian processes or via Kriging: even though this class of models usually outputs a predictive distribution which is a valid form of probabilistic prediction, it is not directly appropriate for probabilistic supervised learning as long as the predictive variance is not a (frequentist) estimate of the conditional label variance - as opposed to, for example, an estimate of the conditional mean's uncertainty, i.e., the posterior distribution of the predictive mean.
This also implies that in practice a plug-in Gaussian process model employed in the naive, seemingly ``obvious'' way for probabilistic supervised learning is not only going to perform badly, but is mathematically expected to perform badly, due to the above.\\
More precisely, for applicability in the supervised learning setting, Gaussian processes should be run in a specific way:
\begin{itemize}
\item[(i)] they should be queried for ``true'' predictive posteriors rather than the posterior distribution of the predicted mean. I.e., the prediction's variance should be the true predictive posterior variance rather than the predicted mean's variance.
\item[(ii)] for a genuinely probabilistic prediction, heteroscedastic Gaussian process models need to be considered. Otherwise, the posterior predictive variance converges to a constant for large sample size - which is also a valid probabilistic prediction, just not one that can model heteroscedasticity properly.
\end{itemize}

\subsection{Mixture density estimators, and mixtures of experts}
Mixtures of experts and (predictive) Bayesian networks produce conditional densities that are usually mixtures of parametric distributions~\cite{jacobs1991adaptive,bishop1994mixture,jordan1994hierarchical,hinton1995alternative,jensen1996introduction}.

More recent approaches include the case where the experts are Gaussian Process predictors~\cite{rasmussen1999infinite,tresp2000mixtures,rasmussen2002infinite,shi2005hierarchical,meeds2006alternative}.

While the model specification is usually Bayesian or at least fully parametric, a variety of fitting/prediction strategies such as least-squares, likelihood-based, expectation-maximization, or of course classical Bayesian updates have been proposed.

As a model class considered by the early machine learning community, it is usually considered in a supervised learning context, even in pure Bayesian formulations. Hence mixtures of experts are not (uniformly considered to be) a sub-class of ``Bayesian models'' or ``conditional density estimators'', and listed separately.

Accordingly, depending on the precise implementation, the exact relation to Gaussian processes, and/or which types of posteriors are considered, the same caveats as for the generic Bayesian strategies in Section~\ref{sec:bayesianmodels}, or Gaussian process models in Section~\ref{sec:gaussianprocesses}, may apply.

\subsection{Quantile Regression and Quantile Estimation}
\label{sec:quantileregression}
Quantile regression estimates aim at predicting not a full distribution, but a specific quantile or a set of pre-specified quantiles of a real-valued labels distribution~\cite{koenker1978regression}.
Modelling strategies considered in more recent literature are usually non-parametric and non-linear in nature and make use of different strategies, including kernel methodology and ensembling in the machine learning community~\cite{yu1998local, yu2003quantile,meinshausen2006quantile, takeuchi2006nonparametric, li2007quantile}.

While quantile regression is a task in which only an aspect of the conditional distribution is predicted (namely some of its quantiles), full distributional estimates
may be obtained by:
\begin{itemize}
\item[(i)] Using a quantile regression strategy which is simultaneously able to predict all quantiles. This is equivalent to a full probabilistic prediction as the prediction's density may be uniquely recovered from the cumulative density function and vice versa.
\item[(ii)] Predicting multiple (but not all) quantiles and then converting the quantile estimates into a histogram or kernel density or kernel smoothing estimate of the predictive distribution, for example via the method suggested by~\citet{takeuchi2009nonparametric}.
\item[(iii)] In either setting, it should be noted that in general quantile regression methods are not guaranteed to predict quantiles monotonously, i.e, to predict a value that is larger if the quantile is higher. This is known as the ``quantile crossing problem'' and has been addressed by various strategies and meta-strategies in literature~\cite{dette2008non,chernozhukov2010quantile}.
\end{itemize}

We would like to note a further connection of probabilistic supervised learning to quantile regression, already partly noted in the seminal paper of~\citet{koenker1978regression}. Namely, since the minimizer of the expected mean absolute error is the regression median, the classical supervised regression predictors with low mean absolute errors are, simultaneously, good predictors of the predictive median quantile. By the correspondence established in Section~\ref{sec:classprob}, these are in turn directly related to low-error probabilistic predictors where the predicted distribution is Laplace with constant variance. Similar relations between quantile predictors, combinations of classical supervised predictors and mean losses, and probabilistic predictors of a special structure hold for the other quantiles.

\newpage
\section{An API for probabilistic supervised learning}
\label{sec:API}

Based on the previous discussion, we propose an API for probabilistic supervised learning which formalizes the desirable properties of a joint inference framework in a structured algorithmic setting. We describe its general principles and discuss use cases as well as requirements, before providing an overview of the implemented classes and algorithms. We discuss a modular structure that allows for standardized prediction workflows, and its implementation in the skpro python package \cite{kiraly_skpro_2017}.

\subsection{Overview}

The main rationale of the presented API is to allow for the integration of arbitrary probabilistic prediction strategies into a unified interface that enables consistent and fair model assessment and comparison. A secondary objective is the simplification and standardisation of model validation such that defined prediction strategies can be reliably assessed using existing workflow components.
From there, the requirements of the API can be derived as a result of the previously described prediction setting. A crucial concept is the suggested approach to represent predicted probability distributions as an interface of the prediction strategy. Notably, the API structure extends the principal ideas and design objectives that developed in the scikit-learn project~\cite{pedregosa2011scikit} to the probabilistic setting.

\subsubsection{General design principles}\label{sec:api-principles}

To begin with, our API design follows five principles that turned out to provide a solid foundation for the popular scikit-learn machine learning framework (for a more detailed discussion see \cite{sklearn_api}).\\

{\bf Consistency of interface.} All objects of the same type (prediction methods, metrics, etc.) should share the same API to enforce consistency of use. In particular, this means that frequentist and Bayesian prediction methods need to share the same (or a compatible) interface.

{\bf Inspection of parameters.} Parameters and parameter values, such as non-data dependent choices of frequentist and Bayesian algorithms, should be accessible in the public part of the interface to allow for easy inspection. That includes regularization and learning parameters of frequentist machine learning methods and Bayesian priors or hyper-parameters.

{\bf Non-proliferation of classes.} The definition of base classes should be held at a minimum to foster maintainable, efficient, and universal design.

{\bf Composition.} Meta-algorithms should be modelled as modular compositions of simpler algorithms, e.g. meta-classes or wrappers to avoid code repetitions.

{\bf Sensible defaults.} When parameters are not set, defaults with a sensible behaviour should be available. Frequentist/point prediction methods that are called for probabilistic prediction should default to predicting a (homoscedastic) Gaussian around the point prediction, with constant residual estimated standard deviation, following the discussion in Sections~\ref{sec:classprob} and~\ref{Sec:classicbase}. If Bayesian methods exhibit priors as their model parameters, they should default to the uninformed baseline, an estimated density, following the discussion of section~\ref{Sec:uninfbase}.

\subsubsection{Main use cases}
\label{sec:API.overview.usecases}

As specified in section \ref{sec:propreset.probsupl}, the principal use cases we consider are supervised prediction tasks involving prediction functionals of the form  $f: \calX \rightarrow \Distr (\calY)$, trained and evaluated on feature-label-pairs in $(\calX\times \calY)$. More precisely:

\begin{itemize}
\item[(a)] Fitting and predicting via any prediction strategy described in Sections~\ref{sec:propreset} and~\ref{sec:problearners}, including Bayesian, non-Bayesian, composite, and baseline strategies. Optimally choosing and setting up a standard strategy is easy and via a uniform, consistent, and exchangeable interface.
\item[(b)] Applying meta-modelling strategies including automated hyper-parameter tuning, model selection, model composition, ensembling, and pipelining.
\item[(c)] Running model evaluation and model comparison experiments by the re-sampling plus out-of-sample principle as described in Section~\ref{sec:modelvalidation}, estimating one or multiple generalization losses.
\end{itemize}

\subsubsection{API design requirements}\label{sec:api-requirements}

Consideration of the use cases and general design principles implies further API requirements specific to the probabilistic prediction setting:\\

{\bf Fit-predict-parameters interface for probabilistic prediction strategies.}
Following the standard interface design in the supervised setting, an encapsulation object for probabilistic prediction strategies has to provide a training interface that takes as argument both training features $\bx = (x_1,\dots, x_N) \in \calX^N$ and corresponding training labels $\by = (y_1,\dots, y_N)\in\calY^N$ , e.g. a function of the signature \texttt{fit($\bx : \calX^N$, $\by : \calY^N$ ) $\rightarrow$ trainedmodel}; it must also give access to the learnt prediction functional $f:\calX\rightarrow \Distr (\calY)$, e.g. by a function with signature\\ \texttt{predict($x^* : \calX$, trainedmodel) $\rightarrow$ $f(x^*): \Distr (\calY)$} which returns a predicted distribution $f(x^*)$ for every test data point $x^*$. Furthermore, parameters of the learning strategy, which may affect the fit and predict interfaces, need to be accessible and settable via a unified interface point to allow for meta-strategies and tuning.\\

{\bf Stand-alone distribution type.}
To represent the predicted distributions $f(x^*)$, we argue that by the modularity principle it is reasonable to introduce a distinct distribution object type to store predictions, which allows for a natural separation of model fitting and prediction from interrogation of the prediction object (the distribution), say for strategy evaluation. This separation of model-fit and model-predict from prediction-evaluation is also computationally beneficial since it avoids unnecessary repetitions of bottleneck computations in the model-fit or model-predict part whenever distributions are evaluated. In the considered use cases, an distinct distribution type is particularly beneficial since the evaluations are likely to be repeated. The introduced probabilistic loss functions, for instance, rely on the evaluation of the predicted density (see Section~\ref{sec:probloss}). Furthermore, ensemble strategies such as normalized averaging of log-distributions imply repeated evaluations (see \ref{sec:meta-algorithms}). Similarly, wrappers such as post-prediction smoothing of densities, capping and re-normalization (see Section~\ref{sec:meta-algorithms}) make direct use of prediction evaluation. Therefore, from an API perspective, it is more advantageous to have an distinct interface to the predicted distributions.
The alternative, as found in Weka to model with the usual behaviour of Gaussian Process regression, compounds model-prediction with prediction-evaluation; it avoids introducing a new data type at the cost of returning ``a gorilla holding the banana and the entire jungle'' when only the banana would be of interest, to say it in the words of~\citet{sklearn_api} (i.e., the whole algorithm object in lieu of the distribution).\\

{\bf Explicit representation of distribution properties.}
Since the predicted distribution $f(x^*)$ at a feature point $x*$ is not a random variable but an explicit distribution, the object that represents the distribution $f(x^*)$ has to provide an interface to its distributional properties. Most notably, the API should provide access to the predicted density function (in case of existence), since it is used by the loss calculation (see Section~\ref{sec:probloss}), or an alternative suitable representation (such as the bag of sample points for an empirical sample) in the mixed case. We also suggest direct access to essential distributional properties such as mean and standard deviation for ease of use, as well as methods for accessing other properties such as the cumulative distribution function or function norm, in case of existence. The uninformed baseline strategy as introduced in section \ref{Sec:uninfbase}, for example, has to exhibit the estimated density.\\

{\bf Representation on the distribution level.}
The explicit representation of distributional properties implies that predictions need to be readily represented on a distributional level and not as a sample from the associated random variable. Some learning algorithms, such as Bayesian inference via Markov chain Monte Carlo (MCMC) sampling, however, do not yield an explicit interface to a predictive density that is needed for the comparison with non-Bayesian strategies and the uninformed baseline (see Section~\ref{sec:bayesianmodels}). Nonetheless, the API needs to integrate such algorithms using an estimate of the explicit distributional properties, such that predictions always come as represented on the distributional level. For example, prediction methods that allow access to an estimated predictive posterior distribution through posterior samples, may be interfaced through said posterior samples, by introducing an adaptor element such as directly interfacing the posterior sample as a mixed sample, or converting it to an absolutely continuous density.\\

{\bf Vectorized implementation.}
As prediction is usually required for a number of points, learning algorithms often allow for concurrent and efficient parallel processing, e.g.~Gaussian process models (see Section~\ref{sec:gaussianprocesses}). In such cases, vectorized implementation across test points is preferable for efficiency reasons. The API should hence allow for convenient vectorized implementation at the discretion of the user. Furthermore, for test feature points $\bx^* = (x_1^*,\dots, x_M^*)\in\calX^M$, the resulting vector of predicted distributions $f(\bx^*) = \left(f(x_1^*),\dots, f(x_M^*)\right)\in \Distr (\calY)^M$ should be represented in a vector-like interface that allows both point-wise and vector-wise access to the predicted distribution and its accessible properties.\\

{\bf Meta-modelling and composition.}
Hyper-parameter tuning, pipelining and ensembling are standard meta-motifs which also need to be supported by a probabilistic modelling platform, as meta-strategies which combine probabilistic modelling strategies into a higher-level probabilistic modelling strategy.\\
The meta-strategies of encapsulating classical prediction strategies, as described in Section~\ref{Sec:classicbase}, make use of a two-step prediction process that may involves deterministic as well as a probabilistic prediction strategies. The API structure that encapsulates learning algorithms should therefore provide a meta-interface for strategy composition with a potential meta-interface that may call both deterministic and probabilistic strategy encapsulation interfaces.\\

{\bf Automated validation workflow}
The framework should not only enable meaningful and domain-agnostic model comparison, but also standardize and simplify the workflow that leads to such comparisons. Specifically, the API structure should implement model comparison motifs as discussed in Section~\ref{Sec:modelvalidation.tuning}. While the interface should allow for a flexible implementation of arbitrary use cases, it should automate as many repetitive tasks as possible, for example, calculation of error estimates and their confidence intervals for multiple probabilistic prediction strategies, as described by algorithm~\ref{alg:validation-2}.

\newpage
\subsection{Pythonic skpro interface design}
\label{sec:api-suggested-approach}
Originating from these requirements, we specify a pythonic high-level API design, which is depicted in figure~\ref{fig:api-overview}, implemented in the skpro package, and elaborated below.\\

\begin{figure}
    \begin{center}

\begin{tikzpicture}
    \begin{umlpackage}[x=0, y=0]{Framework}
        \umlclass[x=-5.5, y=-0.2, name=data]{Data}{
        $X$, $y$
        }{
        $X^*$, $y^*$
        }

        \begin{umlpackage}[x=-5, y=3]{Vendors}
            \umlsimpleclass{3rd party models}

        \end{umlpackage}

        \begin{umlpackage}[x=0, y=3]{API}
            \umlsimpleclass[x=0, y=0]{ProbabilisticEstimator}
            \umlnote[x=0,y=3, width=4cm]{ProbabilisticEstimator}{Represents the models for probabilistic prediction making}
            \umlsimpleclass[x=0, y=-1.5]{Distribution}

            \umluniassoc[mult=predict, pos=0.5, name=predict, very thick]{ProbabilisticEstimator}{Distribution}
            \umlnote[x=5.5,y=0, width=5cm]{predict-1}{Estimator returns Distribution that represents probabilistic predictions. The distribution object extracts the relevant predicted properties like the density from the model.}

            \umlunicompo[anchors=1cm and 2cm]{Distribution}{ProbabilisticEstimator}
        \end{umlpackage}

        \umluniassoc[mult=adapter, pos=0.9, align=right, name=integration]{3rd party models}{ProbabilisticEstimator}
        \umlnote[x=-5,y=6.5, width=4cm]{integration-1}{3rd party models can be integrated using a vendor interface}

    \end{umlpackage}

    \begin{umlpackage}[x=-3.5, y=-0.5]{*}
        \umlsimpleclass[x=2, y=0]{Model}
        \umlreal[anchor1=south]{API}{Model}
        \umlsimpleclass[x=5, y=0]{Workflow automation}
        \umlreal[anchor1=south]{API}{Workflow automation}
        \umlsimpleinterface[x=7, y=0]{Results}

        \node[circle,fill,minimum size=5mm] (head) at (0,1) {};\node[rounded corners=2pt,minimum height=1.3cm,minimum width=0.4cm,fill,below = 1pt of head] (body) {};\draw[line width=1mm,round cap-round cap] ([shift={(2pt,-1pt)}]body.north east) --++(-90:6mm); \draw[line width=1mm,round cap-round cap] ([shift={(-2pt,-1pt)}]body.north west)--++(-90:6mm);\draw[thick,white,-round cap] (body.south) --++(90:5.5mm);

        \umlnote[x=-2.5, y=2, width=6cm]{head}{User defines data-model-interaction (*)}
        \umlnote[x=9.5, y=0.5]{Results}{Results of model assessment can be aggregated automatically}
    \end{umlpackage}

    \umlreal[anchor1=-0.5cm]{Data}{*}
\end{tikzpicture}
    \label{fig:api-overview}
    \caption{Overview of the skpro API: All prediction strategies are encapsulated by a probabilistic estimator class, returning a distribution object that exhibits properties like the predicted density function etc. The user can define a workflow involving different strategies and datasets while the results of a model assessment is aggregated automatically.}
    \end{center}
\end{figure}
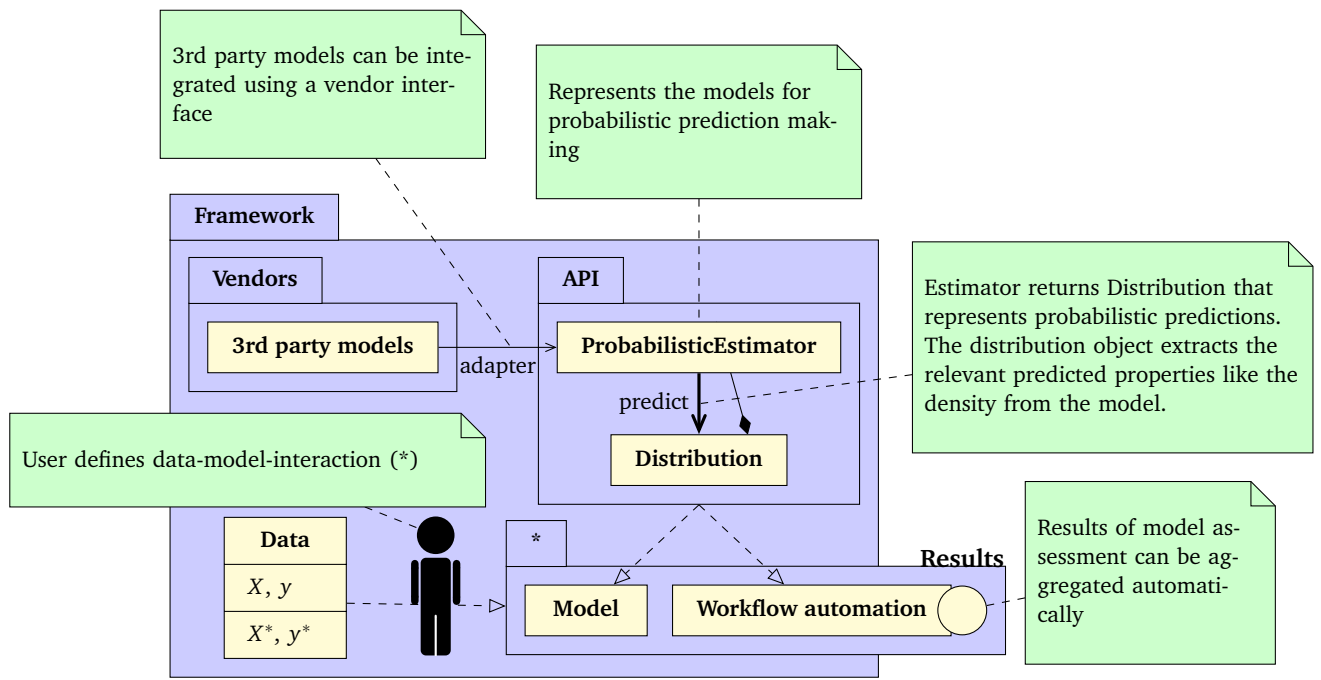

{\bf Distribution type interface.}\\
Firstly, much depends on the structure that allows for a sufficient and flexible representation of the probabilistic predictions, i.e. probability distributions. While the classical predictions are numbers (or arrays of numbers) and hence primitive data types (see Section~\ref{sec:classical-learning}), the probabilistic counterparts can be highly general and complex structures. There is not only a vast collection of distribution types, which can substantially differ (e.g. continuous/discrete or with finite/with infinite support) but also complex distributional properties that need to be represented (either directly within the object or computable based on the object structure).

Our suggested approach for modelling distributions in line with the requirements is to refrain from a stand-alone representation of the predicted distributions; rather, we represent predicted distributions as an interface of the trained model. The prediction strategy provides -- by definition -- the information that is needed to obtain relevant properties of the predicted distribution. The intuitive solution to encode the relevant properties into a distinct general distribution object that is then returned as prediction seemed, however, cumbersome as such a general return object would have been hard to define. Instead, we suggest that an interface object should encode the way of how these properties can be obtained from the referenced strategy. In this way, the interface object provides the desired properties of the predicted distribution by extracting them from the strategy that returned it, and can therefore serve as probabilistic prediction, i.e. a predicted distribution object.

It is worth noting that there is no superficial difference between returning a distinct distribution object that holds the desired properties and the alternative of returning an interface object that calculates the desired distribution properties on demand based on the prediction strategies' properties. The interface object could in fact be used to calculate all distribution properties to generate a distinct full-blown distribution object.

At first glance, using distribution-like interfaces has the disadvantage that the returned predictions still hold the need for extended computations. Evaluating the predicted density function, for instance, includes the computational cost of extracting the density from the fitted model. Repeated evaluations of the density with different arguments can hence result in inefficient recalculations. Moreover, the interfaces cannot be stored for later use without storing the entire referenced prediction strategy as well.

These downsides can, however, be addressed through an implementation of automatic caching of the interface which ensures that operations are not needlessly repeated. The caching also allows for the continued use of the distribution interface even if the referenced strategy is no longer available. Given the caching, the deferred calculation in an interface object, in fact, turns out to be rather advantageous.

Firstly, computational efficiency is improved since the distribution properties are only computed if they are actually requested. The density function, for example, does not need to be retrieved if one's sole interest is the distribution's mean. This becomes increasingly valuable when ensemble methods like bagging are used and multiple predictions are averaged. More importantly, the interface approach also reduces the complexity of model development since a distinct data representation in the distribution object becomes unnecessary. The data is already represented by the prediction strategy and the extraction of the desired distribution properties like the mean can be implemented by exploiting the estimator object's data structure. The mentioned challenge to find a general representation of distributions therefore boils down to a model-specific interface implementation that provides access to the prediction's distributional properties. The interface approach is therefore flexible enough to represent probabilistic predictions by arbitrary strategies, on the distributional level.

The distribution object also provides an efficient realisation of vectorized prediction across test points. Rather than returning a vector of distribution objects, the distribution object itself can represent multiple predictions in a vector-like interface. As vectorization is managed internally by the object, the user can easily switch between vectorized and point-wise implementation, where in the latter case, vectorization is automatically carried out behind the scenes.\\

{\bf Probabilistic estimators.}\\
Given the distribution object type, the prediction mechanisms can be implemented as a function that takes test data as argument and returns a distribution object. Consequently, an obvious structure to encapsulate probabilistic learning algorithms in is scikit-learn's estimator object \cite{sklearn_api}. Estimators implement a fit function to train the model and provide a predict function that returns the predictions, i.e. distribution objects. The hyper-parameters of the prediction strategy are accessible as public variable of the estimator object. Non-proliferation implies that there is only one such estimator base class, namely the ProbabilisticEstimator class, which all learning algorithms must implement. Note that a probabilistic estimator shares the API of an classic scikit-learn estimator except for the fact that it returns a distribution object.\\

{\bf Adaptors}\\
As we seek to represent the probabilistic predictions on the distributional level, the API provides so called adaptors that can transform prediction outputs into required distribution-level properties. The \obj{DensityAdapter}\footnote{This spelling is correct: the class names in the skpro package implementation use of the US English spelling ``adapter'' rather than the UK English spelling ``adaptor'' used in the manuscript otherwise, both referring in a metaphorical sense to the same meaning of ``a device that connects two different, specific parts''.} class, for example, transforms a given input into an estimated density function, e.g. a posterior sample into kernel estimated density. Hence, if a prediction strategy does not provide the required distributional properties, a suitable adaptor can be plugged in to complete the distribution interface.\\

{\bf Vendor integration}\\
To integrate existing prediction algorithms into the framework, the API provides a vendor interface with the entry points \obj{on\_fit(X, y)} and \obj{on\_predict(X)}. The functions can be overridden to implement the fitting and prediction process of the vendor strategy that generates the prediction output, i.e. a posterior sample. Combined with an adaptor, the third-party strategy can then integrate with the wider framework infrastructure.\\

{\bf Workflow automation: Model-view-controller structure}\\
To automate and standardise prediction workflows, we suggest three fundamental components: workflow model, workflow controller, workflow view.

The \emph{workflow model object} simplifies the management of learning algorithms and contains the actual prediction algorithm that was defined by the user (i.e. a probabilistic estimator object encapsulating a probabilistic prediction strategy). It allows to store meta-information and configuration for the algorithm it contains, e.g. a name or a range of hyper-parameters that should be optimized. In future, it might support storage of trained prediction strategies for later use.

A \emph{workflow controller object} represents an action or task that can be done with a workflow model to obtain certain information. A scoring controller, for instance, might take a data set and loss function and return the loss score of the model on this data set. The controller can save the obtained data for later use.

Finally, the \emph{workflow view object} takes what a controller returns to present it to the user. A model validation view, for example, could take prediction-true-label-pairs and compute the sample of loss residuals. The separation of workflow controller and view level is advantageous since controller tasks, such as training of a prediction strategy, can be computationally expensive. Thus, mere reformatting of an output on the workflow view level should not require the re-evaluation of the workflow controller task. Moreover, if a view only displays a part of the information it is yet useful to store the full information the controller returned.

The model-view-controller structure (MVC) encapsulates the central workflow of predictive supervised learning: evaluating a collection of candidate strategies on one or multiple tasks, then displaying model comparison results and diagnostics. While arbitrary controllers and views can be implemented, they inherit the API of their base classes and thus share a consistent interface. From there, the MVC modules can be further aggregated using existing, standardized workflow components of the framework, for example, for composing result tables or visualising model performances. While the user can implement specific tasks, the heavy lifting of running experiments and compute-aggregation of results can be left to the framework.

We believe that collaborative extension of such a common but flexible code base can greatly benefit and accelerate model development and foster reproducibility of obtained results.\\

{\bf Technical implementation}\\
While the realisation of the outlined framework is not bound to a specific technical stack, our implementation is being carried out in Python and is hence based on the widely used scikit-learn library and its seminal interface design~\cite{pedregosa2011scikit}. Whenever possible, we re-utilised sklearn's existing code base with the aim to extend the library into the probabilistic scope while staying close to its syntax and logic whenever possible.

\subsection{Core API}
In the following, we detail the skpro classes and their role and interaction in the prediction-validation workflow API. Figure~\ref{fig:api-base} summarizes the major classes and their inheritance relations.

\begin{figure}
    \vspace*{-1.5cm}
    \begin{center}

\begin{tikzpicture}

    \begin{umlpackage}[x=0, y=0]{sklearn}
    \umlclass[type=abstract]{BaseEstimator}{

    }{
    get\_params(deep : boolean = True) : dict\\
        \docstr{Get the public parameters of the estimator}
    set\_params(params : dict) : BaseEstimator\\
        \docstr{Set the public parameters of the estimator}
    }

    \end{umlpackage}

    \begin{umlpackage}[x=0, y=-7, name=skpro-base]{skpro.base}

       \umlclass[type=abstract, x=0, y=3]{ProbabilisticEstimator}{

        }{
            fit(X: $N\times n$, y: $N\times 1$) : ProbabilisticEstimator \\
            predict(X: $M\times n$) : Distribution
        }
        \umlVHVinherit{ProbabilisticEstimator}{BaseEstimator}

        \umlclass[x=8,y=0, type=abstract]{Distribution}{
            estimator: ProbabilisticEstimator\\
                \docstr{Parent probabilistic estimator object}
            X: $M\times n$\\
                \docstr{Test features}
            selection: slice | int = None\\
                \docstr{Optional subset selection of the features}
            mode: string = 'elementwise'\\
                \docstr{Interface mode (elementwise or batch)}
        }{
            \_init() : void\\
             \docstr{Called on instantiation (prediction), if existent}
            \textit{point}() : $M\times 1$\\
                \docstr{The point prediction that corresponds to X}
            \textit{std}() : $M\times 1$\\
                \docstr{The estimated standard deviation that corresponds to X}
            pdf($\vec{x}$) : array\\
                \docstr{Probability density function}
            cdf($\vec{x}$) : array\\
                \docstr{Cumulative density function}
            lp2() : $M\times 1$\\
                \docstr{Norm of the probability density function $L^2 = \int pdf(x)^2 dx$}
        }
        \umldep[name=dist]{Distribution}{ProbabilisticEstimator}

        \umlclass[x=0,y=0]{VendorEstimator}{
            model : VendorInterface\\
            adapter : DensityAdapter
        }{

        }
        \umlVHVinherit{VendorEstimator}{ProbabilisticEstimator}

        \umlclass[x=0,y=-3, type=abstract]{VendorInterface}{

        }{
            on\_fit(X: $N\times n$, y: $N\times 1$) : void \\
                \docstr{Implements the vendor fit procedure}
            on\_predict(X: $M\times n$) : void\\
                \docstr{Implements the vendor fit procedure}
        }
        \umlnest{VendorInterface}{VendorEstimator}

        \umlclass[x=0,y=-6, type=abstract]{BayesianVendorInterface}{

        }{
            samples() : $M \times q$\\
            \docstr{Returns predictive posterior samples}
            \docstr{of size $q$ for each of the $M$ test points}
        }
        \umlVHVinherit{BayesianVendorInterface}{VendorInterface}

        \umlclass[x=5.5,y=-6, type=abstract]{DensityAdapter}{

        }{
            pdf($\vec{x}$) : array\\
                \docstr{Probability density function}
            cdf($\vec{x}$) : array\\
                \docstr{Cumulative density function}
        }
        \umlnest[geometry=-|-]{DensityAdapter}{VendorEstimator}

        \umlclass[x=9.8,y=-6]{KernelDensityAdapter}{
            estimator : BaseEstimator\\
            \docstr{Kernel estimator}
        }{

        }
        \umlVHVinherit{KernelDensityAdapter}{DensityAdapter}

    \end{umlpackage}

    \begin{umlpackage}[x=4, y=-17, name=thirdparty]{Third-party integration, e.g. PyMC}
        \umlclass[x=0,y=0]{PymcInterface}{
            model\_definition : callable(model, X, y)\\
                \docstr{Callable that defines a model using the given PyMC3 ``model`` variable and training features ``X`` as well as and the labels ``y``}
            samples\_size: integer = 500\\
                \docstr{Number of samples to be drawn from the posterior distribution}
        }{

        }
        \umlVHVinherit[anchor2=30]{PymcInterface}{BayesianVendorInterface}
    \end{umlpackage}

    \umlnote[width=6cm, x=8, y=0]{skpro-base}{
        \textbf{Possible model implementation by user}\\
        \vspace{1em}
        \texttt{model = VendorEstimator(}\\
            \hspace{0.5em}\texttt{PymcInterface(pymc\_model)}
            \\)\\
        \texttt{model.fit($X$, $y$)}\\
        \texttt{y\_pred = model.predict($X^*$, $y^*$)}\\
        \texttt{result = y\_pred.pdf(3.14)}
    }
\end{tikzpicture}
    \end{center}
    \caption{Overview of the framework's core API where abstract classes are denoted in italic font and inheritance and dependence are indicated by arrows: The seminal probabilistic estimator object directly inherits from scikit-learn's base estimator object and thus implements the fit-predict logic that produce probabilistic predictions in form of a distribution object. A vendor estimator allows for the integration of 3rd party strategies such as the Bayesian PyMC prediction algorithms.}
    \label{fig:api-base}
\end{figure}
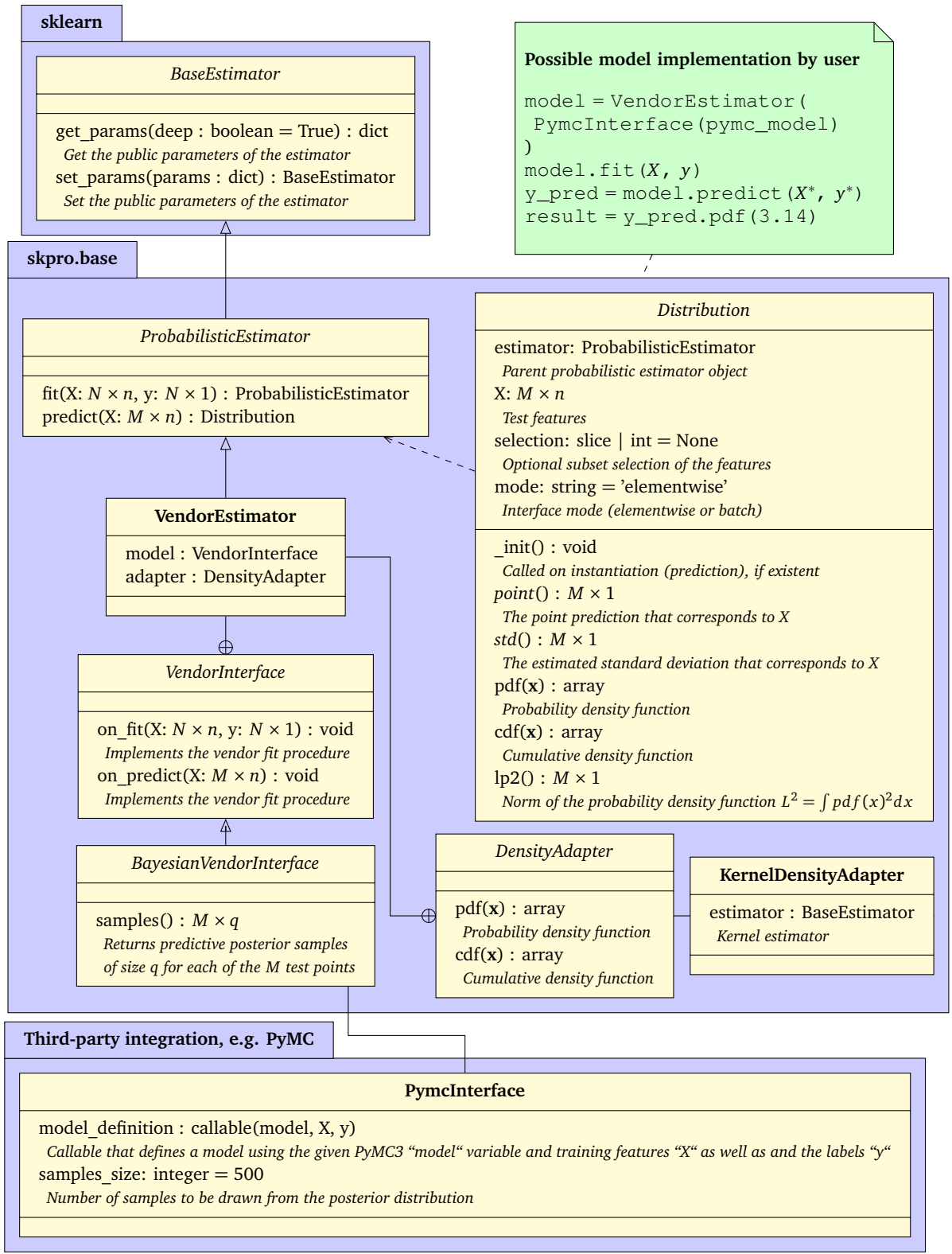

\subsubsection{Probabilistic estimators and distributions}

All prediction strategies are implemented as a subclass of the seminal\footnote{The word ``seminal'' is used here in its usual meaning in the context of API design, as in ``seminal class'' = ``most other classes are derived from this class, e.g., by inheritance, mixin or instantiation''. We expressly do not attach the meaning often found in scientific literature review, where ``seminal concept'' = ``most other concepts in its field (will) derive from it or relate to it'', which would not only be presumptuous at this point in time, but also nonsensical in the given API design context - similarly to the third, medical meaning of the word ``seminal''.} abstract \obj{ProbabilisticEstimator} class that directly inherits from scikit-learn's \obj{BaseEstimator}. Thus, they implement the mentioned fit-predict API as well as methods to get and set the public parameters of the estimator class. Crucially, each probabilistic estimator also comes with an associated sub-class of \obj{Distribution} which defines the distribution interface that represents the predictions. When \obj{predict} is called, the estimator simply instantiates and returns its corresponding distribution object. Note that the distribution object takes a reference of the estimator and the test features and has to implement the access to the actual predictions based on the strategy (see Section~ \ref{sec:api-suggested-approach}). To set up distribution properties, the distribution instance can override the \obj{\_init} method, which is automatically called after the object is being instantiated. Since the distribution object represents multiple test feature predictions $p_1=f(x_1^*),..., p_M=f(x_M^*)$, it provides an optional selection parameter that allows to sub-select predictions. For example, \obj{selection=0} or \obj{dist[0]} would refer to $p_1$, the prediction distribution of the first test point only. The distribution object also implements two different evaluation modes for given function arguments $\vec{y}^* = y_1^*,...,y_k^*$, namely element-wise and batch-wise evaluation.

For default \emph{element-wise evaluation}, repeated sequences of $\vec{y}^*$ are paired up with repeated sequences of the $p_i$. More precisely, the sequence of $p_i$ is repeated until there are $k$, i.e., $p_1,...,p_m, p_1, p_2,..., p_m, p_1,...,p_{k/M}$ where $k/M$ is the integer remainder of dividing $k$ by $M$.
In the alternative \emph{batch evaluation} mode, every distribution $p_i$ is evaluated on all test labels $y^*_1, ..., y^*_k$, resulting in a matrix $k$ columns and $m$ rows.

\subsubsection{Integration of third-party prediction strategies}

To interface existing prediction strategies from the framework, users can implement their own sub-classes of the probabilistic estimator base class. The API, however, also offers default semi-automated strategy integration, using the derived \obj{VendorEstimator} object by specifying a \obj{VendorInterface} and a \obj{DensityAdapter}. The vendor interface must only define \texttt{on\_fit} and \texttt{on\_predict} events that are invoked automatically. The results of the fit-predict procedure are exposed as public variables of the interface. The adaptor, on the other hand, then describes how distributional properties are generated from the interfaced vendor strategy, i.e. the VendorInterface's public properties. Given a vendor interface and appropriate adaptor, a vendor estimator can be used like any other probabilistic estimator of the framework.

\subsubsection{Bayesian and probabilistic programming interfaces}

A notable example of the model integration API is the case of Bayesian and/or probabilistic programming strategies. To integrate such prediction strategies, one can implement the \obj{BayesianVendorInterface} and its \obj{samples} method which should return a predictive posterior sample. Combined with a \texttt{DensityAdapter} like the \texttt{KernelDensityAdapter} that transforms the sample into estimated densities, the Bayesian prediction strategy can then be used as a (absolutely continuous) probabilistic estimator. As a special case, the Bayesian API is used for the integration of prediction strategies from the PyMC3 library~\cite{salvatier2016probabilistic} (see Section~\ref{sec:api-pymc-integration}).\\
An alternative is conversion to a mixed probabilistic prediction interface and use with a mixed probabilistic loss (currently in beta).

\subsubsection{Model selection and performance metrics}

To evaluate the accuracy of the learn prediction functionals, the skpro API provides number of probabilistic loss metrics in the \obj{metrics} module (see Section~\ref{sec:probloss}). In line with definition~\ref{Def:losses}, the framework implements, as continuous losses, the log-loss (\obj{log\_loss}) and the integrated squared loss (\obj{gneiting\_loss}). The module also introduces an experimental implementation of the continuous rank probability score (CRPS) for mixed predictions, as discussed in section~\ref{sec:mixed.split}. Thereby, the integration of the probability function is carried out numerically using the \obj{quad} method of \texttt{scipy}'s integrate module which itself uses a technique from the Fortran library \texttt{QUADPACK}~\cite{jonesscipy2017}. In order to maximize efficiency, the method automatically adjusts the step size it uses for numerical integration by judging by the flatness and slope of the integrated function. As a result, the outcome might not be satisfactory for arbitrary probability functions. In our experiments, however, the loss metrics showed satisfying precision compared to analytical solutions. Notably, it was ensured that all metrics have a unified signature:\\

\begin{tabular}{ll}
    y\_true : np.array &
        The true labels \\
    dist\_pred: Distribution &
        The predicted distribution \\
    sample: boolean = True &
        If true, loss will be averaged across the test point sample \\
    return\_std: boolean = False &
        If true, returns average loss of sample and its standard deviation\\
    \hline
    \textit{Returns} mixed type & \\
\end{tabular}\\

\noindent
In accordance with scikit-learn's model validation API, we furthermore implement a cross-validation procedure out-of-the-box. In fact, the implemented \obj{cross\_val\_score} uses the existing sklearn method of the same name but introduces support for the standard deviation of the loss values (as in Algorithm~\ref{alg:validation-2}). The relevant parameters are:\\

\begin{tabular}{ll}
    estimator : estimator object &
        The object to use to fit the data\\
    X : array-like &
        The data to fit  \\
    y: array-like &
        The target variable\\
    scoring : callable or None &
        Used scoring function\\
    \hline
    \textit{Returns} array &
        \parbox[t]{10cm}{
        Array of scores of the estimator for each run of the cross validation
        with their corresponding uncertainty.
        } \\
\end{tabular}\\

\subsubsection{Code example}

The following code example illustrates a possible usage of the framework's core API for prediction of Boston housing prices:

\begin{lstlisting}[language=Python, basicstyle=\small]
class ThirdPartyModel:
    on_fit(X, y):
        self.model = ... # load and fit some 3rd party model
    on_predict(X):
        self.posterior_sample = self.model.predict(X)
    sample():
        return self.posterior_sample

X, y = load_boston(return_X_y=True) # Load boston housing data
model = VendorEstimator(ThirdPartyModel(), adapter=KernelDensityAdapter())
scoring = make_scorer(log_loss) # transform loss metric into scoring function
results = cross_val_score(model, X, y, scoring)
>>> results
[[-3.47921375  0.04544327]
 [-4.21840238  0.14861681]
 [-3.94242129  0.07013955]]
\end{lstlisting}

\subsection{Prediction strategies}

Like sklearn, the skpro framework comes with ready-to-use prediction strategies that are built using the described API. Currently (first release version), the implemented algorithms include a baseline, an estimator that interfaces the PyMC3 package, as well as composite meta-strategies that leverage arbitrary scikit-learn models for probabilistic prediction by predicting parameters of common parametric densities.

\subsubsection{Baselines}

Naturally, the framework includes an uninformed baseline as discussed in section \ref{Sec:uninfbase}. Technically, the \obj{DensityBaseline} object that implements the baseline takes a density adaptor (see Section~\ref{sec:api-suggested-approach}) that is being used to estimate a density from the training labels (discarding the training features). This constant density is then returned as distributional prediction for any given test features.

\subsubsection{PyMC-interface}
\label{sec:api-pymc-integration}

As mentioned earlier, the API supports the integration of Bayesian prediction methods that produce a predictive posterior sample (see Figure~\ref{fig:api-base}). In particular, the framework comes with a specific vendor interface for the PyMC3 library for Bayesian modelling \cite{salvatier2016probabilistic}. The user can write out a model definition by specifying a callable (e.g. a function) with the arguments \obj{model}, \obj{X}, \obj{y}, where model represents the PyMC model and \obj{X} and \obj{y} the features and their respective labels, for instance:
\begin{lstlisting}[language=Python, basicstyle=\small]
def pymc_model(model, X, y):
    ... # define priors alpha, betas, sigma etc. using PyMC's syntax
    y_pred = pm.Normal("y_pred", # prediction variable
        mu = alpha + pm.math.dot(betas, X.T), sd=sigma, observed=y
    )
\end{lstlisting}
The callable is passed as an argument to the \obj{PymcInterface}, that itself becomes an argument of the \obj{BayesianVendorEstimator}. Using the estimator then automatically performs the inference of the predictive posterior sample using PyMC's standard estimation algorithms. The interface thus provides a direct bridge to integrate PyMC models with the wider framework.

\subsubsection{Composite parametric prediction}
\label{sec:composite}

The framework also offers a range non-Bayesian composite prediction strategy which use classical estimators to predict defining parameters of parametric distributions, see Algorithm~\ref{alg:adapt-resid} and~\ref{sec:heteroscedastic-regression}. For example, any prediction of normal distributions is equivalent to a prediction of its defining parameters mean $\mu$ and standard deviation $\sigma$, plus specification of the distribution type being normal (or, say, prediction of location $\mu$ and scale $b$ while specifying a Laplacian distribution etc.). More generally, classical prediction algorithms which already return \emph{point} and \emph{variance} predictive estimates may be composed by distribution shape types (e.g. Normal, Laplace etc.) which the point and variance estimates are plugged into, yielding a probabilistic prediction strategy. Appropriate distributional types may be empirically chosen based on the data, for instance by choosing the type that minimizes a given empirical probabilistic loss.

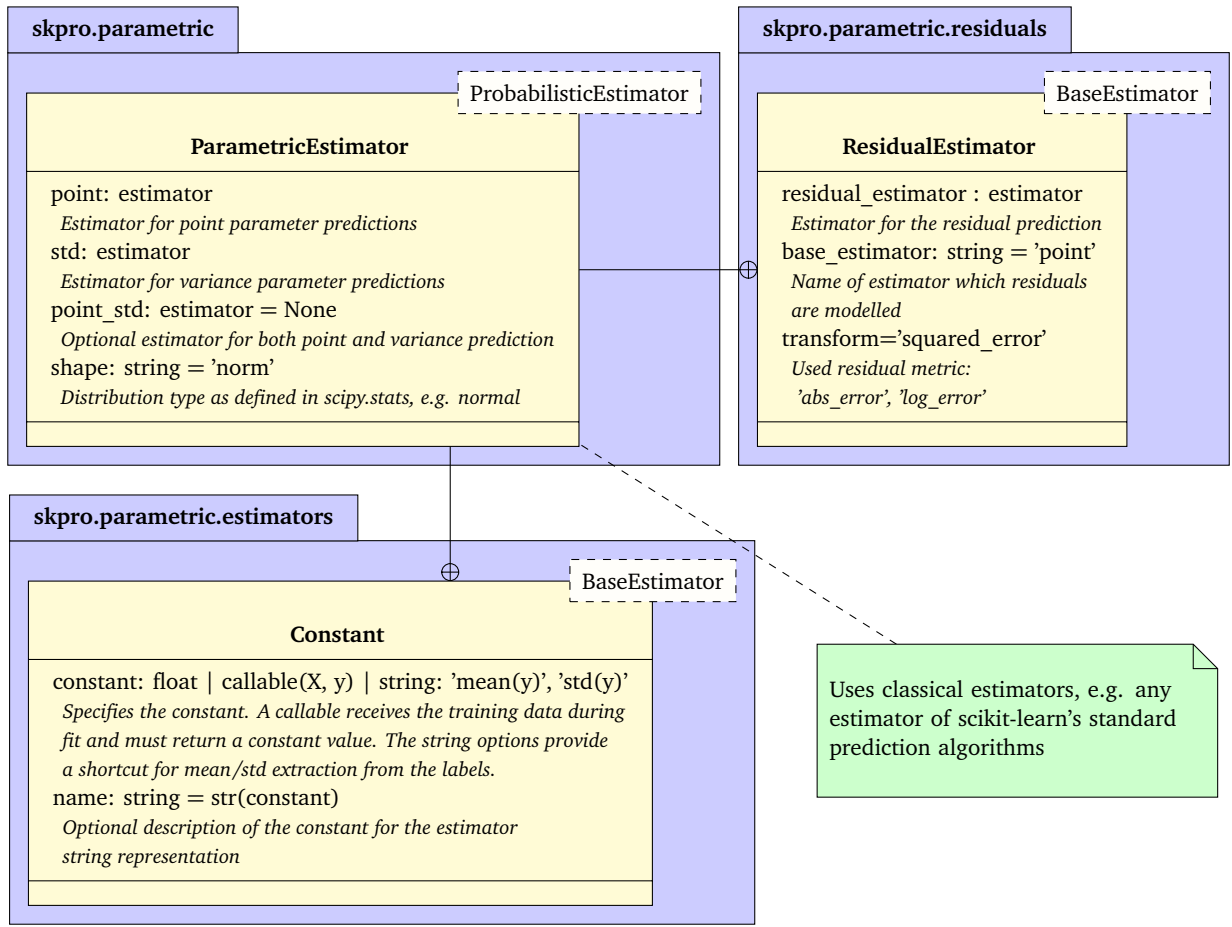
\begin{figure}
    \vspace*{-1.5cm}
    \begin{center}

\begin{tikzpicture}

    \begin{umlpackage}[x=0, y=0, name=parametric]{skpro.parametric}

       \umlclass[x=0, y=3, template=ProbabilisticEstimator]{ParametricEstimator}{
           point: estimator\\
              \docstr{Estimator for point parameter predictions}
           std: estimator\\
              \docstr{Estimator for variance parameter predictions}
           point\_std: estimator = None\\
              \docstr{Optional estimator for both point and variance prediction}
           shape: string = 'norm'\\
              \docstr{Distribution type as defined in scipy.stats, e.g. normal}
        }{
        }

    \end{umlpackage}

    \begin{umlpackage}[x=8.5, y=3, name=residuals]{skpro.parametric.residuals}
        \umlclass[x=0,y=0, template=BaseEstimator]{ResidualEstimator}{
            residual\_estimator : estimator\\
                \docstr{Estimator for the residual prediction}
            base\_estimator: string = 'point'\\
                \docstr{Name of estimator which residuals}
                \docstr{are modelled}
            transform='squared\_error'\\
                \docstr{Used residual metric:}
                \docstr{ 'abs\_error', 'log\_error'}
        }{

        }
        \umlnest[]{ResidualEstimator}{ParametricEstimator}
    \end{umlpackage}

    \begin{umlpackage}[x=0.5, y=-3.3, name=estimators]{skpro.parametric.estimators}

        \umlclass[template=BaseEstimator]{Constant}{
        constant: float | callable(X, y) | string: 'mean(y)', 'std(y)'\\
            \docstr{Specifies the constant. A callable receives the training data during}
            \docstr{fit and must return a constant value. The string options provide}
            \docstr{a shortcut for mean/std extraction from the labels.}
        name: string = str(constant)\\
            \docstr{Optional description of the constant for the estimator}
            \docstr{string representation}
        }{

        }
        \umlnest[geometry=|-, anchor1=2cm]{Constant}{ParametricEstimator}
    \end{umlpackage}

    \umlnote[width=5cm, x=9.5, y=-3]{ParametricEstimator}{
        Uses classical estimators, e.g. any estimator of scikit-learn's standard prediction algorithms\\
    }
\end{tikzpicture}
    \end{center}
    \caption{Overview of the parametric estimation strategy (base classes are annotated in the dashed boxes): The probabilistic parametric estimator takes classical estimators that predict the defining parameters (\obj{point} and \obj{std}) of the given distribution type (e.g. mean and standard deviation for the normal distribution). In particular, the residual estimator can be used for the \obj{std} prediction since it takes another classical estimator to predict residuals or variances of the \obj{point} prediction. To implement simple strategies, the module offers a constant estimator that produces pre-defined constant predictions.}
    \label{fig:api-parametric}
\end{figure}

The composite parametric prediction strategy is implemented by the \obj{ParametricEstimator} object that currently supports two-parametric continuous distributions as depicted in figure \ref{fig:api-parametric}. It takes a point estimator (\obj{point}), a variance estimator (\obj{std}) and a \obj{shape} parameter to define the assumed distribution form (e.g. 'norm' or 'laplace'). During fitting (\obj{fit}), the parametric estimator automatically fits the provided point and variance estimators; accordingly, on prediction (\obj{predict}), it retrieves their estimations to compose the overall predicted distribution interface of the specified shape. The composite parametric prediction strategy also supports combined estimation in which the same estimator instance is used to obtain both point and variance prediction. The combined estimator has to be passed to the optional \obj{point\_std} parameter while the \obj{point} and \obj{std} parameter can be used to specify how point and variance estimation should be retrieved from it. Hence, the parametric estimator can be considered a first-order strategy that maps the distribution interface onto the actual learning algorithms of the provided estimators. Since the implementation follows the estimator API of sklearn, sklearn's classical estimators may be employed, in any combination, as part of a composite probabilistic prediction strategies.

In this paradigm, a classical algorithm can be employed to obtain the point prediction, e.g. a housing price that then represents the mean of the predicted price distribution of that house. To obtain variance predictions, on the other hand, it is an intuitive idea to use the residuals of the point estimations, since they represent the magnitude of error or uncertainty associated with the point prediction. Precisely, using the training labels $y_i$ we can obtain the absolute training residuals $\varepsilon_{\text{train}, i} = |\hat{y}_i - y_i$| of the point predictions $\hat{y}_i$. Since training and test data are assumed to be i.i.d. sampled from the same generative distribution, we can estimate the test residuals based on the training residuals. Specifically, after the first-pass predictions have been obtained, we fit a second point prediction strategy - the residual prediction strategy - to the training features and the training residuals ($x_i$, $\varepsilon_{\text{train}, i}$) of the first-pass strategy. The residual prediction strategy then provides an estimate of the test residuals $\hat{\varepsilon}_{\text{test}, j}$ for given test features $x_j^*$. Consequently, we regard $\hat{\varepsilon}_{\text{test}, j}$ as prediction of the distribution's deviation parameter (e.g.\ $\sigma$ in $\calN(\mu, \sigma)$), i.e. the variance prediction of the overall strategy. The residual estimation strategy is implemented by the \obj{ResidualEstimator} (RE) object as documented in figure \ref{fig:api-parametric}.

Note that we calculated the absolute residuals to account for the non-negativity of the variance. Alternatively, the strategy can be modified by fitting the squared or logarithmic training residuals to the residual prediction strategy and back transforming the estimated test residuals using the square root and exponential function respectively. Such a residuals transformations can, for instance, be useful to emphasize or depreciate larger residuals, e.g. the influence of outliers in the data. Note further, that the overall residual strategy involves two estimators, for the point and the residual prediction, which are not necessarily of the same type. One could, for example, use a linear regression to obtain the point predictions while choosing a more sophisticated strategy to model the residuals of that regression.

With the given parametric API, classical estimators can be employed for the purposes of probabilistic prediction making. In addition to the estimators in the sklearn library, the skpro framework provides a \obj{Constant} (C) estimator that predicts a constant value which is pre-defined or calculated from the training data. The estimator is particularly useful for control strategies, e.g. a baseline that omits the training data features and makes an uninformed guess by calculating the constant mean of the dependent variable. The following code example illustrates a resulting overall syntax that defines a baseline model \obj{baseline} using the parametric estimator. The predictions of such model would be normal distributions with mean zero, and a standard deviation that equals the mean of the absolute training residuals.

\begin{lstlisting}[language=Python, basicstyle=\small]
baseline = ParametricEstimator(
    point=Constant(0),     # Point estimator
    std=ResidualEstimator(  # Variance estimator
        'point',            # Base estimator
        Constant('mean(y)'),# Residual estimator
        'abs_error'         # Calculation method
    ),
    shape='norm'            # Distribution type
)
\end{lstlisting}

\subsection{Meta-modelling API}
\label{sec:API.pred.tuning}

Meta-modelling API concepts are required for a modular implementation of higher-order modelling features such as hyper-parameter optimization or ensemble methods.

Following the principle of composition the meta-algorithms are implemented as modular compositions of the simpler algorithms (see Section~\ref{sec:api-principles}). Technically, the meta-strategies are realised by meta-estimators that are estimator-like objects which perform certain methods with a given estimators. They hence take estimator type objects as some of their initializing inputs, and when initialized exhibit the fit-predict logic that implements the meta-algorithm when instantiated on the wrapped estimators (for an extended discussion see~\cite{sklearn_api}, section~3.1).\\

{\bf Grid and random hyper-parameter optimization}\\
Most prediction strategies exhibit so-called hyper-parameters that are externally settable. Besides manual tuning, which is discouraged due to the problematic reproducibility, there are different meta-strategies to algorithmically optimize or tune the hyper-parameters of a learning algorithm for a given prediction problem. Two commonly used default strategies are grid search and random search. In grid search, a set of hyper-parameters that will be searched is explicitly defined. On the contrary, for random search, a distribution over possible hyper-parameters is given from which the search set is sampled. In either case, the model is evaluated for each of the hyper-parameters in the search set and the parameter with lowest estimated generalization error, estimated by nested re-sampling within the training set(s), is chosen.

The two mentioned tuning meta-strategies are implemented by scikit-learn's meta-estimators \obj{GridSearchCV} for exhaustive grid search and \obj{RandomizedSearchCV} for randomized parameter optimization. Both take an instance of the model which parameters are ought to be optimized and a list of the to-be-searched parameter range; the fit function then performs the optimization while the predict function uses the optimized estimator's predict function.

Since the implemented probabilistic estimator and all of its subclasses are implementing the API of an sklearn estimator, it is possible to directly use the above, or any other of sklearn's meta-estimators to tune hyper-parameters of the probabilistic prediction strategies. Accordingly, it is also possible to employ sklearn's pipeline meta-estimator to combine multiple estimators, including transformation or prediction strategies, into a single prediction strategy. This can, for instance, be useful to conveniently combine data pre-processing an a probabilistic prediction algorithm in a single estimator interface.\\

{\bf Ensemble methods}\\
The skpro package provides (at release) experimental support for ensemble methods. Currently, this includes bagging in a regression setting which is implemented by the \obj{BaggingRegressor} estimator in the ensemble module. The meta-estimator fits base regressors (i.e. probabilistic estimators) on random subsets of the original dataset and then aggregates their individual predictions in a distribution interface to form a final prediction (see Section~\ref{sec:baggeraging}). The implementation is based on scikit's meta-estimator of the same name but introduces support for the probabilistic setting\footnote{A detailed description of the employed ensemble algorithms is available in scikit-learn's \href{http://scikit-learn.org/stable/modules/generated/sklearn.ensemble.BaggingRegressor.html}{online documentation}}.
The relevant parameters of the bagging estimator that can be used like a standard probabilistic estimator, are described below:
\vspace{0.5em}\\
\begin{tabular}{ll}
    base\_estimator: estimator &
        The base estimator to fit on random subsets of the dataset\\
    n\_estimators: int = 10 &
        The number of base estimators in the ensemble\\
    max\_samples: int or float = 1.0 &
        The (relative or absolute) number of samples ...\\
    max\_features: int or float = 1.0 &
        ... features to draw from X to train each base estimator\\
    bootstrap: boolean = True &
        Whether samples are drawn with replacement\\
    bootstrap\_features: boolean = False &
        Whether features are drawn with replacement\\
    oob\_score: boolean = False &
        Whether to use out-of-bag samples to estimate the generalization error.\\
\end{tabular}\\

\subsection{Workflow automation}

In terms of the workflow automation API that was described in section \ref{sec:api-suggested-approach}, the framework currently implements one major controller, the cross-validation controller (CV), and multiple views to display evaluation scores and model information. The CV controller encapsulates the common k-fold cross-validation procedure that can be used to estimate a prediction strategy's generalization error. It takes a data set and loss function, and returns the fold-losses as well as the overall loss, with estimated confidence intervals, for a given prediction strategy (see Algorithm~\ref{alg:validation-2}). If the workflow specifies a range of hyper-parameters for tuning, the controller automatically optimizes the hyper-parameters in a nested cross-validation procedure and additionally returns the best hyper-parameters found.\\

{\bf Result aggregation and comparison}\\
The workflow framework supports a simple way of results aggregation and comparison, namely the generation of result tables. A table can be easily defined by providing controller-view-pairs as columns and models as rows.
The framework will then evaluate the table cells by running the controller task for the respective models and render the results table using the specified views. Note that the evaluation of the controller tasks and the process of rendering the table is decoupled. It is therefore possible to access the raw table with all the information each controller returned and then render the table with the reduced information that is actually needed. Furthermore, the decoupling allows for manipulation or enhancement of the raw data before rendering. The raw table data can, for example, be sorted by the model performances. Notably, the table supports so-called rank sorting. Rank sorting is, for instance, useful if models are compared on different data sets and ought to be sorted by their overall performance. In this case, it is unsuitable to simply average the data sets' performance scores since the value ranges might differ considerably between the different data sets. Instead, it is useful to rank the performances on each data set and then average the model's rank on each data set to obtain the overall rank. Table~\ref{tbl:results-table-example} gives an example of a rank sorted result table that is typically generated by the workflow framework. The workflow module hence allows for the fast generation of fair, reliable and best-practice model comparison with minimal development overhead for the user.\\

\noindent
\textbf{Source code}

The skpro package has been released open-source under a BSD license on \href{https://github.com/alan-turing-institute/skpro}{GitHub.com/alan-turing-institute/skpro}. The repository includes a comprehensive online documentation and code examples. We welcome contributions and extensions of the skpro project that extend the possibilities of probabilistic supervised learning (please read the contribution guidelines on the skpro project page~\cite{kiraly_skpro_2017} and contact the authors in case of interest).

\begin{table}
    \centering
    \begin{tabular}{rlllll}
    \hline
       \# & Model                    & CV(Dataset A, loss function)   & CV(Dataset B, loss function)    \\
    \hline
       0 & Example model 1 & (2) 12$\pm$1*  & (1) 3$\pm$2*    \\
       1 & Example model 2 & (1) 5$\pm$0.5* & (2) 9$\pm$1*  \\
       2 & Example model 3 & (3) 28$\pm$3*  & (3) 29$\pm$4*  \\
    \hline
    \end{tabular}
\caption{Example and explanation of a rank-sorted results table that can be easily created in the workflow framework: prediction strategies are listed in the rows of the table while the columns present the cross-validated performance of a certain dataset and loss function. Entries are as follows: the numbers after the parenthesis are performances, plusminus estimated standard errors - so for 95\% CI, the standard error ought to be multiplied by 1.96. The numbers in parentheses denote the model's performance rank in the respective column. An asterisk indicates that the prediction strategy was auto-tuned. The strategies are sorted by the average performance rank across all columns, displaying models with the best performances (that is: the lowest losses) at the top of the table.}
\label{tbl:results-table-example}
\end{table}

\newpage
\section{Experiments}\label{sec:experiments}

We conducted a number of numerical experiments to test and showcase the described probabilistic prediction framework.

Our choice of set-up was guided by the development and testing of the framework rather than focused on the implementation of complex state-of-the-art models. A priority was therefore that the implemented models reliably performed in either loss metric above a suitable baseline on a variety of different datasets. Secondary focus is the investigation of composite probabilistic modelling strategies built from off-shelf point prediction strategies.

The exact set-up of datasets, prediction strategies and mode of model comparison is described below.

\subsection{Datasets}\label{datasets}

For the initial experiments, we selected a number of relatively small, tabular datasets frequently used in benchmarking studies. Since probabilistic classification is relatively well studied (see Section~\ref{sec:priorart.assessment}~(c) for this), we focused exclusively on regression prediction tasks. The four datasets of our preliminary study are publicly available at the UCI Machine Learning Repository~\cite{UCIrepo}.

\textbf{Boston Housing} The Boston Housing dataset is our main dataset since it has been extensively studied in practical machine learning literature. It provides a sample of 506 houses, each described by 12 features and the dependant variable of the house-price in thousands of dollar ranging from 5 to 50.

\textbf{Diabetes} The diabetes dataset includes a sample of 442 patients and 10 physiological variables with an indication of disease progression a year later. In our version of the dataset, the features in the provided dataset are already mean-centred and normalized ($-0.2 < x < 0.2$) while the indication targets range from 25 to 346.

\textbf{Energy} The energy dataset reports heating load or energy efficiency in a sample of 768 buildings as a function of five different building parameters (surface, wall, roof, glazing area and overall height). The targets are all real-valued and range from 6 to 44.

\textbf{Bikes} The bike sharing dataset contains a sample of 17389 days on which the amount of rented bicycles and weather data are recorded. The data were collected between 2011 and 2012 and exhibit a wide variation in the targets (22 to 8714 rented bicycles). It should be noted that due to the consecutive/temporal sampling, unlike in the other three datasets, the i.i.d.~assumption is not plausible here - though the bike sharing dataset is nevertheless used in literature for benchmarking in an i.i.d.~setting.

\subsection{Validation set-up}

As losses, we use the squared integrated loss or Gneiting loss $L_{Gn}$ (Definition~\ref{Def:losses}), and the log-loss capped at $\epsilon = 10^{-10}\approx \exp(-23)$. (Definition~\ref{Def:losses} and Section~\ref{sec:logmix}).
For reading convenience, tables involving the capped log-loss will be captioned with the symbol for the uncapped log-loss $L_\ell$, with the capping implicitly understood.

Estimation of performance metrics follows the discussion in Sections~\ref{sec:modelvalidation.estim} and \ref{Sec:modelvalidation.tuning}:
The expected generalization losses are estimated by the common 5-fold cross-validation scheme, using the same splits for all models. Empirical generalization error estimates are aggregated by taking the mean of the fold-wise estimates. Standard errors for the expected generalization loss are estimated by the standard error of the loss residual sample mean, for each of the five test folds; standard error estimates are aggregated from the fold-wise standard errors by taking the mean. For each prediction strategy, the aggregate empirical generalization losses and standard errors are reported.

Automated hyper-parameter tuning, if invoked, is always done as part of the prediction strategy, i.e., conducted in a nested way\footnote{In terms of the API design, the inner, tuning loop of nested tuning is always encapsulated in the prediction strategy estimator object, rather than being run as part of the validation set-up, see Section~\ref{sec:API.pred.tuning}. This way it is also ensured that the user doesn't inadvertently tune on the test set, i.e., it is ensured that evaluation by the workflow orchestrator is always conducted on an entirely unseen test set which was not used for hyper-parameter tuning.}.

In all these cases, tuning is done using the 5-fold nested cross-validation method, using the same grid for tuning within the training fold, across all outer error estimation splits.

\subsection{Prediction strategies}

We describe the probabilistic prediction strategies we consider for the experiments. Most are composite and use off-shelf point predictions in their construction. We describe below the primitive strategies used in composition, the composition motifs, and the strategies used.

\subsubsection{Primitive point prediction strategies}
\label{sec:experiments.strat.prim}

We use a number of point prediction strategies as part of composite predictions as components in composite strategies. These are, with abbreviations in brackets:\\

Uninformed/constant predictor (C) - a constant point prediction. This can be set to a fixed constant, for example 42 which is included as a framework sanity check, as in general a constant prediction of 42 (``C(42)'') should perform badly. Best uninformed baselines (see Section~\ref{Sec:uninfbase}) may be obtained by choosing a constant that depends on the training data, such as the mean label (``mean(y)''), or the standard deviation of the labels (``std(y)'', best uninformed baseline for residual predictions).\\

Linear Regression (LR) - estimated via Ordinary Least Squares.\\

Random Forest (RF) - the implementation of Breiman's random forests~\cite{breiman2001random} from sklearn. We tune hyper-parameters (n\_estimators: $5$, $20$, $30$) and (max\_depth: $10$, $20$, None).\\

Kernel ridge regression (GP) - the implementation of kernel ridge regression from sklearn, based on Rasmussen and Williams' Algorithm 2.1 \cite{rasmussen2006gaussian}. We tune hyper-parameters ($\alpha$: $10^{-10}$, $0.1$, $0.01$). Not enabled by default (but usually needed in practice unless features are already scaled) is normalization of input features, which is another hyper-parameter that can be set. This is denoted as \{Scale, GP\} in what follows.\\

A more complex prediction model should outperform simple baseline strategies, e.g. random guessing. A reasonable simple baseline strategy for our purposes is a model that does not use any of the provided features to make its predictions. Instead, a constant like the mean of the training labels is returned. Such a baseline will most likely predict better-than-chance and do specifically well if the true population labels are uniformly distributed. The corresponding probabilistic strategy, which will be used as baseline in our experiments, calculates the mean and standard deviation of the training labels as an estimation of the point and variance parameter of the predicted distribution.

\subsubsection{Composition strategies}
\label{sec:experiments.strat.comp}

We describe a number of composition meta-strategies used for building the strategies for our experiments. The main motif is the composite parametric prediction strategy, see Section~\ref{sec:composite}.\\

In line with Sections~\ref{sec:classprob} and~\ref{Sec:classicbase}, the class of classical point prediction estimators as measured by the mean squared or mean absolute error may be embedded through parametric predicted distributions with location and dispersion parameter coming from possibly different point prediction strategies.\\
The two main cases we consider in these preliminary experiments are a parametric Gaussian prediction, denoted $\calN$(p=P1, s=P2) and a parametric Laplace prediction, denoted Laplace(p=P1, s=P2), where P1 and P2 are (possibly but not necessarily distinct) point prediction strategies for location and dispersion parameter (both having the unit of the label, i.e., the dispersion having the unit of standard deviation even in the case of the normal). We also implemented a uniform prediction Uniform(p=P1, s=P1) and test it, even though we would not expect good performance on the given datasets - this is for potential latter use in combination with target re-normalization such as in Section~\ref{sec:logmix}.\\

The classical baselines, following Sections~\ref{sec:classprob} and~\ref{Sec:classicbase} come in two types:\\
(a) where the dispersion is estimated from the raw labels, in the below P2 estimates the training label standard deviation. This is denoted s=C(std(y)).\\
(b) where the dispersion is estimated from the residuals of P1. This is a special case of the residual composite adaptor in Algorithm~\ref{alg:adapt-resid}, denoted as s=RE(p, C(std(y))) as a special case of the residual adaptor strategy below.\\

Non-baseline composite strategies are obtained from composing a location point predictor P1 with a residual point estimator PRes by Algorithm~\ref{alg:adapt-resid}. This is denoted (in the example of a Gaussian) as $\calN$(p=P1, s=RE(p, PRes)). For example, a random forest location prediction with an ordinary least squares residual prediction is denoted $\calN$(p=RF, s=RE(p, LR)).\\
In later experiments, the type of residual predicted may also be changed/tuned. This is indicated by an additional argument being one of (squared, log, abs) with a default of squared, for example $\calN$(p=RF, s=RE(p, LR, log)) where the linear regressor predicts logarithmic and not squared residuals. Note that the change in residual type does not affect the domain of the dispersion parameter as the prediction is scaled back to the dispersion parameter domain.\\

As part of the experiments below, it also became apparent that some strategies made zero dispersion predictions which led the log-loss to be infinite and the capped log-loss to be large. A natural idea is therefore to introduce a lower boundary $\kappa$ to obtain a modified dispersion estimate of the type $\hat{\sigma}_{\text{min}} = \max(\kappa, \hat{\sigma})$, note that $\kappa$ does not need to agree with the capping constant $\epsilon$ for the log-loss.\\
The lower bounding is implemented through a wrapper ``Min'' on the natural level of point prediction strategies rather than on the probabilistic strategy level, so that Min(PX) is the lower bounded variant of a point prediction strategy PX. For example, $\calN$(p=GP, s=Min(RE(p, RF))) denotes a normal prediction where location is by Gaussian process posterior mean, dispersion is by a random forest predicting squared residuals, where the prediction is lower bounded by $\kappa$ - i.e., if the random forest on residuals predicts anything smaller than $\kappa$, then the predicted dispersion is set to $\kappa$.\\
The Min-wrapper is tuned, by standard, over an exponential range $\kappa = \{0, 2^0, 2^1, 2^2, 2^3, 2^5\}$.

\subsubsection{Investigated prediction strategies}
\label{sec:experiments.strat.quest}

Of particular interest were a number of composite strategies, corresponding to the following principal questions of interest:

\begin{itemize}
\item[(1)] Are there any genuine composite probabilistic predictors which outperform the classical baselines?
\item[(2)] Is it helpful to tune/change the type of residual which is predicted in the composite parametric residual strategy?
\item[(3)] Is it helpful to tune/change the shape of distribution which is predicted in the composite parametric residual strategy?
\item[(4)] How do off-shelf probabilistic prediction strategies compare to composite strategies?
\end{itemize}

For (1)-(3), we considered combinations of primitive strategies in Section~\ref{sec:experiments.strat.prim} via the composition modes described in Section~\ref{sec:experiments.strat.comp}.

For (4), we considered only Gaussian process regression, as they were the only off-shelf probabilistic strategy readily available through sklearn, see Section~\ref{sec:gaussianprocesses}. Despite our vendor interface allowing access to Bayesian strategies from probabilistic programming packages, we were unable to complete experiments with off-shelf Bayesian methods due to their long run-time (see Section~\ref{sec:experiments.future}; currently, the experiments are still underway).

\subsection{Results}
\label{sec:experiments.results}

The results show that the skpro toolbox is functional for the major use cases outlined in Section~\ref{sec:API.overview.usecases}.\\

The preliminary answers to the questions raised in Section~\ref{sec:experiments.strat.quest} are (1) yes, definitely; (2) maybe; (3) maybe; (4) badly, but not unexpectedly so. Further investigation may be necessary on (2)-(4).

The exact experiments for obtaining these answers are described below.

\subsubsection{Point prediction baselines}
\label{sec:experiments.results.baselines}

The first experiment establishes performance of the point prediction baselines, i.e., the type (a) baselines in Section~\ref{sec:experiments.strat.comp}, with a normal distribution shape which corresponds to the mean squared error via Sections~\ref{sec:classprob}. The prediction strategy whose mean and standard deviation are constant is the best classical uninformed baseline. We also include a very badly chosen uninformed model that predicts a constant $C = 42$ for both location and variance as a sanity check of the framework.

Table~\ref{tbl:baseline} summarizes the results.

\begin{table}[h]
\begin{adjustbox}{center}
    \scriptsize
	\begin{tabular}{rlllll}
\hline
   \# & Model                               & CV(Boston, $L_{\ell}$)   & CV(Diabetes, $L_{\ell}$)   & CV(Bikes, $L_{\ell}$)   & CV(Energy, $L_{\ell}$)   \\
\hline
   0 & $\mathcal{N}$(p=\{Scale, GP\}, s=C(std(y))) & (2) 3.247$\pm$0.007*                & (1) 5.512$\pm$0.017*                  & (1) 8.685$\pm$0.008*               & (1) 3.231$\pm$0.004*                \\
   1 & $\mathcal{N}$(p=RF, s=C(std(y)))            & (1) 3.216$\pm$0.025*                & (3) 5.554$\pm$0.017*                  & (2) 8.724$\pm$0.006*               & (2) 3.2309$\pm$0.0035*              \\
   2 & $\mathcal{N}$(p=LR, s=C(std(y)))            & (3) 3.279$\pm$0.011                 & (2) 5.516$\pm$0.009                   & (3) 8.763$\pm$0.011                & (3) 3.2755$\pm$0.0027               \\
\rowcolor{Gray}
   3 & $\mathcal{N}$(p=C(mean(y)), s=C(std(y)))    & (4) 3.65$\pm$0.05                   & (4) 5.767$\pm$0.021                   & (4) 8.989$\pm$0.023                & (4) 3.731$\pm$0.019                 \\
\rowcolor{Gray}
   4 & $\mathcal{N}$(p=C(42), s=C(42))             & (5) 4.788$\pm$0.005                 & (5) 9.7$\pm$0.4                       & (5) 23.999$\pm$0.024               & (5) 4.7954$\pm$0.0028               \\
\hline
\hline
   \# & Model                               & CV(Boston, $L_{g}$)   & CV(Diabetes, $L_{g}$)   & CV(Bikes, $L_{g}$)    & CV(Energy, $L_{g}$)   \\
\hline
   0 & $\mathcal{N}$(p=\{Scale, GP\}, s=C(std(y))) & (2) -0.04955$\pm$0.00022*     & (2) -0.00472$\pm$0.00005*       & (1) -0.0001991$\pm$0.0000023* & (1) -0.05108$\pm$0.00015*     \\
   1 & $\mathcal{N}$(p=RF, s=C(std(y)))            & (1) -0.0516$\pm$0.0006*       & (3) -0.00457$\pm$0.00005*       & (2) -0.0001921$\pm$0.0000015* & (2) -0.05105$\pm$0.00014*     \\
   2 & $\mathcal{N}$(p=LR, s=C(std(y)))            & (3) -0.0478$\pm$0.0005        & (1) -0.00475$\pm$0.00013        & (3) -0.0001836$\pm$0.0000013  & (3) -0.04784$\pm$0.00017      \\
\rowcolor{Gray}
   3 & $\mathcal{N}$(p=C(mean(y)), s=C(std(y)))    & (4) -0.0342$\pm$0.0006        & (4) -0.00331$\pm$0.00016        & (4) -0.000136$\pm$0.000006    & (4) -0.0237$\pm$0.0006        \\
\rowcolor{Gray}
   4 & $\mathcal{N}$(p=C(42), s=C(42))             & (5) -0.01000$\pm$0.00005      & (5) 0.00125$\pm$0.00015         & (5) 0.006693$\pm$0.000021     & (5) -0.00991$\pm$0.00004      \\
\hline
\end{tabular}

\end{adjustbox}
\caption{Performance of type (a) point prediction baselines: tuned GP, tuned RF and LR estimator are compared to the best uninformed point prediction strategy (3) which they all outperform at a 95\% confidence level. Note that a larger number for the Gneiting loss is better, due to the negative sign. Rows of uninformed baseline choices are grey. All methods, including the best uninformed point predictor, outperform the badly chosen uninformed baseline.}
\label{tbl:baseline}
\end{table}

As expected, the informed classical models outperform the best uninformed baseline, while the badly chosen uninformed predictor produces by far the highest loss values. The model ranking is consistent across the different datasets and for both loss functions and finds the GP and RF strategies performing on par as the best, except on the bike sharing dataset where the tuned GP is best.

\subsubsection{Residual estimation}

As a next step, we ran a series of experiments using the \emph{ResidualEstimator} composite parametric prediction strategy.

As mentioned before, as we encountered the small variance problem for composition, we first investigated the effect of the Min-wrapper on the goodness of the residual composite prediction.

Table \ref{tbl:minimum} shows a comparison of residual estimation models with and without the minimum strategy on the Boston Housing data for both losses.

\begin{table}[h]
	\scriptsize
    \begin{adjustbox}{center}
	       \begin{tabular}{rlll}
\hline
   \# & Model                                                              & CV(Boston, $L_{\ell}$)   & CV(Boston, $L_{Gn}$)   \\
\hline
   0 & $\mathcal{N}$(p=RF, s=Min(RE(p, RF)))            & (2) 2.59$\pm$0.09*                  & (1) -0.1177$\pm$0.0018*       \\
   1 & $\mathcal{N}$(p=RF, s=Min(RE(p, LR)))            & (1) 2.57$\pm$0.07*                  & (3) -0.106$\pm$0.006*         \\
   2 & $\mathcal{N}$(p=RF, s=Min(RE(p, \{Scale, GP\}))) & (3) 2.74$\pm$0.09*                  & (2) -0.112$\pm$0.004*         \\
\rowcolor{Gray}
   3 & $\mathcal{N}$(p=RF, s=RE(p, \{Scale, GP\}))             & (4) 2.84$\pm$0.10*                  & (4) -0.104$\pm$0.006*         \\
\rowcolor{Gray}
   4 & $\mathcal{N}$(p=RF, s=RE(p, C(std(y))))                 & (5) 2.97$\pm$0.12*                  & (6) -0.077$\pm$0.011*         \\
\rowcolor{Gray}
   5 & $\mathcal{N}$(p=RF, s=RE(p, RF))                        & (7) 3.29$\pm$0.14*                  & (5) -0.082$\pm$0.016*         \\
   6 & $\mathcal{N}$(p=RF, s=Min(RE(p, C(std(y)))))     & (6) 2.98$\pm$0.17*                  & (7) -0.064$\pm$0.005*         \\
\rowcolor{Gray}
   7 & $\mathcal{N}$(p=RF, s=RE(p, LR))                        & (8) 3.97$\pm$0.20*                  & (8) 0.09$\pm$0.07*            \\
\hline

\hline
\hline
   0 & $\mathcal{N}$(p=\{Scale, GP\}, s=Min(RE(p, LR)))            & (2) 2.94$\pm$0.06*                  & (2) -0.1047$\pm$0.0035*       \\
   1 & $\mathcal{N}$(p=\{Scale, GP\}, s=Min(RE(p, \{Scale, GP\}))) & (4) 3.00$\pm$0.13*                  & (1) -0.1070$\pm$0.0026*       \\
   2 & $\mathcal{N}$(p=\{Scale, GP\}, s=Min(RE(p, C(std(y)))))     & (1) 2.92$\pm$0.12*                  & (5) -0.1007$\pm$0.0024*       \\
   3 & $\mathcal{N}$(p=\{Scale, GP\}, s=Min(RE(p, RF)))            & (3) 2.98$\pm$0.13*                  & (3) -0.104$\pm$0.004*         \\
\rowcolor{Gray}
   4 & $\mathcal{N}$(p=\{Scale, GP\}, s=RE(p, C(std(y))))                 & (5) 3.53$\pm$0.17*                  & (4) -0.103$\pm$0.005*         \\
\rowcolor{Gray}
   5 & $\mathcal{N}$(p=\{Scale, GP\}, s=RE(p, \{Scale, GP\}))             & (6) 6.6$\pm$0.4*                    & (6) -0.006$\pm$0.007*         \\
\rowcolor{Gray}
   6 & $\mathcal{N}$(p=\{Scale, GP\}, s=RE(p, LR))                        & (7) 7.6$\pm$0.5*                    & (8) 0.60$\pm$0.12*            \\
\rowcolor{Gray}
   7 & $\mathcal{N}$(p=\{Scale, GP\}, s=RE(p, RF))                        & (8) 9.77$\pm$0.28*                  & (7) 0.39$\pm$0.04*            \\
\hline

\hline
\hline
   0 & $\mathcal{N}$(p=LR, s=Min(RE(p, \{Scale, GP\}))) & (1) 3.39$\pm$0.05*                  & (1) -0.0599$\pm$0.0013*       \\
\rowcolor{Gray}
   1 & $\mathcal{N}$(p=LR, s=RE(p, RF))                        & (2) 3.466$\pm$0.032*                & (2) -0.0518$\pm$0.0027*       \\
   2 & $\mathcal{N}$(p=LR, s=Min(RE(p, RF)))            & (3) 3.52$\pm$0.06*                  & (3) -0.0501$\pm$0.0017*       \\
   3 & $\mathcal{N}$(p=LR, s=Min(RE(p, LR)))            & (5) 3.827$\pm$0.025*                & (5) -0.0365$\pm$0.0011*       \\
\rowcolor{Gray}
   4 & $\mathcal{N}$(p=LR, s=RE(p, \{Scale, GP\}))             & (4) 3.67$\pm$0.09*                  & (6) -0.0282$\pm$0.0031*       \\
\rowcolor{Gray}
   5 & $\mathcal{N}$(p=LR, s=RE(p, LR))                        & (6) 4.17$\pm$0.06                   & (4) -0.039$\pm$0.009          \\
\rowcolor{Gray}
   6 & $\mathcal{N}$(p=LR, s=RE(p, C(std(y))))                 & (7) 4.97$\pm$0.04                   & (7) -0.00889$\pm$0.00032      \\
   7 & $\mathcal{N}$(p=LR, s=Min(RE(p, C(std(y)))))     & (8) 4.979$\pm$0.024*                & (8) -0.00885$\pm$0.00013*     \\
\hline
\end{tabular}

    \end{adjustbox}
\caption{Effectiveness of the minimum wrapper on composite predictions: Min($\cdot$)-strategies replace standard deviations below a tuned threshold with the threshold value. It is apparent that in terms of both losses, i.e., not only the log-loss, the Min-wrapped predictions generally improve the model's performances above the respective bound-free counterparts (in grey)}
\label{tbl:minimum}
\end{table}

Generally, it can be said, that the minimum variance boundary improves the residual prediction performance of all three estimators GP, RF and LR. The Min-wrapper is hence used in all subsequent residual estimation models.

In the full experiment, we compare the ResidualEstimator models to the best classical and best uninformed baselines (see Section~\ref{sec:experiments.results.baselines}) to answer question (1) of Section~\ref{sec:experiments.strat.quest}, namely whether we can find evidence for genuine probabilistic prediction improving above point prediction type strategies.
Table~\ref{tbl:residual} presents the results.

\begin{table}[h]
	\scriptsize
    \begin{adjustbox}{center}

\begin{tabular}{rlllll}
\hline
   \# & Model                                                              & CV(Boston, $L_{\ell}$)   & CV(Diabetes, $L_{\ell}$)   & CV(Bikes, $L_{\ell}$)   & CV(Energy, $L_{\ell}$)   \\
\hline
\rowcolor{Gray}
   0 & $\mathcal{N}$(p=RF, s=C(std(y)))                                           & (5) 3.198$\pm$0.013*                & (1) 5.560$\pm$0.015*                  & (1) 8.724$\pm$0.008*               & (5) 3.231$\pm$0.004*                \\
   1 & $\mathcal{N}$(p=RF, s=Min(RE(p, \{Scale, GP\}))) & (1) 2.646$\pm$0.035*                & (4) 5.92$\pm$0.12*                    & (5) 9.378$\pm$0.026*               & (2) 0.51$\pm$0.07*                  \\
   2 & $\mathcal{N}$(p=RF, s=Min(RE(p, C(std(y)))))     & (4) 2.79$\pm$0.08*                  & (3) 5.86$\pm$0.06*                    & (2) 8.773$\pm$0.031*               & (4) 0.872$\pm$0.030*                \\
   3 & $\mathcal{N}$(p=RF, s=Min(RE(p, RF)))            & (2) 2.71$\pm$0.06*                  & (5) 6.04$\pm$0.12*                    & (6) 9.61$\pm$0.09*                 & (1) 0.47$\pm$0.05*                  \\
   4 & $\mathcal{N}$(p=RF, s=Min(RE(p, LR)))            & (3) 2.77$\pm$0.08*                  & (6) 6.14$\pm$0.09*                    & (4) 9.31$\pm$0.08*                 & (3) 0.60$\pm$0.05*                  \\
\rowcolor{Gray}
   5 & $\mathcal{N}$(p=C(mean(y)), s=C(std(y)))                                   & (6) 3.640$\pm$0.030                 & (2) 5.767$\pm$0.024                   & (3) 8.990$\pm$0.021                & (6) 3.731$\pm$0.009                 \\
\hline


\hline
\hline
   0 & $\mathcal{N}$(p=\{Scale, GP\}, s=Min(RE(p, \{Scale, GP\}))) & (1) 2.92$\pm$0.08*                  & (1) 5.379$\pm$0.024*                  & (4) 8.549$\pm$0.030*               & (1) 0.462$\pm$0.033*                \\
   1 & $\mathcal{N}$(p=\{Scale, GP\}, s=Min(RE(p, LR)))            & (2) 2.98$\pm$0.08*                  & (2) 5.411$\pm$0.025*                  & (1) 8.525$\pm$0.029*               & (3) 0.553$\pm$0.020*                \\
   2 & $\mathcal{N}$(p=\{Scale, GP\}, s=Min(RE(p, C(std(y)))))     & (3) 3.01$\pm$0.13*                  & (3) 5.424$\pm$0.014*                  & (2) 8.534$\pm$0.022*               & (4) 0.90$\pm$0.11*                  \\
   3 & $\mathcal{N}$(p=\{Scale, GP\}, s=Min(RE(p, RF)))            & (4) 3.04$\pm$0.11*                  & (4) 5.47$\pm$0.06*                    & (3) 8.543$\pm$0.022*               & (2) 0.49$\pm$0.06*                  \\
\rowcolor{Gray}
   4 & $\mathcal{N}$(p=\{Scale, GP\}, s=C(std(y)))                                           & (5) 3.250$\pm$0.014*                & (5) 5.515$\pm$0.016*                  & (5) 8.685$\pm$0.007*               & (5) 3.231$\pm$0.004*                \\
\rowcolor{Gray}
   5 & $\mathcal{N}$(p=C(mean(y)), s=C(std(y)))                                              & (6) 3.639$\pm$0.025                 & (6) 5.774$\pm$0.030                   & (6) 8.990$\pm$0.008                & (6) 3.731$\pm$0.015                 \\
\hline


\hline
\hline
   0 & $\mathcal{N}$(p=LR, s=Min(RE(p, \{Scale, GP\}))) & (2) 2.75$\pm$0.06*                  & (1) 5.402$\pm$0.026*                  & (3) 8.705$\pm$0.032*               & (1) 1.896$\pm$0.013*                \\
   1 & $\mathcal{N}$(p=LR, s=Min(RE(p, LR)))            & (3) 2.905$\pm$0.014*                & (2) 5.403$\pm$0.027*                  & (1) 8.682$\pm$0.030*               & (3) 2.39$\pm$0.05*                  \\
   2 & $\mathcal{N}$(p=LR, s=Min(RE(p, RF)))            & (1) 2.73$\pm$0.04*                  & (4) 5.454$\pm$0.034*                  & (2) 8.69$\pm$0.04*                 & (2) 1.907$\pm$0.027*                \\
   3 & $\mathcal{N}$(p=LR, s=Min(RE(p, C(std(y)))))     & (4) 3.154$\pm$0.018*                & (3) 5.438$\pm$0.023*                  & (4) 8.711$\pm$0.022*               & (4) 2.605$\pm$0.006*                \\
\rowcolor{Gray}
   4 & $\mathcal{N}$(p=LR, s=C(std(y)))                                           & (5) 3.277$\pm$0.005                 & (5) 5.516$\pm$0.019                   & (5) 8.764$\pm$0.016                & (5) 3.275$\pm$0.005                 \\
\rowcolor{Gray}
   5 & $\mathcal{N}$(p=C(mean(y)), s=C(std(y)))                                   & (6) 3.643$\pm$0.034                 & (6) 5.765$\pm$0.017                   & (6) 8.994$\pm$0.023                & (6) 3.732$\pm$0.016                 \\
\hline
\end{tabular}

    \end{adjustbox}
\caption{Performance of residual composite prediction strategies: composite predictors use tuned RF, LR and GP estimators for dispersion predictions, using squared residuals. Note that we do \emph{not} use the GP's in-built variance, but instead we fit a second GP to squared residuals. Generally, use of a ResidualEstimator leads to improved model performances against the two baselines with constant (=homoscedastic) variance predictions (grey). Note that the residual estimators with constant residual variance are not baselines, as they use the ResidualEstimator composition strategy.}
\label{tbl:residual}
\end{table}

Different composite strategies are best on different datasets: the best composite strategy for Boston Housing is $\calN$(p=RF, s=Min(RE(p, GP))), i.e., random forest for the mean and GP (posterior mean of a model fitted to residuals) for the variance. The best strategy for diabetes is $\calN$(p=GP, s=Min(RE(p, GP))), with two GP, one for mean and variance each. The best strategy for the energy dataset consist of $\calN$(p=P1, s=Min(RE(p, P2))), with P1,P2$\in\{\mbox{GP},\mbox{RF}\}$, i.e., random forest and GP for mean and variance prediction chosen in any combination. For bike sharing, a constant variance prediction strategy with mean given by random forest is best - however, the variance is obtained from the residual composite estimator rather from the training labels alone, thus this is also a non-trivial composite strategy (and not a classical baseline).

Thus, question (1) may be answered positively - as measured by the log-loss, use of composite strategies is beneficial.

\subsubsection{Impact of residual transformations}

As discussed in Section~\ref{sec:adaptors.PPandResid}, residual composite strategies may be modified by fitting the absolute or logarithmic training residuals to the residual model, then back transforming predicted residuals accordingly.
We investigate the impact of three residual transformations on residual composite strategies where the same point estimation strategy, GP, RF or LR, is used twice - which includes predictors which were on par best in the previous experiment.
Table~\ref{tbl:residual-transformers} reports the results.

\begin{table}[h]
	\scriptsize
    \begin{adjustbox}{center}
	       \begin{tabular}{rlllll}
\hline
   \# & Model                                                   & CV(Boston, $L_{\ell}$)   & CV(Diabetes, $L_{\ell}$)   & CV(Bikes, $L_{\ell}$)   & CV(Energy, $L_{\ell}$)   \\
\hline
   0 & $\mathcal{N}$(p=RF, s=Min(RE(p, RF, squared))) & (2) 2.71$\pm$0.04*                  & (1) 6.08$\pm$0.10*                    & (1) 9.73$\pm$0.23*                 & (1) 0.492$\pm$0.027*                \\
   1 & $\mathcal{N}$(p=RF, s=Min(RE(p, RF, log)))     & (1) 2.70$\pm$0.04*                  & (2) 6.08$\pm$0.12*                    & (3) 12.44$\pm$0.14*                & (3) 1.047$\pm$0.006*                \\
   2 & $\mathcal{N}$(p=RF, s=Min(RE(p, RF, abs)))     & (3) 2.72$\pm$0.09*                  & (3) 6.10$\pm$0.13*                    & (2) 10.46$\pm$0.10*                & (2) 0.61$\pm$0.11*                  \\
\hline

\hline
\hline
   0 & $\mathcal{N}$(p=LR, s=Min(RE(p, LR, squared))) & (1) 2.900$\pm$0.013*                & (1) 5.400$\pm$0.034*                  & (1) 8.698$\pm$0.025*               & (2) 2.364$\pm$0.024*                \\
   1 & $\mathcal{N}$(p=LR, s=Min(RE(p, LR, abs)))     & (2) 2.92$\pm$0.05*                  & (2) 5.48$\pm$0.06*                    & (2) 8.73$\pm$0.04*                 & (1) 2.33$\pm$0.04*                  \\
   2 & $\mathcal{N}$(p=LR, s=Min(RE(p, LR, log)))     & (3) 3.02$\pm$0.05*                  & (3) 5.77$\pm$0.11*                    & (3) 9.29$\pm$0.07*                 & (3) 2.37$\pm$0.05*                  \\
\hline

\hline
\hline
   0 & $\mathcal{N}$(p=\{Scale, GP\}, s=Min(RE(p, \{Scale, GP\}, squared))) & (2) 2.98$\pm$0.10*                  & (1) 5.390$\pm$0.015*                  & (1) 8.551$\pm$0.022*               & (1) 0.47$\pm$0.05*                  \\
   1 & $\mathcal{N}$(p=\{Scale, GP\}, s=Min(RE(p, \{Scale, GP\}, abs)))     & (3) 3.01$\pm$0.13*                  & (2) 5.445$\pm$0.033*                  & (2) 8.568$\pm$0.018*               & (2) 0.58$\pm$0.05*                  \\
   2 & $\mathcal{N}$(p=\{Scale, GP\}, s=Min(RE(p, \{Scale, GP\}, log)))     & (1) 2.889$\pm$0.029*                & (3) 5.63$\pm$0.04*                    & (3) 8.918$\pm$0.035*               & (3) 1.045$\pm$0.006*                \\
\hline
\end{tabular}

    \end{adjustbox}
\caption{Comparison of different residual transformations for composite residual prediction. Three transformations are compared: absolute (abs), logarithmic (log) and squared (squared) residuals. Different transformations lead to different performance. Overall, however, the squared residual transformation leads to highest model performances.}
\label{tbl:residual-transformers}
\end{table}

One observes that different transformations lead to different performance, which validates that the shape does have an effect on predictive performance. Overall, however, the squared residual transformation leads to highest performances which is somewhat surprising, and does not provide evidence that switching the shape may be beneficial (unless it is \emph{to} squared residuals, of course).
Though only a small range of composite strategies were investigated, so further investigation as to question (2) may be necessary. For example, the choice of parametric form for the predicted distribution may have to be matched with the residual type, e.g., Laplace to absolute and Cauchy to logarithmic.

\subsubsection{Impact of the parametric distribution type}

The experiments so far used the parametric form of a normal distribution $\calN$  in composite models. Ultimately, however, it may be beneficial to choose a parametric shape based on the data, or even make non-parametric predictions. To scope the impact of the distributional form, we investigate the performance of the classical, constant variance, baselines for different forms of predicted distributions: Gaussian, Laplace and uniform distribution (located at the interval center).
Table \ref{tbl:shape_comparison} presents the results for location parameters predicted by GP, RF and LR-based models, while the dispersion parameter is obtained from the overall training label dispersion.

\begin{table}[h]
	\scriptsize
    \begin{adjustbox}{center}
	       \begin{tabular}{rlllll}
\hline
   \# & Model                       & CV(Boston, $L_{\ell}$)   & CV(Diabetes, $L_{\ell}$)   & CV(Bikes, $L_{\ell}$)   & CV(Energy, $L_{\ell}$)   \\
\hline
   0 & $\mathcal{N}$(p=LR, s=C(std(y)))    & (2) 3.284$\pm$0.021                 & (1) 5.517$\pm$0.015                   & (1) 8.768$\pm$0.019                & (2) 3.2752$\pm$0.0034               \\
   1 & Laplace(p=LR, s=C(std(y))) & (1) 3.278$\pm$0.008                 & (2) 5.612$\pm$0.026                   & (2) 8.866$\pm$0.015                & (1) 3.2243$\pm$0.0035               \\
   2 & Uniform(p=LR, s=C(std(y))) & (3) 16.2$\pm$0.4                    & (3) 16.1$\pm$0.7                      & (3) 17.95$\pm$0.33                 & (3) 12.72$\pm$0.15                  \\
\hline

\hline
\hline
   0 & $\mathcal{N}$(p=RF, s=C(std(y)))    & (2) 3.200$\pm$0.016*                & (1) 5.553$\pm$0.019*                  & (1) 8.724$\pm$0.009*               & (2) 3.2310$\pm$0.0018*              \\
   1 & Laplace(p=RF, s=C(std(y))) & (1) 3.160$\pm$0.015*                & (2) 5.654$\pm$0.013*                  & (2) 8.828$\pm$0.008*               & (1) 3.041$\pm$0.005*                \\
   2 & Uniform(p=RF, s=C(std(y))) & (3) 13.9$\pm$0.5*                   & (3) 17.2$\pm$0.4*                     & (3) 17.52$\pm$0.20*                & (3) 12.9$\pm$0.5*                   \\
\hline

\hline
\hline
   0 & Laplace(p=\{Scale, GP\}, s=C(std(y))) & (1) 3.198$\pm$0.006*                & (2) 5.610$\pm$0.017*                  & (2) 8.799$\pm$0.017*               & (1) 3.0385$\pm$0.0030*              \\
   1 & $\mathcal{N}$(p=\{Scale, GP\}, s=C(std(y)))    & (2) 3.252$\pm$0.015*                & (1) 5.516$\pm$0.018*                  & (1) 8.687$\pm$0.014*               & (2) 3.231$\pm$0.006*                \\
   2 & Uniform(p=\{Scale, GP\}, s=C(std(y))) & (3) 14.67$\pm$0.31*                 & (3) 16.50$\pm$0.28*                   & (3) 17.36$\pm$0.16*                & (3) 9.6$\pm$0.4*                    \\
\hline
\end{tabular}

    \end{adjustbox}
\caption{Performance of different assumption on distribution shape: compared prediction strategies use GP, RF and LR point prediction strategy for location/mean prediction and training label dispersion for predicted dispersion. The three mean prediction strategies are combined with a predicted distribution shape which is either Gaussian ($\calN$), Laplace, or uniform, resulting in nine prediction strategies. Depending on the dataset, an assumption of Laplace or Gaussian form performs best.}
\label{tbl:shape_comparison}
\end{table}

While the uniform distribution is consistently the lowest performing parametric assumption, the relative performance of the Laplace and Gaussian parametric form depend on the dataset and used point prediction strategy. The best classical Laplace strategy outperforms the best classical Gaussian strategy on the Boston housing and energy datasets, while the situation is reversed for the diabetes and bike sharing datasets.

Hence, question (3) of Section~\ref{sec:experiments.strat.quest}, on whether there is evidence for a parametric distribution choice to measurably impact predictive performance, may be answered positively with a caveat. The caveat is that a more systematic study of distribution type with composition strategies is necessary for a full assessment.

\subsubsection{Direct use of an off-shelf Gaussian process strategy}

sklearn's GP regression estimator \texttt{GaussianProcessRegressor} provides its own variance prediction interface via the \texttt{return\_std} flag - though it is not explicitly stated (as of version 0.19.1) which distribution's variance this exactly provides (being of especial relevance in the context of the discussion in Section~\ref{sec:gaussianprocesses}).

We investigate the performance of a model which uses both mean and variance prediction from the full Gaussian process regression strategy. As for the residual composite prediction strategies, we also implemented a version that includes a minimum cut-off for the variance predictions of the combined GP strategy.
Table \ref{tbl:gp} summarizes the performance, in comparison against the best ResidualEstimator-based models from the previous experiments.

 \begin{table}[h]
 	\scriptsize
    \begin{adjustbox}{center}
 	      
\begin{tabular}{rlllll}
\hline
   \# & Model                    & CV(Boston, $L_{\ell}$)   & CV(Diabetes, $L_{\ell}$)   & CV(Bikes, $L_{\ell}$)                & CV(Energy, $L_{\ell}$)   \\
\hline
B & $\mathcal{N}$(p=\{Scale, GP\}, s=Min(RE(p, \{Scale, GP\}))) & ($=$) 2.92$\pm$0.08*                  & ($+$) 5.379$\pm$0.024*                  & ($+$) 8.549$\pm$0.030*               & ($+$) 0.462$\pm$0.033*                \\
B & $\mathcal{N}$(p=LR, s=Min(RE(p, \{Scale, GP\}))) & ($-$) 2.75$\pm$0.06*                  & ($+$) 5.402$\pm$0.026*                  & ($+$) 8.705$\pm$0.032*               & ($-$) 1.896$\pm$0.013*                \\
B & $\mathcal{N}$(p=RF, s=C(std(y)))                                           & ($-$) 3.198$\pm$0.013*                & ($+$) 5.560$\pm$0.015*                  & ($+$) 8.724$\pm$0.008*               & ($-$) 3.231$\pm$0.004*                \\
\hline
   0 & $\mathcal{N}$(p=s=Min(\{Scale, GP\})) & (1) 2.92$\pm$0.10*                  & (1) 5.83$\pm$0.09*                    & (1) 22.79$\pm$0.11*                             & (1) 1.037$\pm$0.011*                \\
   1 & $\mathcal{N}$(p=s=\{Scale, GP\}) & (2) 9.5$\pm$0.4*                    & (2) 22.91$\pm$0.20*                   & (2) 24.03$\pm$0.0* & (2) 11.36$\pm$0.13*                 \\
\hline
\end{tabular}

    \end{adjustbox}
 \caption{Performance of full GP prediction strategy: The models with ``p=s'' use the output of a full Gaussian process mean/variance prediction, one with and one without Min-wrapping for the predicted variance. While the Min-wrapping helps performance of the GP strategy, it generally performs worse than the ResidualEstimator-based models (denoted with B, =/+/- for equal/better/worse performance), and sometimes worse than any of the naive baselines.}
 \label{tbl:gp}
 \end{table}

Noticeable, the cut-off optimization proves to be useful since it is apparent that the minimum variance model clearly outperforms its unoptimized counterpart. The full Gaussian process strategy performs badly, indicating that the variance prediction is possibly not obtained from the true predictive posterior but instead from the posterior distribution of the predictive mean, in line with the discussion at the end of Section~\ref{sec:gaussianprocesses}.\\

More off-shelf probabilistic prediction models will be studied in further experiments.

\subsection{Further experiments}
\label{sec:experiments.future}

As mentioned above, further systematic investigation of combinations for composite strategies may be beneficial to understand the effect and usefulness of different choices. We hope to have further results on this early to mid 2018.\\

Similarly, the performance of most off-shelf methodology, in particular of Bayesian methodology including probabilistic programming, is still open.
A larger series of benchmark experiments is currently running on high-performance computing facilities, with some of the off-shelf methods being the computational bottleneck, especially Bayesian MCMC, but also of the other Bayesian denominations. Due to the high runtime of off-shelf Bayesian methodology, the occasional break-on-error, and the number of iterations required for validatory benchmarking of multiple methods on multiple datasets, we expect conclusion of experiments sometime mid or late 2018, after which this section will be updated with the full set of results.\\

As already mentioned, we very much welcome any volunteer effort in extending the skpro framework, for example with interfaces to popular methods, and/or in conducting the benchmarking experiments (please read the contribution guidelines on the skpro project page~\cite{kiraly_skpro_2017} and contact the authors in case of interest).

\newpage

\pagestyle{empty}
\bibliographystyle{plainnat}
\bibliography{propre}

\end{document}